\documentclass[11pt,twoside]{article}
\usepackage{fullpage}
\usepackage{epsf}
\usepackage{fancyhdr}
\usepackage{graphics}
\usepackage{graphicx}
\usepackage{psfrag}
\usepackage[T1]{fontenc}
\usepackage{color}
\usepackage{amsthm}
\usepackage{amsfonts}
\usepackage{amsmath}
\usepackage{amssymb}
\usepackage{mathrsfs}
\usepackage{bm}
\usepackage{accents}
\usepackage{listings}
\usepackage{caption}
\usepackage{mleftright}
\usepackage{subcaption}
\usepackage[linesnumbered, ruled, vlined]{algorithm2e}
\usepackage[titletoc,toc,title]{appendix}
\usepackage{enumerate}
\usepackage{verbatim} 
\usepackage{csquotes} 
\usepackage{upquote} 
\usepackage[bottom]{footmisc} 
\usepackage{thmtools}
\usepackage{thm-restate}
\usepackage[dvipsnames]{xcolor}
\usepackage[english]{babel} 
\usepackage[shortlabels]{enumitem}
\usepackage{mathtools}
\usepackage[colorlinks,linkcolor = RawSienna, urlcolor  = RedViolet, citecolor = RoyalBlue, anchorcolor = ForestGreen,bookmarks=False]{hyperref}

\setlength{\textwidth}{\paperwidth}
\addtolength{\textwidth}{-6cm}
\setlength{\textheight}{\paperheight}
\addtolength{\textheight}{-4cm}
\addtolength{\textheight}{-1.1\headheight}
\addtolength{\textheight}{-\headsep}
\addtolength{\textheight}{-\footskip}
\setlength{\oddsidemargin}{0.5cm}
\setlength{\evensidemargin}{0.5cm}

\newlength{\widebarargwidth}
\newlength{\widebarargheight}
\newlength{\widebarargdepth}


\makeatletter
\long\def\@makecaption#1#2{
        \vskip 0.8ex
        \setbox\@tempboxa\hbox{\small {\bf #1:} #2}
        \parindent 1.5em  
        \dimen0=\hsize
        \advance\dimen0 by -3em
        \ifdim \wd\@tempboxa >\dimen0
                \hbox to \hsize{
                        \parindent 0em
                        \hfil 
                        \parbox{\dimen0}{\def\baselinestretch{0.96}\small
                                {\bf #1.} #2
                                } 
                        \hfil}
        \else \hbox to \hsize{\hfil \box\@tempboxa \hfil}
        \fi
        }
\makeatother

\makeatletter
\newcounter{manualsubequation}
\renewcommand{\themanualsubequation}{\alph{manualsubequation}}
\newcommand{\startsubequation}{%
  \setcounter{manualsubequation}{0}%
  \refstepcounter{equation}\ltx@label{manualsubeq\theequation}%
  \xdef\labelfor@subeq{manualsubeq\theequation}%
}
\newcommand{\tagsubequation}{%
  \stepcounter{manualsubequation}%
  \tag{\ref{\labelfor@subeq}\themanualsubequation}%
}
\let\subequationlabel\ltx@label
\makeatother

\renewcommand{\baselinestretch}{1.04} 
\frenchspacing 
\date{}
\setlength{\parindent}{1em}
\clubpenalty = 10000
\widowpenalty = 10000
\hfuzz = 2pt  
\usepackage[style = alphabetic,citestyle=alphabetic,maxbibnames=99,backend=biber,sorting=nyt,natbib=true,backref=true]{biblatex}
\addbibresource{ref_arxiv.bib}
\DefineBibliographyStrings{english}{%
  backrefpage = {Cited on page},
  backrefpages = {Cited on pages},
}
\newcommand{\BlackBox}{\rule{1.5ex}{1.5ex}}  

\renewenvironment{proof}{\par\noindent{\bf Proof\ }}{\hfill\BlackBox\\[2mm]}

\newtheorem{theorem}{Theorem}[section]
\newtheorem{lemma}[theorem]{Lemma}
\newtheorem{proposition}[theorem]{Proposition}

\newtheorem{definition}[theorem]{Definition}

\newcommand\numberthis{\addtocounter{equation}{1}\tag{\theequation}}

\newcommand{\sign}{\mathrm{sign}}
\renewcommand{\ss}{\subseteq}

\newcommand{\Tr}{\mathrm{Tr}}
\newcommand{\poly}{\mathrm{poly}}

\newcommand*{\eqdef}{:=}
\newcommand{\floor}[1]{{\left\lfloor #1 \right\rfloor}}
\newcommand{\ceil}[1]{{\left\lceil #1 \right\rceil}}

\newcommand{\cA}{{\cal A}}
\newcommand{\cB}{{\cal B}}

\newcommand{\cE}{{\cal E}}
\newcommand{\cF}{{\cal F}}

\newcommand{\cI}{{\cal I}}

\newcommand{\cN}{{\cal N}}

\newcommand{\cS}{{\cal S}}

\newcommand{\R}{\mathbb{R}}
\renewcommand{\S}{\mathbb{S}}
\newcommand{\N}{\mathbb{N}}

\newcommand{\diag}{\mathrm{diag}}
\newcommand{\vecrm}{\mathrm{vec}}
\newcommand{\E}{\mathbb{E}}

\renewcommand{\Pr}{\mathbb{P}}
\newcommand{\lv}{\lVert}
\newcommand{\rv}{\rVert}

\newcommand{\tx}{\tilde{x}}

\newcommand{\tW}{\widetilde{W}}
\newcommand{\tV}{\widetilde{V}}

\renewcommand{\epsilon}{\varepsilon}
\renewcommand{\ln}{\log}
\DeclareSymbolFont{extraup}{U}{zavm}{m}{n}
\DeclareMathSymbol{\varheart}{\mathalpha}{extraup}{86}
\DeclareMathSymbol{\vardiamond}{\mathalpha}{extraup}{87}

\DeclareMathOperator*{\argmin}{arg\,min}

\newcommand{\supp}{\mathrm{supp}}

\newcommand{\hV}{\hat{V}}
\newcommand{\hx}{\hat{x}}

\renewcommand{\epsilon}{\varepsilon}
\renewcommand{\ln}{\log}

\newcommand{\tSigma}{{\widetilde{\Sigma}}}
\newcommand{\hSigma}{{\hat{\Sigma}}}
\newcommand{\cSigma}{{\check{\Sigma}}}
\newcommand{\bSigma}{{\bar{\Sigma}}}

\newcommand{\Lip}{\mathrm{Lip}}
\DeclareMathSizes{9}{8}{7}{5}
\title{\textbf{When Does Gradient Descent with Logistic Loss \\
 Interpolate Using Deep Networks \\
 with Smoothed ReLU Activations? }}
\author{Niladri S. Chatterji \\ University of California, Berkeley \\ \textsf{chatterji@berkeley.edu} \and Philip M. Long \\ Google \\ \textsf{plong@google.com}  \and Peter L. Bartlett \\ University of California, Berkeley \& Google \\ \textsf{peter@berkeley.edu}}
\date{\today}
\begin{document}
\maketitle
\begin{abstract}
We establish conditions under which gradient descent applied to fixed-width deep networks drives the logistic loss to zero, and prove bounds on the rate of convergence. Our analysis applies for smoothed approximations to the ReLU, such as Swish and the Huberized ReLU, proposed in previous applied work. We provide two sufficient conditions for convergence.  
The first is simply a bound on the loss at initialization.  The second is a data separation condition used in prior analyses.
\end{abstract}

\section{Introduction}
Interest in 
the properties of interpolating deep learning models trained with first-order optimization methods is surging~\citep{zhang2016understanding,belkin2019reconciling}. One important question is to understand how gradient descent with appropriate random initialization routinely finds interpolating (near-zero training loss) solutions to these non-convex optimization problems. 

In this paper our focus is to understand when gradient descent drives the logistic loss to zero when
applied to fixed-width deep networks 
using
smooth approximations to the ReLU activation
function. We derive upper bounds on the rate of convergence 
under two 
conditions. 
The first result only requires that the initial loss is small, but does not require any assumption about the width of the network. It guarantees that if the initial loss is small then gradient descent drives the logistic loss down to zero. The second result is under a separation condition on the data. Under this assumption we demonstrate that the loss decreases adequately in the initial iterations such that the first result applies.

A few ideas that facilitate our analysis are as follows: under the first set of assumptions, when the loss is small, 
we show that the negative gradient aligns with the weights of the network. This lower bounds the norm of the gradient at the beginning of the gradient step and implies that the loss decreases quickly at the beginning of the step. 
We then show that the loss is smooth in the
neighborhood of the beginning of the step.
The smoothness of the loss combined with the lower bound on the norm of the gradient at the beginning of the step implies that the loss decreases throughout the gradient step when the step-size is small enough.

The second sufficient condition is when the data is separable by a margin using the features obtained by the gradient of the neural network at initialization (see Assumption~\ref{as:frac_margin}). This assumption has previously been studied by \citet{chen2021much}. 
Intuitively, it is weaker than an assumption that the training examples
are not too close, as we discuss after its definition.
Under this assumption we use a neural tangent kernel (NTK) analysis to show that the loss decreases sufficiently in the first stage of optimization such that we can invoke our first result to guarantee that the loss decreases thereafter in the second stage. To analyze this first stage we borrow ideas from \citep{allen2019convergence,zou2020gradient}, because the formulation of their results was most closely aligned with our needs. However we note that their results do not directly apply since they study networks with ReLU activations while we study smooth approximations to the ReLU. 
In addition to adapting their proofs to
our setting, we also
worked out some details in the original proofs.

Our first result could be viewed as a tool to establish convergence under a wide variety of conditions. Our second result is one example of how it may be applied. Other separation assumptions on the data like the ones studied by \citet{ji2019polylogarithmic,chen2021much,zou2020gradient}, could also be used in conjunction with our first result to establish convergence to zero training loss. 

Recently \citet{chatterji2020does} showed that gradient descent applied to two-layer neural networks drives the logistic loss to zero when the initial loss is small and the activation functions are Huberized ReLUs. Our work can be viewed as a generalization of their result to the case of deep networks. 

Previously, \citet{lyu2019gradient} studied the margin maximization of ReLU networks 
for the logistic loss. They also proved that gradient descent applied to deep networks drives the training logistic loss to zero. However, their result requires the neural network to be both positive homogeneous and smooth (see, for example, the proof of Lemma E.7 of their
paper), so that a substantially different analysis
was needed here. Their assumptions rule out the ReLU and close approximations to it like Swish~\citep{ramachandran2017searching} or the Huberized ReLU~\citep{tatro2020optimizing} that are widely used in practice. Their results do apply in case that the ReLU is raised to a power strictly greater than two. As far as we know, the analysis of the alignment
between the negative gradient and the weights
originated in their paper: in this paper, we establish such
alignment under weaker conditions.

Prior work has shown that gradient descent drives the squared loss of fixed-width deep networks to zero \citep{du2018gradient,du2019gradient,allen2019convergence,oymak2020towards}, using the NTK perspective
\citep{jacot2018neural,chizat2019lazy}. The logistic loss however is qualitatively different. Driving the logistic loss to zero requires the weights to 
go to infinity, far from their initial values. This means that a
Taylor approximation around the initial values cannot be applied. While the NTK framework has also been applied to analyze training with the
logistic loss, a typical result \citep{li2018learning,allen2019convergence,zou2020gradient}
is that after $\mathsf{poly}(1/\epsilon)$ 
updates, a network of size or width $\mathsf{poly}(1/\epsilon)$ 
achieves $\epsilon$ loss. Thus to guarantee loss very close to zero, these
analyses require larger and larger networks. The reason for this
appears to be that a key part of these analyses
is to show that a wider network can achieve a certain fixed loss by traveling
a shorter distance in parameter space.  Since, to drive the
logistic loss to zero with a fixed-width network, the parameters must travel 
an unbounded distance, it seems that the NTK approach cannot be applied to obtain the
results of this paper.

The remainder of the paper is organized as follows. In Section~\ref{s:prelim} we introduce notation and 
definitions. In Section~\ref{s:main_results} we present our main theorems.
We provide a proof of our first result, Theorem~\ref{t:main_theorem}, in Section~\ref{s:main_theorem}. We conclude with a discussion in Section~\ref{s:discussion}. Appendix~\ref{a:furtherrelatedwork} points to other related work. The proof of our second result, Theorem~\ref{t:theorem_frac_margin}, and other technical details, are 
presented in the remaining appendices.

\section{Preliminaries} \label{s:prelim}
This section includes notational conventions and a description of
the setting.

\subsection{Notation}
Given a vector $v$, let $\lv v \rv$ denote its Euclidean norm, $\lv v \rv_{p}$ denote its $\ell_{p}$-norm for any $p\ge 1$, $\lv v \rv_{0}$ denote the number of non-zero entries, and $\diag(v)$ denote a diagonal matrix with $v$ along the diagonal. We say a vector $v$ is $k$-sparse if $\lv v\rv_{0}\le k$. Given a matrix $M$, let $\lv M \rv$ denote its Frobenius norm, $\lv M \rv_{op}$ denote its operator norm and $\lv M\rv_{0}$ denote the number of non-zero entries in the matrix. Given either a matrix or a tensor we let $\vecrm(\cdot)$ be 
its 
vectorization. 
Given a tensor $T$, let $\lv T \rv = \lv \vecrm(T) \rv$; we
will sometimes call this the Frobenius norm of $T$. If, for matrices
$T_1,\ldots,T_{L+1}$ of different
shapes, we refer to them collectively
as $T$, we define 
$\lv T \rv$ analogously.
Given two tensors $A$ and $B$ let $A \cdot B$ denote the element-wise dot product $\vecrm(A)\cdot \vecrm(B)$. For any $k \in \N$, we denote the set $\{ 1,\ldots,k \}$ by $[k]$.  For a number $p$ of inputs, we denote
the set of unit-length vectors
in $\R^p$ by $\S^{p-1}$. We use the standard ``big Oh notation'' \citep[see, e.g.,][]{cormen2009introduction}. We will use 
$c,  c', c_1, \ldots$ to denote 
constants, which may take different values in
different contexts.

For a function $J$ of a tensor $V$, we denote
the gradient of $J$ at $V$ by $\nabla_V J(V)$,
and define $\Lip(\nabla_V J(V))$ to be the local
Lipschitz constant of $\nabla_V J(V)$, as
a function of $V$, with respect to the Euclidean
norm. That is
$$
\Lip(\nabla_V J(V)) = \limsup_{W \rightarrow V} 
     \frac{\lv \nabla_V J(V) - \nabla_W J(W) \rv}{\lv V - W \rv}.
$$

\subsection{The Setting}

We will analyze gradient descent applied
to minimize the training loss of a
multi-layer network.  

We assume that the number of inputs is equal to the number of hidden nodes per layer to simplify the presentation of our results. Our techniques can easily extend to the case where there are different numbers of hidden nodes in different layers. Let $p$ denote the number of inputs and the number of hidden nodes per layer, and let $L$ denote the number of hidden layers. 

We will denote the activation function by $\phi$. Given a vector $v$ let $\phi(v)$ denote a vector with the activation function applied to each coordinate. We study activation functions that are similar to the ReLU activation function but are smooth.
\begin{definition}\label{def:h_smooth_activation} A activation function $\phi$ is $h$-smoothly approximately ReLU if,
\begin{itemize}
    \item the function $\phi$ is differentiable;
    \item $\phi(0)= 0$;
    \item $\phi'$ is $\frac{1}{h}$-Lipschitz and $|\phi'(z)|\le 1$;
    \item for all  $z \in \R: |\phi'(z)z - \phi(z)| \le h/2$.
\end{itemize}
\end{definition}
It may aid intuition to note that,
for small $h$, the condition that 
$|\phi'(z)z - \phi(z)| \le h/2$ can be paraphrased
to say that a first-order Taylor approximation of
$\phi$ at $z$ is accurate at the origin.  
It is easy to verify the activation functions $\phi$ are contractive with respect to the Euclidean norm. That is, for any $v_1,v_2 \in \R^p$, $\lv \phi(v_1) - \phi(v_2)\rv \le \lv v_1 - v_2 \rv$. See Lemma~\ref{l:aux_contractive_maps}. 
Here are a couple of examples of activation functions that are $h$-smoothly approximately ReLU.
\begin{enumerate}
    \item Huberized ReLU \citep{tatro2020optimizing}: \begin{equation}
\label{e:helu}
\phi(z) := \left\{ \begin{array}{ll}
            0 & \mbox{ if $z < 0$,} \\
            \frac{z^2}{2 h} & \mbox{ if $z \in [0,h]$,} \\
            z - \frac{h}{2} & \mbox{otherwise.}
         \end{array}
           \right.
\end{equation}
\item Scaled Swish \citep{ramachandran2017searching}: $\phi(z) = \frac{z}{1.1\left(1+\exp\left(-2z/h\right)\right)}$. The scaling factor $1/1.1$ ensures that $|\phi'(z)| \le 1$.
\end{enumerate}


For $i \in \{1,\ldots, L\}$, let $V_i \in \mathbb{R}^{p \times p}$ be the weight matrix of the $i$th layer and let $V_{L+1} \in \mathbb{R}^{1 \times p }$ be the weight vector corresponding to the outer layer. Let $V = (V_1,\ldots,V_{L+1})$
consist of all of the trainable parameters in the network.
Let $f_{V}$ denote the function computed by the 
network, which maps $x$ to
$$
f_{V}(x) = V_{L+1}\phi\left(V_L \cdots \phi(V_1 x) \right).
$$
Consider a training set $(x_1,y_1),\ldots,(x_n,y_n) \in \S^{p-1} \times \{-1, 1 \}$. For any sample $s \in [n]$, define $u_{0,s}^V = x_{0,s}^V := x_s$ and for all $\ell \in [L]$, define
\begin{align*}
   u_{\ell,s}^{V} &:= V_{\ell} x_{\ell-1,s}^V \quad \text{and, } \quad x_{\ell,s}^{V} := \phi\left(V_{\ell} x_{\ell-1,s}^V\right),
\end{align*}
that is, $u_{\ell,s}^{V}$ refers to the pre-activation features in layer $\ell$, while $x_{\ell,s}^V$ corresponds to the features after applying the activation function in the $\ell$th layer. Also for any $\ell \in [L]$ and $s \in [n]$ let
\begin{align*}
    \Sigma_{\ell,s}^{V} := \diag\left(\phi'(u_{\ell,s}) \right) = \diag\left(\phi'\left(V_{\ell}x_{\ell-1,s}^V\right)\right).
\end{align*}

Define the training loss (empirical risk with respect to the logistic loss) $J$ by
\[
J(V) := \frac{1}{n}\sum_{s=1}^n \ln(1+\exp\left(-y_s f_{V}(x_s)\right)),
\]
and refer to loss on example $s$ by
$$
J(V; x_s,y_s) := \ln(1+\exp\left(-y_s f_{V}(x_s)\right)).
$$
The gradient of the loss evaluated at $V$ is
$$\nabla_{V} J(V) = \frac{1}{n} \sum_{s=1}^n \frac{-y_s \nabla_{V} f_{V}(x_s)}{1+\exp\left(y_s f_{V}(x_s)\right)},
$$
and the partial gradient of $f_V$ with respect to $V_{\ell}$ has the form \citep[see, e.g.,][]{zou2020gradient}
\begin{align}\startsubequation\subequationlabel{e:gradient_definitions}\tagsubequation\label{e:gradient_ell_inner_layers}
 \frac{\partial f_V(x_s)}{\partial V_\ell} &= 
        \left( \Sigma^V_{\ell,s} \prod_{j = \ell+1}^L 
             \left(  V_{j}^{\top} \Sigma^V_{j,s} \right)
           \right)
           V_{L+1}^{\top}
        x_{\ell-1,s}^{V\top}, \quad \mbox{when $\ell \in [L]$}, \\ \tagsubequation \label{e:gradient_outer_layer}
           \frac{\partial f_V(x_s)}{\partial V_{L+1}} & =  x_{L,s}^{V\top}.
\end{align}

We analyze the iterates of gradient descent $V^{(1)},V^{(2)},\ldots$ defined by
\[
    V^{(t+1)} := V^{(t)} - \alpha_t \nabla_V J\lvert_{V = V^{(t)}}
\]
in terms of the properties of $V^{(1)}$.
\begin{definition}
\label{d:L_tk}
For all 
iterates $t$,
define
$J_{ts} := J(V^{(t)}; x_s,y_s)$ and let $J_t := \frac{1}{n} \sum_{s=1}^n J_{ts}$. 
Additionally for all 
$t$,
define $\nabla J_t : = \nabla_{V} J|_{V = V^{(t)}}$.
\end{definition}

\section{Main Results}\label{s:main_results}
In this section we present our theorems and discuss their implications.
\subsection{A General Result} \label{ss:general_result}
Given the initial weight matrix $V^{(1)}$, width $p$, depth $L$, and training data $\{x_s,y_s\}_{s\in [n]}$, define $h_{\max}$, $\alpha_{\max}$ and $\widetilde{Q}$ below:
\begin{align}\startsubequation\subequationlabel{e:main_theorem_quantities}
\tagsubequation \label{e:h_max_def}       h_{\max} &:= \min\left\{\frac{L^{\frac{L}{2}-3}\log(1/J_1)}{24\sqrt{p}\lv V^{(1)}\rv^{L}},1\right\},\\
\tagsubequation\label{e:alpha_max_def} \alpha_{\max}(h)  &:= \min\left\{\frac{h}{1024\left( L+1\right)^2pJ_1\lv V^{(1)}\rv^{3L+5}}, \frac{(L+\frac{1}{2})\lv V^{(1)} \rv^2}{2L(L+\frac{3}{4})^2J_1  \log^{\frac{2}{L}}(1/J_1)}\right\}, \text{ and} 
 \\ \tagsubequation \label{eq:Q_def} \widetilde{Q}(\alpha)  &:= \frac{L(L+\frac{3}{4})^2\alpha J_1 \log^{\frac{2}{L}}(1/J_1)}{(L+\frac{1}{2})\lv V^{(1)} \rv^2}.
\end{align}
\begin{theorem} \label{t:main_theorem}
For any $L \ge 1$, for all $n \geq 3$, 
for all 
$p \geq 1$,
for any initial parameters $V^{(1)}$
and
dataset $(x_1,y_1),\ldots,(x_n,y_n) \in \S^{p-1} \times \{-1,1\}$, for any $h$-smoothly approximately ReLU activation function with $h<h_{\max}$, any positive $\alpha \le \alpha_{\max}(h)$ and positive $Q \le \widetilde{Q}(\alpha)$ the
following holds for all
$t \geq 1$.
If each step-size
$\alpha_t = \alpha $,
and if $J_1< 1/n^{1 + 24L}$
then, for all $t \geq 1$,
        \[
J_t
 \le 
    \frac{J_1}{ Q \cdot (t-1)+1}.
\]
\end{theorem}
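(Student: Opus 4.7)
The plan is to prove Theorem~\ref{t:main_theorem} by induction on $t$, maintaining two invariants: the claimed decay $J_t\le J_1/(Q(t-1)+1)$ and an auxiliary bound that $\|V^{(t)}\|$ stays within a constant factor of $\|V^{(1)}\|$. Three per-step ingredients drive the inductive step: an approximate alignment between $-\nabla J_t$ and $V^{(t)}$ that lower-bounds $\|\nabla J_t\|$; a local smoothness bound on $\nabla J$; and the usual descent inequality converting these into a per-step decrement on $J$.

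For the alignment step I would first establish an approximate-homogeneity identity at $V^{(t)}$. For a true ReLU, $f_V$ would be $(L+1)$-homogeneous in $V$, so $\langle V,\nabla_V f_V(x_s)\rangle=(L+1)f_V(x_s)$ exactly. For an $h$-smoothly approximately ReLU activation, the defining identity $\phi'(z)z=\phi(z)+O(h)$ from Definition~\ref{def:h_smooth_activation} lets me replace $\Sigma^V_{\ell,s}\,u^V_{\ell,s}$ by $x^V_{\ell,s}$ at each layer, incurring a per-layer error of order $h\sqrt{p}$, and propagating this telescoping error through the products of layer operators in \eqref{e:gradient_ell_inner_layers} yields
\[
\langle V^{(t)},\nabla_V f_{V^{(t)}}(x_s)\rangle = (L+1)\,f_{V^{(t)}}(x_s)+\mathrm{err}_s,\qquad |\mathrm{err}_s|\lesssim h\cdot \poly(L,\sqrt{p},\|V^{(t)}\|).
\]
The definition of $h_{\max}$ in \eqref{e:h_max_def} is calibrated precisely so that $|\mathrm{err}_s|$ is controlled by $\log(1/J_1)$, which in turn is controlled by the main alignment contribution via the logistic-loss identities $y_s f_{V^{(t)}}(x_s)\ge \log(1/(eJ_{ts}))$ and $\sigma(-y_s f_{V^{(t)}}(x_s))\ge J_{ts}(1-J_{ts}/2)$. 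The precondition $J_1<1/n^{1+24L}$ is used here to pass from per-sample to average losses, e.g.\ to absorb $\log n$-type terms when comparing $\log(1/J_{ts})$ to $\log(1/J_t)$. Summing and applying Cauchy--Schwarz then yields a bound of the form $\|\nabla J_t\|\gtrsim L\,J_t\log(1/J_t)/\|V^{(t)}\|$; substituting the inductive norm bound $\|V^{(t)}\|\le c\|V^{(1)}\|$ produces exactly the gradient lower bound implicit in the choice of $\widetilde Q(\alpha)$.

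Next, for the descent step, I would upper-bound $\Lip(\nabla J)$ on a Euclidean ball around $V^{(t)}$ of radius $\alpha\|\nabla J_t\|$. Differentiating the expressions in \eqref{e:gradient_definitions}, using that $\phi'$ is $1/h$-Lipschitz and the per-layer operator-norm bound $\|V^{(t)}_\ell\|_{op}\le\|V^{(t)}\|$, yields $\Lip(\nabla J)\lesssim J_t\cdot\poly(L,p,\|V^{(t)}\|)/h$; the factor $1/h$ is the price paid for smoothing, and the definition of $\alpha_{\max}(h)$ in~\eqref{e:alpha_max_def} guarantees $\alpha\cdot\Lip(\nabla J)\le 1$ along the entire segment from $V^{(t)}$ to $V^{(t+1)}$. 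Standard descent then gives $J_{t+1}\le J_t-(\alpha/2)\|\nabla J_t\|^2\le J_t(1-Q J_t/J_1)$, and an elementary argument on $a_t:=J_1/J_t$ converts this into $a_{t+1}\ge a_t+Q$ and hence $J_t\le J_1/(Q(t-1)+1)$. To close the induction I also verify that $\|V^{(t+1)}\|$ stays bounded: since $\|V^{(t+1)}-V^{(t)}\|\le\alpha\|\nabla J_t\|\lesssim \alpha L J_t\|V^{(t)}\|^L$, the total displacement is bounded by $\alpha L\|V^{(1)}\|^L\sum_t J_t$, and the $1/t$ decay of $J_t$ combined with the strong precondition $J_1<1/n^{1+24L}$ keeps this sum small enough that $\|V^{(t)}\|\le 2\|V^{(1)}\|$ throughout.

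The main obstacle will be the bookkeeping in the alignment step. Keeping the $h$-error from smoothing from overwhelming the main alignment term requires tracking how per-layer errors compound through $L$ products of layer operators of norm up to $\|V^{(t)}\|$, and simultaneously verifying that the analogous $L$-fold products in the upper bound on $\Lip(\nabla J)$ remain controlled. The precise forms of $h_{\max}$, $\alpha_{\max}(h)$, the precondition $J_1<1/n^{1+24L}$, and the $\log^{2/L}(1/J_1)$ factor in $\widetilde Q(\alpha)$ are dictated by insisting that these compounded bounds line up consistently, and the principal difficulty is choosing these constants so that the induction self-sustains, rather than the descent argument itself.
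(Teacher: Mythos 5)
Your three per-step ingredients (approximate homogeneity giving alignment of $-\nabla J_t$ with $V^{(t)}$, a local smoothness bound scaling like $J_t\,\poly(L,p,\lv V^{(t)}\rv)/h$, and a descent step converted into $J_t\le J_1/(Q(t-1)+1)$) match the paper's Lemmas~\ref{l:lower.bound.gradient}, \ref{l:L_smooth}/\ref{l:L.one_step_improvement} and \ref{l:pl}/\ref{l:L.inductive_step_constant_step_size} in spirit. But your auxiliary invariant is wrong, and it is load-bearing. You propose to maintain $\lv V^{(t)}\rv\le 2\lv V^{(1)}\rv$ for all $t$, justified by bounding the total displacement by $\alpha L\lv V^{(1)}\rv^{L}\sum_t J_t$ and appealing to the $1/t$ decay of $J_t$. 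With $J_t\asymp J_1/(Qt)$ that sum is harmonic and diverges, so the displacement bound does not close; worse, boundedness of $\lv V^{(t)}\rv$ is incompatible with the conclusion you are proving, since $J(V)\ge \log\bigl(1+\exp\bigl(-(\lv V\rv/\sqrt{L+1})^{L+1}\bigr)\bigr)$ is bounded away from zero on any norm ball, so $J_t\to 0$ forces $\lv V^{(t)}\rv\to\infty$ (this is exactly the point the paper makes about why NTK-style ``stay near initialization'' arguments cannot give this result). So the invariant you plan to propagate is provably false in conjunction with Part~(I1), and both places where you use it --- substituting it into the gradient lower bound to recover $\widetilde Q(\alpha)$, and verifying that the step-size/smoothness condition involving $\lv V^{(t)}\rv^{3L+5}$ keeps holding --- are unsupported.

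The fix, and the genuinely nontrivial step your outline omits, is the paper's ratio invariant (I2): $\log(1/J_t)/\lv V^{(t)}\rv^{L}\ge \log(1/J_1)/\lv V^{(1)}\rv^{L}$, i.e.\ $\lv V^{(t)}\rv^{L}$ may grow, but no faster than $\log(1/J_t)$. This is what the gradient lower bound of Lemma~\ref{l:lower.bound.gradient} is conditioned on, it is what produces the $\log^{2/L}(1/J_1)/\lv V^{(1)}\rv^{2}$ factor in $\widetilde Q(\alpha)$ (via $\log^{2}(1/J_t)/\lv V^{(t)}\rv^{2}\ge\log^{2-2/L}(1/J_t)\log^{2/L}(1/J_1)/\lv V^{(1)}\rv^{2}$), and it is what keeps the condition $\alpha J_t\lesssim h/\bigl(p\lv V^{(t)}\rv^{3L+5}\bigr)$ alive as the weights grow (via monotonicity of $z\log^{(3L+5)/L}(1/z)$). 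Maintaining (I2) across a step is the delicate part (the paper's Lemma~\ref{l:vbound}): one must show the relative increase of $\log(1/J_t)$, controlled from below by $\alpha\lv\nabla J_t\rv^{2}/J_t$, beats the relative increase of $\lv V^{(t)}\rv^{L}$, controlled from above by $(1+\alpha\lv\nabla J_t\rv/\lv V^{(t)}\rv)^{L}$, and this comparison only closes because the alignment bound has slack (the constant $L+\tfrac34$ versus $L+\tfrac12$). None of this machinery appears in your plan, and without it the induction does not self-sustain.
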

We reiterate that this theorem makes no assumption about the 
width $p$ of the network 
and makes a very mild assumption on the number of samples required: $n\ge 3$. The only
other assumption is that the initial loss is less than $1/n^{1+24L}$. We pick the step-size to be a constant, which leads to a rate that scales with $1/t$. 

Next we provide an example where we show that it is possible to arrive at a small loss solution using gradient descent starting from randomly initialized weight matrices.

\subsection{Small Loss Guarantees Using NTK Techniques} \label{ss:ntk_init}

In this subsection assume that the entries of the initial weight matrices for the layers $\ell \in \{1,\ldots,L\}$ are drawn independently from $\cN\left(0,
2/p\right)$, and the entries of $V_{L+1}^{(1)}$ are drawn independently from $\cN\left(0,1\right)$. In this section we also specialize to the case where the activation function is the Huberized ReLU (see its definition in equation~\eqref{e:helu}). We make the following assumption on the training data.
\begin{restatable}{assumption}{fracmargin}
\label{as:frac_margin}With probability $1-\delta$ over the random initialization, there exists a collection of matrices $W^{\star}=(W_1^{\star},\ldots,W_{L+1}^{\star})$ with $\lv W^{\star} \rv = 1$, such that for all samples $s \in [n]$
\begin{align*}
    y_s \left(\nabla f_{V^{(1)}}(x_s)\cdot W^{\star}\right)\ge \sqrt{p}\gamma,
\end{align*}
for some $\gamma >0$.
\end{restatable}
The scaling factor $\sqrt{p}$ on the right hand side is to balance the scale of the norm of the gradient at initialization which will scale with $\sqrt{p}$ as well. This is because the entries of the final layer $V_{L+1}^{(1)}$ are drawn independently from $\cN(0,1)$. This assumption is inspired by Assumption~4.1 made by \citet{chen2021much}. 
This assumption can be seen to be implied by stronger conditions 
that simply require that the training examples are not too close,
as employed in \citep{allen2019convergence,zou2020gradient}.
Here is some rough intuition of
why.  
The components of $\nabla f_{V^{(1)}}(x_s)$ include 
values computed at the
last hidden layer when $x_s$ is processed using $V^{(1)}$ (that is, $\nabla_{V_{L+1}}f_{V^{(1)}}(x_s)=x_{L,s}^{V^{(1)}}$). For wide networks with Huberized ReLU activations,
if the values of $x_s$ in the training examples do not have duplicates,
their embeddings
into the last hidden layer of nodes 
are in 
general position with high probability.
In fact, the Gaussian Process analysis of infinitely wide deep
networks at initialization 
\citep{g.2018gaussian,matthews2018gaussian}
suggests that, for wide networks, the embeddings
will not even be close to 
failing to be in general position~\citep[see][]{DBLP:conf/alt/AgarwalAK21}.
If the width $p \gg n$, results from 
\citep{cover1965geometrical} show that they
will be linearly separable.
The anti-concentration conferred by the Gaussian initialization promotes
larger (though not necessarily constant) margins.
Assumption~\ref{as:frac_margin} is
more refined than a separation condition, since it captures a sense in which
the data is amenable to treatment with neural networks that enables us to
provide stronger guarantees in such cases. Furthermore, in Appendix~\ref{a:constant_gamma} 
we show that Assumption~\ref{as:frac_margin}
is satisfied with a constant margin $\gamma$ by
two-layer networks with Huberized ReLUs
for data satisfying a clustering condition. Finally, we note that we could also use other assumptions on the data that have been studied in the literature \citep[for example by,][]{ji2019polylogarithmic} to guarantee that the loss reduces below $1/n^{1+24L}$, as required to invoke Theorem~\ref{t:main_theorem}. However, we provide guarantees only under this assumption in the interest of simplicity.

Define 
\begin{align} \label{e:definition_rho_radius}
\rho := \frac{c_1}{\sqrt{p}\gamma} \mleft[\sqrt{\log\left(\frac{n}{\delta}\right)}+\log\mleft(6n^{(2+24L)}\mright)\mright],
\end{align}
where $c_1\ge 0$ is a large enough absolute constant. Also set the value of 
\begin{align} \label{e:huberized_relu_h_def}
    h=h_{\mathsf{NT}}:=  \frac{(1+24L)\log(n)}{6(6p)^{\frac{L+1}{2}}L^{3}}.
\end{align}
With these choices of $\rho$ and $h$ we are now ready to state our convergence result under Assumption~\ref{as:frac_margin}. The proof of this theorem is presented in Appendix~\ref{a:proof_of_ntk}. 
\begin{restatable}{theorem}{ntktheorem} \label{t:theorem_frac_margin} Consider a network with Huberized ReLU activations. There exists $r(n,L,\delta) = \poly\left(L,\log\left(\frac{n}{\delta}\right)\right)$ such that for any $L \ge 1$, $n\ge 3$, $\delta >0$, under Assumption~\ref{as:frac_margin} with $\gamma \in (0,1]$ if $h = h_{\mathsf{NT}}$ and $p\ge \frac{r(n,L,\delta)}{\gamma^2}$ then both of the following hold with probability at least $1-4\delta$ over the random initialization:
\begin{enumerate}
    \item For all $t\in  [T]$, set the step-size $\alpha_t =\alpha_{\mathsf{NT}}= 
    \Theta\mleft(\frac{1}{pL^5}\mright)$, where $T = \ceil{\frac{3(L+1)\rho^2 n^{2+24L}}{2\alpha_{\mathsf{NT}}}}$. Then 
    \begin{align*}
        \min_{t \in [T]} J_t < \frac{1}{n^{1+24L}}.
    \end{align*}
    \item Set $V^{(T+1)} = V^{(s)}$, where $s \in \argmin_{s \in [T]} J(V^{(s)})$, and for all $t\ge T+1$, set the step-size $\alpha_t = 
    \alpha_{\max}(h)$. Then for all $t \ge T+1$,
    \begin{align*}
        J_t \le O\left(\frac{L^{\frac{3L+11}{2}}(6p)^{2L+5}}{n^{1+24L}\cdot(t-T-1)}\right).
    \end{align*}
\end{enumerate}
\end{restatable}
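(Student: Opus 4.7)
The plan is to split the proof into two stages matching the two parts of the theorem. Part~2 reduces to a direct invocation of Theorem~\ref{t:main_theorem} with $V^{(T+1)}$ in the role of the ``initial'' parameters, so the bulk of the work lies in Part~1: an NTK-style first-stage analysis, adapted from \citet{allen2019convergence,zou2020gradient} to smooth activations, that drives the loss below $1/n^{1+24L}$ within $T$ iterations while keeping the iterates inside a ball of radius $\rho$ around $V^{(1)}$.

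For Part~1 I would proceed as follows. \emph{Step 1 (initialization concentration).} Gaussian concentration yields $\lv V^{(1)}\rv = \Theta(\sqrt{(L+1)p})$, and propagating through the $L$ layers (using that $\phi$ is contractive by Lemma~\ref{l:aux_contractive_maps}) gives $\lv x_{\ell,s}^{V^{(1)}}\rv = \Theta(1)$ and $\lv \nabla f_{V^{(1)}}(x_s)\rv = \Theta(\sqrt{p})$, which also justifies the $\sqrt{p}$ scaling in Assumption~\ref{as:frac_margin}. \emph{Step 2 (local stability of activations and gradients).} For every $V \in B(V^{(1)},\rho)$ and every $s$, show that $\Sigma_{\ell,s}^V$, $x_{\ell,s}^V$, and the gradient tensor $\nabla f_V(x_s)$ stay close to their values at $V^{(1)}$. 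This is where the smooth-ReLU hypothesis bites: the $\frac{1}{h}$-Lipschitzness of $\phi'$ controls perturbations of $\Sigma_{\ell,s}^V$, while $|\phi'(z)z-\phi(z)|\le h/2$ controls low-order Taylor-type errors; $h = h_{\mathsf{NT}}$ is chosen so that both types of error remain negligible after being compounded across the $L$ layers. \emph{Step 3 (margin-based gradient lower bound).} Combining Assumption~\ref{as:frac_margin} with the stability of $\nabla f_V(x_s)$ from Step~2, for every $V^{(t)} \in B(V^{(1)},\rho)$,
\[
-\nabla J_t \cdot W^{\star} \;\ge\; c\sqrt{p}\,\gamma \cdot \frac{1}{n}\sum_{s=1}^n \frac{1}{1+\exp(y_s f_{V^{(t)}}(x_s))},
\]
and since $\lv W^\star\rv = 1$ this lower-bounds $\lv \nabla J_t\rv$ in terms of a convex function of the losses. \emph{Step 4 (descent and ball invariance).} Feeding this into an NTK-style descent inequality $J_{t+1}\le J_t - c\,\alpha_{\mathsf{NT}}\lv\nabla J_t\rv^2$ (from local smoothness of $J$ on the ball, a by-product of Step~2), telescoping, and using Cauchy--Schwarz to bound the total parameter displacement by $\rho$, one concludes that after $T = \ceil{3(L+1)\rho^2 n^{2+24L}/(2\alpha_{\mathsf{NT}})}$ steps some iterate satisfies $J_t < 1/n^{1+24L}$; a standard bootstrap argument confirms the iterates never leave $B(V^{(1)},\rho)$. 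Union-bounding the failure events in Steps~1--2 and in Assumption~\ref{as:frac_margin} delivers the $1-4\delta$ guarantee.

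For Part~2, restart with $\widetilde V^{(1)} := V^{(T+1)}$; by construction $J(\widetilde V^{(1)}) < 1/n^{1+24L}$, and $\lv \widetilde V^{(1)}\rv \le \lv V^{(1)}\rv + \rho = O(\sqrt{(L+1)p})$. Substituting these into \eqref{e:main_theorem_quantities} (with $V^{(1)}$ replaced by $\widetilde V^{(1)}$) shows $h_{\mathsf{NT}} < h_{\max}$ and that $\alpha_{\max}(h_{\mathsf{NT}})$ is admissible, so Theorem~\ref{t:main_theorem} applies; plugging the bounds on $\lv \widetilde V^{(1)}\rv$ and $J(\widetilde V^{(1)})$ into $\widetilde Q(\alpha_{\max}(h_{\mathsf{NT}}))$ then yields the stated $1/(t-T-1)$ rate with the explicit polynomial-in-$(p,L)$ constant.

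The main obstacle I anticipate is Step~2 of Part~1. The perturbation arguments in \citet{allen2019convergence,zou2020gradient} exploit the fact that the activation pattern $\Sigma_{\ell,s}^V$ of a ReLU network is locally constant almost everywhere, so the inner products in \eqref{e:gradient_ell_inner_layers} change only when an activation flips. With a smooth activation $\Sigma_{\ell,s}^V$ varies continuously at rate $1/h$, so one must choose $h$ just small enough that $\phi \approx \mathrm{ReLU}$ in function value yet just large enough that $\phi'$ does not vary too quickly; these interlocking constraints, compounded across $L$ layers, are precisely what pin down $h_{\mathsf{NT}}$ and the radius $\rho$ in \eqref{e:definition_rho_radius}.
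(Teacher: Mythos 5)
Your Part~2 is essentially the paper's argument (restart Theorem~\ref{t:main_theorem} at $V^{(T+1)}$, check $h_{\mathsf{NT}}\le h_{\max}$ via $\lv V^{(T+1)}\rv\le O(\sqrt{pL})$, and evaluate $\widetilde{Q}(\alpha_{\max}(h))$), and your Steps~1--2 correspond to the initialization and stability lemmas the paper proves in Appendices~D and~E. The gap is in Steps~3--4 of Part~1. The paper does \emph{not} prove, and does not need, a margin-based lower bound on $\lv\nabla J_t\rv$ at the iterates. It uses Assumption~\ref{as:frac_margin} exactly once, at $V^{(1)}$, to exhibit a single reference point $V^{(1)}+\lambda W^{\star}\in\cB(V^{(1)},\rho)$ whose \emph{linearized} (NT) loss is at most $\tfrac{1}{6n^{2+24L}}$ (Lemma~\ref{l:error_NTK_eror_frac_margin}); it then compares the iterates to that reference via the convexity/regret-type bound of Lemma~\ref{l:average_loss_decrease}, which only requires the approximation error $\epsilon_{\mathsf{app}}(V^{(1)},\tau)$ to be below a fixed constant ($1/8$), not below anything scaling with $\gamma$. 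Your Step~3 instead requires $y_s\nabla f_{V^{(t)}}(x_s)\cdot W^{\star}\gtrsim\sqrt{p}\gamma$ for \emph{every} iterate, i.e.\ the NT gradient features must be stable to within $O(\sqrt{p}\gamma)$. With the available stability estimates (the analogues of Lemma~\ref{l:ntk_diff_between_prod_of_matrices}, giving feature drift of order $\sqrt{p\log p}\,L^{4}\tau^{1/3}$ over a per-layer radius $\tau$), this forces $\tau\lesssim\gamma^{3}/\poly(L,\log p)$, whereas the radius needed to contain the good NT predictor is $\rho\asymp\big(\sqrt{\log(n/\delta)}+\log(n^{2+24L})\big)/(\sqrt{p}\gamma)$; reconciling the two requires width growing like a higher power of $1/\gamma$ than the $p\ge r(n,L,\delta)/\gamma^{2}$ in the statement. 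So your route, even if completed, proves a weaker theorem.

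The second, related failure is the ``standard bootstrap'' in Step~4. In a pure descent argument the only displacement bounds available are $\alpha\sum_t\lv\nabla J_t\rv\le\sqrt{\alpha T J_1}$ (Cauchy--Schwarz with $\sum_t\lv\nabla J_t\rv^2\lesssim J_1/\alpha$) or, using $\lv\nabla J_t\rv\lesssim\sqrt{p}\,\poly(L)\,J_t$ together with the PL-type decay, something of order $\poly(L)\log T/(\sqrt{p}\gamma^{2})$. For the target accuracy $n^{-(1+24L)}$ both exceed $\rho$ by large factors (the first by roughly $n^{(1+24L)/2}$), so the iterates cannot be shown to remain in $\cB(V^{(1)},\rho)$ this way, and enlarging the ball destroys the stability estimates of Step~2; note also the circularity (PL decay needs the margin at iterates, which needs small displacement, which needs PL decay) that a bootstrap must break explicitly. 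The paper's trajectory control is of a different nature: in Lemma~\ref{l:general_NTK_descent_lemma} the potential $\lv V^{(t)}-V^{\star}\rv^{2}$ satisfies $\lv V^{(t)}-V^{\star}\rv^{2}\le\lv V^{(1)}-V^{\star}\rv^{2}+2\alpha t\,\epsilon_{\mathsf{NT}}(V^{(1)},\rho)$, so the accumulated drift is tied to the tiny NT-optimal loss rather than to $J_1$, which is precisely what makes $T=\ceil{3(L+1)\rho^{2}n^{2+24L}/(2\alpha_{\mathsf{NT}})}$ (i.e.\ $\ceil{(L+1)\rho^{2}/(4\alpha\epsilon_{\mathsf{NT}})}$ with $\epsilon_{\mathsf{NT}}\le\tfrac{1}{6n^{2+24L}}$) compatible with the radius $\rho$; the specific $T$, $\rho$ and $\alpha_{\mathsf{NT}}$ in the statement are artifacts of that regret analysis and would not emerge from your PL recursion.
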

We invite the reader to interpret the result of this theorem in two scenarios. The first is where the depth $L$ is a constant and the margin $\gamma \ge \left(p^{\omega} \poly\left(n,\log\left(\frac{1}{\delta}\right)\right)\right)^{-1}$, for some constant $\omega \in [0,\frac{1}{2})$. 
In this case the conditions of Theorem~\ref{t:theorem_frac_margin} are
satisfied for $p = \poly\left(n,\log\left(\frac{1}{\delta}\right)\right)$,
and, for such $p$, the rate of convergence in the second stage is 
\begin{align*}
   J_t\le  O\left(\frac{L^{\frac{3L+11}{2}}(6p)^{2L+5}}{n^{1+24L}\cdot(t-T-1)}\right) \le \frac{\poly\left(n,\log\left(\frac{1}{\delta}\right)\right)}{t}.
\end{align*}
Another scenario is where the margin $\gamma$ is at least
a constant. Here it suffices for the width $p \ge \poly\left(L,\log\left(\frac{n}{\delta}\right)\right)$. Thus if the number of samples  $n \ge \left[L^{\frac{3L+11}{2}}(6p)^{2L+5}\right]^{\frac{1}{1+24L}}$ then the rate of convergence in this second stage is
\begin{align*}
    J_t\le O\left(\frac{L^{\frac{3L+11}{2}}(6p)^{2L+5}}{n^{1+24L}\cdot(t-T-1)}\right) = O\left(\frac{1}{t-T-1}\right).
\end{align*}

\section{Proof of Theorem~\ref{t:main_theorem}}
\label{s:main_theorem}

In this section, we prove Theorem~\ref{t:main_theorem}. 

\subsection{Technical Tools} \label{ss:technical_tools}
In this subsection we assemble several technical tools required to prove Theorem~\ref{t:main_theorem}. 
Their proofs (which in turn depend on additional, more basic, lemmas)
can be found in Appendix~\ref{a:omitted_proofs_technical_tools}.

We start with the following lemma, which is a slight variant of a standard inequality, and provides a bound on the loss after a step of gradient descent when the loss function is locally smooth.
\begin{restatable}{lem}{pllemma}
\label{l:pl}
For $\alpha > 0$, let $V^{(t+1)} = V^{(t)} - \alpha \nabla J_t$.
If,
for all convex combinations $W$ of $V^{(t)}$
and $V^{(t+1)}$, 
we have
$\Lip(\nabla_W J(W)) \leq M$,
then
if $\alpha \leq \frac{1}{\left(L+\frac{1}{2}\right)M}$, we have
\[
J_{t+1} \leq J_t - \frac{\alpha L\lv \nabla J_t \rv^2}{L+\frac{1}{2}}.
\]
\end{restatable}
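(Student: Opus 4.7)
My plan is to derive Lemma~\ref{l:pl} from the standard descent inequality for a function whose gradient is Lipschitz along a line segment, and then to massage the step-size bound to recover the stated coefficient $L/(L+\frac{1}{2})$. Parameterizing the segment by $W_s := V^{(t)} + s(V^{(t+1)} - V^{(t)})$ for $s \in [0,1]$, I would first apply the fundamental theorem of calculus to write
\[
J_{t+1} - J_t \;=\; \int_0^1 \nabla J(W_s) \cdot (V^{(t+1)} - V^{(t)}) \, ds.
\]
Adding and subtracting $\nabla J_t \cdot (V^{(t+1)} - V^{(t)})$ inside the integrand, the hypothesis that $\Lip(\nabla_W J(W)) \le M$ throughout the segment --- which, as discussed below, upgrades to $\lv \nabla J(W_s) - \nabla J_t \rv \le M s \lv V^{(t+1)} - V^{(t)}\rv$ --- combined with Cauchy--Schwarz and integration in $s$ gives the standard smooth descent inequality
\[
J_{t+1} \;\le\; J_t + \nabla J_t \cdot (V^{(t+1)} - V^{(t)}) + \frac{M}{2}\lv V^{(t+1)} - V^{(t)}\rv^2.
\]

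Substituting the gradient-descent update $V^{(t+1)} - V^{(t)} = -\alpha \nabla J_t$ then collapses this to $J_{t+1} \le J_t - \alpha\bigl(1 - \frac{\alpha M}{2}\bigr)\lv \nabla J_t\rv^2$. The remaining step is purely arithmetic: the hypothesis $\alpha \le 1/\bigl((L+\frac{1}{2})M\bigr)$ gives $\alpha M/2 \le 1/(2L+1)$, so $1 - \alpha M/2 \ge 2L/(2L+1) = L/(L+\frac{1}{2})$, matching the advertised coefficient exactly.

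The main (modest) obstacle is bookkeeping around the definition of $\Lip$: the quantity in the hypothesis is defined pointwise as a limsup, whereas the descent inequality needs a uniform Lipschitz bound on $\nabla J$ along the entire segment. I would resolve this by noting that, for any $h$-smoothly approximately ReLU activation, $J$ is twice continuously differentiable, so on the compact convex segment the pointwise limsup $\Lip(\nabla_W J(W))$ agrees with the operator norm of the Hessian, and a uniform pointwise bound $\le M$ immediately yields the required uniform Lipschitz property (via, e.g., the mean value theorem applied to $s \mapsto \nabla J(W_s)$). Everything after that is a direct instantiation of the textbook smooth-function descent argument, with no further subtleties.
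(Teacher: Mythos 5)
Your argument is the standard smooth descent inequality followed by exactly the arithmetic the paper uses ($\alpha \le \tfrac{1}{(L+\frac12)M}$ gives $1-\tfrac{\alpha M}{2}\ge \tfrac{L}{L+\frac12}$), so it is correct and coincides with the paper's proof, which simply invokes the textbook descent lemma and substitutes the gradient-descent update. One small caveat: for the Huberized ReLU $J$ need not be twice continuously differentiable (since $\phi'$ is only Lipschitz), so the Hessian argument you sketch is not available in general, but this is harmless—a pointwise bound $\Lip(\nabla_W J(W))\le M$ along the compact convex segment already implies that $\nabla J$ is $M$-Lipschitz on that segment, which is all the descent inequality requires.
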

To apply Lemma~\ref{l:pl} we need to show that the loss $J$ is smooth near $J_t$. 
The following lemma establishes the smoothness of $J$, if the weights
are large enough.  
(We will be able to apply it, since the
weights must be fairly large to achieve small loss.)
\begin{restatable}{lem}{hessianlemma}
\label{l:L_smooth} 
If $h\le 1$, for any 
weights
$V$
such that
$\lv V\rv\ge \sqrt{L+1/2}$,
we have
$$
\Lip(\nabla_V J(V))
 \le
 \frac{256(L+1)\sqrt{p}\lv V \rv^{3L+5}J(V)}{h}.
$$
\end{restatable}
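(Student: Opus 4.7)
The plan is to bound $\Lip(\nabla_V J(V))$ by bounding the operator norm of the Hessian of $J$, which agrees with the local Lipschitz constant of $\nabla_V J$ at $V$ wherever $\phi''$ exists. For each sample $(x_s,y_s)$, the per-example Hessian decomposes as
\begin{align*}
\nabla_V^2 J(V;x_s,y_s)=-y_s\sigma_s\,\nabla_V^2 f_V(x_s)+\sigma_s(1-\sigma_s)\,\nabla_V f_V(x_s)\,\nabla_V f_V(x_s)^{\top},
\end{align*}
where $\sigma_s:=\sigma(-y_s f_V(x_s))$ is the logistic derivative. The elementary inequality $\log(1+u)\ge u/(1+u)$ applied with $u=e^{-y_s f_V(x_s)}$ gives $\sigma_s\le J(V;x_s,y_s)$, and trivially $\sigma_s(1-\sigma_s)\le\sigma_s$. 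Averaging over $s$ and using the triangle inequality then yields
\begin{align*}
\Lip(\nabla_V J(V))\le J(V)\cdot\Bigl[\max_{s}\lv\nabla_V^2 f_V(x_s)\rv_{op}+\max_{s}\lv\nabla_V f_V(x_s)\rv^{2}\Bigr].
\end{align*}

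Next I would bound each term in the bracket using the partial-derivative formulas \eqref{e:gradient_ell_inner_layers}--\eqref{e:gradient_outer_layer}. Contractivity of $\phi$ (Lemma~\ref{l:aux_contractive_maps}) and $\lvert\phi'\rvert\le 1$ give $\lv x_{\ell,s}^{V}\rv\le\lv V\rv^{\ell}$ and $\lv\Sigma_{\ell,s}^{V}\rv_{op}\le 1$. Each layerwise gradient is then a product of at most $L$ such matrices multiplied by $V_{L+1}^{\top}$ and a hidden activation, so summing squared Frobenius norms across the $L+1$ layers I obtain $\lv\nabla_V f_V(x_s)\rv^{2}\le(L+1)\lv V\rv^{2L}$.

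The main work is bounding $\lv\nabla_V^2 f_V(x_s)\rv_{op}$. I would differentiate \eqref{e:gradient_ell_inner_layers} once more with respect to $V_k$, for each $k\ge\ell$, separating the two sources of $V_k$-dependence: the explicit factor $V_k^{\top}$, whose derivative is constant and merely reshuffles the chain of layer matrices, and each diagonal matrix $\Sigma_{j,s}^{V}$ with $j\ge k$, whose derivative pulls down $\lVert\phi''\rVert_{\infty}\le 1/h$ times a chain-rule Jacobian running from layer $k$ through layer $j$. The factor $\sqrt{p}$ enters through converting between entrywise multiplication by the $p$-dimensional $\phi''$-vector and the Frobenius/operator norm of the resulting three-tensor on a hidden layer of width $p$. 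Summing over the $O(L^{2})$ pairs $(\ell,k)$ and substituting $\lv V_{j}\rv_{op}\le\lv V\rv$ everywhere yields a bound of the form $c(L+1)\sqrt{p}\,\lv V\rv^{3L+5}/h$ for a universal constant $c$ that a direct check brings under $256$.

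Finally, the assumption $\lv V\rv\ge\sqrt{L+1/2}$ is exactly what is needed to absorb the gradient-squared contribution $(L+1)\lv V\rv^{2L}$ into the Hessian contribution of order $\lv V\rv^{3L+5}$, so the bracket is dominated by its first term. Multiplying through by $J(V)$ gives the claimed bound. I expect the delicate bookkeeping of the chain-rule expansion of $\nabla_V^2 f_V$---in particular pinning down where the $\sqrt{p}$ factor is acquired and extracting the precise exponent $3L+5$---to be the main technical obstacle, while the reduction from the logistic Hessian to $J(V)$ times a purely geometric quantity is essentially standard.
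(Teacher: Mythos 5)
Your strategy (bound the second derivative of $J$ and identify it with the local Lipschitz constant of the gradient) is genuinely different from the paper's, which never touches second derivatives: the paper bounds $\lv \nabla_V J(V)-\nabla_W J(W)\rv$ directly for finite perturbations, swapping one layer at a time (Lemmas~\ref{l:u.x.bounds}, \ref{l:phi.prime.close}, \ref{l:g_lipschitz}, \ref{l:ABCD}, \ref{l:nabla.smooth.one.layer}, \ref{l:nabla.smooth}) and then lets $W\to V$ using continuity of $J$. The reason for that choice is a point your proposal glosses over: Definition~\ref{def:h_smooth_activation} only guarantees that $\phi'$ is $\tfrac1h$-Lipschitz, not that $\phi''$ exists -- for the Huberized ReLU it fails at the kinks -- so ``the Hessian of $J$'' is not defined everywhere, and in any case the local Lipschitz constant at $V$ is \emph{not} the Hessian norm at the single point $V$: $\Lip(\nabla_V J(V))$ is a limsup over nearby $W$, so you need a bound on second-order behavior valid in a neighborhood of $V$ (a.e.\ along segments, say via Rademacher plus integration), expressed in terms of quantities at nearby points, and then an argument that these collapse to $J(V)$ and $\lv V\rv$ in the limit. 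This is exactly the role of the hypotheses $J_s(\tW)\le 2J_s(V)$ etc.\ in Lemma~\ref{l:nabla.smooth} and of the one-line continuity argument in the paper's proof of Lemma~\ref{l:L_smooth}; your write-up asserts the identification rather than establishing it.

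The second, larger gap is that the entire quantitative content of the lemma -- where the factor $\sqrt{p}$, the exponent $3L+5$, and the constant $256$ come from -- is left as ``differentiate \eqref{e:gradient_ell_inner_layers} once more and a direct check brings the constant under $256$.'' That computation is the whole difficulty: one must control products of up to $L$ layer matrices interleaved with $\Sigma$'s whose perturbations cost $1/h$ (the analogue of Lemmas~\ref{l:phi.prime.close} and \ref{l:ABCD}), track how the per-example factor $g_s$ is related to $J_s(V)$ at perturbed points (Lemma~\ref{l:g_lipschitz}), and do the Frobenius/operator bookkeeping that produces $\sqrt{p}$ without inflating $(L+1)$ to $(L+1)^2$ when summing over the $O(L^2)$ layer pairs (the hypothesis $\lv V\rv\ge\sqrt{L+1/2}$ gives you slack, since $(L+1)\le 2\lv V\rv^2$, but that needs to be said and used). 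Minor further slips: your intermediate bound $\lv\nabla_V f_V(x_s)\rv^2\le (L+1)\lv V\rv^{2L}$ should be on the order of $(L+1)\lv V\rv^{2(L+1)}$ (via Lemma~\ref{l:prod.by.sum}), and the contraction property gives $\lv x_{\ell,s}^V\rv\le\prod_{j\le\ell}\lv V_j\rv_{op}$, not $\lv V\rv^{\ell}$ coordinate-free as written; both are harmless for the final exponent but indicate the bookkeeping has not actually been done. As it stands the proposal is a plausible alternative outline, not a proof.
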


Next, we show that $J$ 
changes slowly in general, and especially slowly when it is
small. 
\begin{restatable}{lem}{gradientupper}
\label{l:gradient_norm.upper} For any weight matrix $V$
such that
$\lv V\rv\ge \sqrt{L+1/2}$ 
then $$\lv \nabla_{V} J(V) \rv \le  \sqrt{(L+1)p}\lv V \rv^{L+1}\min\{ J(V), 1\}.$$
\end{restatable}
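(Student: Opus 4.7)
}

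The plan is to bound the per-sample gradient $\lv \nabla_V f_V(x_s) \rv$ uniformly over $s$, and then combine with the logistic-loss identity that the per-sample gradient weight is controlled by the per-sample loss. The hypothesis $\lv V \rv \ge \sqrt{L+1/2} \ge 1$ will be used at the very end to absorb a factor of $\lv V \rv$.

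First, I would bound $\lv \partial f_V(x_s)/\partial V_\ell \rv$ using the explicit formulas \eqref{e:gradient_ell_inner_layers} and \eqref{e:gradient_outer_layer}. For $\ell \in [L]$, the partial derivative is a rank-one outer product, so its Frobenius norm factors as the product of the Euclidean norms of the two factors. The left factor $(\Sigma^V_{\ell,s}\prod_{j=\ell+1}^L V_j^\top \Sigma^V_{j,s}) V_{L+1}^\top$ is at most $\lv V_{L+1} \rv \prod_{j=\ell+1}^{L}\lv V_j \rv$, since $\lv \Sigma^V_{j,s} \rv_{op} \le 1$ (because $|\phi'| \le 1$) and the operator norm is bounded by the Frobenius norm. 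The right factor $\lv x_{\ell-1,s}^V \rv$ is at most $\prod_{j=1}^{\ell-1}\lv V_j\rv$ by induction, using $\lv x_s \rv = 1$ and the contractivity of $\phi$ from Lemma~\ref{l:aux_contractive_maps} together with $\phi(0)=0$ (so $\lv \phi(v) \rv \le \lv v \rv$). Hence $\lv \partial f_V(x_s)/\partial V_\ell \rv \le \prod_{j\ne \ell}\lv V_j\rv$, and the analogous bound holds for $\ell = L+1$.

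Summing squares over $\ell \in [L+1]$ and using the very crude inequality $\prod_{j \ne \ell}\lv V_j\rv^2 \le \lv V \rv^{2L}$ yields $\lv \nabla_V f_V(x_s) \rv^2 \le (L+1)\lv V \rv^{2L}$, so $\lv \nabla_V f_V(x_s) \rv \le \sqrt{L+1}\,\lv V \rv^{L}$. Since $\lv V \rv \ge 1$ and $1 \le \sqrt{p}$, we get
\[
\lv \nabla_V f_V(x_s) \rv \le \sqrt{(L+1)p}\,\lv V \rv^{L+1}.
\]

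Next, I would use the standard logistic inequality $\frac{1}{1+e^z} \le \min\{\ln(1+e^{-z}),1\}$, which follows from $\ln(1+u) \ge u/(1+u)$ for $u \ge 0$ and the trivial bound $\tfrac{1}{1+e^z} \le 1$. With $z = y_s f_V(x_s)$ this gives
\[
\frac{1}{1+\exp(y_s f_V(x_s))} \le \min\{J(V;x_s,y_s),\,1\}.
\]
Then the triangle inequality applied to the formula for $\nabla_V J(V)$ yields
\[
\lv \nabla_V J(V) \rv \le \sqrt{(L+1)p}\,\lv V \rv^{L+1}\cdot \frac{1}{n}\sum_{s=1}^n \min\{J(V;x_s,y_s),\,1\}.
\]
The final step is to observe that the average $\tfrac{1}{n}\sum_s \min\{J(V;x_s,y_s),1\}$ is simultaneously at most $\tfrac{1}{n}\sum_s J(V;x_s,y_s) = J(V)$ and at most $1$, hence at most $\min\{J(V),1\}$, giving the claim.

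The main obstacle is the careful bookkeeping in the first step: keeping the rank-one factorization of each $\partial f_V(x_s)/\partial V_\ell$ straight, correctly using the operator-versus-Frobenius norm inequality on the weight matrices, and chaining contractivity of $\phi$ with $\phi(0)=0$ through the layers to bound the hidden activations. Everything else is either a direct application of the triangle inequality or an elementary calculus fact about the logistic function.
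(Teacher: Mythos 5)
Your proposal is correct and follows essentially the same route as the paper: bound the per-sample, per-layer gradient of $f_V$ via $|\phi'|\le 1$, contractivity of $\phi$ with $\phi(0)=0$, and $\lv V_j\rv_{op}\le\lv V_j\rv\le\lv V\rv$, then combine with $\tfrac{1}{1+\exp(y_sf_V(x_s))}\le\min\{J(V;x_s,y_s),1\}$ and the triangle inequality, using $\lv V\rv\ge\sqrt{L+1/2}\ge 1$ only to absorb leftover factors. The only (harmless) difference is that you exploit the rank-one structure of $\partial f_V/\partial V_\ell$ to bound its Frobenius norm directly, giving the slightly tighter intermediate bound $\sqrt{L+1}\,\lv V\rv^{L}$, whereas the paper bounds operator norms and pays the $\sqrt{p}$ conversion factor; both yield the stated inequality.
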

The following lemma applies Lemma~\ref{l:pl} (along with Lemma~\ref{l:L_smooth}) to show that if the step-size at step $t$ is small enough then the loss decreases by an amount that is proportional to the squared norm of the gradient. 
\begin{restatable}{lem}{onesteplemma}
\label{l:L.one_step_improvement}
If $h\le 1$, $J_t <\frac{1}{n^{1+24L}}$, and $$\alpha J_t  \leq \frac{h}{1024\left( L+1\right)^2\sqrt{p}\lv V^{(t)}\rv^{3L+5}},$$ then
$$
J_{t+1} \leq J_t-\frac{\alpha L \lv \nabla J_t \rv^2}{L+\frac{1}{2}}.
$$
\end{restatable}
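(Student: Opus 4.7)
The plan is to derive Lemma~\ref{l:L.one_step_improvement} directly from the descent lemma~\ref{l:pl} applied with the smoothness constant supplied by Lemma~\ref{l:L_smooth}. Concretely, the target is to exhibit a value $M$ that uniformly upper-bounds $\Lip(\nabla_W J(W))$ for every convex combination $W$ of $V^{(t)}$ and $V^{(t+1)}$, with $M$ at most a small constant multiple of $(L+1)\sqrt{p}\lv V^{(t)}\rv^{3L+5}J_t/h$. The hypothesis $\alpha J_t \le h/(1024(L+1)^2\sqrt{p}\lv V^{(t)}\rv^{3L+5})$ has the factor $1024 = 4 \cdot 256$ built in precisely so that, after such a uniform bound is established, $\alpha(L+\tfrac12)M \le 1$ holds with room to spare, and Lemma~\ref{l:pl} yields the conclusion $J_{t+1} \le J_t - \alpha L\lv \nabla J_t\rv^2/(L+\tfrac12)$.

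First I verify the size condition $\lv V\rv \ge \sqrt{L+\tfrac12}$ that is needed to invoke both Lemma~\ref{l:L_smooth} and Lemma~\ref{l:gradient_norm.upper} throughout the segment. The hypothesis $J_t < 1/n^{1+24L}$ forces each per-sample margin to satisfy $y_s f_{V^{(t)}}(x_s) \ge \log(1/(2nJ_t))$, a quantity of order $L \log n$. Combined with the routine inequality $|f_V(x)| \le \prod_\ell \lv V_\ell\rv \le (\lv V\rv/\sqrt{L+1})^{L+1}$, which follows from contractivity of $\phi$ together with AM--GM applied to the constraint $\sum_\ell \lv V_\ell\rv^2 = \lv V\rv^2$, this forces $\lv V^{(t)}\rv$ comfortably above $\sqrt{L+\tfrac12}$. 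Lemma~\ref{l:gradient_norm.upper} together with the hypothesis then yields the minute step length
\[
\lv V^{(t+1)} - V^{(t)}\rv \le \alpha \sqrt{(L+1)p}\lv V^{(t)}\rv^{L+1} J_t \le \frac{h}{1024(L+1)^{3/2}\lv V^{(t)}\rv^{2L+4}},
\]
so that every convex combination $W$ on the segment satisfies $\lv W\rv \ge \sqrt{L+\tfrac12}$ and $(\lv W\rv/\lv V^{(t)}\rv)^{3L+5} \le 2$.

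Next I control $J(W)$ on the segment through a standard continuity bootstrap: let $s^{\star} = \sup\{s \in [0,1] : J((1-s)V^{(t)} + sV^{(t+1)}) \le 2 J_t\}$. For $s \in [0, s^{\star}]$, Lemma~\ref{l:gradient_norm.upper} together with $\lv W\rv \le 2\lv V^{(t)}\rv$ gives $\lv \nabla J|_W\rv \le 2\sqrt{(L+1)p}\,(2\lv V^{(t)}\rv)^{L+1} J_t$; integrating this along the segment and plugging in the displacement bound above produces an excursion $|J(W) - J_t|$ of order $h\sqrt{p}\, J_t / ((L+1)\lv V^{(t)}\rv^{L+3})$. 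The critical observation is that the strong lower bound $\lv V^{(t)}\rv^{L+3} \ge \lv V^{(t)}\rv^{L+1}\cdot(L+\tfrac12)$, which by the first step grows at least like $(L+1)^{(L+1)/2}(L+\tfrac12)\log(1/(nJ_t))$, absorbs the $\sqrt{p}$ prefactor and closes the bootstrap with slack, giving $s^{\star}=1$ and $J(W) \le 2J_t$ throughout the segment. Lemma~\ref{l:L_smooth} applied pointwise then yields $M \le 256(L+1)\sqrt{p}\,(2\lv V^{(t)}\rv)^{3L+5}(2J_t)/h$; combining with $(L+1)/(L+\tfrac12) \le 2$ and the $1024 = 4 \cdot 256$ factor in the hypothesis on $\alpha J_t$ gives $\alpha(L+\tfrac12)M \le 1$, and Lemma~\ref{l:pl} delivers the one-step improvement. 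The main technical obstacle is precisely closing this bootstrap on $J$: the gradient-norm bound of Lemma~\ref{l:gradient_norm.upper} carries a $\sqrt{p}$ factor, and absorbing it relies essentially on leveraging the strong lower bound on $\lv V^{(t)}\rv$ that the small-loss hypothesis $J_t < 1/n^{1+24L}$ provides.
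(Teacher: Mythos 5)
There is a genuine gap in the middle of your argument, at the point where you try to show $J(W) \le 2J_t$ along the segment. Your bootstrap controls the excursion of $J$ by a single first-order bound, $|J(W)-J_t| \le \sup_W \lv \nabla J(W)\rv \cdot \lv V^{(t+1)}-V^{(t)}\rv$. Plugging in Lemma~\ref{l:gradient_norm.upper} and the displacement bound, the excursion is of order $h\sqrt{p}\,J_t/\bigl((L+1)\lv V^{(t)}\rv^{L+3}\bigr)$: the step-size hypothesis supplies only one factor of $1/\sqrt{p}$, which is already consumed by the bound on $\lv V^{(t+1)}-V^{(t)}\rv$, so a factor $\sqrt{p}$ survives. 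You then claim this is absorbed by the lower bound on $\lv V^{(t)}\rv$ coming from $J_t < 1/n^{1+24L}$. But that lower bound (Lemma~\ref{l:aux.lower.bound.norm.weight.vector}) depends only on $n$ and $L$ and is completely independent of $p$, while the lemma assumes only $h\le 1$ and places no restriction on the width; taking $p$ huge (with, say, $h=1$ and $n,L$ fixed) makes the excursion bound vastly larger than $J_t$, so the bootstrap does not close. (Your smaller bookkeeping slip — writing $(2\lv V^{(t)}\rv)^{3L+5}$ instead of $2\lv V^{(t)}\rv^{3L+5}$ in the final smoothness constant — would also break the $1024 = 4\cdot 256$ budget, but that is fixable; the $\sqrt{p}$ issue is not, with this argument.)

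The paper circumvents exactly this problem by not bounding the excursion over the whole segment in one shot. It discretizes the segment into $N = \lceil 2\sqrt{(L+1)p}\lv V^{(t)}\rv^{L+1}\lv V^{(t+1)}-V^{(t)}\rv/J_t\rceil$ pieces and runs an induction (in the spirit of Lemma E.8 of Lyu and Li): at each anchor point $\tW = (s/N)V^{(t+1)} + (1-s/N)V^{(t)}$, which is itself a gradient step from $V^{(t)}$ with step-size $(s/N)\alpha \le \alpha$, it re-applies Lemma~\ref{l:pl} with the inductively established smoothness constant to conclude $J(\tW)\le J_t$ (no accumulation of excursions), and only the drift within one sub-interval of length $\lv V^{(t+1)}-V^{(t)}\rv/N$ is bounded by the crude Lipschitz estimate; the choice of $N \propto \sqrt{p}$ makes that per-sub-interval drift at most $J_t$, so $J(W)\le 2J_t$ and Lemma~\ref{l:L_smooth} closes the induction. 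In other words, the $\sqrt{p}$ is absorbed by the fineness of the discretization, not by the size of $\lv V^{(t)}\rv$. If you want to keep a continuous bootstrap, you would need to use the descent structure inside it — i.e., apply the quadratic upper bound of Lemma~\ref{l:pl} to the partial step $V^{(t)}\mapsto W_s$ to get $J(W_s)\le J_t$ on the bootstrap set — rather than integrating the gradient-norm bound, which is what lets the leftover $\sqrt{p}$ slip in.
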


The next lemma establishes a lower bound on the norm of the gradient at any iteration in terms of the loss $J_t$ and the norm of the weight matrix $V^{(t)}$. 
\begin{restatable}{lem}{gradientlowerboundlemma}
\label{l:lower.bound.gradient} 
For all $L \in \N$ if $h \le h_{\max}$, $J_t < \frac{1}{n^{1+24L}}$, and $\lv V^{(t)}\rv^L \le \log(1/J_t) \frac{\lv V^{(1)}\rv^L}{\log(1/J_1)}$
then 
\begin{align} \label{e:lowerboundonthegradient_used}
    \lv \nabla J_t \rv \ge  \frac{(L+\frac{3}{4})J_t \log(1/J_t)}{ \lv V^{(t)} \rv}.
\end{align}
\end{restatable}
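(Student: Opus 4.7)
The plan is to prove the stronger bound $-V^{(t)}\cdot \nabla J_t \ge (L+\tfrac{3}{4})J_t\log(1/J_t)$ and then conclude by Cauchy--Schwarz, exploiting that $\phi$ is \emph{approximately} $1$-homogeneous. If $\phi$ were exactly $1$-homogeneous (as the ReLU is), then $f_V$ would be $(L+1)$-homogeneous in $V$ and Euler's identity would give $V\cdot \nabla_V f_V(x_s) = (L+1)f_V(x_s)$ exactly; the condition $|\phi'(z)z - \phi(z)|\le h/2$ in Definition~\ref{def:h_smooth_activation} quantifies the failure of this identity, and the definition of $h_{\max}$ is tuned so the resulting error stays smaller than the slack we have against $(L+\tfrac{3}{4})$.

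I would first establish the approximate Euler identity. Unfold $-V^{(t)}\cdot \nabla J_t = \frac{1}{n}\sum_s \frac{y_s(V^{(t)}\cdot \nabla_V f_{V^{(t)}}(x_s))}{1+\exp(y_sf_{V^{(t)}}(x_s))}$. By \eqref{e:gradient_outer_layer}, the outer layer contributes $V_{L+1}\cdot\partial_{V_{L+1}} f_V = f_V(x_s)$ exactly. For $\ell\le L$, \eqref{e:gradient_ell_inner_layers} yields $V_\ell\cdot\partial_{V_\ell} f_V = u_{\ell,s}^\top\Sigma_{\ell,s}\prod_{j=\ell+1}^L(V_j^\top\Sigma_{j,s})V_{L+1}^\top$, and iteratively replacing $\phi'(u)\odot u$ by $\phi(u) + \psi$ (with $\lv \psi\rv_\infty \le h/2$) through the network gives, after summing over $\ell$,
\[
V\cdot \nabla_V f_V(x_s) = (L+1)f_V(x_s) + \sum_{k=1}^L k\,\psi_k^\top M_k,
\]
where $M_k$ is a product involving only $V_{k+1},\ldots,V_{L+1}$ and intermediate $\Sigma_j$'s. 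Using $\lv \Sigma_j\rv_{op}\le 1$ together with the AM--GM bound $\prod_{j=k+1}^{L+1}\lv V_j\rv \le (\lv V\rv/\sqrt{L+1-k})^{L+1-k}$, which follows from $\sum_j\lv V_j\rv^2=\lv V\rv^2$, the error sum is bounded by a constant times $h\sqrt{p}\,L^{2-L/2}\lv V^{(t)}\rv^L$. Plugging in $h\le h_{\max}$ and the hypothesis $\lv V^{(t)}\rv^L\le \frac{\log(1/J_t)}{\log(1/J_1)}\lv V^{(1)}\rv^L$ compresses this to $O(\log(1/J_t)/L)$.

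Next I would handle the logistic factor. With $z_s := y_sf_V(x_s)$, the algebraic identity $\frac{z_s}{1+e^{z_s}} = z_s(1-e^{-J_s})$ combined with $z_s\ge \log(1/J_s)-J_s$ and $1-e^{-J_s}\ge J_s(1-J_s)$ yields $\frac{z_s}{1+e^{z_s}}\ge (1-o(1))J_s\log(1/J_s)$ uniformly in $s$, the $o(1)$ being negligible since each $J_s \le nJ_t \le 1/n^{24L}$ is tiny. Since $J_s \le nJ_t$ also gives $\log(1/J_s)\ge \log(1/J_t)-\log n$, averaging yields $\frac{1}{n}\sum_s J_s\log(1/J_s)\ge J_t(\log(1/J_t)-\log n)$, and the hypothesis $J_t<1/n^{1+24L}$ turns this into $\ge \tfrac{24L}{24L+1}J_t\log(1/J_t)$. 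Collecting everything,
\[
-V^{(t)}\cdot \nabla J_t \ge (L+1)\cdot\tfrac{24L}{24L+1}J_t\log(1/J_t)(1-o(1)) - O(1/L)\,J_t\log(1/J_t).
\]
Since $(L+1)\tfrac{24L}{24L+1} = (L+\tfrac{3}{4}) + \tfrac{5L-3/4}{24L+1}$ with $\tfrac{5L-3/4}{24L+1} \ge 17/100$ for $L\ge 1$, the positive slack comfortably absorbs the $O(1/L)$ error and $o(1)$ losses, giving $-V^{(t)}\cdot \nabla J_t \ge (L+\tfrac{3}{4})J_t\log(1/J_t)$. Cauchy--Schwarz $\lv \nabla J_t\rv\ge |V^{(t)}\cdot \nabla J_t|/\lv V^{(t)}\rv$ then yields the claim.

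The main obstacle is the bookkeeping in the Euler-identity error: the exponent $L/2-3$ in $h_{\max}$ is tuned precisely to cancel the $L^{2-L/2}$ coming from summing the AM--GM bounds on layer-norm products, producing an error of order $\log(1/J_t)/L$ that just fits inside the slack $\tfrac{5L-3/4}{24L+1}$ between the approximate Euler constant $(L+1)\tfrac{24L}{24L+1}$ and the target $(L+\tfrac{3}{4})$. Verifying that the hypothesis on $\lv V^{(t)}\rv^L$ is strong enough for the AM--GM inequality to be tight in the relevant regime, and tracking all the numerical constants through the summation, is where most of the care sits.
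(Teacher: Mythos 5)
Your proposal is correct and follows essentially the same route as the paper's proof: the same alignment of $-\nabla J_t$ with $V^{(t)}$ followed by Cauchy--Schwarz, the same approximate Euler identity obtained by telescoping with per-coordinate errors of size $h/2$ (using $|\phi'(z)z-\phi(z)|\le h/2$), the same AM--GM bound $\tfrac{\sqrt{p}h}{2}L^{2-L/2}\lv V^{(t)}\rv^{L}$ on the accumulated error, and the same use of $h\le h_{\max}$ together with the hypothesis on $\lv V^{(t)}\rv^{L}$ to compress it to a $\tfrac{1}{48L}J_t\log(1/J_t)$ correction absorbed by the slack between $(L+1)$ and $(L+\tfrac34)$. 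The only (minor) deviation is in the logistic factor: you lower bound $\tfrac1n\sum_s g_{ts}\,y_sf_{V^{(t)}}(x_s)$ by pointwise elementary inequalities and the bound $J_{ts}\le nJ_t$ before averaging, whereas the paper reduces the sum to its extremal configuration via the concavity lemma (Lemma~\ref{l:concave.min}) applied to $\psi(z)=(1-e^{-z})\log(1/(e^{z}-1))$ and then Taylor bounds; both give a $1-O(1/L)$ factor that fits in the available slack, so the proofs are interchangeable at that step.
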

The lower bound on the gradient 
is proved
by showing that the alignment between the negative gradient $-\nabla J_t$ and 
$V^{(t)}$
is large when the loss is small. The proof proceeds by showing that when $h$ is sufficiently small and the norm of $V^{(t)}$ is not too large, then the inner product between $-\nabla J_t$ and $V^{(t)}$ can be lower bounded by a function of the loss $J_t$.

\subsection{The Proof}
\label{ss:the_proof}
As stated above, the proof goes through for any positive $h \le h_{\max}$,
step-size $\alpha \le \alpha_{\max}(h)$ and any $Q \le \widetilde{Q}(\alpha)$ (recall the definitions of $h_{\max},\alpha_{\max}$ and $\widetilde{Q}$ in equations~\eqref{e:h_max_def}-\eqref{eq:Q_def}). We will use the following multi-part inductive hypothesis:
\begin{enumerate}[({I}1)]
    \item $J_{t} \le \frac{J_1}{Q\cdot (t-1)+1};$
    \item $\frac{\log(1/J_{t})}{\lv V^{(t)}\rv^{L}} \ge 
           \frac{\log(1/J_1)}{\lv V^{(1)}\rv^L}$;
               \item $\alpha J_t \le \frac{h}{1024\left( L+1\right)^2p\lv V^{(t)}\rv^{3L+5}}$.
\end{enumerate}
The first part of the inductive hypothesis will be used to ensure that the loss decreases at the prescribed rate, the second part helps establish a lower bound on the norm of the gradient in light of Lemma~\ref{l:lower.bound.gradient} and the third part will ensure that the step-size is small enough to apply Lemma~\ref{l:L.one_step_improvement} and also allows us to make several useful approximations in our proofs.

The base case is trivially true for the first and second part of the inductive hypothesis. It is true for the third part 
since
the step-size $\alpha \le \alpha_{\max}(h)\le \frac{h}{1024\left( L+1\right)^2pJ_1\lv V^{(1)}\rv^{3L+5}}$. 
Now let us assume that the inductive hypothesis holds for a step $t \ge 1$ and
prove that it holds for the next step $t+1$.  We start with Part~I1.
\begin{lemma}
\label{l:L.inductive_step_constant_step_size} If the inductive hypothesis holds at step $t$, then
\[
J_{t+1} \leq 
\frac{J_1}{Qt + 1}.
\]
\end{lemma}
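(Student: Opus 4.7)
The plan is to derive a quadratic recurrence $J_{t+1}\le J_t-(Q/J_1)J_t^2$ and then invert it by taking reciprocals; the base bound (I1) supplies $1/J_t\ge(Q(t-1)+1)/J_1$, and one extra additive $Q/J_1$ then yields $1/J_{t+1}\ge(Qt+1)/J_1$, which is the claim.

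First I would apply Lemma~\ref{l:L.one_step_improvement} to obtain $J_{t+1}\le J_t-\tfrac{\alpha L}{L+1/2}\lv\nabla J_t\rv^2$. Its three hypotheses are immediate: $h\le h_{\max}\le 1$; by (I1), $J_t\le J_1<1/n^{1+24L}$; and the step-size bound on $\alpha J_t$ is exactly (I3) with $\sqrt p\le p$ absorbing the difference. Next I would invoke Lemma~\ref{l:lower.bound.gradient}, whose hypothesis $\lv V^{(t)}\rv^L\le\log(1/J_t)\lv V^{(1)}\rv^L/\log(1/J_1)$ is precisely (I2) rearranged, yielding $\lv\nabla J_t\rv\ge(L+3/4)J_t\log(1/J_t)/\lv V^{(t)}\rv$. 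Combining,
\[
J_{t+1}\le J_t-\frac{\alpha L(L+3/4)^2}{L+1/2}\cdot\frac{J_t^2\log^2(1/J_t)}{\lv V^{(t)}\rv^2}.
\]

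The main technical step, and the one I expect to require the most care, is the exponent bookkeeping that reduces the coefficient of $J_t^2$ to a purely initial quantity matching $\widetilde Q(\alpha)/J_1$. Raising (I2) to the $2/L$ power gives $\lv V^{(t)}\rv^2\le\lv V^{(1)}\rv^2\bigl(\log(1/J_t)/\log(1/J_1)\bigr)^{2/L}$, so
\[
\frac{\log^2(1/J_t)}{\lv V^{(t)}\rv^2}\ge\frac{\log^{2-2/L}(1/J_t)\,\log^{2/L}(1/J_1)}{\lv V^{(1)}\rv^2}\ge\frac{\log^{2/L}(1/J_1)}{\lv V^{(1)}\rv^2},
\]
where the second inequality uses $2-2/L\ge 0$ for $L\ge 1$ together with $\log(1/J_t)\ge 1$ (valid since $J_t\le J_1<1/n^{1+24L}<1/e$ for $n\ge 3$). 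The surviving coefficient is exactly $\widetilde Q(\alpha)/J_1$, hence at least $Q/J_1$, giving $J_{t+1}\le J_t-(Q/J_1)J_t^2$.

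Finally, to invert the recurrence I would take reciprocals. The second branch of $\alpha_{\max}(h)$ was chosen so that $\widetilde Q(\alpha)\le 1/2$, so $(Q/J_1)J_t\le Q\le 1/2<1$, and the elementary inequality $1/(1-x)\ge 1+x$ for $x\in[0,1)$ produces $1/J_{t+1}\ge 1/J_t+Q/J_1$. Substituting (I1) in the form $1/J_t\ge(Q(t-1)+1)/J_1$ closes the induction. The only non-routine piece is the exponent accounting in the display above: without using the monotonicity $\log(1/J_t)\ge\log(1/J_1)\ge 1$ together with the nonnegativity of $2-2/L$, the $\log$ factors introduced by (I2) would not collapse to the $\log^{2/L}(1/J_1)$ that appears in the definition of $\widetilde Q$.
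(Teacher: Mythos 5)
Your proof is correct, and up to the key recurrence it is the paper's argument: the same invocations of Lemma~\ref{l:L.one_step_improvement} (with (I3) implying its hypothesis since $p \ge \sqrt{p}$) and Lemma~\ref{l:lower.bound.gradient}, and the same exponent bookkeeping via (I2) together with $\log^{2-\frac{2}{L}}(1/J_t)\ge 1$ to reduce the coefficient of $J_t^2$ to $\widetilde{Q}(\alpha)/J_1$. You diverge only in the finishing step: you close the induction by taking reciprocals of $J_{t+1}\le J_t\left(1-\tfrac{Q}{J_1}J_t\right)$ and using $1/(1-x)\ge 1+x$, which requires $\tfrac{Q}{J_1}J_t\le Q\le \widetilde{Q}(\alpha)\le \tfrac{1}{2}$, while the paper instead substitutes the worst case $J_t = J_1/(Q(t-1)+1)$ after checking that $z-cz^2$ is increasing on $[0,1/(2c)]$ --- a condition that is exactly the same bound $\widetilde{Q}(\alpha)\le\tfrac{1}{2}$ coming from the second branch of $\alpha_{\max}(h)$, so the two finishes are equivalent in content. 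Yours is the more standard way to invert such a quadratic recurrence (and arguably a bit cleaner); the paper's stays in multiplicative form and avoids reciprocals, but neither buys anything the other lacks.
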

\begin{proof}
Since $\alpha J_t\le \frac{h}{1024\left( L+1\right)^2p\lv V^{(t)}\rv^{3L+5}}$ and $J_t \le J_1 < \frac{1}{n^{1+24L}}$, by invoking Lemma~\ref{l:L.one_step_improvement}, 
\begin{align*}
    J_{t+1} \le J_t - \frac{L\alpha}{(L+\frac{1}{2})} \lv \nabla J_t\rv^2.
\end{align*}
Additionally since $h \le h_{\max}$ and by Part~I2 of the inductive hypothesis $\lv V^{(t)}\rv^{L}\le \frac{\log(1/J_t) \lv V^{(1)}\rv^{L}}{\log(1/J_1)}$, we use the lower bound on the norm of the gradient established in Lemma~\ref{l:lower.bound.gradient} to get
\begin{align*}
    J_{t+1} &\le J_t - \frac{L(L+\frac{3}{4})^2 \alpha J_t^2 \log^2(1/J_t)}{(L+\frac{1}{2})\lv V^{(t)}\rv^2} \\
    &\overset{(i)}{\le} J_t\left(1 - \frac{L(L+\frac{3}{4})^2\alpha J_t \log^{2-\frac{2}{L}}(1/J_t)\log^{\frac{2}{L}}(1/J_1)}{(L+\frac{1}{2})\lv V^{(1)} \rv^2}\right)\\
    &\overset{(ii)}{\le} J_t\left(1 - \frac{L(L+\frac{3}{4})^2\alpha J_t \log^{\frac{2}{L}}(1/J_1)}{(L+\frac{1}{2})\lv V^{(1)} \rv^2}\right)\numberthis \label{e:lossdecreasemasterequation},
\end{align*}
where $(i)$ follows by Part~(I2) of the inductive hypothesis,
and $(ii)$ follows since $L\ge 1$ and $J_t \le J_1 < \frac{1}{n^{1+24L}}$, therefore $\log^{2-\frac{2}{L}}(1/J_t)\ge1$.

For any $z\ge 0$, the quadratic function 
$$z - z^2\frac{L(L+\frac{3}{4})^2\alpha  \log^{\frac{2}{L}}(1/J_1)}{(L+\frac{1}{2})\lv V^{(1)} \rv^2}$$
 is a monotonically increasing function in the interval
$$\left[0,\frac{(L+\frac{1}{2})\lv V^{(1)} \rv^2}{2L(L+\frac{3}{4})^2\alpha  \log^{\frac{2}{L}}(1/J_1)}\right].$$
Thus, because 
$J_t \leq \frac{J_1}{Q(t-1)+1}$, if $ \frac{J_1}{Q(t-1)+1} \leq \frac{(L+\frac{1}{2})\lv V^{(1)} \rv^2}{2L(L+\frac{3}{4})^2\alpha  \log^{\frac{2}{L}}(1/J_1)}$, the RHS of 
\eqref{e:lossdecreasemasterequation}
is bounded above by its value 
when $J_t = \frac{J_1}{Q(t-1)+1}$. But this is easy to check: by our choice of step-size $\alpha$ we have,
\begin{align*}
     &\alpha \le \alpha_{\max} \le  \frac{(L+\frac{1}{2})\lv V^{(1)} \rv^2}{2L(L+\frac{3}{4})^2J_1  \log^{\frac{2}{L}}(1/J_1)}\\
    & \Rightarrow J_1 \le \frac{(L+\frac{1}{2})\lv V^{(1)} \rv^2}{2L(L+\frac{3}{4})^2\alpha  \log^{\frac{2}{L}}(1/J_1)} \\
    &\Rightarrow \frac{J_1}{Q(t-1)+1} \le \frac{(L+\frac{1}{2})\lv V^{(1)} \rv^2}{2L(L+\frac{3}{4})^2\alpha  \log^{\frac{2}{L}}(1/J_1)} . 
\end{align*}
Bounding the RHS of inequality~\eqref{e:lossdecreasemasterequation}
by using the worst case that $J_t = \frac{J_1}{Q(t-1)+1}$, we get that
 \begin{align*}
     J_{t+1} & \le \frac{J_1}{Q(t-1)+1}\left(1 - \frac{J_1}{Q(t-1)+1}\frac{L(L+\frac{3}{4})^2\alpha  \log^{\frac{2}{L}}(1/J_1)}{(L+\frac{1}{2})\lv V^{(1)} \rv^2}\right)\\
     & = \frac{J_1}{Qt+1}\left(1+\frac{Q}{Q(t-1)+1}\right)\left(1 - \frac{Q}{Q(t-1)+1}\frac{L(L+\frac{3}{4})^2\alpha J_1 \log^{\frac{2}{L}}(1/J_1)}{Q(L+\frac{1}{2})\lv V^{(1)} \rv^2}\right)\\
     & \le \frac{J_1}{Qt+1}\left(1-\left(\frac{Q}{Q(t-1)+1}\right)^2\right) \hspace{0.6in} \left(\mbox{since $Q \le\widetilde{Q}(\alpha)= \frac{L(L+\frac{3}{4})^2\alpha J_1 \log^{\frac{2}{L}}(1/J_1)}{(L+\frac{1}{2})\lv V^{(1)} \rv^2}$}\right) \\
     & \le \frac{J_1}{Q t+1}.
 \end{align*}
  This establishes the desired upper bound on the loss at step $t+1$.
\end{proof}
In the next lemma we shall establish that the second part of the inductive hypothesis holds.
\begin{lemma} Under the setting of Theorem~\ref{t:main_theorem}, if the induction hypothesis holds at step $t$ then,
\label{l:vbound}
\begin{align*}
    \frac{\log\left(\frac{1}{J_{t+1}}\right)}{\lv V^{(t+1)}\rv^{L}} \ge \frac{\log\left(\frac{1}{J_{1}}\right)}{\lv V^{(1)}\rv^{L}}.
\end{align*}
\end{lemma}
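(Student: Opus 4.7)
The plan is to show the stronger one-step inequality
\[
\frac{\log(1/J_{t+1})}{\lv V^{(t+1)}\rv^{L}} \ge \frac{\log(1/J_{t})}{\lv V^{(t)}\rv^{L}},
\]
which together with Part~I2 of the inductive hypothesis immediately yields the lemma. Equivalently, setting $\beta \defeq \alpha\lv \nabla J_t\rv/\lv V^{(t)}\rv$, it suffices to prove
\[
\frac{\log(1/J_{t+1})}{\log(1/J_{t})} \ \ge\ \mleft(\frac{\lv V^{(t+1)}\rv}{\lv V^{(t)}\rv}\mright)^{L}.
\]
I will lower-bound the left-hand side using the guaranteed decrease of the loss, and upper-bound the right-hand side using the increase of the weight norm, showing that both can be expressed cleanly in terms of $\beta$.

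For the left-hand side, Part~I3 lets us invoke Lemma~\ref{l:L.one_step_improvement} to get $J_{t+1}\le J_t - \frac{\alpha L}{L+1/2}\lv \nabla J_t\rv^{2}$. Taking logarithms and using $-\log(1-x)\ge x$ gives
\[
\log(J_t/J_{t+1}) \ \ge\ \frac{\alpha L\lv \nabla J_t\rv^{2}}{(L+\tfrac{1}{2})J_t}.
\]
Part~I2 and $h\le h_{\max}$ let us apply Lemma~\ref{l:lower.bound.gradient} to one factor of $\lv \nabla J_t\rv$, giving $\lv \nabla J_t\rv\ge (L+\tfrac{3}{4})J_t\log(1/J_t)/\lv V^{(t)}\rv$. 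Substituting and dividing by $\log(1/J_t)$ yields
\[
\frac{\log(1/J_{t+1})}{\log(1/J_t)}\ \ge\ 1+\frac{L(L+\tfrac{3}{4})}{L+\tfrac{1}{2}}\,\beta.
\]
For the right-hand side, the gradient-descent update and triangle inequality give $\lv V^{(t+1)}\rv\le \lv V^{(t)}\rv(1+\beta)$, so $(\lv V^{(t+1)}\rv/\lv V^{(t)}\rv)^{L}\le (1+\beta)^{L}$.

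Thus it remains to show
\[
(1+\beta)^{L}\ \le\ 1 + \frac{L(L+\tfrac{3}{4})}{L+\tfrac{1}{2}}\,\beta \ =\ 1 + L\beta + \frac{L}{4(L+\tfrac{1}{2})}\,\beta,
\]
which will be the main obstacle, since it requires a careful second-order Taylor estimate rather than a naive bound. This is where Lemma~\ref{l:gradient_norm.upper} and Part~I3 come in: combining the upper bound $\lv \nabla J_t\rv\le \sqrt{(L+1)p}\lv V^{(t)}\rv^{L+1}J_t$ with Part~I3 yields
\[
\beta\ \le\ \alpha J_t\sqrt{(L+1)p}\lv V^{(t)}\rv^{L}\ \le\ \frac{h}{1024(L+1)^{3/2}\sqrt{p}\lv V^{(t)}\rv^{2L+5}},
\]
so $L\beta\ll 1$. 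In this regime one has $(1+\beta)^{L}\le 1+L\beta+(L\beta)^{2}$ (e.g.\ from $e^{x}\le 1+x+x^{2}$ for $x\le 1$), and $\beta$ is so small that $(L\beta)^{2}\le \frac{L\beta}{4(L+\tfrac{1}{2})}$, which is equivalent to $\beta\le \frac{1}{4L(L+\tfrac{1}{2})}$. Because $h\le h_{\max}\le 1$ and $\lv V^{(t)}\rv\ge \sqrt{L+\tfrac{1}{2}}\ge 1$, the displayed upper bound on $\beta$ above is smaller than $\frac{1}{4L(L+\tfrac{1}{2})}$, closing the argument. Putting the chain of inequalities together establishes Part~I2 at step $t+1$.
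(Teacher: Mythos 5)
Your argument is correct and is essentially the paper's proof in a slightly streamlined form: both reduce the claim to the one-step inequality $\log(1/J_{t+1})/\lv V^{(t+1)}\rv^{L}\ge \log(1/J_{t})/\lv V^{(t)}\rv^{L}$, use Lemma~\ref{l:L.one_step_improvement} together with a single application of the gradient lower bound of Lemma~\ref{l:lower.bound.gradient} to control $\log(1/J_{t+1})/\log(1/J_t)$, and use Lemma~\ref{l:gradient_norm.upper} with Part~I3 to make the relative step $\beta$ small enough that the second-order term is absorbed by the $1/4$ gap between $L+\frac{3}{4}$ and $L+\frac{1}{2}$ (the paper does this via $(1+z)^L\le 1+\frac{Lz}{1-(L-1)z}$ with a separate $L=1$ case, you via $e^{x}\le 1+x+x^{2}$, which avoids the case split). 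One small precision in the final check $\beta\le\frac{1}{4L(L+\frac{1}{2})}$: you must retain the factor $\lv V^{(t)}\rv^{2L+5}\ge (L+\frac{1}{2})^{L+\frac{5}{2}}$ from your displayed bound on $\beta$ (which your citation of $\lv V^{(t)}\rv\ge\sqrt{L+\frac{1}{2}}$ permits), since dropping it and using only $\lv V^{(t)}\rv\ge 1$ leaves $\frac{1}{1024(L+1)^{3/2}}$, which is not below $\frac{1}{4L(L+\frac{1}{2})}$ once $L$ is extremely large.
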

\begin{proof}
We know from Lemma~\ref{l:L.one_step_improvement} that $$J_{t+1} \le J_t\left(1 - \frac{L\alpha \lv \nabla J_t \rv^2}{(L+\frac{1}{2})J_t}\right),$$ and by the triangle inequality $$\lv V^{(t+1)} \rv \le \lv V^{(t)} \rv + \alpha \lv \nabla J_t \rv,$$ hence
\begin{align*}
    \frac{\log\left(\frac{1}{J_{t+1}}\right)}{\lv V^{(t+1)}\rv^{L}}  \ge  \frac{\log\left(\frac{1}{J_t\left(1 - \frac{L\alpha}{(L+\frac{1}{2})J_t}\lv \nabla J_t \rv^2 \right)}\right)}{\left(\lv V^{(t)}\rv + \alpha \lv \nabla J_t\rv\right)^{L}} 
    & =  \frac{\log\left(\frac{1}{J_t}\right)+\log\left(\frac{1}{\left(1 - \frac{L\alpha}{(L+\frac{1}{2})J_t}\lv \nabla J_t \rv^2 \right)}\right)}{\left(\lv V^{(t)}\rv + \alpha \lv \nabla J_t\rv\right)^{L}} \\
    & =  \frac{\log\left(\frac{1}{J_t}\right)\left(1-\frac{\log\left(1 - \frac{L\alpha}{(L+\frac{1}{2})J_t}\lv \nabla J_t \rv^2 \right)}{\log\left(\frac{1}{J_t}\right)}\right)}{\lv V^{(t)}\rv^L\left(1 + \frac{\alpha \lv \nabla J_t\rv}{\lv V^{(t)}\rv}\right)^{L}} \\
    & \overset{(i)}{\ge} \frac{\log\left(\frac{1}{J_t}\right)}{\lv V^{(t)}\rv^L} 
    \left\{
    \frac{\left(1+\frac{L\alpha\lv \nabla J_t \rv^2}{(L+\frac{1}{2})J_t\log\left(\frac{1}{J_t}\right)}\right)}{\left(1 + \frac{\alpha \lv \nabla J_t\rv}{\lv V^{(t)}\rv}\right)^{L}}
    \right\} 
    \numberthis \label{eq:lower_bound_ratio}
\end{align*}
where $(i)$ follows since $\log(1-z) \le -z$ for all 
$z \in (0,1)$ and because
\begin{align*}
 & \frac{L\alpha}{(L+\frac{1}{2})J_t}\lv \nabla J_t \rv^2 \\
& \leq \frac{L\alpha}{(L+\frac{1}{2})}\left[(L+1)pJ_t\lv V^{(t)} \rv^{2(L+1)} \right]
  \hspace{0.2in}
  \mbox{(by Lemma~\ref{l:gradient_norm.upper})} \\
  & = \alpha J_t \left[\frac{L(L+1)p\lv V^{(t)} \rv^{2(L+1)}}{L+\frac{1}{2}} \right]\\
  & < \alpha J_t \left[ \frac{1024\left( L+1\right)^2p \lv V^{(t)}\rv^{3L+5}}{h} \right] \hspace{0.2in} \mbox{($\lv V^{(t)}\rv >1$ by Lemma~\ref{l:aux.lower.bound.norm.weight.vector}, and $h\le 1$)}\\
& \le 1 \hspace{1in}
  \mbox{(by Part~I3 of the IH)}.
\end{align*}

  We want to show that the term in curly brackets in inequality~\eqref{eq:lower_bound_ratio} is at least 1, that is, we want 
\begin{align}
    & 1+\frac{L\alpha\lv \nabla J_t \rv^2}{(L+\frac{1}{2})J_t\log\left(\frac{1}{J_t}\right)} \ge\left(1 + \frac{\alpha \lv \nabla J_t\rv}{\lv V^{(t)}\rv}\right)^{L}. \label{e:sufficient_condition_curly}
    \end{align}
    We will first show that this inequality holds in the case where $L>1$. To show this, note that 
    \begin{align*}
        \frac{\alpha \lv \nabla J_t \rv}{\lv V^{(t)}\rv} &\le 
        \alpha J_t\sqrt{(L+1)p} \lv V^{(t)}\rv^{L} \hspace{0.4in} \mbox{(by Lemma~\ref{l:gradient_norm.upper})} \\
        & < \frac{1}{L-1}\cdot\alpha J_t \left[ \frac{1024\left( L+1\right)^2p \lv V^{(t)}\rv^{3L+5}}{h} \right] \hspace{0.2in} \mbox{(since $\lv V^{(t)}\rv >1$ by Lemma~\ref{l:aux.lower.bound.norm.weight.vector})} \\
        &\le \frac{1}{L-1} \hspace{0.4in} \mbox{(by Part~I3 of the IH)}.
    \end{align*}
    For any positive $z< \frac{1}{L-1}$ we have the inequality that $(1+z)^{L}\le 1+\frac{Lz}{1-(L-1)z}$, therefore to show that inequality~\eqref{e:sufficient_condition_curly} holds it instead suffices to show that
    \begin{align*}
    &  1+\frac{L\alpha\lv \nabla J_t \rv^2}{(L+\frac{1}{2})J_t\log\left(\frac{1}{J_t}\right)} \ge 1 + \frac{L\alpha \lv \nabla J_t\rv}{\lv V^{(t)}\rv\left(1-\frac{(L-1)\alpha \lv \nabla J_t\rv}{\lv V^{(t)}\rv}\right)}\\
    & \Leftrightarrow \frac{\lv \nabla J_t \rv}{(L+\frac{1}{2})J_t\log\left(\frac{1}{J_t}\right)} 
     \ge \frac{1}{\lv V^{(t)}\rv\left(1-\frac{(L-1)\alpha \lv \nabla J_t\rv}{\lv V^{(t)}\rv}\right)}\\
    & 
    \Leftrightarrow
     \lv \nabla J_t \rv  \ge  \frac{(L+\frac{1}{2})J_t \log\left(\frac{1}{J_t}\right)}{\lv V^{(t)} \rv}+\frac{(L-1)\alpha\lv \nabla J_t\rv^2}{\lv V^{(t)}\rv}\\
    & \Leftarrow \lv \nabla J_t \rv  \ge  \frac{(L+\frac{1}{2})J_t \log\left(\frac{1}{J_t}\right)}{\lv V^{(t)} \rv}+\frac{(L-1)\alpha(L+1)p \lv V^{(t)}\rv^{2L+2}J_t^2}{\lv V^{(t)}\rv}\hspace{0.2in}\mbox{(by Lemma~\ref{l:gradient_norm.upper})}\\
    & \Leftarrow \lv \nabla J_t \rv  \ge  \frac{\left(L+\frac{1}{2}+\alpha J_t\frac{1024(L^2-1)p\lv V^{(t)}\rv^{2L+2}}{h} \times \frac{h}{1024\log\left(\frac{1}{J_t}\right)}\right)J_t \log\left(\frac{1}{J_t}\right)}{\lv V^{(t)} \rv}\\
    %
    & \Leftarrow \lv \nabla J_t \rv  \ge  \frac{(L+\frac{3}{4})J_t \log\left(\frac{1}{J_t}\right)}{\lv V^{(t)} \rv} ,
\end{align*} 
%
%
where the last implication follows from
Part~I3 of the IH, the fact that $h \leq 1$ and because $J_t \leq J_1 \leq 1/n^{1 + 24L}$ and
$n \geq 3$.
Now this last inequality holds again because of Lemma~\ref{l:lower.bound.gradient} that guarantees that $\lv \nabla J_t\rv \ge \frac{(L+\frac{3}{4})J_t\log(1/J_t)}{\lv V^{(t)} \rv}$. A similar argument can also be used in the case where $L=1$, without the use of the inequality that was used to upper bound $(1+z)^L$. Thus we have proved that the term in the curly brackets in inequality~\eqref{eq:lower_bound_ratio} is at least $1$ and hence
\begin{align*}
      \frac{\log\left(\frac{1}{J_{t+1}}\right)}{\lv V^{(t+1)}\rv^{L}} \ge   \frac{\log\left(\frac{1}{J_{t}}\right)}{\lv V^{(t)}\rv^{L}} \ge \frac{\log\left(\frac{1}{J_{1}}\right)}{\lv V^{(1)}\rv^L}.
\end{align*}
This proves that the ratio is 
bounded below
at step $t+1$ by its initial value and establishes our claim.
\end{proof}
Finally we ensure that the third part of the inductive hypothesis holds. This allows us to apply Lemma~\ref{l:L.one_step_improvement} in the next step $t+1$.
\begin{lemma}
\label{l:verify.valid.step_size} Under the setting of Theorem~\ref{t:main_theorem}, if the induction hypothesis holds at step $t$, then
\begin{align*}
    \alpha J_{t+1} \le \frac{h}{1024\left( L+1\right)^2p\lv V^{(t+1)}\rv^{3L+5}}.
\end{align*}
\end{lemma}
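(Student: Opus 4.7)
\medskip
\noindent\textbf{Proof plan.} The plan is to reduce the inequality to the analogous bound at the initial iterate, which is guaranteed by the choice of $\alpha \le \alpha_{\max}(h)$. Concretely, since the target inequality can be rewritten as
\[
\alpha\, J_{t+1}\, \lv V^{(t+1)}\rv^{3L+5} \le \frac{h}{1024(L+1)^2 p},
\]
and since $\alpha_{\max}(h) \le \frac{h}{1024(L+1)^2 p\, J_1 \lv V^{(1)}\rv^{3L+5}}$, it suffices to show
\[
J_{t+1}\, \lv V^{(t+1)}\rv^{3L+5} \;\le\; J_1\, \lv V^{(1)}\rv^{3L+5}.
\]

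\noindent First I would invoke Lemma~\ref{l:vbound}, which we have just established for step $t+1$, to convert the norm bound into an inequality about the loss:
\[
\lv V^{(t+1)}\rv^{L} \;\le\; \frac{\log(1/J_{t+1})}{\log(1/J_1)}\, \lv V^{(1)}\rv^{L},
\]
and then raise both sides to the power $(3L+5)/L = 3 + 5/L$. This gives
\[
J_{t+1}\, \lv V^{(t+1)}\rv^{3L+5} \;\le\; \frac{J_{t+1}\, \log^{3+5/L}(1/J_{t+1})}{\log^{3+5/L}(1/J_1)}\, \lv V^{(1)}\rv^{3L+5},
\]
so the whole problem reduces to the scalar comparison
\[
J_{t+1}\, \log^{3+5/L}(1/J_{t+1}) \;\le\; J_1\, \log^{3+5/L}(1/J_1).
\]

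\noindent The main (and essentially only) obstacle is verifying that this scalar inequality holds. I will do this by a monotonicity argument: the function $g(x) := x\, \log^{k}(1/x)$ satisfies $g'(x) = \log^{k-1}(1/x)\bigl(\log(1/x) - k\bigr)$, so $g$ is increasing on $(0, e^{-k})$. Here $k = 3 + 5/L \le 8$ for $L \ge 1$, while the assumption $J_1 < 1/n^{1+24L}$ together with $n \ge 3$ gives $\log(1/J_1) \ge (1+24L)\log 3 > 25 > k$, and Lemma~\ref{l:L.inductive_step_constant_step_size} (Part~I1 at step $t+1$) gives $J_{t+1} \le J_1$, so both $J_{t+1}$ and $J_1$ lie safely in the increasing region of $g$. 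Hence $g(J_{t+1}) \le g(J_1)$.

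\noindent Combining the three pieces,
\[
\alpha\, J_{t+1}\, \lv V^{(t+1)}\rv^{3L+5} \;\le\; \alpha\, J_1\, \lv V^{(1)}\rv^{3L+5} \;\le\; \alpha_{\max}(h)\, J_1\, \lv V^{(1)}\rv^{3L+5} \;\le\; \frac{h}{1024(L+1)^2 p},
\]
which is the required bound. This closes the induction for Part~I3 at step $t+1$, and together with the previous two lemmas completes the inductive step in the proof of Theorem~\ref{t:main_theorem}.
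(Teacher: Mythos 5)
Your proof is correct and follows essentially the same route as the paper's: invoke Lemma~\ref{l:vbound} to replace $\lv V^{(t+1)}\rv^{3L+5}$ by $\bigl(\log(1/J_{t+1})/\log(1/J_1)\bigr)^{(3L+5)/L}\lv V^{(1)}\rv^{3L+5}$, use the monotonicity of $z\mapsto z\log^{(3L+5)/L}(1/z)$ on $(0,e^{-(3L+5)/L})$ together with $J_{t+1}\le J_1<1/n^{1+24L}$ and $n\ge 3$, and finish with the first term in the definition of $\alpha_{\max}(h)$. The only difference is cosmetic (you phrase the reduction as the scalar comparison $J_{t+1}\log^{3+5/L}(1/J_{t+1})\le J_1\log^{3+5/L}(1/J_1)$ and verify monotonicity via the derivative, while the paper maximizes the left-hand side directly at $J_1$), so no further comment is needed.
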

\begin{proof} 
We know by Lemma~\ref{l:vbound} that $\lv V^{(t+1)}\rv^{L} \le \frac{\log(1/J_{t+1})\lv V^{(1)}\rv^L}{\log(1/J_1)}$ so it instead suffices to prove that
\begin{align} \label{e:induction_part_3_lhs_maximize}
    \alpha J_{t+1}\log^{\frac{3L+5}{L}}\left(\frac{1}{J_{t+1}}\right) &\le \frac{h \log^{\frac{3L+5}{L}}(1/J_1)}{1024\left( L+1\right)^2p\lv V^{(1)}\rv^{3L+5}}.
\end{align}
Lemma~\ref{l:L.inductive_step_constant_step_size} establishes that $J_{t+1}\le J_1 < 1/n^{1+24L}$. The function $z\log^{\frac{3L+5}{L}}(1/z)$ is increasing over the interval $(0,\frac{1}{e^{\frac{3L+5}{L}}})$. Recall that $n\ge3$ therefore, $$J_{t+1}\le J_1 < \frac{1}{3^{1+24L}} < \frac{1}{e^{\frac{3L+5}{L}}}.$$
Thus, the LHS of \eqref{e:induction_part_3_lhs_maximize} is maximized at $J_1$
\begin{align*}
    \alpha J_{t+1}\log^{\frac{3L+5}{L}}\left(\frac{1}{J_{t+1}}\right) &\le \alpha J_{1}\log^{\frac{3L+5}{L}}\left(\frac{1}{J_{1}}\right) \le \frac{h \log^{\frac{3L+5}{L}}(1/J_1)}{1024\left( L+1\right)^2\sqrt{p}\lv V^{(1)}\rv^{3L+5}}
\end{align*}
where final inequality holds by choice of the step-size $\alpha$. This completes the proof.
\end{proof}

Combining the results of Lemmas \ref{l:L.inductive_step_constant_step_size}, \ref{l:vbound} and \ref{l:verify.valid.step_size} completes the proof of theorem.

\section{Discussion} \label{s:discussion}

We have shown that deep networks with smoothed ReLU
activations trained by gradient descent with logistic loss
achieve training loss approaching zero if the loss is
initially small enough. We also established conditions under
which this happens that formalize the idea that the
NTK features are useful. Our analysis applies in the
case of networks using the increasingly popular Swish
activation function.

While, to simplify our treatment, we concentrated on the
case that the number of hidden nodes in each layer
is equal to the number of inputs, our analysis 
should easily be adapted to the case of varying
numbers of hidden units.

Analysis of architectures such as Residual Networks
and Transformers would be a potentially interesting
next step.  

\subsection*{Acknowledgements}
We thank the anonymous reviewers for alerting us to a mistake in an earlier version of this paper. 

We gratefully acknowledge the support of the NSF through grants DMS-2031883 and DMS-2023505 and the Simons Foundation through award 814639.

\appendix
\newpage
\tableofcontents

\section{Additional Related Work} \label{a:furtherrelatedwork} 
 Building on the work of \citet{lyu2019gradient}, \citet{ji2020directional} study finite-width deep ReLU neural networks and show that starting from a small loss, gradient flow coupled with logistic loss leads to convergence of the directions of the parameter vectors. They also demonstrate alignment between the parameter vector directions and the negative gradient. However, they do not prove that the training loss converges to zero.

Using mean-field techniques \citet{chizat2020implicit}, building on \citep{chizat2018global,mei2019mean},
show that infinitely wide two-layer squared ReLU networks trained with gradient flow on the logistic loss leads to a max-margin classifier in a particular non-Hilbertian space of functions.
See also the videos in a talk about this work \citep{chizat20msri}. \citet{CCGZ20} analyzed regularized training with gradient flow on infinitely wide networks.  
When training is regularized, the weights also may travel far from their initial values.
Previously \citet{brutzkus2018sgd} studied finite-width two-layer leaky ReLU networks and showed that when the data is linearly separable, these networks can be trained up to zero-loss using stochastic gradient descent with the hinge loss.

Our study is motivated in part by the line of work that has emerged which emphasizes the need to understand the behavior of interpolating (zero training loss/error) classifiers and regressors. A number of recent papers have analyzed the properties of interpolating methods in linear regression \citep{hastie2019surprises,bartlett2020benign,muthukumar2020harmless,tsigler2020benign,bartlett2020failures}, linear classification \citep{montanari2019generalization,chatterji2020finite,liang2020precise,muthukumar2020classification,hsu2020proliferation}, kernel regression \citep{liang2020just,mei2019generalization,liang2020MultipleDescent} and simplicial nearest neighbor methods \citep{belkin2018overfitting}.

There are also many related papers that characterize the implicit bias of the solution obtained by first-order methods \citep{neyshabur2017implicit,soudry2018implicit,ji2018risk,gunasekar2018characterizing,gunasekar2018implicit,li2018algorithmic,arora2019implicit,ji2019gradient}.

Finally, we note that a number of other recent papers also theoretically study the optimization of neural networks including
\citep{andoni2014learning,li2017convergence,ZS0BD17,ZLWJ17a,ge2018learning,ZPS18,du2018gradient,safran2018spurious,zhang2019learning,arora2019fine,brutzkus2019larger,wei2019regularization,ji2019polylogarithmic,nitanda2019gradient,song2019quadratic,zou2019improved,DBLP:conf/colt/BreslerN20,DBLP:conf/nips/Daniely20,DBLP:conf/nips/DanielyM20}.

\section{Omitted Proofs from Section~\ref{ss:technical_tools}} \label{a:omitted_proofs_technical_tools}

In this section we present the proofs of Lemmas~\ref{l:pl}-\ref{l:lower.bound.gradient}.
\subsection{Additional Definitions}
\begin{definition}
\label{d:g_tk} For any weight matrix $V$, define
$
    g_s(V):= \frac{1}{1+\exp\left(y_s f_{V}(x_s)\right)}.
$
We will often use $g_s$ as shorthand for $g_s(V)$ when
$V$ can be determined from context.
Further, for all $t \in \{0,1,\ldots\}$, define
$g_{ts} := g_s(V^{(t)}).$
\end{definition}
Informally, $g_s(V)$ is the size of the contribution
of example $s$ to the gradient.

\begin{definition}
\label{d:x_tk} For all iterates $t$, all $\ell \in [L+1]$ and all $s\in [n]$, define $x^{(t)}_{\ell,s} := x_{\ell,s}^{V^{(t)}}$, $u_{\ell,s}^{(t)} := u_{\ell,s}^{V^{(t)}}$ and $\Sigma_{\ell,s}^{(t)}:= \Sigma_{\ell,s}^{V^{(t)}}$.
\end{definition}

\subsection{Basic Lemmas}
\label{ss:basic}

To prove Lemmas~\ref{l:pl}-\ref{l:lower.bound.gradient}, we will need some more
basic lemmas, which we first prove.

\begin{lemma}\label{l:relationsbetweengradientandloss}
For any $x \in \mathbb{R}^p$ and $y \in \{-1,1\}$ and any weight matrix $V$ we have the following:
\begin{enumerate}
    \item \label{i:gradientupperboundedbyloss}$$\frac{1}{1+\exp\left(yf_V(x)\right)} \le  \ln(1+\exp(-yf_V(x))) = J(V;x,y).$$
    \item \label{i:hessiangupperboundedbyloss}$$\frac{\exp\left(yf_V(x)\right)}{\left(1+\exp\left(yf_V(x)\right)\right)^2} \le \frac{1}{1+\exp\left(yf_V(x)\right)} \le J(V;x,y).$$
\end{enumerate}
\end{lemma}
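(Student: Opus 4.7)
The plan is to reduce both parts to elementary one-variable inequalities for the logistic sigmoid. Throughout, I would write $z = y f_V(x)$, so that the claims become
\[
\frac{1}{1+e^z} \le \ln(1+e^{-z}) \quad \text{and} \quad \frac{e^z}{(1+e^z)^2} \le \frac{1}{1+e^z} \le \ln(1+e^{-z}).
\]
This is convenient because it removes the network entirely from the statement and leaves a purely scalar claim.

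For the first part (which also supplies the right-hand inequality of the second part), the strategy is to invoke the standard bound $\ln(1+u) \ge u/(1+u)$ for all $u > -1$, applied with $u = e^{-z}$. Indeed, setting $u = e^{-z}$ yields
\[
\ln(1+e^{-z}) \ge \frac{e^{-z}}{1+e^{-z}} = \frac{1}{e^z+1},
\]
which is exactly what is needed. The auxiliary inequality $\ln(1+u) \ge u/(1+u)$ is itself immediate from calculus: set $\psi(u) = \ln(1+u) - u/(1+u)$, note $\psi(0)=0$, and observe $\psi'(u) = 1/(1+u) - 1/(1+u)^2 = u/(1+u)^2$, so $\psi$ is nonincreasing for $u\in(-1,0]$ and nondecreasing for $u\ge 0$, hence $\psi(u)\ge 0$.

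For the left-hand inequality of the second part, $e^z/(1+e^z)^2 \le 1/(1+e^z)$, the strategy is simply to multiply through by the positive quantity $(1+e^z)^2$; this reduces the claim to $e^z \le 1 + e^z$, which is trivially true.

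I do not anticipate any obstacle in this lemma, as both inequalities are standard facts about the logistic loss; the only mildly nontrivial ingredient is the scalar bound $\ln(1+u) \ge u/(1+u)$, whose proof fits in a single line as above.
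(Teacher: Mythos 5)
Your proof is correct and follows essentially the same route as the paper, which likewise reduces both parts to the scalar inequalities $(1+e^z)^{-1}\le \ln(1+e^{-z})$ and $e^z/(1+e^z)^2\le (1+e^z)^{-1}$; you merely spell out the elementary verification of the first inequality (via $\ln(1+u)\ge u/(1+u)$) that the paper takes for granted.
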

\begin{proof}
Part~$1$ follows since for any $z \in \mathbb{R}$, we have the inequality $(1+\exp(z))^{-1}\le \ln(1+\exp(-z))$. 

Part~$2$ follows since for any $z \in \mathbb{R}^d$, we have the inequality \[
\exp(z)/\left(1+\exp(z)\right)^2 \le \left( 1+\exp(z)\right)^{-1}.
\]
\end{proof}

The following lemma is useful for establishing a relatively
simple lower bound on a sum of applications of a concave function.
\begin{lemma} \label{l:concave.min}
If $\psi:[0,M] \to \mathbb{R}$ is a concave function
with $\psi(0) = 0$.  Then the minimum of
$\sum_{i=1}^n \psi(z_i)$ subject to
$z_1,\ldots,z_n \geq 0$ and
$\sum_{i=1}^n z_i = M$ is $\psi(M)$.
\end{lemma}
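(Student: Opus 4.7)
The plan is to exploit the standard sub-homogeneity consequence of concavity together with $\psi(0)=0$: the chord from the origin to $(M,\psi(M))$ lies below the graph of $\psi$ on $[0,M]$, which yields a linear lower bound on $\psi$ that is tight at the endpoints. From there the constrained minimization is immediate.

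First I would note that each $z_i$ lies in $[0,M]$, since $z_i \geq 0$ and $\sum_j z_j = M$. Writing $z_i = (z_i/M)\cdot M + (1 - z_i/M)\cdot 0$ as a convex combination and applying the definition of concavity,
\[
\psi(z_i) \;\geq\; \frac{z_i}{M}\,\psi(M) + \left(1 - \frac{z_i}{M}\right)\psi(0) \;=\; \frac{z_i}{M}\,\psi(M),
\]
where the last equality uses $\psi(0)=0$. Summing over $i \in [n]$ and using $\sum_i z_i = M$,
\[
\sum_{i=1}^n \psi(z_i) \;\geq\; \frac{\psi(M)}{M}\sum_{i=1}^n z_i \;=\; \psi(M).
\]

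Finally, this lower bound is attained at the feasible point $z_1 = M$, $z_2 = \cdots = z_n = 0$, where $\sum_i \psi(z_i) = \psi(M) + (n-1)\psi(0) = \psi(M)$. Hence the infimum is achieved and equals $\psi(M)$.

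There is no real obstacle here: the only thing to watch is that concavity is being applied on $[0,M]$, which requires verifying $z_i \in [0,M]$ (true by the constraints), and the argument goes through regardless of the sign of $\psi(M)$ since $(z_i/M)\psi(M)$ is the value of the chord and concavity places $\psi(z_i)$ above the chord in either case. No additional smoothness or monotonicity of $\psi$ is required.
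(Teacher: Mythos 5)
Your proof is correct, and it takes a different route from the paper. You establish the linear (chord) lower bound $\psi(z) \ge \tfrac{z}{M}\psi(M)$ on $[0,M]$ from concavity and $\psi(0)=0$, sum it over the coordinates, and exhibit the feasible point $(M,0,\ldots,0)$ where the bound is tight. The paper instead uses an exchange argument: from concavity with $\psi(0)=0$ it derives $\psi(z_1+z_i)+\psi(0) \le \psi(z_1)+\psi(z_i)$, so shifting all of coordinate $i$'s mass onto coordinate $1$ can only decrease the objective, and iterating reduces any feasible point to $(M,0,\ldots,0)$. Your argument is shorter and yields the slightly stronger pointwise statement that the objective is bounded below by the linear function $\tfrac{\psi(M)}{M}\sum_i z_i$, while the paper's exchange argument identifies the structure of a minimizer directly and avoids dividing by $M$ (so it covers the degenerate case $M=0$ without comment; in your write-up you implicitly assume $M>0$, which is harmless here but worth a parenthetical). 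Both proofs correctly use only concavity and $\psi(0)=0$, with no monotonicity or smoothness.
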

\begin{proof}
Let $z_1,\ldots,z_n$ be any solution, and let $i$ be
the least index such that $z_i > 0$. 
Then, since
$\psi$ is concave and non-negative, we have that
\[
\psi(z_1 + z_i) + \psi(0) = \psi(z_1 + z_i) \leq \psi(z_1) + \psi(z_i).
\]
Thus, replacing $z_1$ with $z_1 + z_i$, and replacing $z_i$ with $0$,
produces a solution with one fewer nonzero entries that it at least as 
good.  Repeating this for each $i > 1$ implies that the solution with $z_1 = M$ and $z_2 = \ldots = z_n = 0$ is optimal.  
\end{proof}

The next lemma 
shows that large weights are needed to achieve small loss.
\begin{lemma}
\label{l:aux.lower.bound.norm.weight.vector} For any $L \in \N$ and any weight matrix $V$ if $J(V) \le \frac{2}{n^{1+24L}}$ then,
$\lv V \rv > \sqrt{L+1} \geq \sqrt{2}$.
\end{lemma}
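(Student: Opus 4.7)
The plan is a short chain of three ingredients: (i) smallness of the average loss forces smallness of every per-example loss, (ii) smallness of the logistic loss on a single example forces a large margin $y_s f_V(x_s)$, and (iii) contractivity of $\phi$ plus AM--GM give an upper bound on $|f_V(x_s)|$ in terms of $\lv V\rv$. Combining (ii) and (iii) yields the desired lower bound on $\lv V\rv$.

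First, since each per-example loss is non-negative and $J(V) = \tfrac{1}{n}\sum_s J(V;x_s,y_s) \le 2/n^{1+24L}$, we immediately get $J(V;x_s,y_s) \le 2/n^{24L}$ for every $s \in [n]$. Unwinding the logistic loss, this says $\exp(-y_s f_V(x_s)) \le \exp(2/n^{24L}) - 1$. Because $n \ge 3$ and $L \ge 1$ make $2/n^{24L}$ tiny, the elementary inequality $e^z - 1 \le 2z$ on $[0,1]$ gives $\exp(-y_s f_V(x_s)) \le 4/n^{24L}$, so
\[
y_s f_V(x_s) \;\ge\; 24 L \log n - \log 4.
\]
Using $n\ge 3$ and $L\ge 1$, the right side is at least $24\log 3 - \log 4 > 1$.

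Second, I will bound $|f_V(x_s)|$ from above. The activation $\phi$ satisfies $\phi(0)=0$ and is $1$-Lipschitz (since $|\phi'|\le 1$), hence contractive: $\lv \phi(z)\rv \le \lv z\rv$ (this is the already-cited Lemma~\ref{l:aux_contractive_maps}). Inducting layer by layer from $\lv x_s\rv = 1$ and using $\lv V_\ell\rv_{op}\le \lv V_\ell\rv$, one obtains $|f_V(x_s)| \le \prod_{\ell=1}^{L+1}\lv V_\ell\rv$. Now AM--GM applied to the non-negative numbers $\lv V_\ell\rv^2$ (whose sum is $\lv V\rv^2$) gives
\[
\prod_{\ell=1}^{L+1}\lv V_\ell\rv^2 \;\le\; \left(\frac{\lv V\rv^2}{L+1}\right)^{L+1},
\qquad\text{hence}\qquad
|f_V(x_s)| \;\le\; \frac{\lv V\rv^{L+1}}{(L+1)^{(L+1)/2}}.
\]

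Combining the two bounds, $\lv V\rv^{L+1} \;\ge\; (L+1)^{(L+1)/2}\,(24L\log n - \log 4) \;>\; (L+1)^{(L+1)/2}$, so $\lv V\rv > \sqrt{L+1}$, which also gives $\lv V\rv \ge \sqrt{2}$ since $L\ge 1$. The only mild point to be careful about is verifying $24L\log n - \log 4 > 1$ in the boundary case $n=3,\,L=1$; this is a direct numerical check, so I expect no genuine obstacle in the proof.
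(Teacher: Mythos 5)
Your proposal is correct and follows essentially the same route as the paper: contractivity of $\phi$ plus AM--GM to bound $|f_V(x_s)|$ by $\left(\lv V\rv/\sqrt{L+1}\right)^{L+1}$, and then using the smallness of the loss (with $n\ge 3$, as the paper also implicitly assumes here) to force $\lv V\rv>\sqrt{L+1}$. The only cosmetic difference is that you pass to per-example losses via $J(V;x_s,y_s)\le nJ(V)$ and argue through the margin with $e^z-1\le 2z$, whereas the paper lower-bounds $J(V)$ directly by $\log\bigl(1+\exp\bigl(-(\lv V\rv/\sqrt{L+1})^{L+1}\bigr)\bigr)$ and solves for $\lv V\rv$; both yield the claim with ample slack.
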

\begin{proof}
Since $\phi$ is $1$-Lipschitz and $\phi(0) = 0$, for all $z$,
$|\phi(z)| \leq |z|$, and thus, given any sample $s$, 
\begin{align*}
    J(V;x_s,y_s) &= \log\left(1+\exp\left( -y_s V_{L+1}\phi(V_{L}\cdots \phi(V_1 x))\right) \right) \\
    & \ge \log\left(1+\exp\left( -\prod_{j=1}^{L+1} \lv V_j \rv_{op} \lv x_s \rv\right) \right) \\
    & \ge \log\left(1+\exp\left( -\prod_{j=1}^{L+1} \lv V_j \rv_{op} \right) \right) \hspace{0.5in} \mbox{(since $\lv x_s\rv = 1$)}\\
    & \ge \log\left(1+\exp\left( -\prod_{j=1}^{L+1} \lv V_j \rv \right) \right).
\end{align*} 
By the AM-GM inequality
\begin{align*}
 \left(\prod_{j=1}^{L+1} \lv V_j \rv^2   \right)^{\frac{1}{L+1}} \le \frac{\sum_{j=1}^{L+1}\lv V_j\rv^2}{L+1}
 %
 = \frac{\lv V \rv^2 }{L+1}.
\end{align*}
Therefore
\begin{align*}
   J(V;x_s,y_s) & \ge  \log\left(1+\exp\left( -\left(\frac{\lv V \rv}{\sqrt{L+1}}\right)^{L+1} \right) \right).
\end{align*}
Now we know that 
\begin{align*}
     \frac{2}{n^{1+24L}}>J(V) = \frac{1}{n}\sum_{s\in [n]} J(V;x_s,y_s) \ge \log\left(1+\exp\left( -\left(\frac{\lv V \rv}{\sqrt{L+1}}\right)^{L+1} \right) \right).
\end{align*}
Solving for $\lv V \rv$
leads to the implication
\begin{align*}
    \sqrt{L+1} \log^{\frac{1}{L+1}}\left( \frac{1}{\exp\left( \frac{2}{n^{1+24L}}\right)-1}\right)< \lv V \rv.
\end{align*}
Since for any $z \in [0,1]$, $\exp(z) \le 1+2z$  and $n\ge 3$, hence
\begin{align*}
    \lv V \rv >   \sqrt{L+1} \log^{\frac{1}{L+1}}\left( \frac{n^{1+24L}}{ 4}\right)
    \ge   \sqrt{L+1} \log^{\frac{1}{L+1}}\left( \frac{3^{1+24L}}{ 4}\right)
    &>   \sqrt{L+1} \log^{\frac{1}{L+1}}\left( 3^{23L}\right)\\
    &=   \sqrt{L+1} (23L)^{\frac{1}{L+1}}\log^{\frac{1}{L+1}}\left( 3\right) \\
    & > \sqrt{L+1}
    \ge \sqrt{2}.
\end{align*}
\end{proof}

\begin{lemma}
\label{l:aux.lower.bound.norm.weight.vector.part2} 
For any $L \in \N$:
\begin{enumerate}
    \item if $\lv V \rv > \sqrt{L+1}$, 
then $\max_{k \in [L]} \prod_{j=k+1}^{L+1} \lv V_j \rv 
    \le
     \left(\frac{\lv V \rv}{\sqrt{L}}\right)^{L}
     \le \lv V \rv^{L}$;
     \item if $\lv V \rv > 1$, 
then $\max_{k \in [L]} \prod_{j=k+1}^{L+1} \lv V_j \rv 
     \le \lv V \rv^{L}$.
\end{enumerate}
\end{lemma}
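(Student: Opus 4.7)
My approach is AM--GM applied to the Frobenius norms of the individual layer matrices.

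For part~(2), each $\lv V_j \rv \leq \lv V \rv$, since $\lv V_j \rv^2$ is one summand in $\lv V \rv^2 = \sum_i \lv V_i \rv^2$. Hence $\prod_{j=k+1}^{L+1} \lv V_j \rv \leq \lv V \rv^{L+1-k}$; since $k \geq 1$ gives $L+1-k \leq L$, and $\lv V \rv > 1$, this is at most $\lv V \rv^L$, as desired.

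For part~(1), AM--GM applied to the $m := L+1-k$ squared norms $\lv V_{k+1} \rv^2, \ldots, \lv V_{L+1} \rv^2$ (whose sum is at most $\lv V \rv^2$) yields $\prod_{j=k+1}^{L+1} \lv V_j \rv \leq (\lv V \rv/\sqrt{m})^m$. When $k=1$ (so $m = L$) this coincides with the target $(\lv V \rv/\sqrt{L})^L$, and the second claimed inequality $(\lv V \rv/\sqrt{L})^L \leq \lv V \rv^L$ is immediate from $L \geq 1$.

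The main obstacle is extending the bound from $m = L$ down to $m \in \{1, \ldots, L-1\}$, since $m \mapsto (\lv V \rv/\sqrt{m})^m$ is not monotone on $[1,L]$ in general; its log-derivative in $m$ is $\log \lv V \rv - (1+\log m)/2$, giving a maximum at $m^\star = \lv V \rv^2/e$. I would split into cases: if $m^\star \geq L$ (equivalently $\lv V \rv \geq \sqrt{eL}$), the function is increasing on $[1,L]$ and the AM--GM bound at $m = L$ dominates; in the intermediate regime $\sqrt{L+1} < \lv V \rv < \sqrt{eL}$, I would compare the two sides directly, exponentiating $\lv V \rv^2 > L+1$ to control the ratio $L^L/m^m$ by $\lv V \rv^{2(L-m)}$ and thereby conclude $(\lv V \rv/\sqrt{m})^m \leq (\lv V \rv/\sqrt{L})^L$. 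Taking the maximum over $k$ of these per-$k$ bounds then gives the claim.
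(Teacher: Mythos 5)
Your proof of part~(2) is correct, and it is in fact more direct than the paper's: the paper routes both parts through the AM--GM bound $\prod_{j=k+1}^{L+1}\lv V_j\rv \le \left(\lv V\rv/\sqrt{m}\right)^m$ with $m=L+1-k$, whereas you only use $\lv V_j\rv\le\lv V\rv$ together with $\lv V\rv>1$; both arguments are fine for that part.

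For part~(1), your AM--GM step coincides with the paper's, and you have correctly isolated the real difficulty: $m\mapsto(\lv V\rv/\sqrt m)^m$ peaks at $m\approx \lv V\rv^2/e$, which need not be $\ge L$ when $\lv V\rv$ is only slightly larger than $\sqrt{L+1}$. However, your proposed repair of the intermediate regime does not go through. In the worst case, as $\lv V\rv^2$ approaches $L+1$, your plan requires $L^L/m^m\le (L+1)^{L-m}$, and already for $m=L-1$ this reads $L\left(1+\tfrac{1}{L-1}\right)^{L-1}\le L+1$, which fails for every $L\ge 2$ (the left-hand side is roughly $eL$). In fact no repair is possible under the stated hypothesis, because AM--GM is tight: with $L=2$, $\lv V_3\rv^2=3.5-2\epsilon$ and $\lv V_1\rv^2=\lv V_2\rv^2=\epsilon$, one has $\lv V\rv=\sqrt{3.5}>\sqrt{L+1}$, yet $\max_{k\in[L]}\prod_{j=k+1}^{L+1}\lv V_j\rv=\lv V_3\rv\approx 1.87$ exceeds $(\lv V\rv/\sqrt{2})^{2}=1.75$, so the first displayed inequality of part~(1) itself fails near the boundary. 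For comparison, the paper's proof simply asserts that the maximum of $(\lv V\rv/\sqrt m)^m$ over $m\in[L]$ is attained at $m=L$ once $\lv V\rv\ge\sqrt{L+1}$, with no justification; your analysis shows this step actually needs something like $\lv V\rv^2\ge eL$, and that under the weaker hypothesis only the outer conclusion $\le\lv V\rv^{L}$ survives, via $(\lv V\rv/\sqrt m)^m\le\lv V\rv^{m}\le\lv V\rv^{L}$ --- which is exactly your part~(2) argument.
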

\begin{proof}
Let $\eta^2 = \lv V \rv^2$. Then for any $k \in [L]$, $$\prod_{j = k+1}^{L+1} \lv V_{j}\rv$$ is maximized subject to $\sum_{j = k+1}^{L+1} \lv V_j \rv^2 \le \eta^2$ when every 
%
$\lv V_j \rv^2 = \eta^2/(L-k+1)$;
this follows by the AM-GM inequality. 

Therefore we have
\begin{align*}
\max_{k \in [L]}\prod_{j=k+1}^{L+1}\lv V_j \rv_{op}\le 
     \max_{k \in [L]} 
     \left(\frac{\eta}{\sqrt{L-k+1}} \right)^{L-k+1}  .
\end{align*}
If $\lv V \rv \geq \sqrt{L+1}$,
\begin{align*}
     \max_{k \in [L]} 
     \left(\frac{\eta}{\sqrt{L-k+1}} \right)^{L-k+1} 
     \leq \left(\frac{\eta}{\sqrt{L}} \right)^{L} 
     & \le \eta^{L}.
\end{align*}
and if $\lv V \rv > 1$ then
\begin{align*}
    \max_{k \in [L]} 
     \left(\frac{\eta}{\sqrt{L-k+1}} \right)^{L-k+1} 
     \leq \eta^{L}.
\end{align*}
\end{proof}

The next lemma bounds the product of the operator norms of matrices in terms of a ``collective Frobenius norm''.
\begin{lemma}
\label{l:prod.by.sum}
For matrices $A_1,\ldots,A_{L+1}$
and $M_1,\ldots,M_{L+1}$, 
let $A = (A_1,\ldots,A_{L+1})$.  For all $i \in [L+1]$, $\lv M_i \rv_{op} \leq 1$.  
Then, for any nonempty $\cI \ss [L+1]$
\[
\prod_{i \in \cI}  \lv A_i \rv_{op} \lv M_i \rv_{op} \leq 
\max\left\{ 
\frac{\lv A \rv^{L+1}}{(L+1)^{\frac{L+1}{2}}},
 \lv A \rv \right\}.
 \]
\end{lemma}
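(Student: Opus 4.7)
The plan is to peel off factors one at a time until the problem reduces to a one-variable optimization: first remove the $M_i$ factors, then convert operator norms to Frobenius norms, then apply AM-GM, and finally analyze how the resulting bound depends on $k := |\cI|$. Concretely, because each $\lv M_i \rv_{op} \leq 1$ by hypothesis,
\[
\prod_{i \in \cI} \lv A_i \rv_{op} \lv M_i \rv_{op}
\;\le\; \prod_{i \in \cI} \lv A_i \rv_{op}
\;\le\; \prod_{i \in \cI} \lv A_i \rv,
\]
using the standard bound $\lv A_i \rv_{op} \le \lv A_i \rv$ in the second step. Writing $k=|\cI|$ and applying AM-GM to the squared Frobenius norms $\lv A_i \rv^2$ for $i\in \cI$, together with the observation that $\sum_{i\in\cI}\lv A_i\rv^2 \le \sum_{i=1}^{L+1}\lv A_i\rv^2 = \lv A\rv^2$, we get
\[
\prod_{i\in\cI} \lv A_i \rv \;\le\; \left(\frac{\lv A \rv^2}{k}\right)^{k/2} = \frac{\lv A\rv^k}{k^{k/2}}.
\]

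What remains is the combinatorial step: verifying that $\lv A\rv^k / k^{k/2} \le \max\{\lv A\rv,\; \lv A\rv^{L+1}/(L+1)^{(L+1)/2}\}$ for every $k\in\{1,\ldots,L+1\}$. A natural way to proceed is a case split on the size of $\lv A \rv$ relative to $\sqrt{k}$. When $\lv A\rv \le \sqrt{k}$ the ratio $\lv A\rv/\sqrt{k}$ lies in $[0,1]$, so $(\lv A\rv/\sqrt{k})^k \le \lv A\rv/\sqrt{k} \le \lv A\rv$, matching the first term of the max. In the complementary regime $\lv A\rv > \sqrt{k}$ the plan is to show that the function $m \mapsto (\lv A\rv/\sqrt{m})^m$ is non-decreasing in $m$ from $k$ up to $L+1$, so that the bound $\lv A\rv^k/k^{k/2}$ is dominated by the endpoint value $\lv A\rv^{L+1}/(L+1)^{(L+1)/2}$.

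The place I expect the main work is this last monotonicity argument. Taking logs, the derivative of $m \log\lv A\rv - (m/2)\log m$ equals $\log\lv A\rv - \tfrac{1}{2}\log m - \tfrac{1}{2}$, which is strictly decreasing in $m$. Consequently the continuous function $m\mapsto (\lv A\rv/\sqrt{m})^m$ is unimodal with its interior maximum at $m^\star = \lv A\rv^2/e$, which may lie strictly between $1$ and $L+1$. The cases therefore have to be chosen carefully so that when the maximizer is interior, one of the two endpoint values still dominates; I anticipate this will require the cleaner threshold $\lv A\rv \le \sqrt{k}$ versus $\lv A\rv > \sqrt{k}$ to be enough for the specific endpoints $k=1$ and $k=L+1$ appearing on the right-hand side, perhaps by checking monotonicity separately on the two sides of $m^\star$ and using that $\lv A\rv > \sqrt{k}$ pushes $m^\star$ above $k$. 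Once the correct case split is identified, chaining it with the first two displays completes the argument.
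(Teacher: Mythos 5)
Your reduction --- discarding the $M_i$ via $\lv M_i\rv_{op}\le 1$, passing from operator to Frobenius norms, and applying AM--GM together with $\sum_{i\in\cI}\lv A_i\rv^2\le\lv A\rv^2$ to reach $\prod_{i\in\cI}\lv A_i\rv_{op}\lv M_i\rv_{op}\le \lv A\rv^{k}/k^{k/2}$ with $k=|\cI|$ --- is exactly the paper's argument, and your first case ($\lv A\rv\le\sqrt k$, giving the bound $\lv A\rv$) is fine. The gap is the second case, and it cannot be repaired. As your own derivative computation shows, $m\mapsto(\lv A\rv/\sqrt m)^m$ increases up to $m^{\star}=\lv A\rv^2/e$ and decreases afterwards, so the hypothesis $\lv A\rv>\sqrt k$ (which only gives $m^{\star}>k/e$) does not make the map non-decreasing on $[k,L+1]$; that would require $\lv A\rv^2\ge e(L+1)$. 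Worse, the inequality you are trying to establish, $\lv A\rv^{k}/k^{k/2}\le\max\{\lv A\rv,\;\lv A\rv^{L+1}/(L+1)^{(L+1)/2}\}$ for every $k\in[L+1]$, is false whenever $e<\lv A\rv^2<e(L+1)$, because the interior maximum of this unimodal function can exceed both endpoint values. Concretely, take $L+1=10$, all matrices $1\times 1$, $A_1=\cdots=A_5=\sqrt e$, $A_6=\cdots=A_{10}=0$, $M_i=1$, and $\cI=\{1,\dots,5\}$: the left-hand side is $e^{5/2}\approx 12.2$, while $\lv A\rv=\sqrt{5e}\approx 3.7$ and $\lv A\rv^{10}/10^{5}=(e/2)^{5}\approx 4.6$, so the stated bound fails --- and note $\lv A\rv>\sqrt{L+1}$, so this is squarely in the regime in which the paper invokes the lemma.

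You should not fault yourself for being unable to close this step: the paper's own proof asserts the final inequality $(\lv A\rv^2/|\cI|)^{|\cI|}\le\max\{\lv A\rv^{2(L+1)}/(L+1)^{L+1},\lv A\rv^2\}$ with no justification, and that assertion is precisely the flawed step; the lemma as stated is incorrect. What survives is what you actually proved: $\prod_{i\in\cI}\lv A_i\rv_{op}\lv M_i\rv_{op}\le \lv A\rv^{|\cI|}/|\cI|^{|\cI|/2}$, hence in particular $\le\lv A\rv^{L+1}$ whenever $\lv A\rv\ge 1$, while the max-form conclusion is valid only if $\lv A\rv^2\le e$ or $\lv A\rv^2\ge e(L+1)$. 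Since the paper applies the lemma in regimes where the relevant collective norm exceeds $1$ and ultimately uses only a bound of order $\lv A\rv^{L+1}$ (e.g.\ in Lemmas~\ref{l:u.x.bounds} and~\ref{l:gradient_norm.upper}, and after adjusting constants in Lemma~\ref{l:ABCD}), the appropriate fix is to weaken the lemma's conclusion to $\lv A\rv^{|\cI|}/|\cI|^{|\cI|/2}$ (or to $\max\{\lv A\rv,\lv A\rv^{L+1}\}$), not to search for a cleverer case analysis.
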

\begin{proof}
We know that for all $i \in [L+1]$, $\lv A_i \rv_{op} \le \lv A_i \rv$, therefore, by the AM-GM inequality
\begin{align*}
    \prod_{i \in \cI} \left( \lv A_i \rv_{op}^2\lv M_i \rv_{op}^2 \right) \le \prod_{i\in \cI} \lv A_i \rv_{op}^2 \le \prod_{i \in \cI} \lv A_i \rv^2 &\le \left( \frac{\sum_{i \in \cI} \lv A_i \rv^2 }{|\cI|}\right)^{|\cI|}\\
    &\le \left( \frac{\lv A \rv^2 }{|\cI|}\right)^{|\cI|}\\
    &\le \max\left\{\frac{\lv A \rv^{2(L+1)}}{(L+1)^{L+1} },\lv A\rv^2\right\}.
\end{align*}
Taking square roots completes the proof.
\end{proof}
The next lemma bounds 
bounds how much perturbing the factors changes a 
product of matrices.
\begin{lemma}
\label{l:ABCD}
Let $A_1,\ldots,A_{L+1}$, $B_1,\ldots,B_{L+1}$, $M_1,\ldots,M_{L+1}$ and $N_1,\ldots,N_{L+1}$
be matrices, and 
let $A = (A_1,\ldots,A_{L+1})$
and $B = (B_1,\ldots,B_{L+1})$.
Assume
\begin{itemize}
    \item  $\lv A \rv \geq \sqrt{L+1/2}$,
    \item for all $i \in [L+1]$, $\lv M_i \rv_{op} \leq 1$
and $\lv N_i \rv_{op} \leq 1$ and
\item for all $i \in [L+1]$, $\lv M_i - N_i \rv_{op} \leq \kappa$,
\end{itemize}
then 
\[
\left\lv \prod_{i=1}^{L+1} (A_i M_i) - \prod_{i=1}^{L+1} (B_i N_i) \right\rv_{op}
 \leq \frac{3}{2}(\lv A \rv + \lv A - B \rv)^{L+1}
         \left(
         \kappa\lv A \rv
        + 
          \left\lv A- B  \right\rv
        \right).
\]
\end{lemma}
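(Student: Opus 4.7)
The plan is to use a standard telescoping decomposition and then bound each summand by sub-multiplicativity of the operator norm. First I would write
\begin{align*}
\prod_{i=1}^{L+1}(A_i M_i) - \prod_{i=1}^{L+1}(B_i N_i)
 = \sum_{k=1}^{L+1} \Bigl(\prod_{i<k} A_i M_i\Bigr)\,(A_k M_k - B_k N_k)\,\Bigl(\prod_{i>k} B_i N_i\Bigr),
\end{align*}
with the convention that empty products are the identity. For the middle factor, I would split $A_k M_k - B_k N_k = A_k(M_k - N_k) + (A_k - B_k) N_k$, which combined with $\lv M_k - N_k\rv_{op} \le \kappa$, $\lv N_k\rv_{op} \le 1$, and $\lv \cdot \rv_{op} \le \lv \cdot \rv$, gives
$$\lv A_k M_k - B_k N_k \rv_{op} \le \kappa \lv A_k \rv + \lv A_k - B_k \rv.$$

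For the two outer products, I would use $\lv M_i \rv_{op},\lv N_i \rv_{op} \le 1$ together with the crude bounds $\lv A_i \rv \le \lv A \rv$ and $\lv B_i \rv \le \lv B \rv \le \lv A \rv + \lv A - B \rv$. Since the two outer products together involve exactly $L$ factors ($k-1$ on the $A$-side and $L+1-k$ on the $B$-side), this produces the uniform estimate $(\lv A \rv + \lv A - B \rv)^L$ on their combined operator norm, valid for every $k$ (including the boundary cases $k=1$ and $k=L+1$, where one of the products is empty).

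Summing over $k$ and applying Cauchy-Schwarz in the forms $\sum_k \lv A_k \rv \le \sqrt{L+1}\,\lv A \rv$ and $\sum_k \lv A_k - B_k \rv \le \sqrt{L+1}\,\lv A - B \rv$ (each following from $\lv A \rv^2 = \sum_k \lv A_k \rv^2$) then yields the intermediate bound
$$\sqrt{L+1}\,(\lv A \rv + \lv A - B \rv)^L\,(\kappa \lv A \rv + \lv A - B \rv).$$

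The closing move, which is where the hypothesis $\lv A \rv \ge \sqrt{L+1/2}$ is used, is to absorb the factor $\sqrt{L+1}$ into one more copy of $\lv A \rv + \lv A - B \rv$. Since $(L+1)/(L+1/2) \le 4/3$ for all $L \ge 0$, we have $\sqrt{L+1} \le \sqrt{4/3}\,\lv A \rv \le (3/2)(\lv A \rv + \lv A - B \rv)$, delivering the stated bound. I expect this last bookkeeping step, together with carefully handling the empty products at $k = 1$ and $k = L+1$, to be the only mildly delicate points; everything else is routine operator-norm manipulation. A more refined but heavier alternative would be to invoke Lemma~\ref{l:prod.by.sum} on the partial products, but the plain $\lv A_i \rv \le \lv A \rv$ bound paired with Cauchy-Schwarz already suffices.
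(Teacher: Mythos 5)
Your proof is correct and follows essentially the same route as the paper: the same telescoping decomposition, the same splitting $A_kM_k - B_kN_k = A_k(M_k-N_k) + (A_k-B_k)N_k$, Cauchy--Schwarz over the layers, and absorption of the $\sqrt{L+1}$ factor via $\lv A \rv \ge \sqrt{L+1/2}$; the only difference is that you bound the outer partial products with the crude per-factor estimate $\lv A_iM_i\rv_{op} \le \lv A_i\rv \le \lv A\rv$ (giving $(\lv A\rv + \lv A-B\rv)^{L}$) where the paper invokes its Lemma~\ref{l:prod.by.sum}, and your simpler bound suffices. One nit: the parenthetical claim that $(L+1)/(L+\tfrac{1}{2}) \le 4/3$ for all $L\ge 0$ fails at $L=0$, though it holds for $L\ge 1$, the regime in which the lemma is used, and even at $L=0$ the absorption still goes through since $\sqrt{2}\le \tfrac{3}{2}$.
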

\begin{proof}
By the triangle inequality
\begin{align}
\nonumber
 & \left\lv\prod_{i=1}^{L+1} (A_i M_i) -  \prod_{i=1}^{L+1} (B_iN_i) \right\rv_{op} \\
 \nonumber
 & = \left\lv
            \sum_{j=1}^{L+1} \left( 
                    \left( \prod_{i=1}^j A_i M_i \right) \left( \prod_{i=j+1}^{L+1} B_i N_i \right)
                    - \left( \prod_{i=1}^{j-1} A_i M_i \right) \left( \prod_{i=j}^{L+1} B_iN_i \right)
                      \right)\right\rv_{op} \\
                      \nonumber
 & \le \sum_{j=1}^{L+1} \left\lv 
                    \left( \prod_{i=1}^j A_i M_i \right) \left( \prod_{i=j+1}^{L+1} B_i N_i \right)
                    - \left( \prod_{i=1}^{j-1} A_i M_i \right) \left( \prod_{i=j}^{L+1} B_i N_i \right)
                      \right\rv_{op} \\
                      & = \sum_{j=1}^{L+1} \left\lv
                    \left(A_jM_j - B_j N_j \right)\left( \prod_{i=1}^{j-1} A_i M_i \right)\left(\prod_{i=j+1}^{L+1} B_i N_i \right)\right\rv_{op}
                      \nonumber
                      \\
                      \label{e:by.partial.prods}
   & \leq \sum_{j=1}^{L+1} \left\lv A_jM_j - B_jN_j \right\rv_{op}
                    \left\lv
                       \left( \prod_{i=1}^{j-1} A_i M_i \right)
                       \left( \prod_{i=j+1}^{L+1} B_i N_i \right)\right\rv_{op}.
\end{align}
For some $j$,
consider 
$T(j) \eqdef (A_1,\ldots,A_{j-1}, B_{j+1}, B_{L+1})$.
By the triangle inequality,
\[
\lv T(j) \rv \leq \lv A \rv + \lv A - B \rv.
\]
Thus, Lemma~\ref{l:prod.by.sum} implies
\begin{align*}
\left\lv \left( \prod_{i=1}^{j-1} A_i M_i\right)
                       \left( \prod_{i=j+1}^{ L+1} B_i N_i \right)
                       \right\rv_{op}
                       & \le \left( \prod_{i=1}^{j-1} \lv  A_i M_i \rv_{op}\right)
                       \left( \prod_{i=j+1}^{ L+1} \lv B_i N_i\rv_{op} \right)\\
                       & \le \left( \prod_{i=1}^{j-1} \lv  A_i  \rv_{op}\right)
                       \left( \prod_{i=j+1}^{ L+1} \lv B_i \rv_{op} \right)\\
&                       \leq \max\left\{\frac{(\lv A \rv + \lv A - B \rv)^{L+1}}{(L+1)^{\frac{L+1}{2}}},\lv A\rv+\lv A-B\rv\right\}.
\end{align*}
 Returning to \eqref{e:by.partial.prods},
\begin{align*}
\nonumber
 & \left\lv \prod_{i=1}^{L+1} A_iM_i - \prod_{i=1}^{L+1} B_iN_i \right\rv_{op} \\
 \nonumber
   & \leq \max\left\{\frac{(\lv A \rv + \lv A - B \rv)^{L+1}}{(L+1)^{\frac{L+1}{2}}},\lv A\rv+\lv A-B\rv\right\}
   \sum_{j=1}^{L+1}
     \left\lv A_j M_j - B_j N_j \right\rv_{op}
         \\
     & = \max\left\{\frac{(\lv A \rv + \lv A - B \rv)^{L+1}}{(L+1)^{\frac{L+1}{2}}},\lv A\rv+\lv A-B\rv\right\} \sum_{j=1}^{L+1} \left\lv A_j M_j - A_j N_j + A_j N_j - B_j N_j \right\rv_{op}
         \\
     & \leq \max\left\{\frac{(\lv A \rv + \lv A - B \rv)^{L+1}}{(L+1)^{\frac{L+1}{2}}},\lv A\rv+\lv A-B\rv\right\} \sum_{j=1}^{L+1} 
      \left(
        \left\lv A_j (M_j -  N_j) \right\rv_{op} + 
          \left\lv A_j- B_j  \right\rv_{op}
        \right)
         \\
         & \leq \max\left\{\frac{(\lv A \rv + \lv A - B \rv)^{L+1}}{(L+1)^{\frac{L+1}{2}}},\lv A\rv+\lv A-B\rv\right\}
         \left[
         \sum_{j=1}^{L+1}
        \left\lv A_j \right\rv_{op} \left\lv M_j -  N_j \right\rv_{op}
        + 
        \sum_{j=1}^{L+1}
          \left\lv A_j- B_j  \right\rv_{op}
        \right]
         \\
         & \leq \max\left\{\frac{(\lv A \rv + \lv A - B \rv)^{L+1}}{(L+1)^{\frac{L+1}{2}}},\lv A\rv+\lv A-B\rv\right\}
         \left[
         \kappa\sum_{j=1}^{L+1}
        \left\lv A_j \right\rv 
        + 
        \sum_{j=1}^{L+1}
          \left\lv A_j- B_j  \right\rv
        \right]
         \\
         & \leq \max\left\{\frac{(\lv A \rv + \lv A - B \rv)^{L+1}}{(L+1)^{\frac{L+1}{2}}},\lv A\rv+\lv A-B\rv\right\}
         \left[\sqrt{L+1}\left(
         \kappa\lv A \rv
        + 
          \left\lv A- B  \right\rv
\right)        \right]
         \\
         & = \max\left\{\frac{(\lv A \rv + \lv A - B \rv)^{L+1}}{(L+1)^{\frac{L}{2}}},\sqrt{L+1}\left(\lv A\rv+\lv A-B\rv\right)\right\}
         \left(
         \kappa\lv A \rv
        + 
          \left\lv A- B  \right\rv
        \right)\\&  \le \frac{3}{2}(\lv A \rv + \lv A - B \rv)^{L+1}
         \left(
         \kappa\lv A \rv
        + 
          \left\lv A- B  \right\rv
        \right),
\end{align*}
where the last inequality holds since $\lv A\rv\ge \sqrt{L+1/2}$.
This completes the proof.
\end{proof}

The next lemma shows that $h$-smoothly approximately ReLU activations are contractive maps.
\begin{lemma}
\label{l:aux_contractive_maps} Given an $h$-smoothly approximately ReLU activation $\phi$, for any $v_1,v_2 \in \R^p$ we have $\lv \phi(v_1)-\phi(v_2)\rv\le \lv v_1 - v_2\rv$. That is, $\phi$ is a contractive map with respect to the Euclidean norm.
\end{lemma}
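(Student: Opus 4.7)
The plan is to exploit the coordinate-wise action of $\phi$ and the uniform bound $|\phi'(z)| \le 1$ from the definition of an $h$-smoothly approximately ReLU activation. Since $\phi$ is differentiable and acts componentwise, the problem reduces to a one-dimensional Lipschitz bound on $\phi$, which is then lifted to the Euclidean norm by Pythagoras.

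Concretely, I would proceed as follows. First, I would apply the mean value theorem to $\phi$ on each coordinate: for every $i \in [p]$, there exists $\xi_i$ lying between $v_{1,i}$ and $v_{2,i}$ such that
\[
\phi(v_{1,i}) - \phi(v_{2,i}) = \phi'(\xi_i)\,(v_{1,i} - v_{2,i}).
\]
Next, invoking the property $|\phi'(z)| \le 1$ for all $z \in \mathbb{R}$ (part of Definition~\ref{def:h_smooth_activation}), I would obtain the pointwise bound $|\phi(v_{1,i}) - \phi(v_{2,i})| \le |v_{1,i} - v_{2,i}|$ for every coordinate $i$.

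Finally, squaring and summing over $i \in [p]$ gives
\[
\lv \phi(v_1) - \phi(v_2) \rv^2 = \sum_{i=1}^p \bigl(\phi(v_{1,i}) - \phi(v_{2,i})\bigr)^2 \le \sum_{i=1}^p (v_{1,i} - v_{2,i})^2 = \lv v_1 - v_2 \rv^2,
\]
and taking square roots yields the contractivity claim. There is essentially no main obstacle here: the lemma is an immediate consequence of the $1$-boundedness of $\phi'$, combined with the fact that $\phi$ is applied componentwise so that the Euclidean norm decomposes additively across coordinates. The only thing to be careful about is that $\phi$ is differentiable everywhere (so MVT applies on the interval $[\min(v_{1,i},v_{2,i}), \max(v_{1,i},v_{2,i})]$ without issue), which is guaranteed by the first bullet of the definition.
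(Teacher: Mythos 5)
Your proof is correct and follows essentially the same route as the paper's: a coordinate-wise application of the mean value theorem, the bound $|\phi'(z)| \le 1$, and summing the squared coordinates before taking square roots. No gaps.
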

\begin{proof}Let $(v)_j$ denote the $j$th coordinate of a vector $v$. For each $j \in [p]$, by 
the mean value theorem
for some $\tilde{v}_j \in [(v_2)_j,(v_1)_j]$
\begin{align*}
    (\phi(v_1) - \phi(v_2))_j & = \phi'(\tilde{v}_j)  (v_1-v_2)_j .
\end{align*}
Thus,
\begin{align*}
    \lv \phi(v_1) - \phi(v_2) \rv^2 = \sum_{j\in [p]} \left(\phi'(\tilde{v}_j)  (v_1-v_2)_j \right)^2 = \sum_{j\in [p]} \left(\phi'(\tilde{v}_j) \right)^2 \left(v_1-v_2 \right)_j^2 &\overset{(i)}{\le} \sum_{j \in [p]}\left(v_1-v_2 \right)_j^2\\ &=\lv v_1 - v_2 \rv^2,
\end{align*}
where $(i)$ follows because $|\phi'(z)|\le 1$ for all $z \in \R$ for $h$-smoothly approximately ReLU activations. Taking square roots completes the proof.
\end{proof}

\subsection{Proof of Lemma~\ref{l:pl}}
\label{a:pl}
\pllemma*
\begin{proof}
Along the line segment joining $V^{(t)}$ to $V^{(t+1)}$, the function $J(\cdot)$ is $M$-smooth, therefore by using a standard argument~\citep[see, e.g.,][Lemma~3.4]{bubeck2015convex} we get that
\begin{align*}
    J_{t+1} &\le J_t + \nabla J_t \cdot (V^{(t+1)}-V^{(t)}) + \frac{M}{2} \lv V^{(t+1)} - V^{(t)} \rv^2\\
    & = J_t - \alpha \lv \nabla J_t\rv^2 + \frac{\alpha^2 M}{2} \lv \nabla J_t\rv^2 \\
    & = J_t - \alpha \left(1-\frac{\alpha M}{2}\right)\lv \nabla J_t\rv^2\\
    & \le J_t - \frac{L}{L+\frac{1}{2}}\alpha \lv \nabla J_t\rv^2.
\end{align*}
This completes the proof.
\end{proof}



\subsection{Proof of Lemma~\ref{l:L_smooth}} \label{a:smoothness_lemma}

The proof of Lemma~\ref{l:L_smooth} is built up in
stages, through a series of lemmas.

The first lemma bounds the norm of the difference between the pre-activation ($u_{j,s}^V$) and post-activation features ($x_{j,s}^V$) at any layer $j$, when the weight matrix of a single layer is swapped. It also provides a bound on the norm of the pre-activation and post-activation features at any layer in terms of the norm of the weight matrix.
\begin{lemma}
\label{l:u.x.bounds}
Consider 
$V = (V_1,\ldots,V_{L+1})$ and $W = (W_1,\ldots,W_{L+1})$, and $\ell \in [L+1]$. Suppose that $V_j = W_j$ for all $j \neq \ell$, and $\lv V \rv, \lv W \rv> \sqrt{L+1/2}$. Then,
for all examples $s$ and all layers $j$,
\begin{enumerate}
\item $\lv u_{j,s}^V \rv \le \lv V \rv^{L+1}$; 
\item $\lv x_{j,s}^V \rv \le \lv V \rv^{L+1}$;
    \item $\lv u_{j,s}^V - u_{j,s}^W \rv \leq 
\lv V_{\ell} - W_{\ell} \rv_{op} \lv V \rv^{L+1}$; and
\item $\lv x_{j,s}^V - x_{j,s}^W \rv \leq 
\lv V_{\ell} - W_{\ell} \rv_{op} \lv V \rv^{L+1}$.
\end{enumerate}
\end{lemma}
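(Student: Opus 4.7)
The plan is to prove all four parts by induction on the layer index $j$, leveraging just two ingredients: (i) contractivity of $\phi$ with $\phi(0)=0$ (Lemma~\ref{l:aux_contractive_maps}), which gives $\lv \phi(v) \rv \le \lv v \rv$ and $\lv \phi(v_1)-\phi(v_2)\rv \le \lv v_1 - v_2\rv$; and (ii) the hypothesis $\lv V \rv, \lv W \rv > \sqrt{L+1/2} > 1$, which lets me upper bound any product of at most $L+1$ factors of the form $\lv V_i \rv_{op} \le \lv V_i \rv \le \lv V \rv$ by $\lv V \rv^{L+1}$ (and similarly for $W$).

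\textbf{Parts 1 and 2.} I would argue by forward induction on $j$. The base case $j=0$ is immediate since $\lv u_{0,s}^V \rv = \lv x_{0,s}^V \rv = \lv x_s \rv = 1 \le \lv V \rv^{L+1}$. For the inductive step, submultiplicativity gives $\lv u_{j,s}^V \rv = \lv V_j\, x_{j-1,s}^V \rv \le \lv V_j \rv_{op}\, \lv x_{j-1,s}^V \rv$, and contractivity of $\phi$ gives $\lv x_{j,s}^V \rv = \lv \phi(u_{j,s}^V) \rv \le \lv u_{j,s}^V \rv$. Unrolling the recursion yields $\lv u_{j,s}^V \rv, \lv x_{j,s}^V \rv \le \prod_{i=1}^{j} \lv V_i \rv \le \lv V \rv^{L+1}$.

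\textbf{Parts 3 and 4.} I would split on the position of $j$ relative to the perturbed layer $\ell$. For $j < \ell$, the two networks share the first $j$ weight matrices, so $u_{j,s}^V = u_{j,s}^W$ and $x_{j,s}^V = x_{j,s}^W$, and both differences vanish. At $j = \ell$, the equality $x_{\ell-1,s}^V = x_{\ell-1,s}^W$ gives
\[
u_{\ell,s}^V - u_{\ell,s}^W = (V_\ell - W_\ell)\, x_{\ell-1,s}^V,
\]
so part 2 applied at layer $\ell-1$ yields $\lv u_{\ell,s}^V - u_{\ell,s}^W \rv \le \lv V_\ell - W_\ell \rv_{op}\, \lv V \rv^{L+1}$, and contractivity transfers the same bound to $\lv x_{\ell,s}^V - x_{\ell,s}^W \rv$. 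For $j > \ell$, since $V_j = W_j$,
\[
u_{j,s}^V - u_{j,s}^W = V_j\,(x_{j-1,s}^V - x_{j-1,s}^W),
\]
so $\lv u_{j,s}^V - u_{j,s}^W \rv \le \lv V_j \rv_{op}\, \lv x_{j-1,s}^V - x_{j-1,s}^W \rv$ and $\lv x_{j,s}^V - x_{j,s}^W \rv \le \lv u_{j,s}^V - u_{j,s}^W \rv$. Telescoping these inequalities from $\ell$ up to $j$ appends the factor $\prod_{i=\ell+1}^{j} \lv V_i \rv_{op}$ on top of the bound from the layer-$\ell$ step; since the total number of $\lv V_i \rv$-factors accumulated across both segments (layers $1,\ldots,\ell-1$ and $\ell+1,\ldots,j$) is at most $L$ and $\lv V \rv > 1$, the whole product still fits inside $\lv V \rv^{L+1}$.

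I do not anticipate any substantive obstacle; this is essentially bookkeeping about how a localized weight perturbation propagates through contractive nonlinearities. The only slightly subtle point is the transition at $j = \ell$, where the difference is \emph{created} by $V_\ell - W_\ell$ acting on the common feature $x_{\ell-1,s}^V$ (which requires invoking part 2 at the preceding layer), after which standard forward-propagation arguments plus contractivity carry the bound through the unchanged layers.
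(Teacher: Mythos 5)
Your proposal is correct and follows essentially the same route as the paper: Parts 1--2 by forward propagation using submultiplicativity and contractivity of $\phi$ (Lemma~\ref{l:aux_contractive_maps}), and Parts 3--4 by splitting on $j<\ell$, $j=\ell$, $j>\ell$, with the difference created at layer $\ell$ by $(V_\ell-W_\ell)x_{\ell-1,s}^V$ and then carried through the unchanged layers. The only cosmetic difference is that where the paper bounds $\prod_k \lv V_k\rv_{op}\le \lv V\rv^{L+1}$ via Lemma~\ref{l:prod.by.sum}, you use the more elementary bound $\lv V_i\rv\le\lv V\rv$ together with $\lv V\rv>1$, which suffices for the stated conclusion.
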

\begin{proof} 
\textit{Proof of Parts 1 and 2:} For any sample $s$ and layer $j$ we have
\begin{align*}
    \lv u_{j,s}^V\rv = \lv V_{j}\phi(u_{j-1,s}^V)\rv 
    \le \lv V_{j}\rv_{op}\lv\phi(u_{j-1,s}^V)\rv 
    \overset{(i)}{\le} \lv V_{j}\rv_{op}\lv u_{j-1,s}^V\rv 
    &\le \prod_{k=1}^{j}\lv V_{k}\rv_{op}\lv x_s\rv \\
    &\overset{(ii)}{\le} \prod_{k=1}^{j}\lv V_{k}\rv_{op} \\
    &\overset{(iii)}{\le} \lv V \rv^{L+1},
\end{align*}
where $(i)$ follows since $\phi$ is contractive
(Lemma~\ref{l:aux_contractive_maps}), $(ii)$ is because $\lv x_s \rv = 1$ and $(iii)$ is by Lemma~\ref{l:prod.by.sum}. This completes the proof of Part~1 of this lemma. Again since $\phi$ is contractive, $\lv x_{j,s}^V \rv = \lv \phi(u_{j,s}^V)\rv \le \lv V\rv^{L+1}$, which establishes the second part of the lemma.

\textit{Proof of Parts 3 and 4:}
For any $j < \ell$, 
$u_{j,s}^V = u_{j,s}^W$ and
$x_{j,s}^V = x_{j,s}^W$, since $V_j = W_j$ for all $j \neq \ell$. For $j=\ell$ we have
\begin{align*}
 \lv u_{\ell,s}^V - u_{\ell,s}^W \rv
 = \lv V_{\ell} x_{\ell-1,s}^V - W_{\ell}  x_{\ell-1,s}^W \rv 
 = \lv (V_{\ell}  - W_{\ell})  x_{\ell-1,s}^V \rv 
& \leq \lv V_{\ell} - W_{\ell} \rv_{op} \lv x^V_{\ell-1,s} \rv \\
& = \lv V_{\ell} - W_{\ell} \rv_{op} \lv \phi\left(V_{\ell-1}x^V_{\ell-2,s}\right) \rv \\
& \overset{(i)}{\leq} \lv V_{\ell} - W_{\ell} \rv_{op} \prod_{k < \ell} \lv V_{k} \rv_{op}\\
& \leq \lv V_{\ell} - W_{\ell} \rv_{op} \prod_{k < \ell} \lv V_{k} \rv\\
& \overset{(ii)}{\leq} \lv V_{\ell} - W_{\ell} \rv_{op} \lv V \rv^{L+1}
\end{align*}
where $(i)$ follows since $\phi$ is a contractive map (Lemma~\ref{l:aux_contractive_maps}) and because $\lv x_s\rv=1$, and (ii) follows by applying Lemma~\ref{l:prod.by.sum}. Since $\phi$ is contractive we also have that
\[
\lv x_{\ell,s}^V - x_{\ell,s}^W \rv
 \leq \lv V_{\ell} - W_{\ell} \rv_{op} \lv V \rv^{L+1}.
\]
When $j > \ell$, it is possible to establish our claim by mirroring the argument in the $j =\ell$ case which completes the proof of the last two parts.

\end{proof}
The next lemma upper bounds difference between the $\Sigma_{j,s}^V$ and $\Sigma_{j,s}^W$, when the weight matrices differ in a single layer.
\begin{lemma}
\label{l:phi.prime.close}
Consider $V = (V_1,\ldots,V_L)$ and $W = (W_1,\ldots,W_L)$,
and $\ell \in [L]$. Suppose that $V_j = W_j$ for all $j \neq \ell$, and $\lv V \rv, \lv W \rv > \sqrt{L+1/2}$. Then,
for all examples $s$ and all layers $j$,
\[
\lv \Sigma_{j,s}^V - \Sigma_{j,s}^W \rv_{op}
   \leq \frac{\lv V_{\ell} - W_{\ell} \rv_{op} \lv V \rv^{L+1}}{h}.
\]
\end{lemma}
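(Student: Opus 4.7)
The plan is to reduce the operator-norm bound on the diagonal matrix $\Sigma_{j,s}^V - \Sigma_{j,s}^W$ to a coordinate-wise bound, then apply the Lipschitz property of $\phi'$ together with Part~3 of Lemma~\ref{l:u.x.bounds}.

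First I would observe that since $\Sigma_{j,s}^V - \Sigma_{j,s}^W = \diag\!\bigl(\phi'(u_{j,s}^V) - \phi'(u_{j,s}^W)\bigr)$ is diagonal, its operator norm equals the maximum absolute value of its entries, which is in turn bounded by the Euclidean norm of the vector along the diagonal:
\[
\lv \Sigma_{j,s}^V - \Sigma_{j,s}^W \rv_{op}
 = \max_{i}\bigl|\phi'((u_{j,s}^V)_i) - \phi'((u_{j,s}^W)_i)\bigr|
 \le \lv \phi'(u_{j,s}^V) - \phi'(u_{j,s}^W) \rv.
\]
Since $\phi'$ is $\frac{1}{h}$-Lipschitz (Definition~\ref{def:h_smooth_activation}), applying it coordinatewise yields
\[
\lv \phi'(u_{j,s}^V) - \phi'(u_{j,s}^W) \rv \le \frac{1}{h}\lv u_{j,s}^V - u_{j,s}^W \rv,
\]
by the same coordinatewise argument used in Lemma~\ref{l:aux_contractive_maps}.

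Next I would invoke Part~3 of Lemma~\ref{l:u.x.bounds}, which under the hypothesis that $V$ and $W$ agree off the $\ell$-th coordinate and $\lv V \rv, \lv W \rv > \sqrt{L+1/2}$ gives
\[
\lv u_{j,s}^V - u_{j,s}^W \rv \le \lv V_\ell - W_\ell \rv_{op} \lv V \rv^{L+1}
\]
for every layer $j$ (the bound is trivially $0$ for $j < \ell$, since the pre-activations coincide up to layer $\ell-1$, and for $j \ge \ell$ one uses the lemma directly). Chaining these two inequalities yields the claimed bound
\[
\lv \Sigma_{j,s}^V - \Sigma_{j,s}^W \rv_{op} \le \frac{\lv V_\ell - W_\ell \rv_{op} \lv V \rv^{L+1}}{h}.
\]

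There is essentially no obstacle here; the statement is a short consequence of the Lipschitz constant of $\phi'$ (which is exactly where the smoothing parameter $h$ enters) combined with the already-established forward-perturbation bound on pre-activations. The only care needed is the observation that, for a diagonal matrix, operator norm is controlled by the $\ell_2$ norm of its diagonal, which is what lets us apply the vector-valued Lipschitz estimate directly rather than worrying about off-diagonal contributions.
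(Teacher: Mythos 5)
Your proof is correct and follows essentially the same route as the paper's: reduce the operator norm of the diagonal difference to a coordinate bound, apply the $\tfrac{1}{h}$-Lipschitz property of $\phi'$, and conclude with Part~3 of Lemma~\ref{l:u.x.bounds}. The only cosmetic difference is that you pass from the max entry to the Euclidean norm before applying the Lipschitz estimate, whereas the paper works with the $\ell_\infty$ norm and only then compares to the Euclidean bound from Lemma~\ref{l:u.x.bounds}; both orderings are valid.
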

\begin{proof}
For any $j\in [L]$ and any $s\in [n]$, $\Sigma_{j,s}^V$ and $\Sigma_{j,s}^W$ are both diagonal matrices, and hence
\begin{align*}
\lv \Sigma_{j,s}^V - \Sigma_{j,s}^W \rv_{op}
 & = \lv \phi'(u_{j,s}^V) - \phi'(u_{j,s}^W) \rv_{\infty} \\
 & \leq \frac{\lv u_j^V - u_j^W \rv_{\infty} }{h} \hspace{1in}\mbox{(since $\phi'$ is $(1/h)$-Lipschitz)} \\
 & \leq \frac{\lv V_{\ell} - W_{\ell} \rv_{op} \lv V \rv^{L+1}}{h},
\end{align*}
by Lemma~\ref{l:u.x.bounds}.
\end{proof}

The following lemma bounds the difference between $g_s(V)$ and $g_s(W)$ for any sample $s$ when the weight matrices $V$ and $W$ differ in a single layer.
\begin{lemma}
\label{l:g_lipschitz}Consider $V = (V_1,\ldots,V_{L+1})$ and $W = (W_1,\ldots,W_{L+1})$,
and $\ell \in [L+1]$. Suppose that $V_j = W_j$ for all $j \neq \ell$, with $ \lv V_{\ell}-W_{\ell} \rv_{op} \lv V \rv^{L+1}
         \leq 1$, and $\lv V \rv, \lv W \rv > \sqrt{L+1/2}$. Also suppose that,
for all examples $s$, for all convex combinations $\tW$ of $V$ and
$W$, we have
$J_s(\tW) \leq 2 J_s(V)$.  Then
\begin{align*}
    \left\lvert g_s(V) - g_s(W) \right\rvert & \le  2J_s(V) \lv V_{\ell}-W_{\ell}\rv_{op}\lv V \rv^{L+1}.
\end{align*}
\end{lemma}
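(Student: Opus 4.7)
The plan is to decompose the change in $g_s$ via the composition $g_s(V) = \Phi(y_s f_V(x_s))$, where $\Phi(z) := 1/(1+\exp(z))$ is the standard sigmoid, and then apply the one-dimensional mean value theorem to $\Phi$: for some $\xi$ lying between $y_s f_V(x_s)$ and $y_s f_W(x_s)$,
$$g_s(V) - g_s(W) = \Phi'(\xi)\, y_s\, (f_V(x_s) - f_W(x_s)),$$
so that $|g_s(V) - g_s(W)| = |\Phi'(\xi)|\,|f_V(x_s) - f_W(x_s)|$, and the problem reduces to bounding the two factors on the right.

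The next step is to upper-bound $|\Phi'(\xi)|$ by $2 J_s(V)$. Since $\Phi'(z) = -\Phi(z)(1-\Phi(z))$, we have $|\Phi'(\xi)| \le \Phi(\xi)$; and since $\Phi$ is monotonically decreasing on $\R$, $\Phi(\xi) \le \max\{\Phi(y_s f_V(x_s)), \Phi(y_s f_W(x_s))\} = \max\{g_s(V), g_s(W)\}$. The first part of Lemma~\ref{l:relationsbetweengradientandloss} then gives $g_s(V) \le J_s(V)$ and $g_s(W) \le J_s(W)$, and the hypothesis specialized to the convex combination $\tW = W$ (which is covered by the ``for all convex combinations'' assumption) gives $J_s(W) \le 2 J_s(V)$; hence $|\Phi'(\xi)| \le 2 J_s(V)$. (Alternatively, via the intermediate value theorem applied to $t \mapsto y_s f_{V + t(W-V)}(x_s)$, one can find a convex combination $\tW$ with $y_s f_{\tW}(x_s) = \xi$ and then use the hypothesis at $\tW$ directly, which is the more natural reading of the stated assumption.)

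The final ingredient is $|f_V(x_s) - f_W(x_s)| \le \lv V_\ell - W_\ell \rv_{op} \lv V \rv^{L+1}$, which I would establish by a short case analysis on $\ell$. If $\ell = L+1$, then $V_j = W_j$ for all $j \in [L]$, so $x_{L,s}^V = x_{L,s}^W$, and $f_V(x_s) - f_W(x_s) = (V_{L+1}-W_{L+1}) x_{L,s}^V$ is controlled by $\lv x_{L,s}^V \rv \le \lv V \rv^{L+1}$ from the second part of Lemma~\ref{l:u.x.bounds}. If $\ell \le L$, then $V_{L+1} = W_{L+1}$, so $f_V(x_s) - f_W(x_s) = V_{L+1}(x_{L,s}^V - x_{L,s}^W)$; repeating the telescoping argument from the proof of Lemma~\ref{l:u.x.bounds} yields the sharper form $\lv x_{L,s}^V - x_{L,s}^W \rv \le \lv V_\ell - W_\ell \rv_{op} \prod_{k \in [L]\setminus\{\ell\}} \lv V_k \rv_{op}$, and multiplying by $\lv V_{L+1}\rv = \lv V_{L+1}\rv_{op}$ (since $V_{L+1}$ is a row vector) and applying Lemma~\ref{l:prod.by.sum} to the full product over $[L+1]\setminus\{\ell\}$ yields the claimed inequality. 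Multiplying the three estimates gives the conclusion $|g_s(V) - g_s(W)| \le 2 J_s(V) \lv V_\ell - W_\ell \rv_{op} \lv V \rv^{L+1}$.

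The main obstacle is purely bookkeeping: in the case $\ell \le L$ the naive bound $\lv V_{L+1}\rv \cdot \lv x_{L,s}^V - x_{L,s}^W\rv$ threatens to leave a spurious $\lv V_{L+1}\rv$ factor that is not otherwise controlled, so one has to avoid invoking the coarsened form of the statement of Lemma~\ref{l:u.x.bounds} Part~4 and instead keep the product of operator norms until all $L+1$ layers are collected, at which point Lemma~\ref{l:prod.by.sum} absorbs them into $\lv V \rv^{L+1}$. The norm condition $\lv V \rv, \lv W \rv > \sqrt{L+1/2}$ enters only through the invocations of Lemma~\ref{l:u.x.bounds}, and the side condition $\lv V_\ell - W_\ell\rv_{op}\lv V\rv^{L+1} \le 1$ is not used in this plan but presumably ensures, elsewhere, that convex combinations of $V$ and $W$ remain in the regime where the relevant lemmas apply.
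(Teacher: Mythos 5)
Your proof is correct, but it takes a genuinely different route from the paper's. The paper expands $z\mapsto 1/(1+\exp(z))$ to \emph{second} order around $y_s f_V(x_s)$: the first-order term is bounded by $g_s(V)\,\lvert f_W(x_s)-f_V(x_s)\rvert \le J_s(V)\lv V_\ell-W_\ell\rv_{op}\lv V\rv^{L+1}$, while the quadratic remainder is controlled by a pointwise bound on the second derivative of the sigmoid in terms of the logistic loss, the hypothesis $J_s(\tW)\le 2J_s(V)$ along the segment, and --- crucially --- the side condition $\lv V_\ell-W_\ell\rv_{op}\lv V\rv^{L+1}\le 1$ to reduce $(f_W(x_s)-f_V(x_s))^2$ to first order; the constant $2$ arises as $1+1$ from the two terms. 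You instead apply the one-dimensional mean value theorem, so no squared difference ever appears: the derivative at the intermediate point is bounded via $|\Phi'(\xi)|\le\Phi(\xi)\le\max\{g_s(V),g_s(W)\}\le\max\{J_s(V),J_s(W)\}\le 2J_s(V)$, using monotonicity of the sigmoid and the loss-comparison hypothesis only at the endpoint $\tW=W$; here the $2$ comes from $J_s(W)\le 2J_s(V)$. Your route is slightly cleaner: it dispenses with the assumption $\lv V_\ell-W_\ell\rv_{op}\lv V\rv^{L+1}\le 1$, avoids the unproved pointwise second-derivative inequality the paper invokes, and needs the hypothesis about convex combinations only at $W$ itself. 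One small simplification: the bound $\lvert f_V(x_s)-f_W(x_s)\rvert\le\lv V_\ell-W_\ell\rv_{op}\lv V\rv^{L+1}$ is exactly Part~3 of Lemma~\ref{l:u.x.bounds} applied at the output layer (the pre-activation $u_{L+1,s}$ is $f$), which is how the paper uses it; your case analysis and re-derivation of the telescoping bound are valid but unnecessary, and the ``spurious $\lv V_{L+1}\rv$ factor'' you worry about never arises if you work with the pre-activation difference at layer $L+1$ rather than the post-activation difference at layer $L$.
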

\begin{proof}
  By Taylor's theorem applied to the function $1/(1+\exp(z))$ we can bound
 \begin{align*}
   & \lvert g_s(V) - g_s(W) \rvert \\
    & = \left\lvert \frac{1}{1+\exp\left( y_s f_{V}(x_s)\right)} - \frac{1}{1+\exp\left( y_s f_{W}(x_s)\right)} \right\rvert \\
 & \le  \underbrace{\frac{\exp\left( y_s f_{V}(x_s)\right)}{\left(1+\exp\left( y_s f_{V}(x_s)\right)\right)^2} | y_s f_{W}(x_s) - y_s f_{V}(x_s)|}_{=: \Xi_1} \\& + \underbrace{\frac{\left( y_s f_{W}(x_s) - y_s f_{V}(x_s)\right)^2}{2} \max_{\tW \in [V,W]} \left|\frac{2\exp(2y_s f_{\tW}(x_s))}{(\exp(y_s f_{\tW}(x_s))+1)^3}-\frac{\exp(y_s f_{\tW}(x_s))}{(\exp(y_s f_{\tW}(x_s))+1)^2} \right|}_{=:\Xi_2}.
 \numberthis \label{e:taylors_theorem_for_g}
 \end{align*}
 The first term $\Xi_1$ can be bounded as
 \begin{align*}
     \Xi_1 
     & = \frac{1}{1+\exp\left( y_s f_{V}(x_s)\right)}\frac{\exp\left( y_s f_{V}(x_s)\right)}{1+\exp\left( y_s f_{V}(x_s)\right)} \left\lvert y_s f_{W}(x_s) - y_s f_{V}(x_s)\right\rvert \\
      & \le \frac{1}{1+\exp\left( y_s f_{V}(x_s)\right)} \left\lvert  f_{W}(x_s) -  f_{V}(x_s)\right\rvert \\
    & = g_s(V) \left\lv  u_{L+1,s}^W - u_{L+1,s}^V\right\rv  \overset{(i)}\le g_s(V)  \lv V_{\ell}-W_{\ell} \rv_{op} \lv V \rv^{L+1}
       \overset{(ii)}{\le} J_s(V) \lv V_{\ell}-W_{\ell} \rv_{op} \lv V \rv^{L+1},
 \end{align*}
where $(i)$ follows by applying Lemma~\ref{l:u.x.bounds} and $(ii)$ follows since $g_s(V) \le J_s(V)$ by Lemma~\ref{l:relationsbetweengradientandloss}.
The second term $\Xi_2$
\begin{align*}
\Xi_2 
    & =\frac{\left( y_s f_{W}(x_s) - y_s f_{V}(x_s)\right)^2}{2} \max_{\tW \in [V,W]} \left|\frac{2\exp(2y_s f_{\tW}(x_s))}{(\exp(y_s f_{\tW}(x_s))+1)^3}-\frac{\exp(y_s f_{\tW}(x_s))}{(\exp(y_s f_{\tW}(x_s))+1)^2} \right|\\
    & \overset{(i)}\le \frac{\left( f_{W}(x_s) -  f_{V}(x_s)\right)^2}{2} \max_{\tW \in [V,W]} \log(1+\exp(-y_s f_{\tW}(x_s))) \\
    & = \frac{\left( f_{W}(x_s) -  f_{V}(x_s)\right)^2}{2} \max_{\tW \in [V,W]}J_s(\tW) \\
    & \overset{(ii)}\le J_s(V)\left( f_{W}(x_s) -  f_{V}(x_s)\right)^2\\
    & = J_s(V)\left( u_{L+1,s}^V-  u_{L+1,s}^W\right)^2 \overset{(iii)}{\le} J_s(V) \lv V_{\ell} - W_{\ell} \rv_{op}^2\lv V \rv^{2(L+1)}  \overset{(iv)}{\le} J_s(V)\lv V_{\ell} - W_{\ell} \rv_{op}\lv V \rv^{L+1},
\end{align*}
where $(i)$ follows since for every $z \in \mathbb{R}$
\begin{align*}
   \left| \frac{2\exp(2z)}{(\exp(z)+1)^3}-\frac{\exp(z)}{(\exp(z)+1)^2} \right| &\le \log(1+\exp(-z)),
\end{align*}
$(ii)$ is by our assumption that for any $\tW \in [V,W]$, $J_s(\tW) \le 2J_s(V)$, $(iii)$ follows by invoking Lemma~\ref{l:u.x.bounds} and finally $(iv)$ is by the assumption that $\lv V_{\ell}-W_{\ell}\rv_{op}\lv V \rv^{L+1}\le 1$. By using our bounds on $\Xi_1$ and $\Xi_2$ in conjunction with inequality~\eqref{e:taylors_theorem_for_g} we obtain the bound
\begin{align*}
    \lvert g_s(V) - g_s(W)\rvert \le 2J_s(V) \lv V_{\ell}-W_{\ell}\rv_{op}\lv V \rv^{L+1}
\end{align*}
completing the proof.
\end{proof}
By using Lemmas~\ref{l:u.x.bounds}, \ref{l:phi.prime.close} and \ref{l:g_lipschitz} we will now bound the norm of the difference of the gradients of the loss at $V$ and $W$, when these weight matrices differ in a single layer. 
\begin{lemma}
\label{l:nabla.smooth.one.layer}
Let $h \le 1$, and consider $V = (V_1,\ldots,V_{L+1})$ and $W = (W_1,\ldots,W_{L+1})$,
and
%
$\ell \in [L+1]$. 
Suppose that $V_j = W_j$ for all $j \neq \ell$, and 
\begin{itemize}
    \item $\lv V_{\ell}-W_{\ell} \rv_{op} 
          \leq 1$;
         \item $\lv V-W \rv\le \frac{\lv V \rv}{2(L+1)}$;
         \item $\lv V \rv > \sqrt{L+1/2}$ and $\lv W \rv > \sqrt{L+1/2}$;
         \item for all $s$ and all convex combinations $\tW$ of $V$ and
$W$,  $J_s(\tW) \leq 2 J_s(V)$.
\end{itemize} Then,
\[
\lv \nabla_V J_s(V) - \nabla_W J_s(W)  \rv
   \leq \frac{64 \sqrt{(L+1)p}J_s(V)\lv V \rv^{3L+5}\lv V_{\ell}-W_{\ell}\rv}{h}   .
\]
\end{lemma}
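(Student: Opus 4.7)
The plan is to decompose $\nabla J_s(V) - \nabla J_s(W)$ layer-by-layer, since
$\lv \nabla J_s(V) - \nabla J_s(W) \rv^2 = \sum_{j=1}^{L+1} \lv \partial J_s(V)/\partial V_j - \partial J_s(W)/\partial W_j \rv^2$.
Writing $\partial J_s(V)/\partial V_j = -y_s g_s(V)\,\partial f_V(x_s)/\partial V_j$, for each $j$ apply the product-rule triangle inequality
$$
\lv \partial J_s(V)/\partial V_j - \partial J_s(W)/\partial W_j \rv
\leq g_s(V)\, \Delta_j^{f}
+ |g_s(V) - g_s(W)| \cdot \lv \partial f_W(x_s)/\partial W_j \rv,
$$
where $\Delta_j^{f} \eqdef \lv \partial f_V(x_s)/\partial V_j - \partial f_W(x_s)/\partial W_j \rv$. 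The second term is handled using Lemma~\ref{l:g_lipschitz} for the $g_s$ difference and, for $\lv \partial f_W/\partial W_j\rv$, the outer-product structure of \eqref{e:gradient_ell_inner_layers}: its Frobenius norm equals $\lv \text{(column vector)} \rv \cdot \lv x^W_{j-1,s}\rv$, each factor bounded by $\lv W\rv^{L+1}$ via Lemmas~\ref{l:u.x.bounds} and \ref{l:prod.by.sum}. The hypothesis $\lv V-W\rv \leq \lv V\rv/(2(L+1))$ gives $\lv W\rv \leq 2\lv V\rv$, so the product is $O(\lv V\rv^{2(L+1)})$, and combining with Lemma~\ref{l:g_lipschitz} contributes $O\!\bigl(J_s(V)\, \lv V_\ell - W_\ell\rv_{op}\lv V\rv^{3L+3}\bigr)$ per layer.

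The first term is the main work. For the outer layer $j=L+1$, $\Delta_{L+1}^{f} = \lv x^V_{L,s} - x^W_{L,s}\rv \le \lv V_\ell - W_\ell\rv_{op}\lv V\rv^{L+1}$ directly by Lemma~\ref{l:u.x.bounds}. For $j \leq L$, write $\partial f_V/\partial V_j = a_j^V\,(x^V_{j-1,s})^\top$ where $a_j^V \eqdef \bigl(\Sigma^V_{j,s}\prod_{k=j+1}^L V_k^\top \Sigma^V_{k,s}\bigr) V_{L+1}^\top$, and bound
$$
\Delta_j^{f} \leq \lv a_j^V\rv\cdot\lv x^V_{j-1,s} - x^W_{j-1,s}\rv + \lv a_j^V - a_j^W\rv \cdot \lv x^W_{j-1,s}\rv.
$$
The $x$-differences are controlled by Lemma~\ref{l:u.x.bounds}, and $\lv a_j^V\rv, \lv x^W_{j-1,s}\rv \le O(\lv V\rv^{L+1})$ by Lemmas~\ref{l:u.x.bounds}, \ref{l:prod.by.sum}. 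The hard piece is $\lv a_j^V - a_j^W\rv$, which is a difference of products of matrices $V_k^\top \Sigma^V_{k,s}$. This is precisely what Lemma~\ref{l:ABCD} is built for: take $A_i = V_i^\top$, $B_i = W_i^\top$, $M_i = \Sigma^V_{i,s}$, $N_i = \Sigma^W_{i,s}$; then Lemma~\ref{l:phi.prime.close} supplies the bound $\kappa = \lv V_\ell - W_\ell\rv_{op}\lv V\rv^{L+1}/h$ on $\lv M_i - N_i\rv_{op}$, while $\lv A - B\rv = \lv V_\ell - W_\ell\rv$ because $V$ and $W$ agree on all other layers. The conclusion of Lemma~\ref{l:ABCD} then gives $\lv a_j^V - a_j^W\rv = O\!\bigl(\lv V\rv^{L+1}(\kappa\lv V\rv + \lv V_\ell - W_\ell\rv)\bigr)$, producing the $1/h$ factor that appears in the statement.

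Combining the per-layer bounds, pulling out $g_s(V) \le J_s(V)$ via Lemma~\ref{l:relationsbetweengradientandloss}, using $\lv V_\ell - W_\ell\rv_{op} \le \lv V_\ell - W_\ell\rv$, and taking the square-root sum over the $L+1$ layers (which contributes the $\sqrt{(L+1)p}$ factor, with the $\sqrt p$ absorbed into slack in converting operator to Frobenius norms), yields a bound of the form $c\sqrt{(L+1)p}\,J_s(V)\lv V\rv^{3L+5}\lv V_\ell - W_\ell\rv/h$. The main obstacle is bookkeeping: keeping the $h^{-1}$ factor isolated to the $\Sigma$-perturbation contribution from Lemma~\ref{l:phi.prime.close}, and confirming that the cumulative constants compress into $64$ after using $h \le 1$, $\lv V_\ell - W_\ell\rv_{op}\le 1$, and $\lv V\rv > \sqrt{L+1/2}$ to simplify the $\max$ appearing in Lemma~\ref{l:ABCD} into the single term $(\lv V\rv + \lv V-W\rv)^{L+1}/(L+1)^{(L+1)/2}$ and then further into powers of $\lv V\rv$.
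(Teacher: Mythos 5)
Your proposal is correct and follows essentially the same route as the paper's proof: a layer-by-layer decomposition, with the per-layer gradient difference split by the triangle inequality into a piece controlled by the change in $g_s$ (Lemma~\ref{l:g_lipschitz}), a piece controlled by the change in the post-activation features (Lemma~\ref{l:u.x.bounds}), and the dominant piece coming from the perturbed product of $V_j^{\top}\Sigma_{j,s}$ factors, handled via Lemma~\ref{l:ABCD} with the $\kappa$ from Lemma~\ref{l:phi.prime.close} supplying the $1/h$, followed by the operator-to-Frobenius conversion and the sum over the $L+1$ layers giving $\sqrt{(L+1)p}$. The only differences are cosmetic regroupings (you factor out $g_s(V)$ up front and fold the leading $\Sigma_{j,s}$ and trailing $V_{L+1}^{\top}$ into the Lemma~\ref{l:ABCD} application, whereas the paper peels those off separately in its $\Xi_3$ term), which affect only the bookkeeping of constants, not the validity of the argument.
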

\begin{proof}
We can decompose $\lv \nabla_V J_s(W) - \nabla_W J_s(V)  \rv^2$ into contributions from different layers as follows:
\begin{align}
\lv \nabla_V J_s(W) - \nabla_W J_s(V)  \rv^2
 = \sum_{k=1}^{L+1}
 \lv \nabla_{V_k} J_s(W) - \nabla_{W_k} J_s(V)  \rv^2 \label{e:decomposition_of_grad_in_terms_of_layers}
\end{align}
First 
we seek a bound on $\lv \nabla_{V_k} J_s(V) - \nabla_{W_k} J_s(W) \rv_{op}$ when $k \in [L]$.
(We will handle the
output layer separately.)
We have 
\begin{align*}
& \lv \nabla_{V_k} J_s(V) - \nabla_{W_k} J_s(W)  \rv_{op} \\
   & = \left\lv g_s(W)
        \left( \Sigma^W_{k,s} \prod_{j = k+1}^L 
             \left(  W_{j}^{\top} \Sigma^W_{j,s} \right)
           \right)
           W_{L+1}^{\top}
        x_{k-1,s}^{W\top}  - g_s(V)
        \left( \Sigma^V_{k,s} \prod_{j = k+1}^L 
             \left(  V_{j}^{\top} \Sigma^V_{j,s} \right)
           \right)
           V_{L+1}^{\top}
        x_{k-1,s}^{V\top}\right\rv_{op} \\ 
        & = \left\lv g_s(W)
        \left( \Sigma^W_{k,s} \prod_{j = k+1}^L 
             \left(  W_{j}^{\top} \Sigma^W_{j,s} \right)-\Sigma^V_{k,s} \prod_{j = k+1}^L 
             \left(  V_{j}^{\top} \Sigma^V_{j,s} \right)+\Sigma^V_{k,s} \prod_{j = k+1}^L 
             \left(  V_{j}^{\top} \Sigma^V_{j,s} \right)
           \right)
           W_{L+1}^{\top}
        x_{k-1,s}^{W\top} \right. \\ &\qquad \qquad  \left. - g_s(V)
        \left( \Sigma^V_{k,s} \prod_{j = k+1}^L 
             \left(  V_{j}^{\top} \Sigma^V_{j,s} \right)
           \right)
           V_{L+1}^{\top}
        x_{k-1,s}^{V\top}\right\rv_{op} \\ 
        & = \left\lv g_s(W)
        \left( \Sigma^W_{k,s} \prod_{j = k+1}^L 
             \left(  W_{j}^{\top} \Sigma^W_{j,s} \right)-\Sigma^V_{k,s} \prod_{j = k+1}^L 
             \left(  V_{j}^{\top} \Sigma^V_{j,s} \right) \right)
           W_{L+1}^{\top}
        x_{k-1,s}^{W\top} \right. \\ 
        &\qquad \qquad 
        +g_s(W)
             \Sigma^V_{k,s} \prod_{j = k+1}^L 
             \left(  V_{j}^{\top} \Sigma^V_{j,s} \right)
           W_{L+1}^{\top}
        x_{k-1,s}^{W\top}  \\ 
        &\qquad \qquad  \left. - g_s(V)
        \left( \Sigma^V_{k,s} \prod_{j = k+1}^L 
             \left(  V_{j}^{\top} \Sigma^V_{j,s} \right)
           \right)
           V_{L+1}^{\top}
        x_{k-1,s}^{V\top}\right\rv_{op} \\  
        & = \left\lv g_s(W)
        \left( \Sigma^W_{k,s} \prod_{j = k+1}^L 
             \left(  W_{j}^{\top} \Sigma^W_{j,s} \right)-\Sigma^V_{k,s} \prod_{j = k+1}^L 
             \left(  V_{j}^{\top} \Sigma^V_{j,s} \right) \right)
           W_{L+1}^{\top}
        x_{k-1,s}^{W\top} \right. \\ 
        &\qquad \qquad 
        +g_s(W)
             \Sigma^V_{k,s} \prod_{j = k+1}^L 
             \left(  V_{j}^{\top} \Sigma^V_{j,s} \right)
           W_{L+1}^{\top}
        (x_{k-1,s}^{W\top} - x_{k-1,s}^{V\top}+x_{k-1,s}^{V\top} )  \\ 
        &\qquad \qquad  \left. - g_s(V)
        \left( \Sigma^V_{k,s} \prod_{j = k+1}^L 
             \left(  V_{j}^{\top} \Sigma^V_{j,s} \right)
           \right)
           V_{L+1}^{\top}
        x_{k-1,s}^{V\top}\right\rv_{op} \\ 
        & = \left\lv g_s(W)
        \left( \Sigma^W_{k,s} \prod_{j = k+1}^L 
             \left(  W_{j}^{\top} \Sigma^W_{j,s} \right)-\Sigma^V_{k,s} \prod_{j = k+1}^L 
             \left(  V_{j}^{\top} \Sigma^V_{j,s} \right) \right)
           W_{L+1}^{\top}
        x_{k-1,s}^{W\top} \right. \\ 
        &\qquad \qquad 
        +g_s(W)
             \Sigma^V_{k,s} \prod_{j = k+1}^L 
             \left(  V_{j}^{\top} \Sigma^V_{j,s} \right)
           W_{L+1}^{\top}
        (x_{k-1,s}^{W\top} - x_{k-1,s}^{V\top})  \\ 
        &\qquad \qquad 
        +g_s(W)
             \Sigma^V_{k,s} \prod_{j = k+1}^L 
             \left(  V_{j}^{\top} \Sigma^V_{j,s} \right)
           W_{L+1}^{\top}
        x_{k-1,s}^{V\top}  \\ 
        &\qquad \qquad  \left. - g_s(V)
        \left( \Sigma^V_{k,s} \prod_{j = k+1}^L 
             \left(  V_{j}^{\top} \Sigma^V_{j,s} \right)
           \right)
           V_{L+1}^{\top}
        x_{k-1,s}^{V\top}\right\rv_{op}.
\end{align*}
Applying the triangle inequality
\begin{align*}
    & \lv \nabla_{V_k} J_s(V) - \nabla_{W_k} J_s(W)  \rv_{op} \\
    & \le  \underbrace{\left\lv g_s(W)
        \left( \Sigma^V_{k,s} \prod_{j = k+1}^L 
             \left(  V_{j}^{\top} \Sigma^V_{j,s} \right)
           \right)
           W_{L+1}^{\top}
        (x_{k-1,s}^{W\top}-x_{k-1,s}^{V\top})  \right\rv_{op}}_{=: \Xi_1} \\
        &\quad + \underbrace{\left\lv 
        g_s(W)\left( \Sigma^V_{k,s} \prod_{j = k+1}^L 
             \left(  V_{j}^{\top} \Sigma^V_{j,s} \right)
           \right)
          W_{L+1}^{\top}
        x_{k-1,s}^{V\top}-g_s(V)\left( \Sigma^V_{k,s} \prod_{j = k+1}^L 
             \left(  V_{j}^{\top} \Sigma^V_{j,s} \right)
           \right)V_{L+1}^{\top}x_{k-1,s}^{V\top} \right\rv_{op} }_{=:\Xi_2}\\
        & \quad + \underbrace{\left\lv g_s(W)
        \left( \Sigma^W_{k,s} \prod_{j = k+1}^L 
             \left(  W_{j}^{\top} \Sigma^W_{j,s} \right)-\Sigma^V_{k,s} \prod_{j = k+1}^L 
             \left(  V_{j}^{\top} \Sigma^V_{j,s} \right)
           \right)
           W_{L+1}^{\top}
        x_{k-1,s}^{W\top} \right\rv_{op}}_{=:\Xi_3} . \numberthis \label{e:def_of_xi_1_2_3}
\end{align*}
We will control each of these three terms separately in lemmas below. First in Lemma~\ref{l:bound_on_xi_1} we establish that \begin{align*}
    \Xi_1 &\le  4J_s(V)\lv V_{\ell}-W_{\ell}\rv_{op} \lv V \rv^{2(L+1)},
\end{align*}
then in Lemma~\ref{l:bound_on_xi_2} we prove that
\begin{align*}
    \Xi_2 \le 4 J_s(V) \lv V \rv^{3(L+1)}\lv V_{\ell} - W_{\ell}\rv_{op},
\end{align*}
and in Lemma~\ref{l:bound_on_xi_3} we establish that
\begin{align*}
    \Xi_3 &\le \frac{56 J_s(V)\lv V \rv^{3L+5}\lv V_{\ell}-W_{\ell}\rv}{h}.
\end{align*}
These three bound combined with the decomposition in \eqref{e:def_of_xi_1_2_3} tells us that for any $k \in [L]$
\begin{align*}
     \lv \nabla_{V_k} J_s(V) - \nabla_{W_k} J_s(W)  \rv_{op} &\le 4J_s(W)\lv V_{\ell}-W_{\ell}\rv_{op} \lv V \rv^{2(L+1)}+
   4 J_s(V) \lv V \rv^{3(L+1)}\lv V_{\ell} - W_{\ell}\rv_{op}
     \\&\qquad \qquad+\frac{56 J_s(V)\lv V \rv^{3L+5}\lv V_{\ell}-W_{\ell}\rv}{h}\\
    &\le \frac{64 J_s(V)\lv V \rv^{3L+5}\lv V_{\ell}-W_{\ell}\rv}{h},
\end{align*}
where the previous inequality follows since $h<1$ and $\lv V \rv > 1$. Since $V_{k}$ and $W_k$ are a $p \times p$-dimensional matrices, we find that
\begin{align*}
    \lv \nabla_{V_k} J_s(V) - \nabla_{W_k} J_s(W)  \rv &\le \sqrt{p} \lv \nabla_{V_k} J_s(V) - \nabla_{W_k} J_s(W)  \rv_{op} \\
     &\le 
     \frac{64 \sqrt{p}J_s(V)\lv V \rv^{3L+5}\lv V_{\ell}-W_{\ell}\rv}{h} \numberthis \label{e:bound_on_gradient_k_less_than_L+1}.
\end{align*}
For the final layer we know that
\begin{align*}
    \lv \nabla_{V_{L+1}} J_s(V) - \nabla_{W_{L+1}} J_s(W)  \rv &= \lv g_s(V)x_{L,s}^V - g_s(W)x_{L,s}^W \rv \\
    &= \lv (g_s(V)-g_s(W)+g_s(W))x_{L,s}^V - g_s(W)x_{L,s}^W \rv \\
    &\le \lvert g_s(V)-g_s(W)\rvert \lv x_{L,s}^V\rv + g_s(W)\lv x_{L,s}^V - x_{L,s}^W \rv \\
    &\overset{(i)}{\le} 2J_s(V)\lv V_{\ell} - W_{\ell}\rv_{op}\lv V\rv^{2(L+1)}  + g_s(W)\lv V_{\ell} -W_{\ell}\rv_{op}\lv V\rv^{L+1} \\
    &\overset{(ii)}{\le} 2J_s(V)\lv V_{\ell} - W_{\ell}\rv_{op}\lv V\rv^{2(L+1)}  + 2J_s(V)\lv V_{\ell} -W_{\ell}\rv_{op}\lv V\rv^{L+1} \\
    &\overset{(iii)}\le 4J_s(V)\lv V_{\ell} - W_{\ell} \rv_{op}\lv V \rv^{2(L+1)}, \numberthis \label{e:bound_on_gradient_k_equal_to_L+1}
\end{align*}where $(i)$ follows by invoking Lemma~\ref{l:g_lipschitz} and Lemma~\ref{l:u.x.bounds}, $(ii)$ follows since $g_s(W)\le J_s(W)$ by Lemma~\ref{l:relationsbetweengradientandloss} and because by assumption $J_s(W)\le 2J_s(V)$, and 
$(iii)$ follows since $\lv V \rv >1$. This previous inequality along with \eqref{e:bound_on_gradient_k_less_than_L+1} and \eqref{e:decomposition_of_grad_in_terms_of_layers} yield
\begin{align*}
&\lv \nabla_V J_s(W) - \nabla_W J_s(V)  \rv^2 \\ &\qquad \qquad \qquad \le  L\left(\frac{64 \sqrt{p}J_s(V)\lv V \rv^{3L+5}\lv V_{\ell}-W_{\ell}\rv}{h}   \right)^2 +\left(4J_s(V) \lv V_{\ell} - W_{\ell} \rv_{op}\lv V \rv^{2(L+1)}\right)^2 \\
&\qquad \qquad \qquad \le (L+1)\left(\frac{64 \sqrt{p}J_s(V)\lv V \rv^{3L+5}\lv V_{\ell}-W_{\ell}\rv}{h}   \right)^2.
\end{align*}
Taking square roots completes the proof.
\end{proof}
As promised in the proof of Lemma~\ref{l:nabla.smooth.one.layer} we now bound $\Xi_1$.
\begin{lemma} Borrowing 
the setting and notation of Lemma~\ref{l:nabla.smooth.one.layer}, if $\Xi_1$ is as defined in \eqref{e:def_of_xi_1_2_3}, we have
\label{l:bound_on_xi_1}
\begin{align*}
    \Xi_1 &\le 4J_s(V)\lv V_{\ell}-W_{\ell}\rv_{op} \lv V \rv^{2(L+1)}.
\end{align*}
\end{lemma}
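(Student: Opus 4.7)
\noindent\textbf{Proof proposal for Lemma~\ref{l:bound_on_xi_1}.}
The plan is to recognize the matrix inside the norm as an outer product $u v^{\top}$ scaled by $g_s(W)$, so that its operator norm factors as $g_s(W)\,\lv u\rv\,\lv v\rv$, and then to bound each factor separately using the tools already assembled. Concretely, set
\[
u := \Sigma^V_{k,s}\left(\prod_{j=k+1}^L V_j^{\top}\Sigma^V_{j,s}\right) W_{L+1}^{\top} \in \R^{p}
\quad\text{and}\quad
v := x_{k-1,s}^{W} - x_{k-1,s}^{V} \in \R^{p},
\]
so that $\Xi_1 = g_s(W)\,\lv u v^{\top} \rv_{op} = g_s(W)\,\lv u\rv\,\lv v\rv$.

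First I would bound $g_s(W)$. By Lemma~\ref{l:relationsbetweengradientandloss}(1), $g_s(W)\le J_s(W)$, and by the hypothesis (inherited from Lemma~\ref{l:nabla.smooth.one.layer}) that $J_s(\tW)\le 2 J_s(V)$ for every convex combination $\tW$ of $V$ and $W$, taking $\tW = W$ gives $g_s(W)\le 2 J_s(V)$. Next, I would bound $\lv v\rv$ by directly invoking Lemma~\ref{l:u.x.bounds}(4), which yields $\lv v \rv \le \lv V_\ell - W_\ell\rv_{op}\,\lv V\rv^{L+1}$.

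The remaining work is to control $\lv u\rv$. Using submultiplicativity of the operator norm together with the fact that each $\Sigma^V_{j,s}$ is diagonal with entries $\phi'(\cdot)\in[-1,1]$ (so $\lv \Sigma^V_{j,s}\rv_{op}\le 1$), I obtain
\[
\lv u \rv \le \left(\prod_{j=k+1}^{L} \lv V_j \rv_{op}\right)\lv W_{L+1}\rv
 \le \left(\prod_{j=k+1}^{L} \lv V_j \rv\right)\lv W_{L+1}\rv.
\]
Since $\lv V_j\rv^2 \le \lv V\rv^2$ for each $j$, the product over $j\in\{k+1,\dots,L\}$ is at most $\lv V\rv^{L-k}$, while the hypothesis $\lv V-W\rv \le \lv V\rv/(2(L+1))$ and the triangle inequality give $\lv W_{L+1}\rv \le \lv W\rv \le 2\lv V\rv$. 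Combining, $\lv u\rv \le 2\lv V\rv^{L-k+1} \le 2\lv V\rv^{L+1}$, where the last step uses $k\ge 1$ and $\lv V\rv > 1$ (which follows from $\lv V\rv > \sqrt{L+1/2}$).

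Putting the three bounds together yields
\[
\Xi_1 = g_s(W)\,\lv u\rv\,\lv v\rv
 \le 2J_s(V)\cdot 2\lv V\rv^{L+1}\cdot \lv V_\ell - W_\ell\rv_{op}\lv V\rv^{L+1}
 = 4 J_s(V)\,\lv V_\ell - W_\ell\rv_{op}\,\lv V\rv^{2(L+1)},
\]
as claimed. There is no real obstacle here; the only thing to be careful about is that the outer weight appearing inside $u$ is $W_{L+1}$ (not $V_{L+1}$), which is why the hypothesis $\lv V - W\rv \le \lv V\rv/(2(L+1))$ is invoked to swap it for a harmless multiple of $\lv V\rv$.
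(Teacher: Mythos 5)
Your proof is correct and follows essentially the same route as the paper: bound $g_s(W)\le J_s(W)\le 2J_s(V)$, use Lemma~\ref{l:u.x.bounds} for $\lv x_{k-1,s}^{W}-x_{k-1,s}^{V}\rv$, and bound the remaining matrix--vector product by products of norms. The only (harmless) difference is bookkeeping: you control $\lv W_{L+1}\rv$ via $\lv W\rv\le\lv V\rv+\lv V-W\rv\le 2\lv V\rv$ and bound $\prod_j\lv V_j\rv$ factor by factor, whereas the paper splits $W_{L+1}^{\top}=V_{L+1}^{\top}+(W_{L+1}-V_{L+1})^{\top}$, invokes Lemma~\ref{l:prod.by.sum}, and uses $\lv V_{\ell}-W_{\ell}\rv_{op}\le 1$; both yield the same constant $4$.
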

\begin{proof} Unpacking  using the definition of $\Xi_1$
\begin{align*}
    \Xi_1 & = \left\lv g_s(W)
        \left( \Sigma^V_{k,s} \prod_{j = k+1}^L 
             \left(  V_{j}^{\top} \Sigma^V_{j,s} \right)
           \right)
           W_{L+1}^{\top}
        (x_{k-1,s}^{W\top}-x_{k-1,s}^{V\top})  \right\rv_{op}\\
    & = \left\lv g_s(W)
        \left( \Sigma^V_{k,s} \prod_{j = k+1}^L 
             \left(  V_{j}^{\top} \Sigma^V_{j,s} \right)
           \right)
           (W_{L+1}^{\top}-V_{L+1}^{\top}+V_{L+1}^{\top})
        (x_{k-1,s}^{W\top}-x_{k-1,s}^{V\top})  \right\rv_{op}\\
        & \le g_s(W)\left\lv 
         \Sigma^V_{k,s}\right\rv_{op} 
            \left(
             \prod_{j = k+1}^L 
             \left\lv V_{j}^{\top}\right\rv_{op} \left\lv\Sigma^V_{j,s} \right\rv_{op}
             \right)
           \left\lv W_{L+1}^{\top}-V_{L+1}^{\top}+V_{L+1}^{\top}\right\rv_{op}
        \left\lv x_{k-1,s}^{W\top}-x_{k-1,s}^{V\top} \right\rv\\
        & \overset{(i)}{\le} g_s(W) 
            \left(
             \prod_{j = k+1}^L 
             \left\lv V_{j}^{\top}\right\rv_{op} 
             \right)
           \left\lv W_{L+1}^{\top}-V_{L+1}^{\top}+V_{L+1}^{\top}\right\rv_{op}
        \left\lv x_{k-1,s}^{W\top}-x_{k-1,s}^{V\top} \right\rv\\
        & \le g_s(W) 
             \left(
              \prod_{j = k+1}^L 
             \left\lv V_{j}^{\top}\right\rv_{op} 
             \right)
           \left(\left\lv W_{L+1}^{\top}-V_{L+1}^{\top}\right\rv_{op}+\left\lv V_{L+1}^{\top}\right\rv_{op}\right)
        \left\lv x_{k-1,s}^{W\top}-x_{k-1,s}^{V\top} \right\rv\\
        & \overset{(ii)}{\le} g_s(W)\lv V_{\ell}-W_{\ell}\rv_{op} \lv V \rv^{L+1}
           \left(
          \prod_{j = k+1}^L 
             \left\lv V_{j}^{\top}\right\rv_{op} 
             \right)
           \left(\left\lv W_{L+1}^{\top}-V_{L+1}^{\top}\right\rv_{op}+\left\lv V_{L+1}^{\top}\right\rv_{op}\right)\\
           & =g_s(W)\lv V_{\ell}-W_{\ell}\rv_{op} \lv V \rv^{L+1}
           \left(\left\lv W_{\ell}-V_{\ell}\right\rv_{op}\prod_{j = k+1}^L 
             \left\lv V_{j}^{\top}\right\rv_{op} +\prod_{j = k+1}^{L+1}
             \left\lv V_{j}^{\top}\right\rv_{op} \right)\\
                     & \overset{(iii)}{\le} g_s(W)\lv V_{\ell}-W_{\ell}\rv_{op} \lv V \rv^{L+1}
           \left(\left\lv W_{\ell}-V_{\ell}\right\rv_{op}\lv V\rv^{L+1} +\lv V\rv^{L+1} \right)\\
           & \overset{(iv)}{\le} 2J_s(V)\lv V_{\ell}-W_{\ell}\rv_{op} \lv V \rv^{2(L+1)}
           \left(\left\lv W_{\ell}-V_{\ell}\right\rv_{op} +1\right)\\
           & \overset{(v)}{\le} 4J_s(V)\lv V_{\ell}-W_{\ell}\rv_{op} \lv V \rv^{2(L+1)},
\end{align*}
where $(i)$ follows since $\lv \Sigma_{k,s}^V \rv_{op}\le 1$, $(ii)$ follows from invoking Lemma~\ref{l:u.x.bounds}, $(iii)$ is by
Lemma~\ref{l:prod.by.sum}, $(iv)$ follows since $g_s(W)\le J_s(W)$ by Lemma~\ref{l:relationsbetweengradientandloss} and because by assumption $J_s(W)\le 2J_s(V)$. Finally $(v)$ follows since 
we have assumed that
$\lv V_{\ell}-W_{\ell} \rv_{op}\le 1$.
\end{proof}
We continue and now bound $\Xi_2$ which as defined in the proof of Lemma~\ref{l:nabla.smooth.one.layer}.
\begin{lemma} Borrowing the setting and notation of Lemma~\ref{l:nabla.smooth.one.layer}, if $\Xi_2$ is as defined in \eqref{e:def_of_xi_1_2_3} then
\label{l:bound_on_xi_2}
\begin{align*}
    \Xi_2 &\le 4 J_s(V) \lv V \rv^{3(L+1)}\lv V_{\ell} - W_{\ell}\rv_{op}.
\end{align*}
\end{lemma}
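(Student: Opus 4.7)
The plan is to factor out the expressions that appear identically on both sides of the difference inside $\Xi_2$ and then estimate the remaining scalar-matrix difference. Observe that
\[
\Xi_2 = \left\lv \Sigma^V_{k,s}\left(\prod_{j=k+1}^L V_j^\top \Sigma^V_{j,s}\right)\left(g_s(W) W_{L+1}^\top - g_s(V) V_{L+1}^\top\right) x_{k-1,s}^{V\top}\right\rv_{op},
\]
so sub-multiplicativity of the operator norm, together with $\lv \Sigma^V_{j,s}\rv_{op} \le 1$ and Lemma~\ref{l:prod.by.sum}, bounds the outer matrix product by $\lv V\rv^L$. Lemma~\ref{l:u.x.bounds} further bounds the tail $\lv x_{k-1,s}^V\rv$ by $\lv V\rv^{L+1}$, reducing the task to estimating $\lv g_s(W) W_{L+1}^\top - g_s(V) V_{L+1}^\top\rv_{op}$.

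Next I would split this middle difference by adding and subtracting $g_s(V) W_{L+1}^\top$, giving two pieces
\[
(g_s(W) - g_s(V)) W_{L+1}^\top \quad \text{and} \quad g_s(V)(W_{L+1}^\top - V_{L+1}^\top).
\]
For the first, Lemma~\ref{l:g_lipschitz} provides $|g_s(W) - g_s(V)| \le 2 J_s(V) \lv V_\ell - W_\ell\rv_{op} \lv V\rv^{L+1}$, and $\lv W_{L+1}\rv_{op} \le \lv W\rv \le \lv V\rv + \lv V - W\rv \le \tfrac{5}{4}\lv V\rv$ by the triangle inequality and the hypothesis $\lv V - W\rv \le \lv V\rv/(2(L+1))$. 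For the second piece, $g_s(V) \le J_s(V)$ by Lemma~\ref{l:relationsbetweengradientandloss}, and since $V_j = W_j$ for $j \ne \ell$, the factor $W_{L+1} - V_{L+1}$ vanishes when $\ell \ne L+1$, and when $\ell = L+1$ its operator norm equals $\lv V_\ell - W_\ell\rv_{op}$.

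Combining these, the middle difference is bounded by a constant times $J_s(V) \lv V_\ell - W_\ell\rv_{op} \lv V\rv^{L+2}$, and multiplying through by the outer factor $\lv V\rv^L$ and the tail $\lv V\rv^{L+1}$ yields $\Xi_2 \le 4 J_s(V) \lv V_\ell - W_\ell\rv_{op} \lv V\rv^{3(L+1)}$, as claimed. There is no serious obstacle here; the only subtlety is keeping the overall multiplicative constant at $4$, which relies on using the tight form of the perturbation hypothesis $\lv V - W\rv \le \lv V\rv/(2(L+1))$ so that $\lv W_{L+1}\rv_{op}$ stays close to $\lv V\rv$, and on absorbing the lower-order $\ell = L+1$ contribution into the main term via $\lv V\rv > \sqrt{L+1/2} > 1$.
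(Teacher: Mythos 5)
Your proof is correct and takes essentially the same route as the paper's: both split $g_s(W)W_{L+1}^\top - g_s(V)V_{L+1}^\top$ by adding and subtracting a cross term (the paper subtracts $g_s(W)V_{L+1}^\top$, you subtract $g_s(V)W_{L+1}^\top$) and then control the two pieces with Lemmas~\ref{l:g_lipschitz}, \ref{l:relationsbetweengradientandloss}, \ref{l:u.x.bounds} and \ref{l:prod.by.sum}, with the same power counting to reach $\lv V\rv^{3(L+1)}$. The only small inaccuracy is that Lemma~\ref{l:prod.by.sum} literally gives $\max\{\lv V\rv^{L+1}/(L+1)^{(L+1)/2},\lv V\rv\}$, not the $\lv V\rv^{L}$ you cite for the outer product, but your bound still holds because $\prod_{j=k+1}^{L}\lv V_j\rv_{op}\le \lv V\rv^{L-k}\le \lv V\rv^{L}$ since each $\lv V_j\rv_{op}\le\lv V\rv$ and $\lv V\rv>1$.
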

\begin{proof}  Unpacking the term $\Xi_2$
\begin{align*}
    &\Xi_2 \\& = \left\lv 
        g_s(W)\left( \Sigma^V_{k,s} \prod_{j = k+1}^L 
             \left(  V_{j}^{\top} \Sigma^V_{j,s} \right)
           \right)
          W_{L+1}^{\top}
        x_{k-1,s}^{V\top}-g_s(V)\left( \Sigma^V_{k,s} \prod_{j = k+1}^L 
             \left(  V_{j}^{\top} \Sigma^V_{j,s} \right)
           \right)V_{L+1}^{\top}x_{k-1,s}^{V\top} \right\rv_{op} \\
           &=\left\lv 
        g_s(W)\left( \Sigma^V_{k,s} \prod_{j = k+1}^L 
             \left(  V_{j}^{\top} \Sigma^V_{j,s} \right)
           \right)
          (W_{L+1}^{\top}-V_{L+1}^{\top}+V_{L+1}^{\top})
        x_{k-1,s}^{V\top}\right.\\&\qquad \left.-g_s(V)\left( \Sigma^V_{k,s} \prod_{j = k+1}^L 
             \left(  V_{j}^{\top} \Sigma^V_{j,s} \right)
           \right)V_{L+1}^{\top}x_{k-1,s}^{V \top} \right\rv_{op}\\
           &\le\left\lv 
        g_s(W)\left( \Sigma^V_{k,s} \prod_{j = k+1}^L 
             \left(  V_{j}^{\top} \Sigma^V_{j,s} \right)
           \right)
          (W_{L+1}^{\top}-V_{L+1}^{\top})
        x_{k-1,s}^{V\top}\right\rv_{op}\\
        & \qquad
        +\left\lv g_s(W)\left( \Sigma^V_{k,s} \prod_{j = k+1}^L 
             \left(  V_{j}^{\top} \Sigma^V_{j,s} \right)
           \right)
          V_{L+1}^{\top}
        x_{k-1,s}^{V\top}-g_s(V)\left( \Sigma^V_{k,s} \prod_{j = k+1}^L 
             \left(  V_{j}^{\top} \Sigma^V_{j,s} \right)
           \right)V_{L+1}^{\top}x_{k-1,s}^{V \top} \right\rv_{op}\\
           &=\underbrace{\left\lv 
        g_s(W)\left( \Sigma^V_{k,s} \prod_{j = k+1}^L 
             \left(  V_{j}^{\top} \Sigma^V_{j,s} \right)
           \right)
          (W_{L+1}^{\top}-V_{L+1}^{\top})
        x_{k-1,s}^{V\top}\right\rv_{op}}_{\spadesuit_2}\\
        &\qquad \qquad +\underbrace{\left\lv(g_s(W)-g_s(V))\left( \Sigma^V_{k,s} \prod_{j = k+1}^L 
             \left(  V_{j}^{\top} \Sigma^V_{j,s} \right)
           \right)V_{L+1}^{\top}x_{k-1,s}^{V \top} \right\rv_{op}}_{=:\varheart_2}. \numberthis \label{e:xi_2_decomp}
\end{align*}
The first term
\begin{align*}
    \spadesuit_2 &= \left\lv 
        g_s(W)\left( \Sigma^V_{k,s} \prod_{j = k+1}^L 
             \left(  V_{j}^{\top} \Sigma^V_{j,s} \right)
           \right)
          (W_{L+1}^{\top}-V_{L+1}^{\top})
        x_{k-1,s}^{V\top}\right\rv_{op}\\
        & \le g_s(W)\left\lv 
        \Sigma^V_{k,s}\right\rv_{op} \left(\prod_{j = k+1}^L 
             \lv  V_{j}^{\top}\rv_{op}\lv \Sigma^V_{j,s} \rv_{op}\right)
        \lv  W_{L+1}^{\top}-V_{L+1}^{\top}\rv_{op}
        \lv x_{k-1,s}^{V\top}\rv\\
        & \overset{(i)}{\le} g_s(W) \left(\prod_{j = k+1}^L 
             \lv  V_{j}^{\top}\rv_{op}\right)
        \lv  W_{L+1}^{\top}-V_{L+1}^{\top}\rv_{op}
        \lv x_{k-1,s}^{V\top}\rv\\ 
        & \overset{(ii)}{\le} g_s(W) \lv V\rv^{L+1} \lv  W_{L+1}^{\top}-V_{L+1}^{\top}\rv_{op}
        \lv x_{k-1,s}^{V\top}\rv\\
        & \le g_s(W)\lv V\rv^{L+1}\lv  V_{\ell}-W_{\ell}\rv_{op}
        \lv x_{k-1,s}^{V\top}\rv\\
        & \overset{(iii)}{\le} g_s(W)\lv V\rv^{L+1}\lv  V_{\ell}-W_{\ell}\rv_{op}
        \lv V \rv^{L+1} \\
        & = g_s(W)\lv V\rv^{2(L+1)}\lv  V_{\ell}-W_{\ell}\rv_{op}
        \\
        & \overset{(iv)}{\le} 2J_s(V)\lv V\rv^{2(L+1)}\lv  V_{\ell}-W_{\ell}\rv_{op}
\end{align*}
where $(i)$ follows since $\lv \Sigma_{k,s}^V \rv_{op}\le 1$, $(ii)$ is by invoking Lemma~\ref{l:prod.by.sum}, $(iii)$ follows due to Lemma~\ref{l:u.x.bounds}, and $(iv)$ is because $g_s(W)\le J_s(W)$ by Lemma~\ref{l:relationsbetweengradientandloss} and by the assumption $J_s(W)\le 2J_s(V)$.

Moving on to $\varheart_2$,
\begin{align*}
    \varheart_2 & = \left\lv(g_s(W)-g_s(V))\left( \Sigma^V_{k,s} \prod_{j = k+1}^L 
             \left(  V_{j}^{\top} \Sigma^V_{j,s} \right)
           \right)V_{L+1}^{\top}x_{k-1,s}^{V \top} \right\rv_{op}\\
           &\le \lvert g_s(W)-g_s(V) \rvert \lv \Sigma^V_{k,s} \rv_{op} \left(\prod_{j=k+1}^{L}  \lv V_{j}^{\top}\rv_{op} \lv \Sigma^V_{j,s}\rv_{op}\right)\lv V_{L+1}^{\top} \rv_{op} \lv x_{k-1,s}^{V\top}\rv \\
           &\le \lvert g_s(W)-g_s(V) \rvert
           \left( \prod_{j=k+1}^{L+1}  \lv V_{j}^{\top}\rv_{op}  \right) \lv x_{k-1,s}^{V\top}\rv \\
           &\overset{(i)}{\le} \lvert g_s(W)-g_s(V) \rvert \lv V\rv^{L+1} \lv x_{k-1,s}^{V\top}\rv \\
           &\overset{(ii)}{\le} \lvert g_s(W)-g_s(V) \rvert \lv V\rv^{2(L+1)}  \\
           &\overset{(iii)}{\le} 2J_s(V) \lv V_{\ell}-W_{\ell}\rv_{op}\lv V\rv^{3(L+1)},
\end{align*}
where $(i)$ follows by Lemma~\ref{l:prod.by.sum}, $(ii)$ is by Lemma~\ref{l:u.x.bounds} and $(iii)$ is by invoking Lemma~\ref{l:g_lipschitz}.
Combining the bounds on $\spadesuit_2$
and $\varheart_2$ along with \eqref{e:xi_2_decomp} we find that
\begin{align*}
    \Xi_2 & \le 2 J_s(V) \lv V \rv^{2(L+1)} \lv V_{\ell} - W_{\ell}\rv_{op} + 2J_s(V) \lv V \rv^{3(L+1)}\lv V_{\ell} - W_{\ell}\rv_{op} \\&\le 4 J_s(V) \lv V \rv^{3(L+1)}\lv V_{\ell} - W_{\ell}\rv_{op},
\end{align*}
where the previous inequality follows since $\lv V \rv > 1$.
\end{proof}
Finally we bound $\Xi_3$ which as defined in the proof of Lemma~\ref{l:nabla.smooth.one.layer}.
\begin{lemma} Borrowing the setting and notation of Lemma~\ref{l:nabla.smooth.one.layer}, if $\Xi_3$ is as defined in \eqref{e:def_of_xi_1_2_3} then
\begin{align*}
    \Xi_3 &\le \frac{56 J_s(V)\lv V \rv^{3L+5}\lv V_{\ell}-W_{\ell}\rv}{h}.
\end{align*}
\label{l:bound_on_xi_3}
\end{lemma}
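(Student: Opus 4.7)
The strategy is to peel off scalar and rightward factors via submultiplicativity of the operator norm, reducing the problem to bounding the operator-norm of a difference of matrix products, and then to apply the telescoping argument behind Lemma~\ref{l:ABCD} together with Lemma~\ref{l:phi.prime.close} to obtain that bound.

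First I would write, using submultiplicativity,
\[
\Xi_3 \le g_s(W)\,\lv P_V - P_W \rv_{op}\,\lv W_{L+1}^{\top}\rv_{op}\,\lv x_{k-1,s}^{W}\rv,
\]
where $P_V := \Sigma_{k,s}^{V}\prod_{j=k+1}^{L}\bigl(V_j^{\top}\Sigma_{j,s}^{V}\bigr)$ and $P_W$ is defined analogously. The scalar satisfies $g_s(W) \le J_s(W) \le 2J_s(V)$ by Lemma~\ref{l:relationsbetweengradientandloss} and the standing hypothesis. The hypothesis $\lv V - W\rv \le \lv V\rv/(2(L+1))$ gives $\lv W\rv \le (1 + 1/(2(L+1)))\lv V\rv \le (3/2)\lv V\rv$, and the elementary estimate $(1+1/(2(L+1)))^{L+1}\le\sqrt e<2$ gives $\lv W\rv^{L+1}\le 2\lv V\rv^{L+1}$; combined with Lemma~\ref{l:u.x.bounds} this yields $\lv W_{L+1}^{\top}\rv_{op}\le (3/2)\lv V\rv$ and $\lv x_{k-1,s}^{W}\rv\le 2\lv V\rv^{L+1}$.

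For the main factor $\lv P_V - P_W\rv_{op}$, I would apply the telescoping argument of Lemma~\ref{l:ABCD} to the $L-k+1$-pair product (the proof of Lemma~\ref{l:ABCD} is insensitive to the exact number of factors). Identifying each $\Sigma_{j,s}^V$ with an $M_i$ (whose operator norm is at most $1$) and each $V_j^{\top}$ with an $A_i$, Lemma~\ref{l:phi.prime.close} supplies the required pointwise bound with $\kappa = \lv V_{\ell} - W_{\ell}\rv_{op}\lv V\rv^{L+1}/h$; since $V_j = W_j$ for all $j\neq \ell$, the tuple-difference norm $\lv A - B\rv$ is at most $\lv V_{\ell} - W_{\ell}\rv$. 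Substituting into the bound produced by Lemma~\ref{l:ABCD} and simplifying via $h\le 1$ and $\lv V\rv\ge \sqrt{L+1/2}$ (the latter absorbing the $\sqrt{L+1}$-type factor exactly as in the proof of Lemma~\ref{l:ABCD}), we obtain $\lv P_V - P_W\rv_{op}\le c\,\lv V_{\ell} - W_{\ell}\rv\lv V\rv^{2L+3}/h$ for a small absolute constant $c$.

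Multiplying all four factors delivers the target $\Xi_3 \le 56\,J_s(V)\lv V_{\ell}-W_{\ell}\rv\lv V\rv^{3L+5}/h$. The main obstacle is constant-tracking: the $O(1)$ multiplicative factors (the two $2$'s from $\lv W\rv^{L+1}$ and from $g_s(W)\le 2J_s(V)$; the $3/2$ from Lemma~\ref{l:ABCD}'s statement and from bounding $\lv W\rv$; the additional factor of $2$ from simplifying $\kappa\lv V\rv+\lv V-W\rv$ to $2\kappa\lv V\rv$; and the $\sqrt{L+1}$-type term from the telescoping, absorbed into $\lv V\rv\ge\sqrt{L+1/2}$) must compound to at most $56$. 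The exponent $3L+5$ is tight for this approach, so the bookkeeping must be genuinely careful rather than lossy.
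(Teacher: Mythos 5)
Your route is sound and uses the same engine as the paper (the telescoping bound behind Lemma~\ref{l:ABCD}, the $\frac1h$-Lipschitz bound of Lemma~\ref{l:phi.prime.close}, Lemma~\ref{l:prod.by.sum}, and the same auxiliary bounds on $g_s(W)$, $\lv W_{L+1}\rv_{op}$ and $\lv x_{k-1,s}^W\rv$), and your constant accounting leaves comfortable headroom under $56$. The organizational difference is that the paper does not run the telescoping over the full product $P_V$: it first adds and subtracts $\Sigma^W_{k,s}$, peeling off the leading-$\Sigma$ mismatch as a separate additive term $\clubsuit_3$ (bounded directly by Lemma~\ref{l:phi.prime.close} and Lemma~\ref{l:prod.by.sum}, contributing $8$) and applying Lemma~\ref{l:ABCD} only to the paired product $\prod_{j=k+1}^L(V_j^\top\Sigma^V_{j,s})$ versus its $W$-counterpart (contributing $48$). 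Folding the unpaired $\Sigma_{k,s}$ into the telescoping as you propose works, but requires the care you only gesture at: the lone $\Sigma_{k,s}$ must be treated as a pair $(I,\Sigma_{k,s})$ whose contribution enters through $\lv I\rv_{op}=1$ and $\lv\Sigma^V_{k,s}-\Sigma^W_{k,s}\rv_{op}\le\kappa$, and the identity must \emph{not} be fed into the collective Frobenius norm $\lv A\rv$ (that would introduce a spurious $\sqrt p$); also the sub-tuple $(V_{k+1},\ldots,V_L)$ need not satisfy $\lv A\rv\ge\sqrt{L+1/2}$, so you should use the full $\lv V\rv$ in the monotone upper bounds, as the paper implicitly does.

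One step is not literally valid as written: with $\kappa=\lv V_\ell-W_\ell\rv_{op}\lv V\rv^{L+1}/h$, the simplification $\kappa\lv V\rv+\lv V-W\rv\le 2\kappa\lv V\rv$ can fail, since $\lv V-W\rv=\lv V_\ell-W_\ell\rv$ is a Frobenius norm and can exceed $\lv V_\ell-W_\ell\rv_{op}\lv V\rv^{L+2}/h$ (e.g.\ a high-rank perturbation with $p$ large, $h$ close to $1$, and $\lv V\rv$ of order $\sqrt{L}$). The harmless fix is the paper's: bound
\begin{align*}
\kappa\lv V\rv+\lv V-W\rv
 \le \frac{\lv V\rv^{L+2}}{h}\left(\lv V_\ell-W_\ell\rv_{op}+\frac{h\,\lv V-W\rv}{\lv V\rv^{L+2}}\right)
 \le \frac{2\,\lv V_\ell-W_\ell\rv\,\lv V\rv^{L+2}}{h},
\end{align*}
using $\lv V_\ell-W_\ell\rv_{op}\le\lv V_\ell-W_\ell\rv=\lv V-W\rv$ and $h\le 1\le\lv V\rv^{L+2}$. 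Since the target bound is stated with the Frobenius norm $\lv V_\ell-W_\ell\rv$, this substitution costs nothing and your final constant still lands well below $56$.
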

\begin{proof}  Since $g_s(W) \le J_s(W)$ and $J_s(W)\le 2J_s(V)$ (by assumption) we have that
\begin{align*}
    \Xi_3 & = \left\lv g_s(W)
        \left( \Sigma^W_{k,s} \prod_{j = k+1}^L 
             \left(  W_{j}^{\top} \Sigma^W_{j,s} \right)-\Sigma^V_{k,s} \prod_{j = k+1}^L 
             \left(  V_{j}^{\top} \Sigma^V_{j,s} \right)
           \right)
           W_{L+1}^{\top}
        x_{k-1,s}^{W\top} \right\rv_{op}\\
        & \le J_s(W) \left\lv
        \left( \Sigma^W_{k,s} \prod_{j = k+1}^L 
             \left(  W_{j}^{\top} \Sigma^W_{j,s} \right)-\Sigma^V_{k,s} \prod_{j = k+1}^L 
             \left(  V_{j}^{\top} \Sigma^V_{j,s} \right)
           \right)
           W_{L+1}^{\top}
        x_{k-1,s}^{W\top} \right\rv_{op}\\
        & \le 2J_s(V) \lv W_{L+1}^{\top}\rv_{op} \lv x_{k-1,s}^{W \top} \rv \left\lv \Sigma^W_{k,s} \prod_{j = k+1}^L 
             \left(  W_{j}^{\top} \Sigma^W_{j,s} \right)-\Sigma^V_{k,s} \prod_{j = k+1}^L 
             \left(  V_{j}^{\top} \Sigma^V_{j,s} \right)\right\rv_{op} \\
        & = 2J_s(V) \lv W_{L+1}^{\top}\rv_{op} \lv x_{k-1,s}^{W \top} \rv \left\lv \Sigma_{k,s}^W \prod_{j = k+1}^L 
             \left(  W_{j}^{\top} \Sigma^W_{j,s} \right)-
     (\Sigma^V_{k,s}-\Sigma^{W}_{k,s}+\Sigma^{W}_{k,s}) 
             \prod_{j = k+1}^L 
             \left(  V_{j}^{\top} \Sigma^V_{j,s} \right)\right\rv_{op} \\
             & \le \underbrace{2J_s(V) \lv W_{L+1}^{\top}\rv_{op} \lv x_{k-1,s}^{W \top} \rv \left\lv \Sigma_{k,s}^W \left(\prod_{j = k+1}^L 
             \left(  W_{j}^{\top} \Sigma^W_{j,s} \right)-\prod_{j = k+1}^L 
             \left(  V_{j}^{\top} \Sigma^V_{j,s} \right)\right)\right\rv_{op}}_{=:\spadesuit_3}\\&\qquad +\underbrace{2J_s(V) \lv W_{L+1}^{\top}\rv_{op} \lv x_{k-1,s}^{W \top} \rv \left\lv(\Sigma^V_{k,s}-\Sigma^{W}_{k,s}) \prod_{j = k+1}^L 
             \left(  V_{j}^{\top} \Sigma^V_{j,s} \right)\right\rv_{op}}_{=:\clubsuit_3}.\numberthis
          \label{e:xi_3_decomp}
\end{align*}

Before we bound $\spadesuit_3$ and $\clubsuit_3$, let us establish a few useful bounds. First note that for any layer $j$ by Lemma~\ref{l:phi.prime.close}
\begin{align}
    \lv \Sigma_{j,s}^V - \Sigma_{j,s}^W \rv_{op}\le \frac{\lv V_{\ell}-W_{\ell}\rv_{op}\lv V \rv^{L+1}}{h}. \label{e:sigma_diff_bound}
\end{align}
Also we know that 
\begin{align*}
    \lv x_{k-1,s}^{W\top} \rv  \le \lv x_{k-1,s}^{V\top} \rv + \lv x_{k-1,s}^{W\top} -x_{k-1,s}^{V\top}\rv
    & \le \lv V \rv^{L+1} + \lv V \rv^{L+1} \lv V_{\ell}-W_{\ell}\rv_{op} \\
    & \le \lv V \rv^{L+1}\left(1+ \lv V_{\ell}-W_{\ell}\rv_{op} \right)\\
    & \le 2\lv V \rv^{L+1}. \numberthis \label{e:x_w_bound}
\end{align*}
Finally, 
\begin{align*}
    \lv W_{L+1} \rv_{op} \le \lv V_{L+1} \rv_{op} +\lv V_{L+1}-W_{L+1} \rv_{op} &\le \lv V \rv + \lv V_{\ell}-W_{\ell} \rv_{op} \\
    &\le 2\lv V \rv, \numberthis \label{e:w_L+1:bound}
\end{align*}
where the last inequality follows by our assumptions that 
$\lv V_{\ell}-W_{\ell}\rv_{op}\le 1$
and $\lv V\rv > 1$.

With these bounds in place we are ready to bound $\spadesuit_3$:
\begin{align*}
    \spadesuit_3 & = 2J_s(V) \lv W_{L+1}^{\top}\rv_{op} \lv x_{k-1,s}^{W \top} \rv \left\lv \Sigma_{k,s}^W \left(\prod_{j = k+1}^L 
             \left(  W_{j}^{\top} \Sigma^W_{j,s} \right)-\prod_{j = k+1}^L 
             \left(  V_{j}^{\top} \Sigma^V_{j,s} \right)\right)\right\rv_{op} \\
             & \le 2J_s(V) \lv W_{L+1}^{\top}\rv_{op} \lv x_{k-1,s}^{W \top} \rv \left\lv \Sigma_{k,s}^W \right\rv\left\lv\prod_{j = k+1}^L 
             \left(  W_{j}^{\top} \Sigma^W_{j,s} \right)-\prod_{j = k+1}^L 
             \left(  V_{j}^{\top} \Sigma^V_{j,s} \right)\right\rv_{op} \\
             & \overset{(i)}{\le} 8J_s(V) \lv V\rv^{L+2} \left\lv\prod_{j = k+1}^L 
             \left(  W_{j}^{\top} \Sigma^W_{j,s} \right)-\prod_{j = k+1}^L 
             \left(  V_{j}^{\top} \Sigma^V_{j,s} \right)\right\rv_{op}\\
             & \overset{(ii)}{\le} 12J_s(V) \lv V\rv^{L+2}\left(\lv V \rv + \lv V-W \rv \right)^{L+1}\left(\left(\frac{\lv V_{\ell}-W_{\ell}\rv_{op}\lv V \rv^{L+1}}{h} \right)\lv V \rv + \lv V- W \rv \right)\\
    & = \frac{12J_s(V) \lv V\rv^{3L+5} \lv V - W \rv}{h}\left(1+ \frac{\lv V-W \rv}{\lv V\rv} \right)^{L+1}\left(\frac{\lv V_{\ell}-W_{\ell}\rv_{op}}{\lv V-W\rv} + \frac{h}{\lv V\rv^{L+2}} \right)\\
             & \overset{(iii)}{\le} \frac{24J_s(V) \lv V\rv^{3L+5}\lv V-W\rv}{h}\left(1+ \frac{\lv V-W \rv}{\lv V\rv} \right)^{L+1}\\
             & \overset{(iv)}{\le} \frac{24J_s(V) \lv V\rv^{3L+5}\lv V-W\rv}{h}\left(1+ \frac{2(L+1)\lv V-W \rv}{\lv V\rv} \right)\\
             & \overset{(v)}{\le} \frac{48J_s(V) \lv V\rv^{3L+5}\lv V-W\rv}{h}\numberthis\label{e:spadebound}
\end{align*}
where $(i)$ follows by using the bounds in \eqref{e:x_w_bound} and \eqref{e:w_L+1:bound}, $(ii)$ follows by invoking Lemma~\ref{l:ABCD} and using \eqref{e:sigma_diff_bound}, $(iii)$ follows since $h\le 1$ and $\lv V\rv \ge 1$ by assumption, and therefore
$$\frac{\lv V_{\ell}-W_{\ell}\rv_{op}}{\lv V-W\rv}+\frac{h}{\lv V\rv^{L+2}}\le 2,$$
inequality~$(iv)$ follows since for any $0<z< \frac{1}{L}$, $(1+z)^{L+1}\le 1+2(L+1)z$ and because by assumption $\lv V - W \rv \le \lv V \rv/(2(L+1))$, and finally $(v)$ is again because $\lv V - W \rv \le \lv V \rv/(2(L+1))$.

Let's turn our attention to $\clubsuit_3$.
\begin{align*}
    \clubsuit_3 & = 2J_s(V) \lv W_{L+1}^{\top}\rv_{op} \lv x_{k-1,s}^{W \top} \rv \left\lv(\Sigma^W_{k,s}-\Sigma^{V}_{k,s}) \prod_{j = k+1}^L 
             \left(  V_{j}^{\top} \Sigma^V_{j,s} \right)\right\rv_{op}\\
             & \overset{(i)}{\le} 8J_s(V) \lv V \rv^{L+2}\left\lv(\Sigma^W_{k,s}-\Sigma^{V}_{k,s}) \prod_{j = k+1}^L 
             \left(  V_{j}^{\top} \Sigma^V_{j,s} \right)\right\rv_{op}\\
             & \le 8J_s(V) \lv V \rv^{L+2}\left\lv \Sigma^W_{k,s}-\Sigma^{V}_{k,s}\right\rv_{op} \prod_{j = k+1}^L 
             \lv  V_{j}^{\top}\rv_{op}\lv \Sigma^V_{j,s} \rv_{op}\\
             & \overset{(ii)}{\le} 8J_s(V) \lv V \rv^{2L+3}\left\lv \Sigma^W_{k,s}-\Sigma^{V}_{k,s}\right\rv_{op} \\
             & \overset{(iii)}{\le} \frac{8J_s(V) \lv V \rv^{3L+4}\left\lv V_{\ell}-W_{\ell}\right\rv_{op}}{h}, \numberthis \label{e:clubbound}
\end{align*}
where $(i)$ follows from the bounds in \eqref{e:x_w_bound} and \eqref{e:w_L+1:bound}, $(ii)$ follows by invoking Lemma~\ref{l:prod.by.sum} and $(iii)$ is by inequality~\eqref{e:sigma_diff_bound}.

By combining the bounds in \eqref{e:spadebound} and \eqref{e:clubbound} we have a bound on $\Xi_3$. 
\begin{align*}
    \Xi_3 &\le \frac{48J_s(V) \lv V\rv^{3L+5}}{h}+\frac{8J_s(V) \lv V \rv^{3L+4}\left\lv V_{\ell}-W_{\ell}\right\rv_{op}}{h}\\
    &\le \frac{56 J_s(V)\lv V \rv^{3L+5}\lv V-W\rv}{h}\\
    &= \frac{56 J_s(V)\lv V \rv^{3L+5}\lv V_{\ell}-W_{\ell}\rv}{h},
\end{align*}
which completes the proof.
\end{proof}

Lemma~\ref{l:nabla.smooth.one.layer} provides a bound on the norm of the difference between $\nabla_V J_s(V)$ and $\nabla_V J_s(W)$, when the weight matrices $V$ and $W$ differ only at a single layer. 
We next invoke Lemma~\ref{l:nabla.smooth.one.layer} $(L+1)$ times to bound the norm of the difference between the gradients of the loss at $V$ and $W$ when they potentially differ in all of the layers.
\begin{lemma}
\label{l:nabla.smooth}
Let $h \le 1$, and consider 
$V = (V_1,\ldots,V_{L+1})$ and $W = (W_1,\ldots,W_{L+1})$,
such that the following are satisfied for all $j \in [L+1]$:
\begin{itemize}
\item $\lv V-W \rv \le \frac{\lv V\rv}{6L+10}$;
\item $\lv V \rv > \sqrt{L+1/2}$ and $\lv W \rv > \sqrt{L+1/2}$.
\end{itemize}
For every $j \in \{0,\ldots,L+1\}$ define 
$T(j):= (W_1,W_2,\ldots,W_{j},V_{j+1},\ldots,V_{L+1})$.
Suppose that for all $j\in [L+1]$, for all examples $s$, and for all convex combinations $\tW$ of $T(j)$ and
$T(j+1)$, $J_s(\tW) \leq 2 J_s(T(j))\le 4J_s(V)$.
Then
\[
\lv \nabla_V J(V) - \nabla_W J(W)  \rv
   \leq \frac{256(L+1)\sqrt{p}J(V)\lv V \rv^{3L+5}\lv V-W\rv}{h}.
\]
\end{lemma}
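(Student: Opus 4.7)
The plan is to chain the single-layer smoothness lemma along the explicit interpolating path $T(0)=V,T(1),\ldots,T(L+1)=W$ and sum the resulting bounds. Concretely, I first write
\[
\nabla_V J(V)-\nabla_W J(W)=\sum_{j=0}^{L}\bigl(\nabla J(T(j))-\nabla J(T(j+1))\bigr),
\]
apply the triangle inequality, and then push the averaging over samples outside by
\[
\lv\nabla J(T(j))-\nabla J(T(j+1))\rv\le \frac{1}{n}\sum_{s=1}^{n}\lv\nabla J_s(T(j))-\nabla J_s(T(j+1))\rv.
\]
Since $T(j)$ and $T(j+1)$ differ only in the $(j{+}1)$st layer, I can invoke Lemma~\ref{l:nabla.smooth.one.layer} on each summand with $V\leftarrow T(j)$, $W\leftarrow T(j+1)$, and $\ell=j{+}1$, yielding
\[
\lv\nabla J_s(T(j))-\nabla J_s(T(j+1))\rv\le\frac{64\sqrt{(L+1)p}\,J_s(T(j))\lv T(j)\rv^{3L+5}\lv V_{j+1}-W_{j+1}\rv}{h}.
\]

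Next, I verify the hypotheses of Lemma~\ref{l:nabla.smooth.one.layer} at each link of the path. The bound $\lv V-T(j)\rv^2=\sum_{k\le j}\lv V_k-W_k\rv^2\le\lv V-W\rv^2\le\lv V\rv^2/(6L+10)^2$ gives $\lv T(j)\rv\ge\lv V\rv(6L+9)/(6L+10)$ and $\lv T(j)\rv\le\lv V\rv(6L+11)/(6L+10)$, so both intermediate tensors exceed $\sqrt{L+1/2}$ (using the gap in the assumed lower bound on $\lv V\rv$), and $\lv T(j)-T(j+1)\rv=\lv V_{j+1}-W_{j+1}\rv\le\lv V-W\rv\le\lv T(j)\rv/(2(L+1))$, which is the required intra-layer separation. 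The single-layer operator bound $\lv V_{j+1}-W_{j+1}\rv_{op}\le 1$ follows from the same chain. The loss hypothesis for the single-layer lemma at the pair $(T(j),T(j+1))$ is exactly what the statement of Lemma~\ref{l:nabla.smooth} supplies.

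The remaining step is a uniform comparison of the intermediate quantities with those at $V$ and a Cauchy--Schwarz on the layer differences. From $\lv T(j)\rv\le\lv V\rv(1+1/(6L+10))$ I get
\[
\lv T(j)\rv^{3L+5}\le \lv V\rv^{3L+5}\bigl(1+1/(6L+10)\bigr)^{3L+5}\le e^{1/2}\lv V\rv^{3L+5}\le 2\lv V\rv^{3L+5},
\]
while the assumed chain $J_s(T(j))\le 2J_s(V)$ gives a factor of $2$ on the loss. Finally, Cauchy--Schwarz yields $\sum_{j=0}^{L}\lv V_{j+1}-W_{j+1}\rv\le\sqrt{L+1}\,\lv V-W\rv$. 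Plugging these into the telescoped sum produces the factor $64\cdot 2\cdot 2=256$, the factor $\sqrt{L+1}\cdot\sqrt{L+1}=L+1$, and the claimed bound
\[
\lv\nabla_V J(V)-\nabla_W J(W)\rv\le\frac{256(L+1)\sqrt{p}\,J(V)\lv V\rv^{3L+5}\lv V-W\rv}{h}
\]
after averaging $\frac{1}{n}\sum_s J_s(V)=J(V)$.

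I expect the main obstacle to be the bookkeeping in the hypothesis verification: in particular, ensuring that the intermediate tensors $T(j)$ satisfy $\lv T(j)\rv>\sqrt{L+1/2}$ and the single-layer separation $\lv T(j)-T(j+1)\rv\le\lv T(j)\rv/(2(L+1))$ simultaneously, using only the global hypothesis $\lv V-W\rv\le\lv V\rv/(6L+10)$. The constants $6L+10$ and $2(L+1)$ in the hypothesis have been chosen with some slack precisely to make these verifications go through cleanly, but tracking the constants through $(1+1/(6L+10))^{3L+5}$ to land at the clean factor of $256$ is the delicate part.
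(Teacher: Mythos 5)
Your proposal is essentially the paper's own proof: the same telescoping decomposition over the one-layer swaps $T(0),\ldots,T(L+1)$, the same invocation of Lemma~\ref{l:nabla.smooth.one.layer} on each pair $(T(j),T(j+1))$, the same replacement of $J_s(T(j))$ by $2J_s(V)$ and of $\lv T(j)\rv^{3L+5}$ by $2\lv V\rv^{3L+5}$ via $\lv V-W\rv\le\lv V\rv/(6L+10)$, and the same Cauchy--Schwarz step $\sum_j\lv V_j-W_j\rv\le\sqrt{L+1}\,\lv V-W\rv$ producing the factors $256$ and $L+1$. The only differences are cosmetic (you bound $(1+1/(6L+10))^{3L+5}$ by $e^{1/2}\le 2$ where the paper uses the linearization $(1+z)^{3L+5}\le 1+(6L+10)z$), and your explicit verification of the hypotheses at the intermediate points $T(j)$ is, if anything, more careful than the paper, which does not spell it out.
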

\begin{proof}
We may transform $V$ into $W$ by swapping one layer at
a time. For any $s\in [n]$ Lemma~\ref{l:nabla.smooth.one.layer} bounds the norm of difference in each swap, thus,
\begin{align*}
    \lv \nabla_{V}J_s(V) - \nabla_W J_s(W) \rv &= \left\lv \sum_{k=0}^{L} \left(\nabla_{T(k)}J_s(T(k))-\nabla_{T(k+1)}J_s(T(k+1)) \right)\right\rv\\
    &\le \sum_{k=0}^{L}\left\lv  \nabla_{T(k)}J_s(T(k))-\nabla_{T(k+1)}J_s(T(k+1)) \right\rv\\
    &\le \sum_{k=1}^{L+1}\frac{64 \sqrt{(L+1)p}J_s(T(k))\lv T(k)\rv^{3L+5}\lv V_{k}-W_{k}\rv}{h}\\
    &= \frac{64\sqrt{(L+1)p}}{h}\sum_{k=1}^{L+1}J_s(T(k))\lv T(k)\rv^{3L+5}\lv V_{k}-W_{k}\rv\\
    &\le \frac{128\sqrt{(L+1)p}J_s(V)}{h}\sum_{k=1}^{L+1}\lv T(k)\rv^{3L+5}\lv V_{k}-W_{k}\rv, \numberthis \label{e:total_grad_bound_midway}
\end{align*}
where the final inequality follows from the assumption that $J_s(T(k))\le 2J_s(V)$.
For any $k \in [L+1]$ 
\begin{align*}
    \lv T(k)\rv^{3L+5}  = \lv V \rv^{3L+5}\left( \frac{\lv T(k)\rv}{\lv V \rv}\right)^{3L+5}
    & = \lv V \rv^{3L+5}\left( \frac{\lv T(k)-V+V\rv}{\lv V \rv}\right)^{3L+5}\\
    & \le \lv V \rv^{3L+5}\left( 1+\frac{\lv T(k)-V\rv}{\lv V \rv}\right)^{3L+5}\\
    & \le \lv V \rv^{3L+5}\left( 1+\frac{\lv W-V\rv}{\lv V \rv}\right)^{3L+5}\\
    & \overset{(i)}{\le} \lv V \rv^{3L+5}\left( 1+\frac{(3L+5)\lv W-V\rv}{\lv V \rv}\right)\\
    & \overset{(ii)}{\le} 2\lv V \rv^{3L+5},
\end{align*}
where $(i)$ follows since for any non-negative $z < \frac{1}{3L+5}$, $(1+z)^{3L+5}\le 1+(6L+10)z$ and because by assumption $\lv V-W \rv/\lv V\rv \le \frac{1}{6L+10}$, and $(ii)$ again follows by our assumption that $\lv V-W \rv/\lv V\rv \le \frac{1}{6L+10}$. Using this bound in inequality~\eqref{e:total_grad_bound_midway}
\begin{align*}
    \lv \nabla_{V}J_s(V) - \nabla_W J_s(W) \rv &\le \frac{256\sqrt{(L+1)p}J_s(V)\lv V \rv^{3L+5}}{h}\sum_{k=1}^{L+1}\lv V_{k}-W_{k}\rv\\
     &\le \frac{256\sqrt{(L+1)p}J_s(V)\lv V \rv^{3L+5}}{h}\left(\sqrt{L+1} \lv V- W \rv \right)\\
     &= \frac{256(L+1)\sqrt{p}J_s(V)\lv V \rv^{3L+5}\lv V-W\rv}{h}.
\end{align*}
Thus,
\begin{align*}
     \lv \nabla_{V}J(V) - \nabla_W J(W) \rv & = \left\lv \frac{1}{n}\sum_{s\in [n]} \nabla_{V}J_s(V) - \nabla_W J_s(W) \right\rv\\
      & \le \frac{1}{n}\sum_{s\in [n]}\left\lv  \nabla_{V}J_s(V) - \nabla_W J_s(W) \right\rv\\
      & \le \frac{1}{n}\sum_{s\in [n]}\frac{256(L+1)\sqrt{p}J_s(V)\lv V \rv^{3L+5}\lv V-W\rv}{h} \\
      &= \frac{256(L+1)\sqrt{p}J(V)\lv V \rv^{3L+5}\lv V-W\rv}{h}
\end{align*}
completing the proof.
\end{proof}

\hessianlemma*
\begin{proof}
Since the function $J(\cdot)$ is continuous, for all 
close enough $W$
the assumptions of Lemma~\ref{l:nabla.smooth} are satisfied.
\end{proof}


\subsection{Proof of Lemma~\ref{l:gradient_norm.upper}}
\label{a:gradient_norm.upper}
\gradientupper*
\begin{proof}
For any $\ell \in [L]$ the formula for the gradient of the loss with respect to $V_{\ell}$  is given by (see equation~\eqref{e:gradient_ell_inner_layers})
\begin{align*}
    \frac{\partial J(V;x_s,y_s)}{\partial V_{\ell}} & = g_s(V)\left( \Sigma^V_{\ell,s} \prod_{j = \ell+1}^L 
             \left(  V_{j}^{\top} \Sigma^V_{j,s} \right)
           \right)
           V_{L+1}^{\top}
        x_{\ell-1,s}^{V\top},
\end{align*}
therefore its operator norm 
\begin{align*}
    \left\lv  \frac{\partial J(V;x_s,y_s)}{\partial V_{\ell}}\right\rv_{op} & = g_s(V)\left\lv\left( \Sigma^V_{\ell,s} \prod_{j = \ell+1}^L 
             \left(  V_{j}^{\top} \Sigma^V_{j,s} \right)
           \right)
           V_{L+1}^{\top}
        x_{\ell-1,s}^{V\top} \right\rv_{op}\\
        & \le g_s(V) \lv \Sigma^V_{\ell,s}\rv_{op} \left(\prod_{j = \ell+1}^L 
            \lv V_{j}^{\top}\rv_{op} \lv\Sigma^V_{j,s} \rv_{op}\right)
           \lv V_{L+1}^{\top}\rv_{op}
        \lv x_{\ell-1,s}^{V\top} \rv\\
        & \le g_s(V)  \left(\prod_{j = \ell+1}^{L+1} 
            \lv V_{j}\rv_{op}\right) 
        \lv x^V_{\ell-1,s} \rv \numberthis \label{e:gradient_upper_bound_midway}
\end{align*}
where the last step follows since $\lv \Sigma^V_{j,s}\rv_{op} \le \max_{z}|\phi'(z)| < 1$. By its definition 
\begin{align*}
    \lv x_{\ell-1,s}^V \rv  = \left\lv \phi\left( V_{\ell-1}\phi\left( \cdots \phi(V_1 x_s)\right)\right)\right\rv 
    & \overset{(i)}{\le} \left\lv  V_{\ell-1}\phi\left( \cdots \phi(V_1 x_s)\right)\right\rv \\
    & \le \left\lv  V_{\ell-1}\right\rv_{op}\left\lv\phi\left( \cdots \phi(V_1 x)\right)\right\rv \\
    & \le \left(\prod_{j=1}^{\ell-1}\left\lv  V_{j}\right\rv_{op}\right) \lv x_s \rv \overset{(ii)}{\le}  \left(\prod_{j=1}^{\ell-1}\left\lv  V_{j}\right\rv_{op}\right)  ,
\end{align*}
where $(i)$ follows since $\phi$ is contractive 
(Lemma~\ref{l:aux_contractive_maps})
and $(ii)$ is because $\lv x_s \rv = 1$. Along with inequality~\eqref{e:gradient_upper_bound_midway} this implies
\begin{align*}
      \left\lv  \frac{\partial J(V;x_s,y_s)}{\partial V_{\ell}}\right\rv_{op} & \le g_s(V) \prod_{j \neq \ell} \lv V_{j} \rv_{op}\le g_s(V)\prod_{j \neq \ell} \lv V_{j} \rv \le g_s(V) \lv V \rv^{L+1},
\end{align*}
where the last inequality follows from Lemma~\ref{l:prod.by.sum}. Therefore we have
\begin{align*}
      \left\lv  \frac{\partial J(V)}{\partial V_{\ell}}\right\rv_{op}  = \left\lv  \frac{1}{n} \sum_{s\in [n]}\frac{\partial J(V;x_s,y_s)}{\partial V_{\ell}}\right\rv_{op} \le \frac{1}{n}\sum_{s \in [n]} \left\lv  \frac{\partial J(V;x_s,y_s)}{\partial V_{\ell}}\right\rv_{op} & \le \frac{\lv V \rv^{L+1}}{n}\sum_{s\in [n]} g_s(V).
\end{align*}
We know that $g_s(V) \le J_s(V)$ by Lemma~\ref{l:relationsbetweengradientandloss} and also that $g_s(V)<1$. Therefore,
\begin{align*}
     \left\lv  \frac{\partial J(V)}{\partial V_{\ell}}\right\rv_{op} & \le \frac{\lv V \rv^{L+1}}{n} \min\left\{\sum_{s}J_s(V),n \right\}  \le \lv V \rv^{L+1} \min\left\{ J(V),1\right\}.
\end{align*}
Given that $V_{\ell}$ is a $p \times p$ matrix we infer
\begin{align} \label{e:gradient_upper_ell_less_than_Lplusone}
    \left\lv  \frac{\partial J(V)}{\partial V_{\ell}}\right\rv & \le \sqrt{p}\left\lv  \frac{\partial J(V)}{\partial V_{\ell}}\right\rv_{op} \le \sqrt{p}\lv V \rv^{L+1} \min\left\{ J(V),1\right\}.
\end{align}
When $\ell = L+1$
\begin{align*}
     \frac{\partial J(V;x_s,y_s)}{\partial V_{L+1}} & = g_s(V)
        x_{L,s}^{V\top},
\end{align*}
by using the same chain of logic as in the case of $\ell < L+1$ we can obtain the bound
\begin{align*}
 \left\lv  \frac{\partial J(V)}{\partial V_{\ell}}\right\rv & \le    \sqrt{p}\lv V \rv^{L+1} \min\left\{ J(V),1\right\}.
\end{align*}
Summing up over all layers
\begin{align*}
    \lv \nabla J(V) \rv^2 & = \sum_{\ell = 1}^{L+1} \left\lv  \frac{\partial J(V)}{\partial V_{\ell}}\right\rv^2  \le (L+1)p \lv V \rv^{2(L+1)}\left( \min\left\{ J(V),1\right\}\right)^2,
\end{align*}
hence, taking squaring roots completes the proof.
\end{proof}

\subsection{Proof of Lemma~\ref{l:L.one_step_improvement}}\label{a:L.one_step_improvement}
\onesteplemma*
\begin{proof} Since, by assumption, $J_t <\frac{1}{n^{1+24L}}$,
Lemma~\ref{l:aux.lower.bound.norm.weight.vector}
implies $\lv V^{(t)}\rv > \sqrt{L+1}$. We would like to apply Lemmas~\ref{l:L_smooth} and \ref{l:gradient_norm.upper}. To apply these lemmas, we first bound the norm of all convex combinations of $V^{(t)}$ and $V^{(t+1)}$ from above and below. 
Consider $W =\eta V^{(t)}+(1-\eta)V^{(t+1)}= V^{(t)}-(1-\eta)\alpha\nabla J_t$
for any $\eta \in [0,1]$. An upper bound on the norm raised to the $3L+5$th power is
\begin{align*}
    \lv W \rv^{3L+5} = \lv V^{(t)}-(1-\eta)\alpha \nabla J_t \rv^{3L+5} 
    &=\lv V^{(t)}\rv^{3L+5} \left(\frac{\lv V^{(t)}-(1-\eta)\alpha \nabla J_t \rv}{\lv V^{(t)}\rv} \right)^{3L+5}\\
    &\le\lv V^{(t)}\rv^{3L+5} \left(\frac{\lv V^{(t)}\rv+ \alpha \lv\nabla J_t \rv}{\lv V^{(t)}\rv} \right)^{3L+5}\\
    &=\lv V^{(t)}\rv^{3L+5} \left(1+\frac{\alpha \lv\nabla J_t \rv}{\lv V^{(t)}\rv} \right)^{3L+5}\\
    &\overset{(i)}{\le}\lv V^{(t)}\rv^{3L+5} \left(1+\frac{\alpha (\sqrt{(L+1)p} J_t \lv V^{(t)}\rv^{L+1})}{\lv V^{(t)}\rv} \right)^{3L+5}\\
    &=\lv V^{(t)}\rv^{3L+5} \left(1+\alpha (\sqrt{(L+1)p} J_t \lv V^{(t)}\rv^{L}) \right)^{3L+5}\\
    &\overset{(ii)}{\le} \lv V^{(t)}\rv^{3L+5}\left(1+(6L+10)(\sqrt{(L+1)p} \alpha  J_t \lv V^{(t)}\rv^{L} \right) \\
    &\le 2\lv V^{(t)}\rv^{3L+5} \label{e:bound_on_L_th_power_vt+1} \numberthis
\end{align*}
where $(i)$ follows by invoking Lemma~\ref{l:gradient_norm.upper} and $(ii)$ follows since for any $0<z<1/(3L+5)$, $(1+z)^{3L+5} \le 1+(6L+10)z$ and because the step-size $\alpha$ is chosen such that \begin{align*}\alpha (\sqrt{(L+1)p} J_t \lv V^{(t)}\rv^{L})& \le \frac{h}{1024 (L+1)^2 \sqrt{p}J_t\lv V^{(t)}\rv^{3L+5}} \cdot\sqrt{(L+1)p} J_t \lv V^{(t)}\rv^{L}\\
& = \frac{h}{1024 (L+1)^{3/2} \lv V^{(t)}\rv^{2L+5}} \le\frac{1}{6L+10}.\end{align*} Thus, we have shown that the norm of $W \in [V^{(t)},V^{(t+1)}]$ raised to the $3L+5$th power is bounded by $2\lv V^{(t)}\rv^{3L+5}$. Next we lower bound the norm of $W$,
\begin{align*}
    \lv W \rv = \lv V^{(t)} -(1-\eta)\alpha \nabla J_t \rv &\ge \lv V^{(t)}\rv\left(1-\frac{\alpha \lv \nabla J_t\rv}{\lv V^{(t)}\rv}\right)\\ &\overset{(i)}{\ge} \lv V^{(t)}\rv\left(1-\alpha \sqrt{(L+1)p}J_t \lv V^{(t)}\rv^L \right) \\
    &\overset{(ii)}{>} \sqrt{L+1}\left(1-\alpha \sqrt{(L+1)p}J_t \lv V^{(t)}\rv^L \right) \\
    &\overset{(iii)}{>} \sqrt{L+1}\left(1-\frac{1}{6L+10} \right)\\
    &> \sqrt{L + 1/2},
\end{align*}
where $(i)$ follows by again invoking Lemma~\ref{l:gradient_norm.upper}, $(ii)$ is by Lemma~\ref{l:aux.lower.bound.norm.weight.vector} that guarantees that 
$\lv V^{(t)}\rv >  \sqrt{L+1}$ 
since $J_t < \frac{1}{n^{1+24L}}$ and $(iii)$ is by the logic above that guarantees that 
\begin{align*}
\alpha\left(\sqrt{(L+1)p}J_t \lv V^{(t)}\rv^L\right)\le \frac{1}{6L+10}.
\end{align*}
Thus we have also shown that $\lv W \rv >\sqrt{L + 1/2}$ for any $W \in [V^{(t)},V^{(t+1)}]$.

In order to apply Lemma~\ref{l:pl} (that shows that the loss decreases along a gradient step when the loss is smooth along the path), we would like to bound
$\Lip(\nabla_W J(W))$
for all convex combinations $W$
of $V^{(t)}$ and $V^{(t+1)}$.  For 
$N = \ceil{\frac{2\sqrt{(L+1)p}\lv V^{(t)}\rv^{L+1}\lv V^{(t+1)}-V^{(t)}\rv}{J_t}}$, 
(similarly to the proof of Lemma E.8
of \citep{lyu2019gradient})
we will prove the following by induction
\begin{quote}
For all $s \in \{ 0,\ldots,N\}$, for all $\eta \in [0,s/N]$,
for $W = \eta V^{(t+1)} + (1 - \eta) V^{(t)}$,
$\Lip(\nabla_W J(W))
\leq \frac{1024(L+1)\sqrt{p}J_t\lv V^{(t)} \rv^{3L+5}}{h}$.
\end{quote}
The base case, where $s = 0$, follows directly from Lemma~\ref{l:L_smooth}.  Now,
assume that the inductive hypothesis holds from some $s$, and,
for $\eta \in (s/N,(s+1)/N]$, consider
$W = \eta V^{(t+1)} + (1 - \eta) V^{(t)}$.
Let $\tW = (s/N) V^{(t+1)} + (1 - s/N) V^{(t)}$.
Since the step-size $\alpha$ is small enough, applying Lemma~\ref{l:pl} along with the inductive hypothesis yields
$J(\tW) \leq J_t$.
Applying Lemma~\ref{l:gradient_norm.upper} (which provides a bound on the Lipschitz constant of $J$)
\begin{align*}
J(W) & \leq J(\tW) + (\sqrt{(L+1)p} \max_{\bar{W}\in [W,\tW]}\lv \bar{W}\rv^{L+1} )\lv W- \tW  \rv \\
 & \overset{(i)}{\leq} J(\tW) + (2\sqrt{(L+1)p}) \lv V^{(t)}\rv^{L+1}\lv W- \tW  \rv \\
    &\le J(\tW) + \frac{(2\sqrt{(L+1)p}) \lv V^{(t)}\rv^{L+1}\lv V^{(t+1)}-V^{(t)}\rv}{N} \\
    &= J(\tW) + J_t \\ &\le 2J_t,
\end{align*}
where $(i)$ follows since $\max_{\bar{W}\in [W,\tW]}\lv \bar{W}\rv^{L+1} \le 2 \lv V^{(t)}\rv^{L+1}$ by using the same logic used to arrive at inequality~\eqref{e:bound_on_L_th_power_vt+1}.
Applying 
Lemmas~\ref{l:aux.lower.bound.norm.weight.vector}
and~\ref{l:L_smooth}, 
this implies
that for any $W \in [V^{(t)},V^{(t+1)}]$
$$
\Lip(\nabla_W J(W))
\leq \frac{256(L+1)\sqrt{p}J(W)\lv W \rv^{3L+5}}{h}\le
\frac{1024(L+1)\sqrt{p}J_t\lv V^{(t)} \rv^{3L+5}}{h},$$ completing the proof of the inductive step.

So, now we know that, for all convex combinations
$W$ of $V^{(t)}$ and $V^{(t+1)}$, 
$
\Lip(\nabla_W J(W))
\leq \frac{1024(L+1)\sqrt{p}J_t\lv V^{(t)} \rv^{3L+5}}{h}$. By our choice of step size $\alpha<\frac{1}{L+\frac{1}{2}}\cdot \frac{h}{1024(L+1)\sqrt{p}J_t\lv V^{(t)} \rv^{3L+5}}$,
so by applying
Lemma~\ref{l:pl}, we have that
\begin{align*}
J_{t+1} 
& \le J_t - \frac{L}{L+\frac{1}{2}}\alpha  \lv \nabla J_t \rv^2
\end{align*}
which is the desired result.
\end{proof}

\subsection{Proof of Lemma~\ref{l:lower.bound.gradient}}\label{a:gradient_lower_bound_lemma}
\gradientlowerboundlemma*
\begin{proof}We have
\begin{align} \nonumber
    \lv \nabla J_t \rv = \sup_{a: \lv a \rv =1} \left(\nabla J_t \cdot a\right) &\ge (\nabla J_t) \cdot \left( \frac{-V^{(t)}}{\lv V^{(t)}\rv}\right)\\ &= \frac{1}{\lv V^{(t)}\rv} \sum_{\ell \in [L+1]} \nabla_{V_{\ell}} J_t \cdot \left(-V^{(t)}_{\ell}\right) .\label{e:lower_bound_norm}
\end{align}

Note that by definition,
\begin{align} \label{e:lower_bound_decomposition}
    \nabla_{V_{\ell}} J_t \cdot \left(-V^{(t)}_{\ell}\right) = \frac{1}{n}\sum_{s\in [n]} \nabla_{V_{\ell}} J_{ts} \cdot \left(-V^{(t)}_{\ell}\right). 
\end{align}

Consider two cases. 

\paragraph{Case 1:} (When $\ell = L+1$) In this case, for any $s \in [n]$ by the formula for the gradient in \eqref{e:gradient_outer_layer} we have
\begin{align*}
    \nabla_{V_{L+1}} J_{ts} \cdot \left(-V_{L+1}^{(t)}\right) & = g_{ts}y_s V_{L+1}^{(t)} x_{L,s}^{(t)} = g_{ts}y_s f_{V^{(t)}}(x_s)
\end{align*}
and therefore
\begin{align}
     \nabla_{V_{L+1}} J_t \cdot \left(-V_{L+1}^{(t)}\right) & = \frac{1}{n}\sum_{s\in [n]} g_{ts} y_s f_{V^{(t)}}(x_s). \label{e:lower_bound_norm_final_layer}
\end{align}

\textbf{Case 2:} (When $\ell \in [L]$)  Below we will prove the claim (in Lemma~\ref{l:inner_product_lower_bound_middle_layers}) that for any $\ell \in [L]$
\begin{align} \label{e:claim_inner_product_lower_bound_middle_layers}
    \nabla_{V_{\ell}^{(t)}} J_t\cdot \left(-V_{\ell}^{(t)}\right) & \ge \frac{1}{n}\sum_{s\in [n]} g_{ts} \left[y_s f_{V^{(t)}}(x_s) - \frac{\sqrt{p}h \lv V^{(t)}\rv^{L}}{2L^{\frac{L}{2}-1}}  \right].
\end{align}

By combining this with the results of inequalities \eqref{e:lower_bound_norm} and  \eqref{e:lower_bound_norm_final_layer} \begin{align*}
    \lv \nabla J_t \rv & \ge \frac{L+1}{n \lv V^{(t)} \rv} \sum_{s\in [n]}g_{ts} y_s f_{V^{(t)}}(x_s) - \frac{L\sqrt{p}h\lv V^{(t)} \rv^{L}}{2L^{\frac{L}{2}-1}\lv V^{(t)} \rv}\left[\frac{1}{n}\sum_{s\in [n]}g_{ts}\right]\\
    & \overset{(i)}{\ge} \frac{L+1}{n\lv V^{(t)} \rv} \sum_{s\in [n]}g_{ts} y_s f_{V^{(t)}}(x_s) - \frac{L\sqrt{p}h\lv V^{(t)} \rv^{L}}{2L^{\frac{L}{2}-1}\lv V^{(t)} \rv}J_t\\
    & \overset{(ii)}{\ge} \frac{L+1}{n\lv V^{(t)} \rv} \sum_{s\in [n]}g_{ts} y_s f_{V^{(t)}}(x_s) - \frac{L\sqrt{p}h\lv V^{(1)} \rv^{L} \log(1/J_t)}{2\log(1/J_1)L^{\frac{L}{2}-1}\lv V^{(t)} \rv}J_t \numberthis \label{e:lower_bound_midway_part_2}
\end{align*}
where $(i)$ follows because $g_{ts}\le J_{ts}$ by Lemma~\ref{l:relationsbetweengradientandloss} and $(ii)$ follows by our assumption on $\lv V^{(t)}\rv$.

For every sample $s$, $J_{ts} = \log\left(1+\exp\left(-y_s f_{V^{(t)}} (x_s)\right)\right)$ which implies
\begin{align*}
    y_s f_{V^{(t)}}(x_s) = \log\left( \frac{1}{\exp(J_{ts})-1}\right) \quad \text{and} \quad g_{ts} = \frac{1}{1+\exp(y_s f_{V^{(t)}}(x_s))} = 1-\exp\left( -J_{ts}\right).
\end{align*}
Plugging this into inequality~\eqref{e:lower_bound_midway_part_2} we derive,
\begin{align*}
     \lv \nabla J_t \rv & \ge \frac{L+1}{n\lv V \rv} \sum_{s=1}^n \left(1-\exp\left( -J_{ts}\right)\right)\log\left( \frac{1}{\exp(J_{ts})-1}\right) - \frac{L\sqrt{p}h\lv V^{(1)} \rv^{L}}{2\log(1/J_1)L^{\frac{L}{2}-1}\lv
     V^{(t)} \rv}J_t \log(1/J_t).
\end{align*}
Observe that the function $\left(1-\exp(-z)\right)\log\left(\frac{1}{\exp(z)-1}\right)$ 
is continuous and 
concave (when the inputs lie between $0$ and $1$) with 
\begin{align*}
  \lim_{z \to 0^{+}}(1-\exp(-z))\log\left(\frac{1}{\exp(z)-1}\right)~=~0.  
\end{align*}
Also recall that $\sum_{s} J_{ts} = J_tn$ and that $J_{ts}\le J_tn\le 1/n^{24L}\le 1$. Therefore 
applying Lemma~\ref{l:concave.min}
to the
function $\psi$ with $\psi(0) = 0$ and 
$\psi(z)=\left(1-\exp(-z)\right)\log\left(\frac{1}{\exp(z)-1}\right)$
for $z > 0$, 
we get that
\begin{align}
\nonumber \lv \nabla J_t \rv
 & \ge \frac{L+1}{\lv V^{(t)} \rv}\left[ \frac{ 1 - \exp(-J_{t}n)}{n} 
      \log\left( \frac{1}{\exp(J_{t} n) - 1} \right)\right]\\&\qquad\qquad -\frac{L\sqrt{p}h\lv V^{(1)} \rv^{L}}{2\log(1/J_1)L^{\frac{L}{2}-1}\lv V^{(t)} \rv}J_t \log(1/J_t). \label{e:pretaylor_lowerbound}
\end{align}
We know that for any $z \in [0,1]$
\begin{align*}
\exp(z) \le 1+2z \quad \mbox{and} \quad  \exp(-z) \le 1-z+z^2.
\end{align*}
Since $J_t < \frac{1}{n^{1 + 24L}}$ and $n \geq 3$, these bounds on the exponential function combined with inequality~\eqref{e:pretaylor_lowerbound} yields
\begin{align*}
   & \lv \nabla J_t \rv\\
 & \ge \frac{L+1}{\lv V^{(t)} \rv}\left[ (J_t - nJ_t^2) 
      \log\left( \frac{1}{2J_{t} n} \right)\right]-\frac{L\sqrt{p}h\lv V^{(1)} \rv^{L}}{2\log(1/J_1)L^{\frac{L}{2}-1}\lv V^{(t)} \rv}J_t \log(1/J_t) \\
      & = \frac{(L+1) J_t \log(1/J_t)}{\lv V^{(t)}\rv}\left[1-nJ_t-\frac{\log(2n)}{\log(1/J_t)}-\frac{\sqrt{p}h\lv V^{(1)} \rv^{L}}{2\log(1/J_1)L^{\frac{L}{2}-2}}\right] \\
      & = \frac{(L+3/4) J_t \log(1/J_t)}{\lv V^{(t)}\rv}\left[1+\frac{1}{4(L+\frac{3}{4})}\right]\left[1-nJ_t-\frac{\log(2n)}{\log(1/J_t)}-\frac{\sqrt{p}h\lv V^{(1)} \rv^{L}}{2\log(1/J_1)L^{\frac{L}{2}-2}}\right]. \label{e:lower_bound_norm_midway} \numberthis
\end{align*}
By the choice of $h\le h_{\max}$ we have
\begin{align*}
    \frac{\sqrt{p}h\lv V^{(1)} \rv^{L}}{2\log(1/J_1)L^{\frac{L}{2}-2}} \le \frac{1}{48L}.
\end{align*}
Next, since $J_t < \frac{1}{n^{1+24L}}$ and $n\ge 3$
\begin{align*}
    \frac{\log(2n)}{\log(1/J_t)} & \le \frac{\log(2)+\log(n)}{(1+24L)\log(n)}\le \frac{1}{12L}
\end{align*}
and 
\begin{align*}
    nJ_t < \frac{1}{3^{24L}} \le \frac{1}{48L}.
\end{align*}
Therefore, using these three bounds in conjunction with inequality~\eqref{e:lower_bound_norm_midway} yields the bound
\begin{align*}
    \lv \nabla J_t \rv &\ge \frac{(L+3/4) J_t \log(1/J_t)}{\lv V^{(t)}\rv}\left[1+\frac{1}{4(L+\frac{3}{4})}\right]\left[1-\frac{1}{8L}\right] 
      \ge \frac{(L+\frac{3}{4}) J_t \log(1/J_t)}{\lv V^{(t)}\rv},
\end{align*}
which establishes the desired bound.
\end{proof}

As promised above we now lower bound the inner product between the gradient of the loss with respect to $V_{\ell}^{(t)}$ and the weight matrix for any $\ell \in [L]$. 

\begin{lemma}
\label{l:inner_product_lower_bound_middle_layers}Under the conditions of Lemma~\ref{l:lower.bound.gradient} and borrowing all notation from 
its proof,
for all $\ell \in [L]$
\begin{align*}
    \nabla_{V_{\ell}} J_t \cdot \left(-V_{\ell}^{(t)}\right) & \ge \frac{1}{n}\sum_{s\in [n]} g_{ts} \left[y_s f_{V^{(t)}}(x_s) - \frac{\sqrt{p}h \lv V^{(t)}\rv^{L}}{2L^{\frac{L}{2}-1}}  \right].
\end{align*}
\end{lemma}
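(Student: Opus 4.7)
The plan is to compute $\nabla_{V_\ell} J_{ts} \cdot (-V_\ell^{(t)})$ directly using the chain-rule formula \eqref{e:gradient_ell_inner_layers}, and show that the resulting scalar equals $y_s f_{V^{(t)}}(x_s)$ up to an additive error controlled by the mismatch between $\phi'(z)\,z$ and $\phi(z)$. Averaging over the samples and using $g_{ts} \geq 0$ then produces the stated lower bound.

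First I would substitute $\nabla_{V_\ell} J_{ts} = -g_{ts}\, y_s\, \partial f_{V^{(t)}}(x_s)/\partial V_\ell$ and use the outer-product form $\partial f_V(x_s)/\partial V_\ell = a_s\, x_{\ell-1,s}^{V\top}$ with $a_s = \Sigma^V_{\ell,s}\prod_{j=\ell+1}^L(V_j^\top \Sigma^V_{j,s})\,V_{L+1}^\top$. Since $V_\ell^{(t)} x_{\ell-1,s}^{(t)} = u_{\ell,s}^{(t)}$, the Frobenius inner product with $V_\ell^{(t)}$ collapses to the scalar
\[
u_{\ell,s}^{(t)\top}\, \Sigma^{(t)}_{\ell,s} \prod_{j=\ell+1}^L\bigl(V_j^{(t)\top}\Sigma^{(t)}_{j,s}\bigr) V_{L+1}^{(t)\top}.
\]
So the whole task reduces to showing that this scalar differs from $f_{V^{(t)}}(x_s)$ by at most $\sqrt{p}\,h\,\lv V^{(t)}\rv^L/(2L^{L/2-1})$.

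Next I would telescope. Define the per-layer residual $e_k := \Sigma^{(t)}_{k,s} u_{k,s}^{(t)} - x_{k,s}^{(t)}$; the hypothesis $|\phi'(z)z-\phi(z)|\le h/2$ gives $\lv e_k\rv_\infty \le h/2$ and hence $\lv e_k\rv \le \sqrt{p}\,h/2$. Using the forward-pass identity $x_{k,s}^{(t)\top} V_{k+1}^{(t)\top} = u_{k+1,s}^{(t)\top}$, replacing $u_{k,s}^{(t)\top}\Sigma^{(t)}_{k,s}$ by $x_{k,s}^{(t)\top}$ successively for $k=\ell,\ell+1,\ldots,L$ collapses the product to $x_{L,s}^{(t)\top}V_{L+1}^{(t)\top}=f_{V^{(t)}}(x_s)$. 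Careful bookkeeping yields the identity
\[
u_{\ell,s}^{(t)\top}\Sigma^{(t)}_{\ell,s}\prod_{j=\ell+1}^L\bigl(V_j^{(t)\top}\Sigma^{(t)}_{j,s}\bigr) V_{L+1}^{(t)\top} = f_{V^{(t)}}(x_s) + \sum_{k=\ell}^L e_k^\top P_k,
\]
where $P_k := V_{k+1}^{(t)\top}\Sigma^{(t)}_{k+1,s}\cdots V_L^{(t)\top}\Sigma^{(t)}_{L,s} V_{L+1}^{(t)\top}$ (with $P_L := V_{L+1}^{(t)\top}$). Crucially, each $P_k$ is a pure expression in $V^{(t)}$ with no previously-introduced approximations, so its operator norm admits a clean bound.

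Finally, I would bound each residual by $\lv e_k\rv\cdot \lv P_k\rv_{op}$. Because $\lv \Sigma^{(t)}_{j,s}\rv_{op}\le 1$, we have $\lv P_k\rv_{op}\le \prod_{j=k+1}^{L+1}\lv V_j^{(t)}\rv_{op}$. The hypothesis $J_t<1/n^{1+24L}$ combined with Lemma~\ref{l:aux.lower.bound.norm.weight.vector} gives $\lv V^{(t)}\rv>\sqrt{L+1}$, so Lemma~\ref{l:aux.lower.bound.norm.weight.vector.part2} caps each such product by $\lv V^{(t)}\rv^L/L^{L/2}$. Summing at most $L$ terms yields the uniform error bound $\sqrt{p}\,h\,\lv V^{(t)}\rv^L/(2L^{L/2-1})$. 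Multiplying by $g_{ts} y_s$, using $g_{ts}\ge 0$ and $|y_s|=1$, and averaging over $s$ delivers the claim. The main obstacle I anticipate is the telescoping bookkeeping: the identity above must use $P_k$ with only $V^{(t)}$-dependent factors so that Lemma~\ref{l:aux.lower.bound.norm.weight.vector.part2} applies cleanly at every level and the precise constant $L^{L/2-1}$ emerges without picking up extra factors of $L$.
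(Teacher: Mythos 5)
Your proposal is correct and follows essentially the same route as the paper's proof: expand the Frobenius inner product via the gradient formula, telescope using the per-coordinate residual $\phi'(z)z-\phi(z)$ (your $e_k$ is exactly the paper's $x_{k-1,s}^{\top}V_k^{\top}\Sigma_{k,s}-x_{k,s}^{\top}$), bound each residual term by $\frac{\sqrt{p}h}{2}$ times a product of operator norms, and control that product via Lemmas~\ref{l:aux.lower.bound.norm.weight.vector} and~\ref{l:aux.lower.bound.norm.weight.vector.part2}, recovering the same constant $L^{\frac{L}{2}-1}$. No gaps.
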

\begin{proof}To ease notation, let us drop the $(t)$ in the superscript and refer to $V^{(t)}$ as $V$. Recall that for any matrices $A$ and $B$, $A\cdot B=\vecrm(A)\cdot \vecrm(B) = \Tr(A^{\top}B)$. Also recall the formula for the gradient of the loss in \eqref{e:gradient_ell_inner_layers}, therefore, for any $s\in [n]$
\begin{align*}
    &\nabla_{V_{\ell}} J_{ts}\cdot \left(-V_{\ell}\right) \\
    &=-\Tr\left(V_{\ell}^{\top}\nabla_{V_{\ell}} J_{ts} \right)\\
    &= g_{ts}y_s \Tr\left(
    V_{\ell}^{\top}
        \left( \Sigma_{\ell,s} \prod_{j = \ell+1}^L 
               V_{j}^{\top} \Sigma_{j,s} 
           \right)
           V_{L+1}^{\top}
        x_{\ell-1,s}^{\top} \right)\\
        &= g_{ts}y_s \Tr\left(
          \prod_{j = \ell}^L 
             \left(  V_{j}^{\top} \Sigma_{j,s} 
           \right)
           V_{L+1}^{\top}
        x_{\ell-1,s}^{\top} \right)\\
  & \overset{(i)}{=} g_{ts}y_s \Tr\left(
        x_{\ell-1,s}^{\top}\prod_{j = \ell}^L 
             \left(  V_{j}^{\top} \Sigma_{j,s} 
           \right)
           V_{L+1}^{\top}
       \right)\\
         & = g_{ts}y_s 
        x_{\ell-1,s}^{\top}\prod_{j = \ell}^L 
             \left(  V_{j}^{\top} \Sigma_{j,s} 
           \right)
           V_{L+1}^{\top} \\
        &\overset{(ii)}{=}g_{ts} y_s  x_{L,s}^{\top}V_{L+1}^{\top}
+       g_{ts} y_s \sum_{k=\ell}^{L-1}\left(
        \left(x_{k-1,s}^{\top}V_k^{\top}\Sigma_{k,s}-x_{k,s}^{\top}\right)\prod_{j = k+1}^L 
             \left(  V_{j}^{\top} \Sigma_{j,s} 
           \right)
           V_{L+1}^{\top}
       \right)\\
       & \qquad \qquad  +       g_{ts} y_s 
        \left(x_{L-1,s}^{\top}V_L^{\top}\Sigma_{L,s}-x_{L,s}^{\top}\right)
           V_{L+1}^{\top}\\
        &\overset{(iii)}{=}g_{ts} y_s f_{V^{(t)}}(x_s)
+        g_{ts}y_s \sum_{k=\ell}^{L-1}\left(
        \left(x_{k-1,s}^{\top}V_k^{\top}\Sigma_{k,s}-x_{k,s}^{\top}\right)\prod_{j = k+1}^L 
             \left(  V_{j}^{\top} \Sigma_{j,s} 
           \right)
           V_{L+1}^{\top}
       \right)\\& \qquad \qquad  +       g_{ts} y_s 
        \left(x_{L-1,s}^{\top}V_L^{\top}\Sigma_{L,s}-x_{L,s}^{\top}\right)
           V_{L+1}^{\top},
\end{align*}
where $(i)$ follows by the cyclic property of the trace, and $(ii)$ follows since the second term and third term in the equation form a telescoping sum, and $(iii)$ is because $f_{V^{(t)}}(x_s)= V_{L+1}x_{L,s}$ by definition. By the property of $h$-smoothly approximately ReLU activations, for any $z \in \mathbb{R}$ we know that $|\phi'(z)z-\phi(z)| \le \frac{h}{2}$. Therefore for any $k \in [L]$, $\lv x_{k-1,s}^{\top} V_{k}^{\top}\Sigma_{k,s} - x_{k,s}^{\top}\rv_{\infty} \le \frac{h}{2}$ and hence $\lv x_{k-1,s}^{\top} V_{k}^{\top}\Sigma_{k,s} - x_{k,s}^{\top}\rv \le \frac{\sqrt{p}h}{2}$. Continuing from the previous displayed equation, by applying the Cauchy-Schwarz inequality we find
\begin{align*}
     &\nabla_{V_{\ell}} J_{ts} \cdot \left(-V_{\ell}\right) \\
  &    \ge g_{ts} y_s f_{V^{(t)}}(x_s)
  -        g_{ts} \sum_{k=\ell}^{L-1}
        \lv x_{k-1,s}^{\top}V_k^{\top}\Sigma_{k,s}-x_{k,s}^{\top}\rv\prod_{j = k+1}^L 
             \left\lv  V_{j}^{\top} \right\rv_{op} \left\lv \Sigma_{j,s} \right\rv_{op}
           \left\lv V_{L+1}^{\top}\right\rv\\
          & \qquad \qquad \qquad \qquad \qquad  -        g_{ts} 
        \lv x_{L-1,s}^{\top}V_L^{\top}\Sigma_{L,s}-x_{L,s}^{\top}\rv
           \left\lv V_{L+1}^{\top}\right\rv\\
           &    \ge g_{ts} y_s f_{V^{(t)}}(x_s)
  -     \frac{ \sqrt{p}h  g_{ts}}{2} \sum_{k=\ell}^L
        \prod_{j = k+1}^{L+1}
             \left\lv  V_{j} \right\rv_{op} \left\lv \Sigma_{j,s} \right\rv_{op}\\
             &    \overset{(i)}{\ge} g_{ts} y_s f_{V^{(t)}}(x_s)
  -     \frac{ \sqrt{p}h  g_{ts}}{2} \sum_{k=\ell}^L
        \prod_{j = k+1}^{L+1}
             \left\lv  V_{j} \right\rv_{op}\\
             &  \ge g_{ts} y_s f_{V^{(t)}}(x_s)
  -     \frac{ \sqrt{p} L h  g_{ts}}{2} \max_{k \in [L]}
        \prod_{j = k+1}^{L+1}
             \left\lv  V_{j} \right\rv_{op}\\
                        &  \overset{(ii)}\ge g_{ts} y_s f_{V^{(t)}}(x_s)
  -     \frac{ \sqrt{p} L h  g_{ts}}{2} \max_{k \in [L]}
        \prod_{j = k+1}^{L+1}
             \left\lv  V_{j} \right\rv\\
                        &  \overset{(iii)}\ge g_{ts} y_s f_{V^{(t)}}(x_s)
  -    \frac{\sqrt{p}h \lv V\rv^{L}g_{ts}}{2L^{\frac{L}{2}-1}}
             \numberthis \label{e:lower_bound_midway}
\end{align*}
where $(i)$ follows since $\phi'\le 1$ and therefore $\lv \Sigma_{j,s}\rv_{op}\le 1$, $(ii)$ follows since for any matrix $M$, $\lv M \rv_{op}\le \lv M \rv$ and inequality~$(iii)$ follows by invoking Lemma~\ref{l:aux.lower.bound.norm.weight.vector.part2} since we know that $\lv V\rv > \sqrt{L+1}$ by Lemma~\ref{l:aux.lower.bound.norm.weight.vector}. The previous display along with the decomposition in equation~\eqref{e:lower_bound_decomposition} yields
\begin{align*}
     \nabla_{V_{\ell}} J_t \cdot \left(-V_{\ell}\right) & \ge \frac{1}{n}\sum_{s\in [n]} g_{ts} \left[y_s f_{V^{(t)}}(x_s) - \frac{\sqrt{p}h \lv V\rv^{L}}{2L^{\frac{L}{2}-1}}  \right] 
\end{align*}
which completes our proof of this claim.
\end{proof}


Now that we have proved all the lemmas stated in
Section~\ref{ss:technical_tools}, the reader can next jump to Section~\ref{ss:the_proof}.
\section{An Example Where the Margin in Assumption~\ref{as:frac_margin} is Constant} \label{a:constant_gamma}
In this section we provide an example where the margin $\gamma$ in Assumption~\ref{as:frac_margin} is constant. Consider a two-layer Huberized ReLU network. In this section we always let $\phi$ denote the Huberized ReLU activation (see its definition in equation~\eqref{e:helu}). Since here we are only concerned with the properties of the network at initialization,
let $V^{(1)}$ be denoted simply by $V$. The first layer $V_1 \in \R^{p \times p}$ has its entries drawn independently from $\cN\left(0,\frac{2}{p}\right)$ and $V_2 \in \R^{1\times p}$ has its entries drawn independently from $\cN(0,1)$.  


Let $V_{1,i}$ denote the $i$th row of $V_1$ and let $V_{2,i}$ denote the $i$th coordinate of $V_2$. The network computed by these weights is $f_{V}(x) = V_{2}\phi(V_{1}x)$.

Consider data in which examples of each class are clustered.
There is a unit vector $\mu \in \S^{p-1}$ such that, for all $s$ with
$y_s = 1$, $\lv x_s - \mu \rv \leq r$, and, for all $s$
with $y_s = -1$, $\lv x_s - (-\mu) \rv \leq r$.  
Let
us say that such data is {\em $r$-clustered}.
(Recall
that $\lv x_s \rv = 1$ for all $s$.)
\begin{proposition}\label{p:constant_margin}
For any $\delta >0$, suppose that $h \le \frac{\sqrt{\pi}}{2p}$, $r \le \min\left\{\frac{1}{16},\frac{\sqrt{p}h}{c'\sqrt{\log\left(\frac{3pn}{\delta}\right)}}\right\}$, and $p\ge \log^{c'}(n/\delta)$ for a large enough constant $c'>0$. If the data $r$-clustered then,
with probability $1-\delta$ there exists $W^{\star}=(W_1^{\star},W_{2}^{\star})$ with $\lv W^{\star}\rv =1$ such that
\begin{align*}
   \text{for all }\;s\in [n],\qquad  y_s \left(\nabla_V f_V(x_s) \cdot W^{\star} \right) \ge c\sqrt{p}
\end{align*}
where $c$ is a positive absolute constant.
\end{proposition}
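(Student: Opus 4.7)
The plan is to construct $W^\star$ explicitly with $W_2^\star = 0$ and row $i$ of $W_1^\star$ equal to $(V_{2,i}/\lv V_2 \rv)\,\mu^\top$, which makes $\lv W^\star \rv^2 = \lv \mu \rv^2 \sum_i V_{2,i}^2/\lv V_2 \rv^2 = 1$ exactly. Using $\partial_{V_{1,ij}} f_V(x) = V_{2,i}\phi'((V_1 x)_i) x_j$ yields
\begin{align*}
y_s \nabla_V f_V(x_s) \cdot W^\star \;=\; \frac{y_s(\mu \cdot x_s)}{\lv V_2 \rv}\sum_{i=1}^p V_{2,i}^2\, \phi'((V_1 x_s)_i).
\end{align*}
Since $y_s(\mu \cdot x_s) = 1 - y_s\mu \cdot (y_s\mu - x_s) \ge 1 - r \ge 15/16$ by Cauchy--Schwarz and the clustering hypothesis, the problem reduces to showing, with high probability and uniformly over $s$, that $\sum_i V_{2,i}^2\phi'((V_1 x_s)_i) \gtrsim p$ and $\lv V_2 \rv \lesssim \sqrt{p}$.

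The main step is the lower bound on the sum. Define $Z_i := (V_1 \mu)_i$ (i.i.d.\ $\cN(0, 2/p)$) and $G_s := \{i : y_s Z_i \ge h\}$. The condition $h \le \sqrt{\pi}/(2p)$ gives $h\sqrt{p/2} \le \sqrt{\pi/8}$, so a Gaussian tail computation yields $\Pr(y_s Z_i \ge h) \ge 1/4$, and a Chernoff bound gives $|G_s| \ge p/8$ for each $s$ with probability $1 - e^{-\Omega(p)}$. Decompose $(V_1 x_s)_i = y_s Z_i + \epsilon_{s,i}$, where $\epsilon_{s,i} := V_{1,i}\cdot (x_s - y_s\mu) \sim \cN(0,\, 2\lv x_s - y_s\mu\rv^2/p)$. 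A union bound on $pn$ Gaussian tails, combined with the hypothesis $r \le \sqrt{p}\,h/(c'\sqrt{\log(3pn/\delta)})$, shows $\max_{s,i}|\epsilon_{s,i}| \le h/2$ with probability $\ge 1-\delta/3$. On this event, for each $i \in G_s$ one has $(V_1 x_s)_i \ge h/2$, whence $\phi'((V_1 x_s)_i) \ge 1/2$ by the piecewise definition of the Huberized ReLU. Hence $\sum_i V_{2,i}^2 \phi'((V_1 x_s)_i) \ge \tfrac{1}{2}\sum_{i \in G_s} V_{2,i}^2$.

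Because $V_2$ is independent of $V_1$ (hence of $G_s$), conditional on $V_1$ the sum $\sum_{i \in G_s} V_{2,i}^2$ is $\chi^2_{|G_s|}$, so a one-sided Bernstein inequality gives $\sum_{i \in G_s} V_{2,i}^2 \ge |G_s|/2 \ge p/16$ with probability $1-e^{-\Omega(p)}$. Standard $\chi^2_p$ concentration provides $\lv V_2 \rv \le 2\sqrt{p}$ with probability $1-e^{-\Omega(p)}$. Taking a union bound of all four events over $s \in [n]$, allowed by the hypothesis $p \ge \log^{c'}(n/\delta)$ with $c'$ large enough, yields
\begin{align*}
y_s \nabla_V f_V(x_s) \cdot W^\star \;\ge\; \frac{15}{16} \cdot \frac{1}{2\sqrt{p}} \cdot \frac{p}{32} \;=\; c\sqrt{p}
\end{align*}
for all $s$ simultaneously with probability $\ge 1-\delta$. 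The main obstacle is not any single step — each is a textbook Gaussian or chi-squared argument — but juggling the four high-probability events so they jointly hold under the stated bounds on $h$, $r$, and $p$; the hypotheses are calibrated precisely for this.
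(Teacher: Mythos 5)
Your proof is correct, but it takes a genuinely different route from the paper's. The paper restricts attention to coordinates with moderate outer weights, $\cS=\{i:\tfrac12\le|V_{2,i}|\le2\}$, splits these into $\cS_\pm$ according to whether $V_{1,i}\cdot\mu\ge 4h$ or $\le-4h$, and sets the rows of $W_1^\star$ to $\sign(V_{2,i})\mu/\sqrt{|\cS_+|+|\cS_-|}$ on $\cS_+\cup\cS_-$; the margin then comes from a cancellation count of the form $3|\cS_+|-|\cS_-|=\Omega(p)$, which requires concentration of both $|\cS_+|$ and $|\cS_-|$ around $p/4$. You instead weight row $i$ by $V_{2,i}\mu/\lv V_2\rv$, which makes every coordinate's contribution $V_{2,i}^2\phi'((V_1x_s)_i)/\lv V_2\rv$ automatically nonnegative, so no balancing or sign bookkeeping is needed; the work shifts to showing $\phi'\ge\tfrac12$ on a set $G_s$ of $\Omega(p)$ coordinates (your $h\le\sqrt{\pi}/(2p)$ condition gives $\Pr(y_sZ_i\ge h)\ge\tfrac12-\tfrac{1}{4\sqrt p}\ge\tfrac14$ via the density bound, exactly what that hypothesis is calibrated for) and to $\chi^2$ concentration of $\sum_{i\in G_s}V_{2,i}^2$ and of $\lv V_2\rv$, using independence of $V_2$ from $V_1$. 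Both proofs use the same perturbation event $|V_{1,i}\cdot(x_s-y_s\mu)|\lesssim h$ under the same constraint $r\lesssim\sqrt{p}\,h/\sqrt{\log(pn/\delta)}$, and your event bookkeeping (including the reliance on $p\ge\log^{c'}(n/\delta)$ to absorb $n\,e^{-\Omega(p)}$ terms) is at the same level of rigor as the paper's Lemma on $\cE_{\mathsf{margin}}$. Your version is somewhat shorter and yields an explicit constant ($c=15/1024$ as written), at the cost of needing the conditional chi-square argument; the paper's construction keeps $W^\star$ depending on $V_2$ only through signs, but pays for it with the set-size concentration and the positive-versus-negative cancellation. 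One small simplification available to you: $G_s$ depends on $s$ only through $y_s$, so the Chernoff bound need only be taken over the two label signs rather than over all $s\in[n]$.
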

\begin{proof} 
Define a set
\begin{align*}
    \cS := \left\{ i \in [p]: \frac{1}{2}\le\left| V_{2,i} \right|\le 2\right\},
\end{align*}
and also define
\begin{align*}
    \cS_{+} := \left\{i \in \cS: V_{1,i}\cdot \mu \ge 4h\right\} \quad  \text{ and} \quad
    \cS_{-} := \left\{i \in \cS: -V_{1,i} \cdot\mu \ge 4h\right\}.
\end{align*}

Consider an event $\cE_{\mathsf{margin}}$ such that all of the following simultaneously occur:
\begin{enumerate}[(a)]
    \item $p\left(\frac{1}{4}-o_p(1)\right)\le |\cS_{+}| \le p\left(\frac{1}{2}+o_p(1)\right)$;
    \item $p\left(\frac{1}{4}-o_p(1)\right)\le |\cS_{-}| \le p\left(\frac{1}{2}+o_p(1)\right)$;
    \item for all $s \in [n]$ and $i \in [p]$, $\lvert V_{1,i}\cdot (x_s - y_s\mu)\rvert \le 2h$.
\end{enumerate}
Using simple concentration arguments in Lemma~\ref{l:margin_good_event} below we will show that $\Pr\left[\cE_{\mathsf{margin}}\right]\ge 1-\delta$.
Let us assume that the event $\cE_{\mathsf{margin}}$ holds for the remainder of the proof. 

The gradient of $f$ with respect to $V_{1,i}$ is
\begin{align*}
    \nabla_{V_{1,i}} f_V(x) =  x \left(V_{2,i}\phi'(V_{1,i}\cdot x)\right).
\end{align*}

Consider a sample with index $s$ with $y_s = 1$. For any $i \in \cS_{+}$ 
\begin{align*}
    \sign(V_{2,i})\left(\mu\cdot \nabla_{V_{1,i}} f_V(x_s)\right) &=  \mu\cdot x_s \left(|V_{2,i}|\phi'(V_{1,i}\cdot x_s)\right) \\
    &= 
       \left(|V_{2,i}|\phi'(V_{1,i} \cdot x_s)\right)+ \mu \cdot (x_s - \mu) \left(|V_{2,i}|\phi'(V_{1,i} \cdot x_s)\right)\\
    &\overset{(i)}{\ge} \frac{1}{2} \phi'(V_{1,i}\cdot \mu + V_{1,i}\cdot (x_s-\mu)) - 2\mu\cdot (x_s -\mu) \phi'(V_{1,i}\cdot x_s) \\
    &\overset{(ii)}{\ge} \frac{1}{2} \phi'(V_{1,i}\cdot \mu + V_{1,i}\cdot (x_s-\mu)) - \frac{1}{8}\\
    &\overset{(iii)}{\ge} \frac{ \phi'(2h)}{2} - \frac{1}{8}\overset{(iv)}{=} \frac{1}{2}   - \frac{1}{8}= \frac{3}{8} \numberthis \label{e:margin_lower_bound_good_node}
\end{align*}
where $(i)$ follows since $\frac{1}{2}\le |V_{2,i}|\le 2$ when $i \in \cS_+$. Inequality~$(ii)$ follows since $\phi'$ is bounded by $1$ and because $\lv x_s -y_s\mu\rv\le r \le 1/16$. Inequality~$(iii)$ follows since $i \in \cS_{+}$ and therefore $(V_{1,i})\cdot \mu \ge 4h$, under event $\cE_{\mathsf{margin}}$, $(V_{1,i})\cdot (x_s-\mu)\ge -2h$, and since $\phi'$ is a monotonically increasing function. Equation~$(iv)$ follows since $\phi'(2h)=1$. 
On the other hand,
for any $i \in \cS_{-}$:
\begin{align*}
     \sign(V_{2,i})\mu\cdot \nabla_{V_{1,i}} f_V(x_s) &= \mu\cdot x_s \left(|V_{2,i}|\phi'(V_{1,i}\cdot x_s)\right)\\
     &= 
     |V_{2,i}|\phi'(V_{1,i} \cdot x_s)+ \mu \cdot (x_s - \mu) \left(|V_{2,i}|\phi'(V_{1,i} \cdot x_s)\right)\\
     &\overset{(i)}{\ge}
     - 2\mu\cdot (x_s -\mu) \phi'(V_{1,i}\cdot x_s) \overset{(ii)}{\ge} - \frac{1}{8}
    \label{e:margin_lower_bound_bad_node} \numberthis
\end{align*}
where 
$(i)$ follows since $|V_{2,i}|\le 2$ when $i \in \cS_-$ and $\phi'$ is always non-negative.
Inequality~$(ii)$ again follows since $\phi'$ is bounded by $1$ and because $\lv x_s -y_s\mu\rv\le r \le 1/16$. 

Similarly we can also show that for a sample $s$ with $y_s = -1$, for any $i \in \cS_{-}$
\begin{align}
    \sign(V_{2,i})\mu\cdot \nabla_{V_{1,i}} f_V(x_s) \le \frac{-3 }{8} \label{e:margin_lower_bound_good_node_neg}
\end{align}
and for any $i \in \cS_{+}$
\begin{align}
    \sign(V_{2,i})\mu\cdot \nabla_{V_{1,i}} f_V(x_s) \le \frac{1}{8}. \label{e:margin_lower_bound_bad_node_neg}
\end{align}
With these calculations in place let us construct $W^{\star} = (W_{1}^{\star},W_2^{\star})$ where, $W_{1}^{\star}\in \R^{p \times p}$, $W_{2}^{\star} \in \R^{1\times p}$ and $\lv W^{\star}\rv=1$. Set $W_2^{\star} =0$. For all $i \in \cS_{+}\cup \cS_{-}$ set
\begin{align*}
    W_{1,i}^{\star} = \sign(V_{2,i})\mu\frac{1}{\sqrt{|\cS_{+}|+|\cS_{-}|}}
\end{align*}
and for all $i \not\in \cS_{+}\cup \cS_{-}$, set $W_{1,i}^{\star}=0$. We can easily check that $\lv W^{\star}\rv = 1$ (since $\lv \mu\rv =1$). Thus, for any sample $s$ with $y_s =1$  
\begin{align*}
   & y_s \left(\nabla_V f_V(x_s) \cdot W^{\star} \right)\\ &= \nabla_{V_1} f_V(x_s) \cdot W_1^{\star} \\
    & = \sum_{i\in \cS_{+}}\nabla_{V_{1,i}} f_V(x_s) \cdot W_{1,i}^{\star}+\sum_{i\in \cS_{-}}\nabla_{V_{1,i}} f_V(x_s) \cdot W_{1,i}^{\star}\\
    &=\frac{1}{ \sqrt{|\cS_{+}|+|\cS_{-}|}}\left[ \sum_{i\in \cS_{+}}\sign(V_{2,i})\mu\cdot \nabla_{V_{1,i}} f_V(x_s)  +\sum_{i\in \cS_{-}}\sign(V_{2,i})\mu\cdot\nabla_{V_{1,i}} f_V(x_s) \right] \\
    &\overset{(i)}{\ge} \frac{1}{ \sqrt{|\cS_{+}|+|\cS_{-}|}}\left[\frac{3|\cS_{+}|}{8}-\frac{|\cS_{-}|}{8}\right] \\
    &=\frac{1}{ 8\sqrt{|\cS_{+}|+|\cS_{-}|}}\left[3|\cS_{+}|-|\cS_{-}|\right]\\
    &\overset{(ii)}{\ge} \frac{1}{ 8\sqrt{p\left(1+o_p(1)\right)}}\left[3p\left(\frac{1}{4}-o_p(1)\right)-p\left(\frac{1}{2}+o_p(1)\right)\right] \ge c\sqrt{p}
\end{align*}
where $(i)$ follows by using inequalities \eqref{e:margin_lower_bound_good_node} and \eqref{e:margin_lower_bound_bad_node} and $(ii)$ follows by Parts~(a) and (b) of the event $\cE_{\mathsf{margin}}$. The final inequality follows since we assume that $p$ is greater than a constant. This shows that it is possible to achieve a margin of $c\sqrt{p}$ on the positive examples. By mirroring the logic above and using inequalities~\eqref{e:margin_lower_bound_good_node_neg} and \eqref{e:margin_lower_bound_bad_node_neg} we can show that a margin of $c\sqrt{p}$ can also be attained on the negative examples. This completes our proof.
\end{proof}
As promised we now show that the event $\cE_{\mathsf{margin}}$ defined above occurs with probability at least $1-\delta$.
\begin{lemma}
\label{l:margin_good_event}
For the event $\cE_{\mathsf{margin}}$ be 
defined in the proof of 
Proposition~\ref{p:constant_margin} 
above,
\begin{align*}
    \Pr\left[\cE_{\mathsf{margin}}\right] \ge 1-\delta.
\end{align*}
\end{lemma}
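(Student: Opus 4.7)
The plan is to bound the three sub-events of $\cE_{\mathsf{margin}}$ separately by standard concentration arguments, and then union-bound them. Recall that each entry of $V_1$ is $\cN(0,2/p)$ and independent from $V_2$, whose entries are $\cN(0,1)$.

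First, for parts (a) and (b), I will show that $|\cS_+|$ and $|\cS_-|$ concentrate around a value that lies strictly between $p/4$ and $p/2$. Since $\mu \in \S^{p-1}$, the random variable $V_{1,i}\cdot \mu$ is distributed as $\cN(0,2/p)$, independently of $V_{2,i}$ and across $i$. Thus
\[
\Pr\bigl[i \in \cS_+\bigr] \;=\; \Pr\bigl[\tfrac{1}{2}\le |V_{2,i}|\le 2\bigr]\cdot \Pr\bigl[V_{1,i}\cdot\mu \ge 4h\bigr].
\]
The first factor is the constant $q := 2(\Phi(2)-\Phi(1/2))\approx 0.571$. For the second factor, letting $Z\sim \cN(0,1)$, we have $\Pr[V_{1,i}\cdot\mu\ge 4h] = \Pr[Z\ge 4h\sqrt{p/2}]$, and by the assumption $h\le \sqrt{\pi}/(2p)$ the threshold is at most $\sqrt{2\pi/p}=o(1)$. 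Hence the second factor equals $1/2-o_p(1)$, and consequently $\Pr[i\in\cS_+]\ge q/2-o_p(1) > 1/4$ for $p$ large enough (and the analogous upper bound $\le 1/2 + o_p(1)$ is trivial). The events $\{i\in\cS_+\}_{i\in[p]}$ are i.i.d.\ Bernoulli across $i$, so Hoeffding's inequality gives deviation bounds of the form $\Pr[\,||\cS_+|-p\cdot\Pr[i\in\cS_+]|\ge p\epsilon\,]\le 2\exp(-2p\epsilon^2)$, which yields (a) with probability at least $1-\delta/3$ provided $p\ge \log^{c'}(n/\delta)$ for a sufficiently large constant $c'$. The argument for (b) is identical by the symmetry $V_{1,i}\mapsto -V_{1,i}$.

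Second, for part (c), fix $s\in[n]$ and $i\in[p]$ and observe that $V_{1,i}\cdot(x_s-y_s\mu)$ is a linear function of an $\cN(0,2/p)$ vector, hence Gaussian with mean zero and variance $\tfrac{2}{p}\lv x_s-y_s\mu\rv^2 \le 2r^2/p$. The standard Gaussian tail bound gives
\[
\Pr\!\bigl[\,|V_{1,i}\cdot(x_s-y_s\mu)| \ge 2h\,\bigr] \;\le\; 2\exp\!\Bigl(-\tfrac{(2h)^2 p}{2\cdot 2r^2}\Bigr) \;=\; 2\exp\!\Bigl(-\tfrac{h^2 p}{r^2}\Bigr).
\]
Under the hypothesis $r\le \sqrt{p}\,h/(c'\sqrt{\log(3pn/\delta)})$ with $c'$ large enough, $h^2 p/r^2 \ge c'^2 \log(3pn/\delta)$, so each individual failure probability is at most $\delta/(3pn)$. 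A union bound over all $np$ pairs $(s,i)$ gives (c) with probability at least $1-\delta/3$.

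Combining the three sub-events by a final union bound yields $\Pr[\cE_{\mathsf{margin}}]\ge 1-\delta$. The only subtlety is keeping the asymptotic $o_p(1)$ quantitative: one must track that the constant $q/2$ exceeds $1/4$ by an absolute gap, so that for sufficiently large $p$ (which is implied by $p\ge \log^{c'}(n/\delta)$ with $c'$ large enough) both Hoeffding deviations and the small Gaussian correction $\Pr[Z\ge \sqrt{2\pi/p}]-1/2$ fit within that gap. No single step is truly hard; the bookkeeping on constants and logarithmic factors is the main item to verify.
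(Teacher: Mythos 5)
Your proposal is correct and follows essentially the same strategy as the paper: a standard concentration bound for each of the three sub-events followed by a union bound, with part (c) matching the paper's Gaussian tail argument almost verbatim. The only difference is in parts (a)/(b), where you exploit the independence of $V_1$ and $V_2$ to treat the indicators $\mathbb{I}[i\in\cS_{+}]$ as i.i.d.\ Bernoulli and apply a single Hoeffding bound, whereas the paper first concentrates $|\cS|$ and then applies Hoeffding a second time conditionally on $|\cS|$; your one-shot version is a mild simplification and is equally valid, provided (as you note) the gap $q/2-1/4$ is tracked quantitatively against the $O(1/\sqrt{p})$ correction from the $4h$ threshold.
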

\begin{proof}We shall show that each of the three sub-events in the definition of the event $\cE_{\mathsf{margin}}$ occur with probability at least $1-\delta/3$. Then a union bound establishes the statement of the lemma.

\textit{Proof of Part~(a):} Recall the definition of the set $\cS$
\begin{align*}
    \cS := \left\{ i \in [p]: \frac{1}{2}\le\left| V_{2,i} \right|\le 2\right\},
\end{align*}
and also the definition of the set $\cS_{+}$
\begin{align*}
 \cS_{+} := \left\{i \in \cS: V_{1,i}\cdot \mu \ge 4h\right\}.  
\end{align*}
We will first derive a high probability bound the size of the set $\cS$, and then use this bound to control the size of $\cS_{+}$. A trivial upper bound is $|\cS| \le p$. Let us derive a lower bound on its size. Define the random variable $\zeta_i = \mathbb{I}\left[\frac{1}{2}\le |V_{2,i}| \le 2 \right]$. It is easy to check that $|\cS| = \sum_{i \in [p]}\zeta_i$. The expected value of this random variable
\begin{align*}
     \E\left[\zeta_i\right]  = 1- \Pr\left[|V_{2,i}|\le \frac{1}{2}\right]-\Pr\left[|V_{2,i}|\ge 2\right] 
& \overset{(i)}{\ge}1-  \frac{1/2-(-1/2)}{\sqrt{2\pi}}-\Pr\left[|V_{2,i}|\ge 2\right] \\
& \overset{(ii)}{\ge}  1-\frac{1}{\sqrt{2\pi}}-\frac{\exp\left(-2\right)}{\sqrt{2\pi}} > \frac{1}{2}
\end{align*}
where $(i)$ follows since $V_{2,i} \sim \cN(0,1)$ so its density is upper bounded bounded by $1/\sqrt{2\pi}$, and $(ii)$ follows by a Mill's ratio bound to upper bound $\Pr\left[|V_{2,i}|\ge z\right]\le 2\times \frac{\exp(-z^2/2)}{\sqrt{2\pi}z}$. 
 A Hoeffding bound (see Theorem~\ref{thm:hoeffding}) implies that for any $\eta \ge 0$
\begin{align*}
    \Pr\left[|\cS|\ge p\mathbb{E}\left[\zeta_i\right] -\frac{\eta p}{2}\right] \ge 1-\exp\left( -c_1\eta^2p\right).
\end{align*}
Setting $\eta = 1/p^{1/4}$ we get
\begin{align}
    \Pr\left[|\cS|\ge p\left(\frac{1}{2}-\frac{1}{p^{1/4}} \right)\right] \ge 1-\exp\left( -c_1\sqrt{p}\right). \label{e:good_event_size_of_S}
\end{align}
We now will bound $|\cS_{+}|$ conditioned on the event in the previous display: $p\left(\frac{1}{2}-\frac{1}{p^{1/4}}\right)\le|\cS|\le p$.

For each $i \in \cS$, the random variable $V_{1,i} \cdot \mu \sim \cN\left(0,\frac{2}{p}\right)$ since each entry of $V_{1,i}$ is drawn independently from $\cN\left(0,\frac{2}{p}\right)$ and because $\lv \mu \rv =1$. Define a random variable $\xi_i:=\mathbb{I}\left[V_{1,i}\cdot \mu \ge 4h\right]$. It is easy to check that $|\cS_{+}|=\sum_{i\in {\cS}} \xi_i$. The expected value of $\xi_i$
\begin{align*}
\left\lvert \E\left[\xi_i\right]-\frac{1}{2}\right\rvert  =  \left\lvert \Pr\left[V_{1,i}\cdot \mu \ge 4h\right]-\frac{1}{2}\right\rvert & = \left\lvert\Pr\left[V_{1,i}\cdot \mu \ge 0\right]-\Pr\left[V_{1,i}\cdot \mu \in [0,4h)\right]-\frac{1}{2}\right\rvert\\
    &=\Pr\left[V_{1,i}\cdot \mu \in [0,4h)\right]\overset{(i)}{\le} \frac{4h\sqrt{p}}{\sqrt{2\pi}\sqrt{2}} \overset{(ii)}{\le} \frac{1}{\sqrt{p}}
\end{align*}
where $(i)$ follows since the density of this Gaussian is upper bounded by $\frac{1}{\sqrt{2\pi}\times\left(\frac{\sqrt{2}}{\sqrt{p}}\right)}$ and $(ii)$ is by the assumption that $h \le\frac{\sqrt{\pi}}{2p}$. Thus we have shown that $\frac{1}{2}-\frac{1}{\sqrt{p}}\le \E\left[\xi_i\right]\le \frac{1}{2}+\frac{1}{\sqrt{p}}$. Again a Hoeffding bound (see Theorem~\ref{thm:hoeffding}) implies that for any $\eta \ge 0$
\begin{align*}
    \Pr\left[\Big|\sum_{i\in \cS}\xi_i-|\cS|\E\left[\xi_i\right]\Big| \le \eta p \;\; \bigg| \;\; p\left(\frac{1}{2}-\frac{1}{p^{1/4}}\right)\le  |\cS|\le p\right] \ge 1-2\exp\left( -c_2\eta^2p\right).
\end{align*}
By setting $\eta = 1/p^{1/4}$ we get that
\begin{align}
    \Pr\left[\Big|\left|\cS_{+}\right|-|\cS|\E\left[\xi_i\right]\Big| \le p^{3/4}\; \; \bigg| \;\; p\left(\frac{1}{2}-\frac{1}{p^{1/4}}\right)\le  |\cS|\le p\right]  \ge 1-2\exp\left( -c_2\sqrt{p}\right). \label{e:size_of_set_s_plus}
\end{align}
By a union bound over the events in \eqref{e:good_event_size_of_S} and \eqref{e:size_of_set_s_plus} we get that
\begin{align*}
    \Pr\left[p\left(\frac{1}{4}-o_p(1)\right)\le |\cS_{+}|\le p\left(\frac{1}{2}+o_p(1)\right)\right] \ge 1-\exp\left( -c_1\sqrt{p}\right)-2\exp\left( -c_2\sqrt{p}\right).
\end{align*}
By assumption $p \ge \log^{c'}(n/\delta)$ for a large enough constant $c'$, thus
\begin{align*}
        \Pr\left[p\left(\frac{1}{4}-o_{p}(1)\right)\le |\cS_{+}|\le p\left(\frac{1}{2}+o_{p}(1)\right)\right] \ge 1-\delta/3
\end{align*}
which completes our proof of the first part.

\textit{Proof of Part~(b):} The proof of this second part follows by exactly the same logic as Part~(a).

\textit{Proof of Part~(c):} Fix any $i \in [p]$ and $s \in [n]$. Recall that $V_{1,i} \sim \cN\left(0,\frac{2}{p}I\right)$ and by assumption $\lv x_s - y_s\mu\rv \le r$. Thus the random variable $V_{1,i} \cdot (x_s - y_s\mu)$ is a zero-mean Gaussian random variable with variance at most $\frac{2r^2}{p}$. A standard Gaussian concentration bound implies that
\begin{align}\label{e:conc_tight_control_norm_v_2_i}
    \Pr\left[\left| V_{1,i}\cdot \left(x_s - y_s\mu\right) \right| \le 2h\right] \ge 1-2\exp\left(-\frac{c_2ph^2}{r^2}\right).
\end{align}
By a union bound over all $i \in [p]$ and all $s \in [n]$ we get
\begin{align*}
    \Pr\left[\exists\; i\in [p], s\in [n] \;: \;\left|(V_{1,i})\cdot (x_s -y_s \mu)\right| \le 2h \right] \ge 1- 2np\exp\left(-\frac{c_2p h^2}{r^2}\right)\ge 1-\frac{\delta}{3}
\end{align*}
where the last inequality follows since $r^2 \le \frac{ph^2}{(c')^2\log\left(\frac{3pn}{\delta}\right)}$ and because $p\ge \log^{c'}(n/\delta)$ for a large enough constant $c'>0$. This completes our proof.
\end{proof}

\section{Omitted Proofs from Section~\ref{ss:ntk_init}} \label{a:proof_of_ntk}

In this section we prove Theorem~\ref{t:theorem_frac_margin}. We largely follow the high-level analysis strategy presented in \citep{chen2021much} to prove that, with high probability, if the the width of the network is large enough then gradient descent drives down the loss to at most $\frac{1}{n^{1+24L}}$ under Assumption~\ref{as:frac_margin}. After that we use our general result, Theorem~\ref{t:main_theorem}, to prove that gradient descent continues to reduce the loss beyond this point. We begin by introducing some definitions that are useful in our proofs in this section. All the results in this section are specialized to the case of the Huberized ReLU activation function (see its definition in equation~\eqref{e:helu}).
\subsection{Additional Definitions and Notation}
Following \citet{chen2021much}, we define the Neural Tangent random features (henceforth NT) function class. These definitions depend on the initial weights $V^{(1)}$ and radii $\tau,\rho>0$. We shall choose the value of these radii in terms of problem parameters in the sequel. Define a ball around the initial 
parameters.
\begin{definition}\label{def:ball_around_init}For any $V^{(1)}$ and $\rho>0$ define a ball around this weight matrix as
\begin{align*}
    \cB(V^{(1)},\rho):= \left\{V: \max_{\ell \in [L+1]} \lv V_{\ell}-V^{(1)}_{\ell} \rv\le \rho \right\}.
\end{align*}
\end{definition}
We then define the neural tangent kernel function class.
\begin{definition}
\label{def:neural_tangent_kernel} Given initial weights $V^{(1)}$, define the function $$F_{V^{(1)},V}(x):= f_{V^{(1)}}(x) + \left( \nabla f_{V^{(1)}}(x)\right) \cdot (V-V^{(1)}),$$ then the NT function class with radius $\rho>0$ is as follows
\begin{align*}
    \cF(V^{(1)},\rho):= \left\{F_{V^{(1)},V}(x): V \in \cB(V^{(1)},\rho) \right\}.
\end{align*}
\end{definition}

We continue to define the minimal error achievable by any function in this NT function class.
\begin{definition}
\label{minimal_NT_error}For any $V^{(1)}$ and any $\rho>0$ define
\begin{align*}
    \epsilon_{\mathsf{NT}}(V^{(1)},\rho) := 
   \min_{V \in  \cB(V^{(1)},\rho)} 
    \frac{1}{n}\sum_{s=1}^n \log(1+\exp(-y_s F_{V^{(1)},V}(x_s))),
\end{align*}
that is, it is the 
minimal training loss achievable by functions in the NT function class centered at $V^{(1)}$. Also 
let
$V^{\star}(V^{(0)},\rho) \in \cB(V^{(0)},\rho)$ be an arbitrary minimizer:
\begin{align*}
    V^{\star} \in \argmin_{V \in  \cB(V^{(1)},\rho)} \frac{1}{n}\sum_{s=1}^n \log(1+\exp(-y_s F_{V^{(1)},V}(x_s))).
\end{align*}

\end{definition}
We will be concerned with the maximum approximation error of this tangent kernel around a ball of the initial weight matrix. 
\begin{definition}
\label{app_error}For any $V^{(1)}$ and any $\tau>0$ define
\begin{align*}
    \epsilon_{\mathsf{app}}(V^{(1)},\tau) := \sup_{s\in [n]}\sup_{\hV,\tV \in \cB(V^{(1)},\tau)}\left\lvert f_{\hV}(x_s)-f_{\tV}(x_s)-\nabla f_{\tV}(x_s)\cdot\left(\hV-\tV \right)\right\rvert.
\end{align*}
\end{definition}

Finally we define the maximum norm of the gradient with respect to the weights of any layer.
\begin{definition}
\label{max_gradient}For any initial weights $V^{(1)}$ and any $\tau>0$ define 
\begin{align*}
    \Gamma(V^{(1)},\tau) := \sup_{s\in [n]}\sup_{\ell \in [L+1]}\sup_{V \in \cB(V^{(1)},\tau)}\lv \nabla_{V_{\ell}}f_{V}(x_s)\rv.
\end{align*}
\end{definition}

\subsection{Technical Tools Required for the Neural Tangent Kernel Proofs} \label{ss:neural_tangent_tech_tools}
We borrow \citep[][Lemma~5.1]{chen2021much} that bounds the average empirical risk in the first $T$ iterations when the iterates remain in a ball around the initial weight matrix. We have translated the lemma into our notation.
\begin{lemma}\label{l:average_loss_decrease}
Set the step-size $\alpha_t =\alpha= O\left(\frac{1}{L\Gamma(V^{(1)},\tau)^2}\right)$ for all $t\in [T]$. Suppose that given an initialization $V^{(1)}$ and radius $\rho>0$ we pick $\tau>0$ such that $V^{\star} \in \cB(V^{(1)},\tau)$ and $V^{(t)}\in \cB(V^{(1)},\tau)$ for all $t \in [T]$,
and that $\epsilon_{\mathsf{app}}(V^{(1)},\tau) < 3/8$. 
Then 
\begin{align*}
    \frac{1}{T} \sum_{t=1}^{T} J(V^{(t)}) \le \frac{\lv V^{(1)}-V^{\star} \rv^2-\lv V^{(T+1)}-V^{\star} \rv^2+2T\alpha \epsilon_{\mathsf{NT}}(V^{(1)},\rho)}{T\alpha \left(\frac{3}{2}-4\epsilon_{\mathsf{app}}(V^{(1)},\tau)\right)}.
\end{align*}
\end{lemma}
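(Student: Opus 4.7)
The plan is to carry out the standard convex-function-style descent argument for gradient descent, with the loss $J$ replaced by its NTK linearization inside a ball around $V^{(1)}$. Starting from the elementary identity
\[
\lv V^{(t+1)} - V^\star \rv^2 = \lv V^{(t)} - V^\star \rv^2 - 2\alpha\, \nabla J(V^{(t)}) \cdot (V^{(t)} - V^\star) + \alpha^2 \lv \nabla J(V^{(t)}) \rv^2,
\]
I will lower bound $\nabla J(V^{(t)}) \cdot (V^{(t)} - V^\star)$ in terms of $(1 - 2\epsilon_{\mathsf{app}})\, J(V^{(t)}) - \epsilon_{\mathsf{NT}}(V^{(1)},\rho)$ and upper bound $\lv \nabla J(V^{(t)}) \rv^2$ by a constant multiple of $J(V^{(t)})$. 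The per-step inequality that results can then be summed over $t \in [T]$; the squared-distance terms telescope, giving the stated bound.

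For the lower bound, fix a sample $s$, let $\ell_s(z) = \log(1 + \exp(-y_s z))$, and observe that by the definition of $F$, $\nabla f_{V^{(t)}}(x_s) \cdot (V^{(t)} - V^\star) = f_{V^{(t)}}(x_s) - F_{V^{(t)}, V^\star}(x_s)$. Since $V^{(t)}, V^\star \in \cB(V^{(1)},\tau)$, Definition~\ref{app_error} implies that both $F_{V^{(t)}, V^\star}(x_s)$ and $F_{V^{(1)}, V^\star}(x_s)$ are $\epsilon_{\mathsf{app}}$-close to $f_{V^\star}(x_s)$, so they differ from each other by at most $2\epsilon_{\mathsf{app}}$. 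Convexity of $\ell_s$ anchored at $F_{V^{(1)}, V^\star}(x_s)$ yields
\[
\ell_s'(f_{V^{(t)}}(x_s))\,\bigl[f_{V^{(t)}}(x_s) - F_{V^{(1)}, V^\star}(x_s)\bigr] \ge \ell_s(f_{V^{(t)}}(x_s)) - \ell_s(F_{V^{(1)}, V^\star}(x_s)),
\]
and the residual $\ell_s'(f_{V^{(t)}}(x_s))\,\bigl[F_{V^{(1)}, V^\star}(x_s) - F_{V^{(t)}, V^\star}(x_s)\bigr]$ is controlled in absolute value by $2\epsilon_{\mathsf{app}}\, g_s(V^{(t)}) \le 2\epsilon_{\mathsf{app}}\, \ell_s(f_{V^{(t)}}(x_s))$ via Lemma~\ref{l:relationsbetweengradientandloss}. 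Adding these bounds and averaging over $s$ (using the definition of $V^\star$) produces
\[
\nabla J(V^{(t)}) \cdot (V^{(t)} - V^\star) \ge (1 - 2\epsilon_{\mathsf{app}})\, J(V^{(t)}) - \epsilon_{\mathsf{NT}}(V^{(1)},\rho).
\]

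For the gradient-norm bound I apply Jensen's inequality layer by layer, together with $(\ell_s')^2 \le |\ell_s'| = g_s \le \ell_s$ (again by Lemma~\ref{l:relationsbetweengradientandloss}) and $\lv \nabla_{V_\ell} f_V(x_s) \rv \le \Gamma(V^{(1)},\tau)$ for $V \in \cB(V^{(1)},\tau)$, which yields $\lv \nabla J(V^{(t)}) \rv^2 \le (L+1)\,\Gamma(V^{(1)},\tau)^2\, J(V^{(t)})$. Picking $\alpha = \Theta(1/(L \Gamma(V^{(1)},\tau)^2))$ small enough that $\alpha(L+1)\Gamma^2 \le 1/2$, dividing the descent identity by $\alpha$, and rearranging gives
\[
\bigl(\tfrac{3}{2} - 4\epsilon_{\mathsf{app}}\bigr)\, J(V^{(t)}) \le 2\,\epsilon_{\mathsf{NT}}(V^{(1)},\rho) + \tfrac{1}{\alpha}\bigl(\lv V^{(t)} - V^\star\rv^2 - \lv V^{(t+1)} - V^\star\rv^2\bigr).
\]
Summing over $t \in [T]$ telescopes the distance terms, and dividing by $T(3/2 - 4\epsilon_{\mathsf{app}})$ yields the claimed bound; the assumption $\epsilon_{\mathsf{app}} < 3/8$ ensures the coefficient is strictly positive.

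The main subtlety is forcing $\epsilon_{\mathsf{app}}$ to appear \emph{multiplicatively} as $(1 - 2\epsilon_{\mathsf{app}})\,J(V^{(t)})$ rather than as a fixed additive slack. This requires using $|\ell_s'| \le \ell_s$ when controlling the mismatch between $F_{V^{(t)}, V^\star}$ and $F_{V^{(1)}, V^\star}$; the crude Lipschitz bound $|\ell_s'| \le 1$ would instead leave an additive $4\epsilon_{\mathsf{app}}$ term that could never be absorbed into the left-hand side, and the denominator $3/2 - 4\epsilon_{\mathsf{app}}$ in the lemma statement would not appear.
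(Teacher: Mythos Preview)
The paper does not prove this lemma itself; it is quoted as Lemma~5.1 of \citet{chen2021much}, with the remark that the cited proof does not rely on any specific properties of the ReLU. Your proposal is correct and is the standard argument for such a result (and presumably the one in \citet{chen2021much}): the one-step descent identity, the convexity lower bound on $\nabla J_t \cdot (V^{(t)}-V^\star)$ with the $\epsilon_{\mathsf{app}}$ slack entering multiplicatively via $|\ell_s'| = g_s \le \ell_s$, the bound $\lv \nabla J_t\rv^2 \le (L+1)\Gamma^2 J_t$, and telescoping; the choice $\alpha(L+1)\Gamma^2 \le 1/2$ recovers exactly the coefficient $\tfrac{3}{2} - 4\epsilon_{\mathsf{app}}$.
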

Technically the setting studied by \citet{chen2021much} differs from the setting that we study in our paper. They deal with neural networks with ReLU activations instead of Huberized ReLU activations that we consider here. However, it is easy to scan through the proof of their lemma to verify that it does not rely on any specific properties of ReLUs.

The next lemma bounds the approximation error of the neural tangent kernel in a neighbourhood around the initial weight matrix and provides a bound on the maximum norm of the gradient. The proof of this lemma below relies on several different lemmas that are collected and proved in Appendix~\ref{a:lemmas_ntk_approx_bound_aid}. 
\begin{restatable}{lem}{ntkapprox}
\label{l:ntk_app_error}
For any $\delta >0$, suppose that $\tau = \Omega\left(\frac{\log^2\left(\frac{nL}{\delta} \right)}{p^{\frac{3}{2}}L^3}\right)$ and, for
a sufficiently small positive constant $c$, we have
$\tau \leq \frac{c}{L^{12} \log^{\frac{3}{2}}(p)}$, $h\le \frac{\tau}{\sqrt{p}}$ and $p = \poly\left(L,\log\left(\frac{n}{\delta}\right)\right)$ for some sufficiently large polynomial. Then, with probability at least $1-\delta$ over the random initialization $V^{(1)}$, we have
\begin{enumerate}[(a)]
    \item $\epsilon_{\mathsf{app}}(V^{(1)},\tau) \le  O(\sqrt{p \log(p)}L^{5}\tau^{4/3})$, and
    \item $\Gamma(V^{(1)},\tau) \le O(\sqrt{p}L^2)$.
\end{enumerate}
\end{restatable}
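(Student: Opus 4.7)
The plan is to prove part~(b) first, via Gaussian concentration at the initialization together with a perturbation argument, and then derive part~(a) from an integral Taylor-remainder formula for which the key refinement is an anti-concentration-based estimate controlling how $\phi'$ can change under perturbations of $V$.

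For part~(b), I would first bound $\Gamma(V^{(1)},0)$. Standard Gaussian matrix concentration gives $\lv V_\ell^{(1)}\rv_{op}=O(1)$ for $\ell\in[L]$ and $\lv V_{L+1}^{(1)}\rv=O(\sqrt{p})$ simultaneously with probability at least $1-\delta/6$. A forward induction over layers, in the spirit of Lemma~\ref{l:u.x.bounds} combined with Gaussian concentration for each $V_\ell^{(1)} x_{\ell-1,s}^{V^{(1)}}$, then shows $\lv x_{\ell,s}^{V^{(1)}}\rv=\Theta(1)$ uniformly in $(\ell,s)$ with probability at least $1-\delta/6$. Substituting these into~\eqref{e:gradient_ell_inner_layers}, the back-propagated vector $\bigl(\Sigma_{\ell,s}^{V^{(1)}}\prod_{j>\ell}V_j^{(1)\top}\Sigma_{j,s}^{V^{(1)}}\bigr)V_{L+1}^{(1)\top}$ has norm $\Theta(\sqrt{p})$, so $\lv\nabla_{V_\ell}f_{V^{(1)}}(x_s)\rv=O(\sqrt{p})$ uniformly. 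To extend to $V\in\cB(V^{(1)},\tau)$, I would invoke Lemmas~\ref{l:u.x.bounds} and~\ref{l:phi.prime.close} to control the layerwise change of $x_{\ell,s}$, $u_{\ell,s}$ and $\Sigma_{\ell,s}$; the assumptions $\tau\le c/(L^{12}\log^{3/2}p)$ and $h\le\tau/\sqrt{p}$ keep each per-layer perturbation factor close to $1$, so telescoping through $L$ layers costs at most an $L^2$ multiplicative factor, yielding $\Gamma(V^{(1)},\tau)=O(\sqrt{p}L^2)$.

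For part~(a), I would start from the integral form of the Taylor remainder:
\begin{align*}
    f_{\hV}(x_s)-f_{\tV}(x_s)-\nabla f_{\tV}(x_s)\cdot(\hV-\tV) = \int_0^1\bigl(\nabla f_{V(t)}(x_s)-\nabla f_{\tV}(x_s)\bigr)\cdot(\hV-\tV)\,dt,
\end{align*}
where $V(t)=(1-t)\tV+t\hV$. By Cauchy--Schwarz this quantity is at most $2\tau$ times $\sup_{V_1,V_2\in\cB(V^{(1)},\tau)}\lv\nabla f_{V_1}(x_s)-\nabla f_{V_2}(x_s)\rv$. A layer-by-layer telescoping decomposition, in the style of Lemma~\ref{l:nabla.smooth.one.layer}, splits this difference into three kinds of contributions: changes in the weight matrices $V_j$, in the forward activations $x_{\ell-1,s}^V$, and in the diagonal factors $\Sigma_{j,s}^V$. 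The first two are easily absorbed using the bounds from part~(b) and yield only an $O(\sqrt{p}L^2\tau)$ contribution, which upon multiplication by the outer $\tau$ gives a subleading $O(\sqrt{p}L^2\tau^2)$ term; the dominant contribution is the $\Sigma_{j,s}$ term, where a direct application of Lemma~\ref{l:phi.prime.close} would produce a $1/h$ blowup that, together with $h\le\tau/\sqrt p$, yields only the unacceptable $O(\sqrt{p}\,\tau)$ scaling.

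The main obstacle is to sharpen this last term into $\tau^{4/3}$. The plan is to use Gaussian anti-concentration at the initialization: under the event from part~(b) the pre-activations $u_{j,s}^{V^{(1)}}[k]$ are approximately $\cN(0,\Theta(1))$, so for any window-width $\xi>0$ only $O\bigl(p(h+\xi)\sqrt{\log(nLp/\delta)}\bigr)$ of the $p$ coordinates $k$ lie in the transition region $[-\xi,h+\xi]$; outside this region, $\phi'$ is locally constant and unchanged by perturbations of size $\le\xi$. Under a perturbation of size $\tau$ in the ball, only those coordinates inside the window can produce a nonzero entry of $\Sigma_{j,s}^{V_1}-\Sigma_{j,s}^{V_2}$, and each such entry is bounded by $O(\tau/h)$. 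Balancing the resulting Frobenius-norm contribution $\sqrt{\#\{\text{coords in window}\}}\cdot(\text{per-coordinate change})$ by the choice $\xi\asymp\tau^{1/3}$ (which absorbs $h$ once $h\le\tau/\sqrt p$ is applied), and propagating these refined estimates through all $L$ telescoping layers using the operator-norm and forward-norm bounds from part~(b), gives an effective Lipschitz constant of order $\sqrt{p\log p}\,L^5\,\tau^{1/3}$ for $V\mapsto\nabla f_V(x_s)$. Multiplying by the outer $\tau$ from the Taylor remainder produces the advertised bound $O\bigl(\sqrt{p\log p}\,L^5\,\tau^{4/3}\bigr)$. The lower bound $\tau=\Omega\bigl(\log^2(nL/\delta)/(p^{3/2}L^3)\bigr)$ is exactly what ensures that the anti-concentration window is wide enough for the Gaussian tail estimate and the union bounds over $(j,s,k)$ to remain tight.
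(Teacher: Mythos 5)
Your high-level instinct for part~(a) --- that the $1/h$ Lipschitz blow-up must be replaced by an anti-concentration argument about how many coordinates of $\Sigma_{\ell,s}$ can change --- is indeed the heart of the paper's proof, but the way you quantify and exploit it has several genuine gaps. First, the pre-activations $u_{\ell,s}^{V^{(1)}}[k]=V^{(1)}_{\ell,k}x_{\ell-1,s}^{V^{(1)}}$ are approximately $\cN(0,\Theta(1/p))$ (entries of $V^{(1)}_\ell$ have variance $2/p$ and $\lv x_{\ell-1,s}^{V^{(1)}}\rv=\Theta(1)$), not $\cN(0,\Theta(1))$, so the count of coordinates in a window of width $\xi$ is $\Theta(p^{3/2}\xi)$, not $\Theta(p\xi)$. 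Second, ``only coordinates inside the window can flip'' is false: the perturbation of the pre-activation vector is controlled only in aggregate Euclidean norm (itself $O(L^3\tau)$, which must be proved by a joint induction with the sparsity claim), so individual coordinates outside the window can flip; one must add a Markov-type count of at most $O\bigl((L^3\tau/\beta)^2\bigr)$ such coordinates and balance it against the window count $O(p^{3/2}\beta)$. That balancing forces $\beta\asymp L^2\tau^{2/3}/\sqrt p$ and a flip count $O(pL^2\tau^{2/3})$; your choice $\xi\asymp\tau^{1/3}$ with the wrong density does not yield the stated $\tau^{4/3}$. Third, sparsity alone buys nothing unless you can show that the back-propagated vector restricted to any $k$-sparse set has norm $O(\sqrt{k\log p})$ rather than the trivial $\Theta(\sqrt p)$; this restricted-norm estimate (proved in the paper by sparse $\epsilon$-net arguments at initialization) is the ingredient that converts the flip count $k=O(pL^2\tau^{2/3})$ into the factor $\sqrt{p\log p}\,L\,\tau^{1/3}$, and it is absent from your plan. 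Your per-entry bound $O(\tau/h)$ is also both unusable (with $h\le\tau/\sqrt p$ it exceeds $1$) and unnecessary, since $|\phi'|\le 1$ caps each entry.

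Part~(b) has a similar structural problem. The uniform bound $\lv V_{L+1}^{(1)}\Sigma_{L,s}\cdots\Sigma_{\ell,s}V_{\ell}^{(1)}\rv_{op}=O(\sqrt p L)$ cannot be obtained from $\lv x_{\ell,s}^{V^{(1)}}\rv=\Theta(1)$ plus per-layer operator norms: each $\lv V^{(1)}_j\rv_{op}$ is a constant strictly larger than $1$, so multiplying gives $C^L$, and the forward norm-preservation only concerns the product applied to the specific input $x_s$, not to arbitrary directions. The paper needs a separate concentration-plus-net argument over products interleaved with the data-dependent $\Sigma$'s. Likewise, your extension from $\Gamma(V^{(1)},0)$ to $\Gamma(V^{(1)},\tau)$ via Lemma~\ref{l:phi.prime.close} fails for exactly the reason you flagged in part~(a): that lemma carries a $1/h$ factor, which with $h\le\tau/\sqrt p$ makes the per-layer $\Sigma$ perturbation bound of order at least $\sqrt p$, not ``close to $1$.'' The correct route is the one the paper takes: show the change in each $\Sigma_{\ell,s}$ is $O(pL^2\tau^{2/3})$-sparse with bounded entries and prove an $O(L^2)$ operator-norm bound for the perturbed interleaved products directly via the sparse restricted-norm estimates, then read off both $\Gamma(V^{(1)},\tau)=O(\sqrt p L^2)$ and, through an exact telescoping identity for $f_{\hV}-f_{\tV}-\nabla f_{\tV}\cdot(\hV-\tV)$, the $O(\sqrt{p\log p}\,L^5\tau^{4/3})$ approximation error.
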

Having provided a bound on the approximation error, let us continue and show that gradient descent reaches a weight matrix whose error is comparable to $\epsilon_{\mathsf{NT}}$.
\begin{lemma}
\label{l:general_NTK_descent_lemma}For any $L \in \N$, $\delta>0$, $$\tau = \Omega\left(\frac{\log^2\left(\frac{nL}{\delta} \right)}{p^{\frac{3}{2}}L^3}\right) \quad \text{and} \quad \tau \le \frac{c}{(p\log(p))^{\frac{3}{8}}L^{\frac{15}{4}}},$$  where $c$ is a small enough positive constant,
$\rho= \frac{\tau}{3L}$, $h \le \frac{\tau}{\sqrt{p}}$,
 and $p \ge \poly\left(L,\log\left(\frac{n}{\delta}\right)\right)$ for a large enough polynomial, if we run gradient descent with a constant step-size $\alpha_t =\alpha = \Theta\left(\frac{1}{pL^5}\right)$, 
for 
$T= \ceil{\frac{(L+1)\rho^2}{4\alpha \cdot \epsilon_{\mathsf{NT}}(V^{(1)},\rho)}}$ iterations,
 with probability $1-\delta$ over the random initialization
\begin{align*}
    \min_{t \in [T]} J(V^{(t)})\le 6\epsilon_{\mathsf{NT}}(V^{(1)},\rho).
\end{align*}
\end{lemma}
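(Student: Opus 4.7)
The plan is to verify the hypotheses of Lemma~\ref{l:average_loss_decrease} under the stated parameter choices and then to use the elementary fact that $\min_{t\in[T]} J(V^{(t)})\le \tfrac{1}{T}\sum_{t=1}^{T} J(V^{(t)})$. Under the stated ranges for $\tau$, $h$, and $p$, Lemma~\ref{l:ntk_app_error} yields, with probability at least $1-\delta$ over the random initialization, the bounds $\Gamma(V^{(1)},\tau)=O(\sqrt{p}\,L^{2})$ and $\epsilon_{\mathsf{app}}(V^{(1)},\tau)=O(\sqrt{p\log p}\,L^{5}\tau^{4/3})$. The upper bound on $\tau$ is calibrated so that, with the constant $c$ chosen small enough, $\epsilon_{\mathsf{app}}(V^{(1)},\tau)\le \tfrac{1}{16}$, and hence $\tfrac{3}{2}-4\epsilon_{\mathsf{app}}(V^{(1)},\tau)\ge \tfrac{5}{4}$. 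The prescribed step-size $\alpha=\Theta(1/(pL^{5}))$ then matches the required form $\Theta(1/(L\Gamma(V^{(1)},\tau)^{2}))$, and the minimizer $V^{\star}\in \cB(V^{(1)},\rho)$ automatically belongs to $\cB(V^{(1)},\tau)$ because $\rho=\tau/(3L)<\tau$.

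The main obstacle is to verify that every iterate $V^{(t)}$, $t\in\{1,\dots,T+1\}$, lies inside $\cB(V^{(1)},\tau)$, which is a standing hypothesis of Lemma~\ref{l:average_loss_decrease}. I would establish this by induction on $t$. Assuming $V^{(1)},\dots,V^{(t)}\in \cB(V^{(1)},\tau)$, the three-point/convex-surrogate identity that drives the proof of Lemma~\ref{l:average_loss_decrease} (using the local convexity of $F_{V^{(1)},\,\cdot}$ together with the approximation error $\epsilon_{\mathsf{app}}$) yields, after telescoping,
\[
\lv V^{(t+1)}-V^{\star}\rv^{2} \le \lv V^{(1)}-V^{\star}\rv^{2} + 2t\alpha\,\epsilon_{\mathsf{NT}}(V^{(1)},\rho).
\]
Using $\lv V^{(1)}-V^{\star}\rv^{2}\le (L+1)\rho^{2}$ and noting $T\alpha\,\epsilon_{\mathsf{NT}}(V^{(1)},\rho)\le (L+1)\rho^{2}/4$ by the choice of $T$, I would conclude $\lv V^{(t+1)}-V^{\star}\rv^{2}\le \tfrac{3}{2}(L+1)\rho^{2}$. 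A triangle inequality then gives
\[
\max_{\ell}\lv V^{(t+1)}_{\ell}-V^{(1)}_{\ell}\rv \le \lv V^{(t+1)}-V^{(1)}\rv \le \lv V^{(t+1)}-V^{\star}\rv + \lv V^{\star}-V^{(1)}\rv = O\bigl(\sqrt{L+1}\,\rho\bigr) \le \tau,
\]
where the last inequality uses $\rho=\tau/(3L)$ and $L\ge 1$. Pinning down the absolute constants so that this chain closes with enough slack to maintain the induction is the most delicate part of the argument.

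With the containment in hand, Lemma~\ref{l:average_loss_decrease} applies and produces
\[
\tfrac{1}{T}\sum_{t=1}^{T} J(V^{(t)}) \le \frac{(L+1)\rho^{2}}{T\alpha\left(\tfrac{3}{2}-4\epsilon_{\mathsf{app}}(V^{(1)},\tau)\right)} + \frac{2\,\epsilon_{\mathsf{NT}}(V^{(1)},\rho)}{\tfrac{3}{2}-4\epsilon_{\mathsf{app}}(V^{(1)},\tau)}.
\]
By the choice $T=\lceil (L+1)\rho^{2}/(4\alpha\,\epsilon_{\mathsf{NT}}(V^{(1)},\rho))\rceil$, the first term is at most $\tfrac{16}{5}\epsilon_{\mathsf{NT}}(V^{(1)},\rho)$, and the second is at most $\tfrac{8}{5}\epsilon_{\mathsf{NT}}(V^{(1)},\rho)$, so the average is bounded by $\tfrac{24}{5}\epsilon_{\mathsf{NT}}(V^{(1)},\rho)<6\,\epsilon_{\mathsf{NT}}(V^{(1)},\rho)$. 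Since the minimum of a nonnegative sequence is at most its average, the claimed bound on $\min_{t\in[T]}J(V^{(t)})$ follows.
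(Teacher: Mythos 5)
Your proposal follows essentially the same route as the paper: invoke Lemma~\ref{l:ntk_app_error} for the bounds on $\epsilon_{\mathsf{app}}$ and $\Gamma$, prove by induction (via the telescoped distance bound implicit in Lemma~\ref{l:average_loss_decrease} and the choice of $T$) that all iterates and $V^{\star}$ stay in $\cB(V^{(1)},\tau)$, then apply Lemma~\ref{l:average_loss_decrease} once more and use that the minimum is at most the average; your final constants ($\tfrac{24}{5}<6$) check out.

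The one place your chain as written does not close is the containment step, which you yourself flag as delicate. You bound $\max_{\ell}\lv V^{(t+1)}_{\ell}-V^{(1)}_{\ell}\rv$ by the \emph{collective} norms $\lv V^{(t+1)}-V^{\star}\rv+\lv V^{\star}-V^{(1)}\rv\le \bigl(\sqrt{3(L+1)/2}+\sqrt{L+1}\bigr)\rho$, and for $L=1$ this is $(\sqrt{3}+\sqrt{2})\rho\approx 3.15\rho>3L\rho=\tau$, so the induction does not go through at $L=1$ with these constants. Since $\cB(V^{(1)},\tau)$ is a per-layer condition and $V^{\star}\in\cB(V^{(1)},\rho)$ gives the per-layer bound $\lv V^{\star}_{\ell}-V^{(1)}_{\ell}\rv\le\rho$ (not $\sqrt{L+1}\rho$), the paper instead applies the triangle inequality layer by layer: $\lv V^{(t)}_{\ell}-V^{(1)}_{\ell}\rv\le\lv V^{(t)}_{\ell}-V^{\star}_{\ell}\rv+\lv V^{\star}_{\ell}-V^{(1)}_{\ell}\rv\le\sqrt{3L}\rho+\rho<3L\rho=\tau$, which holds for all $L\ge 1$. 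Replacing your global triangle inequality with this per-layer version closes the argument; everything else in your write-up matches the paper's proof.
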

Our proof closely follows the proof of \citep[][Theorem~3.3]{chen2021much}.
\begin{proof} Recall the definition of
\begin{align*}
     V^{\star} \in \argmin_{V \in  \cB(V^{(1)},\rho)} \frac{1}{n}\sum_{s=1}^n \log(1+\exp(-y_s F_{V^{(1)},V}(x_s))).
\end{align*}
We would like to apply Lemma~\ref{l:average_loss_decrease} to show that the average loss of the iterates of gradient descent decreases. To do so we must first ensure that all iterates $V^{(t)}$ and $V^{\star}$ remain in a ball of radius $\tau$ around initialization. 

We have assumed that $\tau \le \frac{c}{(p\log(p))^{\frac{3}{8}}L^{\frac{15}{4}}}$ and that $p \ge \poly\left(L,\log\left(\frac{n}{\delta}\right)\right)$ for a large enough polynomial. Therefore if this polynomial is large enough we have that $\tau \le \frac{c_1}{L^{12}\log^{\frac{3}{2}}(p)}$,
for an arbitrarily small positive constant
$c_1$.  This means we can invoke Lemma~\ref{l:ntk_app_error} which guarantees that with probability at least $1-\delta$, the approximation error $\epsilon_{\mathsf{app}}(V^{(1)},\tau)\le O(\sqrt{p \log(p)}L^5 \tau^{4/3})$ and the maximum norm of the gradient $\Gamma(V^{(1)},\tau) \le O(\sqrt{p}L^2)$. Again recall that $\tau \le \frac{c}{(p\log(p))^{\frac{3}{8}}L^{\frac{15}{4}}}$, where $c$ is a small enough positive constant. Thus for a small enough value of $c$ the approximation error $\epsilon_{\mathsf{app}}(V^{(1)},\tau) \le \frac{1}{8}$. Let us assume that this is the case going forward.

Since $\rho = \frac{\tau}{3L} \le \tau$, $V^{\star}$ is clearly in  $\cB(V^{(1)},\tau)$. We will now show that the iterates $\{V^{(t)}\}_{t \in [T]}$ also lie in this ball by induction. The base case when $t = 1$ is trivially true. So now assume that $V^{(1)},\ldots,V^{(t-1)}$ lie in this ball and we will proceed to show that $V^{(t)}$ also lies in this ball. Since $\epsilon_{\mathsf{app}}(V^{(1)},\tau)\le 1/8$, by Lemma~\ref{l:average_loss_decrease} we infer that
\begin{align*}
 \frac{1}{t-1} \sum_{t'=1}^{t-1} J(V^{(t')}) \le \frac{\lv V^{(1)}-V^{\star} \rv^2-\lv V^{(t)}-V^{\star} \rv^2+2(t-1)\alpha \epsilon_{\mathsf{NT}}(V^{(1)},\rho)}{(t-1)\alpha },
\end{align*}
which in turn implies that
\begin{align*}
     \sum_{\ell \in [L+1]}\lv V^{(t)}_{\ell}-V^{\star}_{\ell}\rv^2  = \lv V^{(t)}-V^{\star}\rv^2 &\le \lv V^{(1)}-V^{\star}\rv^2+2\alpha (t-1) \epsilon_{\mathsf{NT}}(V^{(1)},\rho)-\alpha \sum_{t'=1}^{t-1} J(V^{(t')}) \\
     &\le \lv V^{(1)}-V^{\star}\rv^2+2\alpha (t-1) \epsilon_{\mathsf{NT}}(V^{(1)},\rho)\\
     &\overset{(i)}\le (L+1)\rho^2 + \frac{(L+1)\rho^2}{2} \le \frac{3(L+1)\rho^2}{2}\le 3L\rho^2
\end{align*}
where $(i)$ follows since 
$V^* \in \cB(V^{(1)}, \rho)$ and
$t\le T = \ceil{\frac{(L+1)\rho^2}{4\alpha \epsilon_{\mathsf{NT}}(V^{(1)},\rho)}}$. Taking square roots implies that for each $\ell\in [L+1]$, $\lv V_{\ell}^{(t)} - V_{\ell}^{\star}\rv \le \sqrt{3L}\rho$. By the triangle inequality for any $\ell \in [L+1]$
\begin{align*}
    \lv V^{(t)}_{\ell}-V^{(1)}_{\ell}\rv &\le \lv V^{(t)}_{\ell}-V^{\star}_{\ell}\rv+\lv V^{\star}_{\ell}-V^{(1)}_{\ell}\rv  \le \sqrt{3L}\rho+\rho < 3L\rho = \tau.
\end{align*}
This shows that $V_{\ell}^{(t)} \in \cB(V^{(1)},\tau)$ and completes the induction. 

Now that we have established that $V^{\star}$ and $V^{(t)}$ are all in a ball of radius $\tau$ around $V^{(1)}$ we can again invoke Lemma~\ref{l:average_loss_decrease} (recall from above that $\epsilon_{\mathsf{app}}(V^{(1)},\tau)\le \frac{1}{8}$ and $\Gamma(V^{(1)},\tau)\le O(\sqrt{p}L^2)$) to infer that
\begin{align*}
   \min_{t \in [T]} J(V^{(t)}) \le \frac{1}{T}\sum_{t=1}^T J(V^{(t)}) &\le \frac{\lv V^{(1)}-V^{\star}\rv^2-\lv V^{(T+1)}-V^{\star}\rv^2+2T\alpha \epsilon_{\mathsf{NT}}(V^{(1)},\rho)}{T\alpha}\\
    &\le \frac{\lv V^{(1)}-V^{\star}\rv^2+2T\alpha \epsilon_{\mathsf{NT}}(V^{(1)},\rho)}{T\alpha}\\
    &= 2\epsilon_{\mathsf{NT}}(V^{(1)},\rho)+ \frac{\lv V^{(1)}-V^{\star}\rv^2}{T\alpha}\le 6\epsilon_{\mathsf{NT}}(V^{(1)},\rho),
\end{align*}
where the last inequality follows since $V^{\star} \in \cB(V^{(1)},\rho)$, therefore $\lv V^{(1)}-V^{\star}\rv^2 \le (L+1)\rho^2$ and because $T= \ceil{\frac{(L+1)\rho^2}{4\alpha \epsilon_{\mathsf{NT}}(V^{(1)},\rho)}}$.
This completes our proof.
\end{proof}

Finally we shall show that under Assumption~\ref{as:frac_margin} the error $\epsilon_{\mathsf{NT}}(V^{(1)},\rho)$ is bounded with high probability. Recall the assumption on the data.
\fracmargin*
\begin{lemma}
\label{l:error_NTK_eror_frac_margin}
Under the Assumption~\ref{as:frac_margin}, for any $\epsilon,\delta>0$, if the radius $$\rho \ge \frac{c\left[ \sqrt{\log(n/\delta)}+\log\left(\frac{1}{\exp(\epsilon)-1}\right)\right]}{\sqrt{p}\gamma}$$ for some large enough positive absolute constant $c$ then, with probability $1-2\delta$ over the randomness in the initialization 
\begin{align*}
    \epsilon_{\mathsf{NT}}(V^{(1)},\rho)= \min_{V \in  \cB(V^{(1)},\rho)} \frac{1}{n}\sum_{s=1}^n \log(1+\exp(-y_s F_{V^{(1)},V}(x_s))) \le \epsilon.
\end{align*}
\end{lemma}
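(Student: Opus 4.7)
The plan is to exhibit an explicit feasible point for the minimization defining $\epsilon_{\mathsf{NT}}(V^{(1)},\rho)$, built directly from the witness $W^{\star}$ of Assumption~\ref{as:frac_margin}. Condition first on the event of probability at least $1-\delta$ that such a $W^{\star}$ exists with $\lv W^{\star}\rv=1$ and $y_s(\nabla f_{V^{(1)}}(x_s)\cdot W^{\star})\ge \sqrt{p}\gamma$ for every $s\in[n]$. Set $V:=V^{(1)}+\rho\, W^{\star}$. Since $\lv W_\ell^{\star}\rv\le \lv W^{\star}\rv = 1$ for each layer $\ell\in[L+1]$, we have $\max_\ell \lv V_\ell - V_\ell^{(1)}\rv\le \rho$, so $V\in\cB(V^{(1)},\rho)$. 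Plugging this choice into the NT function and using the margin bound layer by layer yields
\[
y_s F_{V^{(1)},V}(x_s)\;=\;y_s f_{V^{(1)}}(x_s)+\rho\, y_s\bigl(\nabla f_{V^{(1)}}(x_s)\cdot W^{\star}\bigr)\;\ge\;y_s f_{V^{(1)}}(x_s)+\rho\sqrt{p}\gamma.
\]

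To convert this into a per-example loss bound I need control of $|f_{V^{(1)}}(x_s)|$ over the random initialization. Because $\phi$ is $1$-Lipschitz with $\phi(0)=0$, one has $\lv x_{L,s}^{V^{(1)}}\rv\le\prod_{\ell=1}^{L}\lv V_\ell^{(1)}\rv_{op}$, and standard Gaussian matrix concentration applied to weight matrices with entries $\cN(0,2/p)$ gives that each $\lv V_\ell^{(1)}\rv_{op}$ is $O(1)$ simultaneously for all $\ell\in[L]$ on an event of probability $1-\delta/2$ once $p$ is polylogarithmically large in $L$ and $1/\delta$ (these bounds are already present among the NTK auxiliary lemmas in Appendix~\ref{a:lemmas_ntk_approx_bound_aid}). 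Conditional on the earlier layers, $f_{V^{(1)}}(x_s)=V_{L+1}^{(1)} x_{L,s}^{V^{(1)}}$ is a scalar Gaussian with variance $\lv x_{L,s}^{V^{(1)}}\rv^2=O(1)$, so a Gaussian tail bound together with a union bound over the $n$ samples delivers $\max_{s\in[n]} |f_{V^{(1)}}(x_s)|\le c_1\sqrt{\log(n/\delta)}$ with probability $1-\delta$.

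Combining these two events by a union bound, on an event of probability at least $1-2\delta$ we have, for all $s$,
\[
y_s F_{V^{(1)},V}(x_s)\;\ge\;\rho\sqrt{p}\gamma - c_1\sqrt{\log(n/\delta)}\;\ge\;\log\!\bigl(1/(\exp(\epsilon)-1)\bigr),
\]
where the second inequality holds whenever $c$ in the lemma statement is chosen at least $c_1+1$. Inverting the logistic loss gives $\log(1+\exp(-y_s F_{V^{(1)},V}(x_s)))\le \epsilon$ per sample, hence the empirical average is at most $\epsilon$, and by feasibility of $V$ this bounds $\epsilon_{\mathsf{NT}}(V^{(1)},\rho)$ by $\epsilon$. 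The main obstacle is the high-probability bound on $|f_{V^{(1)}}(x_s)|$ with the correct scaling for the Huberized ReLU and the layer-wise variances $2/p$; this reduces to a conditional Gaussian tail argument together with operator-norm concentration of each $V_\ell^{(1)}$, both standard ingredients that the paper's NTK appendix should already supply.
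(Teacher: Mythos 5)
Your construction is essentially the paper's: condition on the margin event from Assumption~\ref{as:frac_margin} and on a high-probability bound $\max_s |f_{V^{(1)}}(x_s)| \le c_1\sqrt{\log(n/\delta)}$, take the feasible point $V^{(1)} + (\text{scale})\,W^{\star}$, lower bound $y_s F_{V^{(1)},V}(x_s)$ by the margin term minus $c_1\sqrt{\log(n/\delta)}$, and invert the logistic loss to get a per-sample loss of at most $\epsilon$. The only structural difference is cosmetic: you scale $W^{\star}$ by the full radius $\rho$ (making feasibility in $\cB(V^{(1)},\rho)$ immediate and absorbing the slack into the condition $c \ge c_1+1$), whereas the paper scales by $\lambda = \bigl(c_1\sqrt{\log(n/\delta)}+\log(1/(\exp(\epsilon)-1))\bigr)/(\sqrt{p}\gamma) \le \rho$; both work, and both share the same implicit restriction to the regime where $\log(1/(\exp(\epsilon)-1)) \ge 0$.

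The one substantive flaw is your justification of the initialization bound $|f_{V^{(1)}}(x_s)| \le c_1\sqrt{\log(n/\delta)}$. Bounding $\lv x_{L,s}^{V^{(1)}}\rv \le \prod_{\ell=1}^{L}\lv V_\ell^{(1)}\rv_{op}$ and then using $\lv V_\ell^{(1)}\rv_{op}=O(1)$ does not give conditional variance $O(1)$: for Gaussian matrices with entries $\cN(0,2/p)$ the operator norm concentrates near a constant strictly larger than $1$ (about $2\sqrt{2}$), so the product is $C^{L}$ and your tail bound would read $C^{L}\sqrt{\log(n/\delta)}$, which would force the constant $c$ in the lemma to depend on $L$ and is too weak for how the lemma is used in Theorem~\ref{t:theorem_frac_margin}. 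The correct ingredient, which the paper uses (Lemma~\ref{l:bound_on_magnitude_of_function}, resting on Part~(a) of Lemma~\ref{l:conc_main_lemma}), is the layer-wise norm-preservation concentration $\lv x_{\ell,s}^{V^{(1)}}\rv \in [9/10,11/10]$ for all $\ell$ and $s$, which gives $\lv x_{L,s}^{V^{(1)}}\rv \le 2$ and hence a conditional Gaussian with variance $O(1)$; this (like your argument) also carries the standing requirement $p \ge \poly\bigl(L,\log(n/\delta)\bigr)$. Replacing your operator-norm-product step with a citation of that lemma makes the proposal complete.
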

\begin{proof} 
Recall that, by definition,
\[
F_{V^{(1)},V}(x)
 = f_{V^{(1)}}(x) + \left( \nabla f_{V^{(1)}}(x)\right)\cdot (V-V^{(1)}).
\]
By Assumption~\ref{as:frac_margin} we know that,
with probability $1 - \delta$, 
there exists 
$W^{\star}$ with $\lv W^{\star}\rv=1$, such that for all $s \in [n]$
\begin{align} \label{e:good_event_assumption_ntk_1}
    y_i \left(\nabla f_{V^{(1)}}(x_s)\cdot W^{\star} \right)\ge \sqrt{p}\gamma.
\end{align}
By Lemma~\ref{l:bound_on_magnitude_of_function} proved below with know that
\begin{align} \label{e:good_event_assumption_ntk_2}
        \Pr\left[ |f_{V^{(1)}}(x_s)| \le c_1 \sqrt{\log(n/\delta)}  \right] \ge 1-\delta.
\end{align}
For the remainder of the proof let's assume that both events in \eqref{e:good_event_assumption_ntk_1} and \eqref{e:good_event_assumption_ntk_2} occur. This happens with probability at least $1-2\delta$.
Thus, for any positive $\lambda$
\begin{align*}
    y_i \left[f_{V^{(1)}}(x_s)+\lambda \nabla_V f_{V^{(1)}}(x_s)\cdot W^{\star}\right] \ge \lambda \sqrt{p}\gamma - c_1 \sqrt{\log(n/\delta)}.
\end{align*}
Setting $\lambda = \frac{c_1 \sqrt{\log(n/\delta)}+\log\left(\frac{1}{\exp(\epsilon)-1}\right)}{\sqrt{p}\gamma}$ we infer that
\begin{align}
    y_i \left[f_{V^{(1)}}(x_s)+\lambda \left(\nabla_V f_{V^{(1)}}(x_s)\cdot W^{\star}\right)\right] \ge \lambda \sqrt{p}\gamma - c_1 \sqrt{\log(n/\delta)}= \log\left(\frac{1}{\exp(\epsilon)-1}\right). \label{e:ntk_margin_lower_bound}
\end{align}
Set $V = V^{(1)}+\lambda W^{\star}$. The neural tangent kernel function at this weight vector is
\begin{align*}
    F_{V^{(1)},V}(x) = f_V^{(1)}(x) + \nabla_V f_{V^{(1)}}(x)\cdot (V-V^{(1)}) =  f_V^{(1)}(x)+\lambda \nabla_V f_{V^{(1)}}(x)\cdot W^{\star}.
\end{align*}
Thus by using \eqref{e:ntk_margin_lower_bound}
\begin{align*}
    \frac{1}{n}\sum_{s=1}^n \log\left(1+\exp\left(-y_i F_{V^{(1)},V}(x_s)\right)\right) & \le  \frac{1}{n}\sum_{s=1}^n \log\left(1+\exp\left(-\log\left(\frac{1}{\exp(\epsilon)-1}\right)\right)\right) \\
    &\le \epsilon.
\end{align*}
We can conclude that if we choose the radius $\rho \ge \lambda \lv W^{\star}\rv = \lambda = \frac{c_1 \sqrt{\log(n/\delta)}+\log\left(\frac{1}{\exp(\epsilon)-1}\right)}{\sqrt{p}\gamma}$ (since $\lv W^{\star}\rv =1$ by assumption) then there exists a function in the NT function class with training error at most $\epsilon$. This completes our proof.
\end{proof}

\subsection{Proof of Theorem~\ref{t:theorem_frac_margin}}
\ntktheorem*
\begin{proof}\textit{Proof of Part~1:} Define two events 
\begin{align*}
    \cE_{a} := \left\{  \epsilon_{\mathsf{NT}}(V^{(1)},\rho) \le \frac{1}{6n^{2+24L}}\right\} \quad \text{and} \quad
    \cE_{b}:= \left\{\min_{t \in [T]} J(V^{(t)}) \le 6\epsilon_{\mathsf{NT}}(V^{(1},\rho)\right\}.
\end{align*}
We will show that the $\cE_1 := \cE_a \cap \cE_b$ occurs with probability at least $1-3\delta$. That is,
\begin{align}
    \Pr\left[\cE_1 = \left\{\min_{t \in [T]} J(V^{(t)}) \le \frac{1}{n^{2+24L}} \right\}\right] \ge 1-3\delta. \label{e:definition_event_1_ntk_theorem}
\end{align}
The value of $\rho$ is set to be (this was done in equation~\eqref{e:definition_rho_radius})
\begin{align*}
\rho &= \frac{c_1}{\sqrt{p}\gamma} \mleft[\sqrt{\log\left(\frac{n}{\delta}\right)}+\log\mleft(6n^{(2+24L)}\mright)\mright]\\
& > \frac{c_1}{\sqrt{p}\gamma} \mleft[\sqrt{\log\left(\frac{n}{\delta}\right)}+\log\mleft(\frac{1}{\exp\left(\frac{1}{6n^{(2+24L)}}\right)-1}\mright)\mright] \qquad \mbox{(since $e^z \le 1+2z$ when $z \in [0,1]$).}
\end{align*}
With this choice of $\rho$, since $c_1$ is a large enough absolute constant, Lemma~\ref{l:error_NTK_eror_frac_margin} guarantees that 
\begin{align}
\Pr\left[\cE_{a}\right] \ge 1-2\delta ,   \label{e:ntk_error_theorem_proof_ref}
\end{align}
where the probability is over the randomness in the initialization. Continue by setting
\begin{align*}
    \tau = 3L\rho =  \frac{3c_1L}{\sqrt{p}\gamma} \mleft[\sqrt{\log\left(\frac{n}{\delta}\right)}+\log\mleft(6n^{(2+24L)}\mright)\mright].
\end{align*}
Since $p \ge \poly\left(L,\log\left(\frac{n}{\delta}\right)\right)/\gamma^2$ for a large enough polynomial it is guaranteed that 
\begin{align*}
    \tau = \Omega\left(\frac{\log^2\left(\frac{nL}{\delta} \right)}{p^{\frac{3}{2}}L^3}\right) \quad \text{and} \quad \tau \le \frac{c_3}{(p\log(p))^{\frac{3}{8}}L^{\frac{15}{4}}}
\end{align*}
where $c_3$ is the positive absolute constant from the statement of Lemma~\ref{l:general_NTK_descent_lemma}.
Also recall the value of $h=h_{\mathsf{NT}}$ from equation~\eqref{e:huberized_relu_h_def}
\begin{align*}
    h = h_{\mathsf{NT}}=\frac{(1+24L)\log(n)}{6(6p)^{\frac{L+1}{2}}L^{3}} \overset{(i)}{\le} \frac{3c_1L \left[\sqrt{\log(n/\delta)}+\log\left(6n^{(2+24L)}\right)\right]}{p\gamma} = \frac{\tau}{\sqrt{p}},
\end{align*}
where $(i)$ follows since $\gamma \in (0,1]$ by assumption and because $p$ is large enough. Under these choices of $\tau$ and $h$ along with the choice of the step-size $\alpha_t = \Theta\left(\frac{1}{pL^5}\right)$, and number of steps $T$, Lemma~\ref{l:general_NTK_descent_lemma} guarantees that
\begin{align} \label{e:ntk_error_small_optimization_ref}
   \Pr\left[ \cE_{b} \right]\ge 1-\delta.
\end{align}
A union bound over the events \eqref{e:ntk_error_theorem_proof_ref} and \eqref{e:ntk_error_small_optimization_ref} proves the Claim~\eqref{e:definition_event_1_ntk_theorem}, which completes the proof of this first part.

\textit{Proof of Part~2:} To prove this part of the lemma, we will invoke Theorem~\ref{t:main_theorem} to guarantee that the loss decreases in the steps $t \in \{T+1,\ldots \}$. We defined $V^{(T+1)} = V^{(s)}$, where $s \in \argmin_{t \in [T]}J(V^{(t)})$, thus we are guaranteed to have $J(V^{(T+1)})\le \frac{1}{n^{2+24L}}< \frac{1}{n^{1+24L}}$, if event $\cE_1$ defined above occurs. Define another event
\begin{align*}
    \cE_2 :=\left\{\lv V^{(1)}\rv \le \sqrt{5pL}  \right\}.
\end{align*}
Lemma~\ref{l:bound_on_norm_of_matrix} guarantees that $\Pr\left[\cE_2\right]\ge 1-\delta$. Define the ``good event'' $\cE:=\cE_1 \cap \cE_2$. A simple union bound shows that
\begin{align*}
    \Pr[\cE] \ge 1-4\delta.
\end{align*}
Assume that this event $\cE$ occurs for the remainder of this proof. This also establishes that the success probability of gradient descent is at least $1-4\delta$ as mentioned in the theorem statement.

To invoke Theorem~\ref{t:main_theorem} we need to ensure that $h<h_{\max}$. 
Recall that, 
in equation~\eqref{e:h_max_def}, we defined
\begin{align*}
     h_{\max} &:= \min\left\{\frac{L^{\frac{L}{2}-3}\log(1/J_{T+1})}{24\sqrt{p}\lv V^{(T+1) }\rv^{L}},1\right\}.
\end{align*}
 For all $\ell \in [L+1]$, $\lv V^{(T+1)}_{\ell} - V^{(1)}\rv \le \tau$ (this fact is implicit in the proof of Lemma~\ref{l:general_NTK_descent_lemma}). By the triangle inequality
\begin{align*}
    \lv V^{(T+1)}\rv \le \lv V^{(1)}\rv+\lv V^{(T+1)}-V^{(1)}\rv \le \sqrt{5pL}+\sqrt{L+1}\tau \le \sqrt{6pL}
\end{align*}
by the choice of $\tau$ above and since $p \ge \frac{\poly\left(L,\log\left(\frac{n}{\delta}\right)\right)}{\gamma^2}$ for a large enough polynomial. This means that
\begin{align*}
   h =  h_{\mathsf{NT}} &=\frac{(1+24L) \log(n)}{6^{\frac{L+3}{2}}p^{\frac{L+1}{2}}L^3}\le \frac{L^{\frac{L}{2}-3}(1+24L)\log(n)}{24\sqrt{p}(\sqrt{6pL})^{L}}\le h_{\max} .
\end{align*}
Thus, our choice of $h$ is valid. In this second stage the step-size is chosen to be
\begin{align*}
    \alpha_{\max}(h)  &= \min\left\{\frac{h}{1024\left( L+1\right)^2pJ_{T+1}\lv V^{(T+1)}\rv^{3L+5}}, \frac{(L+\frac{1}{2})\lv V^{(T+1)} \rv^2}{2L(L+\frac{3}{4})^2J_{T+1} \log^{2/L}(1/J_{T+1})}\right\} \\
    &=\frac{h}{1024\left( L+1\right)^2pJ_{T+1}\lv V^{(T+1)}\rv^{3L+5}},
\end{align*}
where the first term of the minima wins out above by our choice of $h$ and because $\lv V^{(T+1)}\rv \le \sqrt{6pL}$.
Thus Theorem~\ref{t:main_theorem} guarantees that
\begin{align*}
    J(V^{(t)}) \le \frac{J(V^{(T+1)})}{\widetilde{Q}(\alpha_{\max}(h))\cdot(t-T-1)+1},
\end{align*}
where $ \widetilde{Q}(\cdot)$ was defined in equation~\eqref{eq:Q_def}. Thus,
\begin{align*}
     \widetilde{Q}(\alpha_{\max}(h)) &= \frac{L(L+\frac{3}{4})^2\alpha_{\max}(h) J_{T+1}\log^{2/L}(1/J_{T+1})}{(L+\frac{1}{2})\lv V^{(T+1)}\rv^2} \\
    & = \frac{L(L+\frac{3}{4})^2 J_{T+1}\log^{2/L}(1/J_{T+1})}{(L+\frac{1}{2})\lv V^{(T+1)}\rv^2} \times  \frac{h}{1024\left( L+1\right)^2pJ_{T+1}\lv V^{(T+1)}\rv^{3L+5}}\\
    &= \frac{h L(L+\frac{3}{4})^2}{1024 (L+1)^2(L+\frac{1}{2})p \lv V^{(T+1)}\rv^{3L+7}}\\
    &\ge \frac{(L+\frac{3}{4})^2\log(n) }{50 L(L+1)^2(L+\frac{1}{2}) (6p)^{\frac{L+3}{2}}(6pL)^{\frac{3L+7}{2}}}\\
    &\ge \frac{\log(n)}{50 L^{\frac{3L+7}{2}}(L+1)^2 (6p)^{2L+5}} \\
    &\ge\frac{1}{50 L^{\frac{3L+7}{2}}(L+1)^2 (6p)^{2L+5}} .
\end{align*}
Thus, for all $t \ge T+1$
\begin{align*}
     J(V^{(t)})& \le \frac{J(V^{(T+1)})}{Q\cdot(t-T-1)+1}\\
      & \le \frac{1}{n^{1+24L}}\frac{1}{Q\cdot(t-T-1)+1} \\
      & \le \frac{1}{n^{1+24L}}\frac{50 L^{\frac{3L+7}{2}}(L+1)^2 (6p)^{2L+5}}{(t-T-1)+50 L^{\frac{3L+7}{2}}(L+1)^2 (6p)^{2L+5}}\\
      &< \frac{50 L^{\frac{3L+7}{2}}(L+1)^2 (6p)^{2L+5}}{n^{1+24L}(t-T-1)} \\
      & = O\left(\frac{L^{\frac{3L+11}{2}}(6p)^{2L+5}}{n^{1+24L}\cdot(t-T-1)}\right),
\end{align*}
this completes the proof.
\end{proof}

\section{Proof of Lemma~\ref{l:ntk_app_error}} \label{a:lemmas_ntk_approx_bound_aid}
In this section we prove Lemma~\ref{l:ntk_app_error} that controls the approximation error $\epsilon_{\mathsf{app}}(V^{(1)},\tau)$ and establishes a bound on the maximum norm of the gradient $\Gamma(V^{(1)},\tau)$ . The proof of this lemma requires analogs of several lemmas from \citep{allen2019convergence,zou2020gradient} adapted to our setting.  In Appendix~\ref{sss:lemmas_ntk_at_init} we prove that several useful properties hold at initialization with high probability. In Appendix~\ref{ss:ntk_lemmas_around_init} we show that some of these properties extend to weight matrices close to initialization and in Appendix~\ref{ss:proof_of_lemma_ntk_approx} we prove Lemma~\ref{l:ntk_app_error}. 

Throughout this section we analyze the initialization scheme described in Section~\ref{ss:ntk_init}. This scheme is as follows: for all $\ell \in [L]$ the entries of $V_{\ell}^{(1)}$ are drawn independently from $\cN\left(0,2/p\right)$ and the entries of $V_{L+1}^{(1)}$ are drawn independently from $\cN\left(0,1\right)$. Again, the results of this appendix apply only to the Huberized ReLU (see definition in \eqref{e:helu}). 
\subsection{Properties at Initialization} \label{sss:lemmas_ntk_at_init}
In the next lemma we show that several useful properties hold with high probability at initialization. 
\begin{lemma}\label{l:conc_main_lemma}
For any $\delta>0$, suppose that $h< \frac{1}{50\sqrt{p}L}$, $p \ge \poly\left(L,\log\left(\frac{n}{\delta}\right)\right)$ for a large enough polynomial and
$\tau = \Omega\left(\frac{\log^{2}(\frac{nL}{\delta})}{p^{\frac{3}{2}}L^{3}}\right)$. Then with probability at least $1-\delta$ over the randomness in $V^{(1)}$ we have the following:
\begin{enumerate}[(a)]
    \item For all $s \in [n]$ and all $\ell \in [L]$: \begin{align*}
    \lv x_{\ell,s}^{V^{(1)}} \rv \in \left[\frac{9}{10},\frac{11}{10}\right].
\end{align*}
\item For all all $\ell \in  [L]$,
         $\lv V^{(1)}_\ell \rv_{op} \le O(1)$,
     and $\lv V^{(1)}_{L+1}\rv \le O(\sqrt{p})$.
\item For all $s \in [n]$ and all $1\le \ell_1 \le \ell_2 \le L$,
     \begin{align*}
         \left\lv V^{(1)}_{\ell_2} \Sigma_{\ell_2-1,s}^{V^{(1)}} \cdots \Sigma_{\ell_1,s}^{V^{(1)}} V_{\ell_1}^{(1)}\right\rv_{op} \le O(L),
     \end{align*}
     and for any $1\le \ell_1 \le L$
     \begin{align*}
         \left\lv V^{(1)}_{L+1} \Sigma_{L,s}^{V^{(1)}} \cdots \Sigma_{\ell_1,s}^{V^{(1)}} V_{\ell_1}^{(1)}\right\rv_{op} \le O(\sqrt{p}L).
     \end{align*}
     \item For all $s \in [n]$ and all $1\le \ell_1 \le \ell_2 \le L$,
     \begin{align*}
         \left\lv V^{(1)}_{\ell_2} \Sigma_{\ell_2-1,s}^{V^{(1)}} \cdots \Sigma_{\ell_1,s}^{V^{(1)}} V_{\ell_1}^{(1)}a\right\rv \le 3 \lv a \rv 
     \end{align*}
     for all vectors $a$ with $\lv a \rv_{0}\le k = \frac{cp}{\log(p)L^2}$, where $c$ is a small enough positive absolute constant.
     \item For all $s \in [n]$ and all $1\le \ell_1 \le \ell_2 \le L$,
     \begin{align*}
         \left\lv a^{\top} V^{(1)}_{\ell_2} \Sigma_{\ell_2-1,s}^{V^{(1)}} \cdots \Sigma_{\ell_1,s}^{V^{(1)}} V_{\ell_1}^{(1)}\right\rv \le O(\lv a \rv)
     \end{align*}
     for all vectors $a$ with $\lv a \rv_{0}\le k = \frac{cp}{\log(p)L^2}$, where $c$ is a small enough positive absolute constant.
   \item For all $s \in [n]$ and all $1\le \ell_1 \le \ell_2 \le L$,
\begin{align*}
        \lvert a^{\top} V_{\ell_2}^{(1)}\Sigma_{\ell_2-1,s}^{V^{(1)}}\cdots \Sigma_{\ell_1,s}^{V^{(1)}}V_{\ell_1}^{(1)}b\rvert \le  O\left(\lv a \rv \lv b \rv\frac{\sqrt{k\log(p)}}{\sqrt{p}}\right)
     \end{align*}
     for all vectors $a,b$ with $\lv a \rv_{0},\lv b \rv_{0}\le k = \frac{cp}{\log(p)L^2}$, where $c$ is a small enough positive absolute constant.
     \item For all $s \in [n]$ and all $1\le \ell \le L$,
\begin{align*}
        \lvert V_{L+1}^{(1)}\Sigma_{L,s}^{V^{(1)}}\cdots \Sigma_{\ell_1,s}^{V^{(1)}}V_{\ell}^{(1)}a\rvert \le  O\left(\lv a \rv \sqrt{k\log(p)}\right)
     \end{align*}
     for all vectors $a$ with $\lv a \rv_{0}\le k = \frac{cp}{\log(p)L^2}$, where $c$ is a small enough positive absolute constant.
     \item 
     For $\beta=O\left(\frac{L^2 \tau^{2/3}}{\sqrt{p}}\right)$ and 
\begin{align*}
    \cS_{\ell,s}(\beta):= \left\{j\in [p]: |V^{(1)}_{\ell,j} x_{\ell,s}^{V^{(1)}}| \le \beta \right\},
\end{align*}
where $V^{(1)}_{\ell,j}$ refers to the $j$th row of $V^{(1)}_{\ell}$, for all $\ell \in [L]$ and all $s \in [n]$: $$|\cS_{\ell,s}(\beta)|\le O(p^{3/2}\beta)=O(p L^2 \tau^{2/3}).$$
\end{enumerate}

\end{lemma}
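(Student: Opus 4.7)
The plan is to prove each part by adapting the concentration inequalities of \citet{allen2019convergence} (Lemmas~7.1-7.4) and their refinements in \citet{zou2020gradient}, which were originally stated for ReLU networks with analogous $\cN(0,2/p)$ initialization, to the Huberized ReLU setting. The key observation that makes the adaptation routine is that the Huberized ReLU agrees with ReLU outside the transition interval $[0,h]$, and since $h \leq 1/(50\sqrt{p}L)$ is much smaller than the typical magnitude $\Theta(1/\sqrt{p})$ of the pre-activations at initialization, the ``active'' set of each layer behaves essentially as it would for ReLU, with any discrepancy absorbed into the high-probability slack.

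First I would prove Part~(a) by induction on $\ell$: conditional on $x_{\ell-1,s}^{V^{(1)}}$, each coordinate of $u_{\ell,s}^{V^{(1)}}=V^{(1)}_\ell x_{\ell-1,s}^{V^{(1)}}$ is an independent $\cN(0,(2/p)\lv x_{\ell-1,s}^{V^{(1)}}\rv^2)$ random variable. A direct calculation yields $\E[\phi(g)^2]=\tfrac{1}{2}\sigma^2+O(h^2)$ for $g\sim\cN(0,\sigma^2)$, so with the factor $2/p$ the expected squared norm is preserved up to a tiny correction; a subexponential Chernoff bound on a sum of $p$ independent terms gives concentration within $O(1/\sqrt{p})$ and a union bound over $(\ell,s)$ followed by iteration yields the $[9/10,11/10]$ interval. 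Part~(b) is a standard random-matrix bound: $V^{(1)}_\ell$ is $p\times p$ with $\cN(0,2/p)$ entries, so $\lv V^{(1)}_\ell\rv_{op}\le O(1)$ with probability $1-\exp(-\Omega(p))$ by Bai-Yin, and $\lv V^{(1)}_{L+1}\rv$ concentrates around $\sqrt{p}$ by standard Gaussian concentration.

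For Parts~(c)-(g), which bound various norms of sandwich products $V_{\ell_2}\Sigma_{\ell_2-1,s}\cdots\Sigma_{\ell_1,s}V_{\ell_1}$, I would follow the scheme of Lemma~7.4 of \citet{allen2019convergence}. The diagonal matrices $\Sigma_{j,s}^{V^{(1)}}$ have entries in $[0,1]$ and depend on the random weights, but for any fixed pattern of ``active'' coordinates the corresponding sub-matrix product of independent Gaussian blocks has operator norm $O(1)$ by the same iterated spectral bound; taking a union bound over all sign patterns of the required sparsity yields the $\sqrt{\log(p)}$ factors. The sparse variants (d)-(g) follow by restricting the vectors $a,b$ to their supports and bounding the resulting rectangular Gaussian products, which is where the $\sqrt{k\log(p)/p}$ rate comes from. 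The final-layer statements involving $V^{(1)}_{L+1}$ acquire an additional $\sqrt{p}$ factor because the last layer's entries have variance $1$ rather than $2/p$. To pass from ReLU to Huberized ReLU I use that $\phi'(z)$ equals $0$ on $(-\infty,0]$ and $1$ on $[h,\infty)$, so the only coordinates where $\Sigma^{V^{(1)}}_{j,s}$ deviates from a $0$/$1$ pattern lie in the transition set controlled by Part~(h).

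The main technical obstacle is Part~(h), which is the bridge that lets the Huberized setting inherit the ReLU arguments. The plan is to observe that the $j$-th row $V^{(1)}_{\ell,j}$ is independent of $x_{\ell-1,s}^{V^{(1)}}$ (since $x_{\ell-1,s}^{V^{(1)}}$ is determined by the earlier layers $V^{(1)}_1,\dots,V^{(1)}_{\ell-1}$), so conditional on the previous layers, $V^{(1)}_{\ell,j}\cdot x_{\ell-1,s}^{V^{(1)}}$ is centered Gaussian of variance $(2/p)\lv x_{\ell-1,s}^{V^{(1)}}\rv^2=\Theta(1/p)$ by Part~(a). Gaussian anti-concentration then bounds $\Pr[|V^{(1)}_{\ell,j}\cdot x_{\ell-1,s}^{V^{(1)}}|\le\beta]\le O(\sqrt{p}\beta)$, and since the rows $V^{(1)}_{\ell,j}$ are independent across $j$, the cardinality $|\cS_{\ell,s}(\beta)|$ is a sum of $p$ independent Bernoullis of mean $O(\sqrt{p}\beta)$, which is $\Omega(\log(npL/\delta))$ under the assumed lower bound on $\tau$ so that a multiplicative Chernoff bound yields $|\cS_{\ell,s}(\beta)|\le O(p^{3/2}\beta)$ with the desired probability. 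The delicate part is to carry out all of the above arguments with a single, carefully chosen conditioning order (layer by layer) so that standard independent-Gaussian tail bounds apply at each step, which is essentially the same layered decomposition used in \citet{allen2019convergence} and \citet{zou2020gradient}.
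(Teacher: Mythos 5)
Your overall plan (adapt the initialization lemmas of Allen-Zhu et al.\ and Zou--Gu, using the smallness of $h$ and the transition set in Part~(h) to pass from ReLU to the Huberized ReLU) is the same route the paper takes, and your sketches of Parts~(a), (b) and (h) match the paper's proofs: conditional Gaussian computation of $\E[\phi(g)^2]$ with an $O(h)$ correction absorbed by $h\sqrt{p}<1/(50L)$, Bernstein concentration and a layer-by-layer induction for (a); standard operator-norm and norm concentration for (b); and anti-concentration of $V^{(1)}_{\ell,j}\cdot x_{\ell-1,s}$ (which is independent of the earlier layers) plus Hoeffding over the $p$ rows for (h).

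However, your mechanism for Parts~(c)--(g) has a genuine gap. At initialization the diagonal matrices $\Sigma^{V^{(1)}}_{j,s}$ are determined by the very weights appearing in the product, so one cannot ``fix a pattern of active coordinates and treat the product as independent Gaussian blocks'' and then union bound over patterns: there is no sparsity restriction on the activation patterns at initialization (the sparsity $k$ constrains only the test vectors $a,b$), so the union would be over $2^{\Theta(pL)}$ patterns, which cannot be beaten by tail bounds of order $e^{-\Omega(p/L^2)}$ --- and the exponent really is $p/L^2$, not $p$, because the per-layer slack must be $O(1/L)$ for the bound to survive $L$ layers. The paper resolves the dependence differently (its Lemma~\ref{l:analog_of_conc_part_a_with_sigmas}): for a \emph{fixed} test vector it conditions layer by layer and decomposes each Gaussian row $V_{\ell,j}$ (and the propagated vector $z_{\ell-1,s}$) into components parallel and perpendicular to $x_{\ell-1,s}$, which shows $\E\big[\lv \Sigma_{\ell,s}V_{\ell}z_{\ell-1,s}\rv^2\,\big|\,V_{\ell-1},\ldots,V_1\big]\le \lv z_{\ell-1,s}\rv^2$ even though $\Sigma_{\ell,s}$ depends on $V_\ell$, using only $0\le\phi'\le \mathbb{I}[\,\cdot\ge 0\,]$. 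Uniformity is then obtained by $1/4$-nets over the test vectors only, and --- because a net of the whole sphere is too large for the $e^{-\Omega(p/L^2)}$ failure probability --- by covering vectors supported on blocks of size $O(p/L^2)$ and decomposing an arbitrary unit vector into $O(L^2)$ such pieces. This block decomposition is also why Part~(c) is $O(L)$ (and $O(\sqrt{p}L)$ for the outer layer), not the $O(1)$ with $\sqrt{\log p}$ factors you assert; the $\sqrt{k\log p}$ factors in (d)--(g) come from nets of $k$-sparse test vectors, as you say, but they too must be combined with the conditional layer-wise argument rather than a pattern-wise union bound.
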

 We will prove this lemma part by part and show that each of the eight properties holds with probability at least $1-\delta/8$ and take a union bound at the end. We show that each of the parts hold with this probability in the eight lemmas (Lemmas~\ref{l:conc_event_part_a}-\ref{l:conc_event_part_h}) that follow.

\subsubsection{Proof of Part~(a)}
\begin{lemma}\label{l:conc_event_part_a}
For any $\delta>0$, suppose that $h< \frac{1}{50\sqrt{p}L}$ and $p \ge \poly\left(L,\log\left(\frac{n}{\delta}\right)\right)$ for a large enough polynomial, then with probability at least $1-\delta/8$ over the randomness in $V^{(1)}$ we have that for all $s \in [n]$ and all $\ell \in [L]$: \begin{align*}
    \lv x_{\ell,s}^{V^{(1)}} \rv \in \left[\frac{9}{10},\frac{11}{10}\right].
\end{align*}
\end{lemma}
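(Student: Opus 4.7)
My plan is to induct over the layer index $\ell \in \{0, 1, \ldots, L\}$, maintaining (for a fixed sample $s$) the hypothesis
\[
\lv x_{\ell,s}^{V^{(1)}} \rv \in \left[1 - \tfrac{\ell}{50L},\ 1 + \tfrac{\ell}{50L}\right].
\]
The base case $\ell = 0$ holds since $\lv x_s \rv = 1$ by assumption. At the end I will union bound over $s \in [n]$ and $\ell \in [L]$.

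For the induction step, I would first separate the Huberized ReLU from the standard ReLU $\phi_R$. Because $\phi$ is $h$-smoothly approximately ReLU, a short direct calculation shows $|\phi(z) - \phi_R(z)| \le h/2$ for every $z \in \R$, so writing
\[
\phi(V_\ell^{(1)} x_{\ell-1,s}^{V^{(1)}}) = \phi_R(V_\ell^{(1)} x_{\ell-1,s}^{V^{(1)}}) + \Delta_\ell
\]
we have $\lv \Delta_\ell \rv \le \sqrt{p}\, h/2 \le \tfrac{1}{100L}$ by the hypothesis $h \le 1/(50\sqrt{p}L)$. So it suffices to control the norm of the pure-ReLU vector.

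Conditional on $x_{\ell-1,s}^{V^{(1)}}$ (which is measurable with respect to $V_1^{(1)}, \ldots, V_{\ell-1}^{(1)}$ and hence independent of $V_\ell^{(1)}$), the coordinates of $V_\ell^{(1)} x_{\ell-1,s}^{V^{(1)}}$ are iid $\cN(0, 2\lv x_{\ell-1,s}^{V^{(1)}}\rv^2/p)$. Using the fact that for a centered Gaussian $Z$ of variance $\sigma^2$ one has $\E[\phi_R(Z)^2] = \sigma^2/2$, we get
\[
\E\Big[\big\lv \phi_R\big(V_\ell^{(1)} x_{\ell-1,s}^{V^{(1)}}\big) \big\rv^2 \,\Big|\, x_{\ell-1,s}^{V^{(1)}}\Big] = \lv x_{\ell-1,s}^{V^{(1)}}\rv^2.
\]
Since each summand $\phi_R(Z_j)^2$ is sub-exponential, a standard Bernstein-type concentration bound (applied conditionally) yields, with probability at least $1 - \delta/(8nL)$,
\[
\Big| \big\lv \phi_R\big(V_\ell^{(1)} x_{\ell-1,s}^{V^{(1)}}\big)\big\rv^2 - \lv x_{\ell-1,s}^{V^{(1)}}\rv^2 \Big| \le \lv x_{\ell-1,s}^{V^{(1)}}\rv^2 \cdot c\sqrt{\tfrac{\log(nL/\delta)}{p}}.
\]
Because $p \ge \poly(L, \log(n/\delta))$ for a sufficiently large polynomial, the multiplicative deviation is at most $1/(200L^2)$, and translating to the norm (using $|\sqrt{a} - \sqrt{b}| \le |a-b|/\sqrt{b}$ with $b$ of order $1$) gives a deviation of at most $1/(100L)$ from $\lv x_{\ell-1,s}^{V^{(1)}} \rv$. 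Combining with the triangle inequality and the $\lv \Delta_\ell \rv \le 1/(100L)$ bound closes the induction: the per-layer increment in norm is at most $1/(50L)$, keeping $\lv x_{\ell,s}^{V^{(1)}} \rv$ within $\ell/(50L)$ of $1$, and in particular inside $[9/10, 11/10]$.

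Finally a union bound over the $nL$ bad events (one per layer per sample) gives total failure probability at most $\delta/8$. The main obstacle is simply arranging the concentration to be sharp enough that the per-layer multiplicative deviation is $O(1/L^2)$, so that after compounding $L$ layers the cumulative norm error stays inside $[9/10, 11/10]$; this is exactly what the hypothesis $p \ge \poly(L,\log(n/\delta))$ delivers, and the contribution of the Huberization gap is similarly tuned by the assumption $h \le 1/(50\sqrt{p}L)$.
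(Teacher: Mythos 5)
Your proof is correct and follows essentially the same route as the paper's: a conditional Gaussian computation of the second moment, per-coordinate sub-exponential (Bernstein) concentration giving a per-layer relative error of order $1/L$, an induction over layers, and a union bound over the $nL$ layer--sample pairs; the paper merely sandwiches $\E\big[\lv x_{\ell,s}\rv^2 \mid x_{\ell-1,s}\big]$ between $\big(\lv x_{\ell-1,s}\rv - h\sqrt{p}/2\big)^2$ and $\lv x_{\ell-1,s}\rv^2$ instead of using your pointwise split $\phi = \phi_R + \Delta_\ell$. One caveat: the bound $|\phi(z)-\phi_R(z)|\le h/2$ does \emph{not} follow from the general $h$-smoothly-approximately-ReLU definition (for the scaled Swish, $|\phi(z)-\phi_R(z)|$ grows linearly in $z$); it holds here only because this lemma lives in the section where $\phi$ is specialized to the Huberized ReLU, so that specialization, not the general definition, is what you should invoke.
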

\begin{proof}Fix any layer $\ell \in [L]$ and any sample $s \in [n]$. We will prove the result for this layer and sample, and apply a union bound at the end. To ease notation we drop $V^{(1)}$ from the superscript of $x_{\ell,s}^{V^{(1)}}$ and refer to $V_{\ell}^{(1)}$ as simply $V_{\ell}$.

By definition 
\begin{align*}
    x_{\ell,s} = \phi\left(V_{\ell}x_{\ell-1,s} \right).
\end{align*}
Conditioned on $x_{\ell-1,s}$, each coordinate
of 
$V_{\ell} x_{\ell-1,s}$ is distributed as $\cN\left(0,\frac{2\lv x_{\ell-1,s}\rv^2}{p}\right)$, since each entry of $V_{\ell}$ is drawn independently from $ \cN(0,\frac{2}{p})$. 
Let $\bar{\phi}(z) = \max\{0,z\}$ denote the ReLU activation function. Then we know that $\bar{\phi}(z)-\frac{h}{2} \le \phi(z) \le \bar{\phi}(z)$ for any $z \in \R$. Let  $(x_{\ell,s})_i$ denote the $i$th coordinate of $x_{\ell}$ and let $V_{\ell,i}$ denote the $i$th row of $V_{\ell}$. Therefore, conditioned on $x_{\ell-1,s}$, 
\begin{align*}
    \mathbb{E}\left[(x_{\ell,s})_i^2 \big| x_{\ell-1,s}\right] &=  \mathbb{E}\left[\phi^2\left(V_{\ell,i} x_{\ell-1,s} \right)\big| x_{\ell-1,s}\right] \\ 
    & \ge \mathbb{E}\left[\bar{\phi}^2\left(V_{\ell,i} x_{\ell-1,s} \right)\big| x_{\ell-1,s}\right]-h\mathbb{E}\left[\bar{\phi}\left(V_{\ell,i} x_{\ell-1,s} \right)\big| x_{\ell-1,s}\right] +\frac{h^2}{4}\\
    &\overset{(i)}{=} \frac{1}{2}\mathbb{E}\left[\left(V_{\ell,i} x_{\ell-1,s}\right)^2 \big| x_{\ell-1,s}\right]-\frac{h\mathbb{E}\left[\left|V_{\ell,i} x_{\ell-1,s} \right|\big| x_{\ell-1,s}\right]}{2} +\frac{h^2}{4} \\
    &= \frac{\lv x_{\ell-1,s}\rv^2}{p} -\frac{h\lv x_{\ell-1,s}\rv}{\sqrt{2p \pi}} +\frac{h^2}{4},
\end{align*}
where $(i)$ follows since $\bar{\phi}(z) = 0$ if $z<0$
and the 
distribution of $V_{\ell,i} x_{\ell-1,s}$ is symmetric about
the origin. Therefore summing up over all $i \in [p]$ we find
 \begin{align*} \mathbb{E}\left[\lv x_{\ell,s}\rv^2\;| x_{\ell-1,s}\right] = \sum_{i \in [p]} \mathbb{E}\left[(x_{\ell,s})_i^2 \;| x_{\ell-1,s}\right] 
 &\ge \lv x_{\ell-1,s}\rv^2-\frac{\sqrt{p}h\lv x_{\ell-1,s}\rv}{\sqrt{2 \pi}}+\frac{h^2p}{4}\\ 
 &\ge \left(\lv x_{\ell-1,s}\rv-\frac{h\sqrt{p}}{2} \right)^2. \numberthis \label{e:expected_value_lower_bound}\end{align*} Similarly we can also demonstrate an upper bound of 
$\mathbb{E}\left[ \lv x_{\ell,s} \rv^2\;|\; x_{\ell-1,s}\right] \le \lv x_{\ell-1,s}\rv^2$ 
since $\phi(z)\le \bar{\phi}(z)$ for any $z$ as stated previously.

Let $\lv \cdot \rv_{\psi_2}$ denote the sub-Gaussian norm of a random variable (see~Definition~\ref{def:subgaussian}) and let $\lv \cdot \rv_{\psi_1}$ denote the sub-exponential norm (see~Definition~\ref{def:subexp}). Since the function $\phi$ is $1$-Lipschitz, 
conditioned on $x_{\ell-1,s}$, 
\begin{align*}
    \lv (x_{\ell,s})_{i} \rv_{\psi_2}&=\lv \phi(V_{\ell,i}x_{\ell-1,s}) \rv_{\psi_2}\\ &\le \lv \phi(V_{\ell,i}x_{\ell-1,s})-\E\left[\phi(V_{\ell,i}x_{\ell-1,s})| x_{\ell-1,s}\right] \rv_{\psi_2} +\lv \E\left[\phi(V_{\ell,i}x_{\ell-1,s})| x_{\ell-1,s}\right] \rv_{\psi_2} \\
    & \overset{(i)}{\le} c\frac{\lv x_{\ell-1,s}\rv}{\sqrt{p}} + \lv \E\left[\phi(V_{\ell,i}x_{\ell-1,s})| x_{\ell-1,s}\right] \rv_{\psi_2} \overset{(ii)}\le c_1\frac{\lv x_{\ell-1,s}\rv}{\sqrt{p}} \label{e:sub-Gaussian_norm} \numberthis
\end{align*}
where $(i)$ follows by 
invoking Lemma~\ref{l:sub_gaussian_lipschitz}, and $(ii)$ follows since we showed above that $$\lv \E\left[\phi(V_{\ell,i}x_{\ell-1,s})| x_{\ell-1,s}\right] \rv_{\psi_2}=|\mathbb{E}\left[\phi(V_{\ell,i}x_{\ell-1,s})| x_{\ell-1,s}\right]| \le \sqrt{\mathbb{E}\left[\phi^2(V_{\ell,i}x_{\ell-1,s}) | x_{\ell-1,s} \right]}\le  \frac{\lv x_{\ell-1,s}\rv}{\sqrt{p}}. $$ Therefore $\lv (x_{\ell,s})_i^2 \rv_{\psi_1}\le \lv (x_{\ell,s})_i \rv_{\psi_2}^2 \le \frac{c_2 \lv x_{\ell-1,s}\rv^2}{p}$ by Lemma~\ref{l:sub_gaussian_squared}.
Since the random variables $(x_{\ell,s})_1^2,\ldots,(x_{\ell,s})_p^2$ are conditionally independent given
$x_{\ell-1,s}$,
applying Bernstein's inequality (see Theorem~\ref{thm:bernstein}) we get that
for any
$\eta \in (0,1]$
\begin{align*}
& \Pr\left(\Big| \lv x_{\ell,s}\rv^2 - \E\left[\lv x_{\ell,s}\rv^2\;| x_{\ell-1,s}\right]\;\Big|\;\le \eta \lv x_{\ell-1,s}\rv^2 \;\Big| x_{\ell-1,s}\right) \\
& \qquad \ge 1-2\exp\left(-c \min\left\{
         \frac{\eta^2 \lv x_{\ell-1,s}\rv^4 }{p \times \left(c_2^2 \lv  x_{\ell-1,s}\rv^4/p^2\right)},
         \frac{\eta \lv x_{\ell-1,s}\rv^2 }{c_2 \lv x_{\ell-1,s}\rv^2/p}
                            \right\}
      \right) \\
      &\qquad \ge 1-2\exp\left(-c_3 \min\left\{
        \eta^2 p,
         \eta p
                            \right\}
      \right) \\
& \qquad\ge 1-2\exp\left(-c_3 p \eta^2\right).
\end{align*}
We established above that the expected value
satisfies the following bounds:
$$\left(\lv x_{\ell-1,s}\rv -\frac{h\sqrt{p}}{2} \right)^2\le \mathbb{E}\left[\lv x_{\ell,s}\rv^2\;| x_{\ell-1,s}\right]\le \lv x_{\ell-1,s}\rv^2.$$
Thus
\begin{align*}
    &\Pr\left(\lv x_{\ell,s}\rv^2 \in \left[\left(\lv x_{\ell-1,s}\rv -\frac{h\sqrt{p}}{2} \right)^2-\eta \lv x_{\ell-1,s}\rv^2,\lv x_{\ell-1,s}\rv^2(1+\eta)\right]\;\Bigg|\; x_{\ell-1,s}\right)\\& \qquad \ge 1-2\exp\left(-c_3 p \eta^2 \right).
\end{align*}
Taking a union bound over all samples and all hidden layers we find that
\begin{align*}
&\Pr\left(\forall s\in [n],\ell \in [L],\; \lv x_{\ell,s}\rv^2 \in \left[\left(\lv x_{\ell-1,s}\rv -\frac{h\sqrt{p}}{2} \right)^2-\eta \lv x_{\ell-1,s}\rv^2,\lv x_{\ell-1,s}\rv^2(1+\eta)\right]\right)\\& \qquad \ge 1-2nL\exp\left(-c_3 p\eta^2 \right).
\end{align*}
This implies that
\begin{align*}
   & \Pr\left(\forall s \in [n], \ell \in [L],\; \Big\lvert \lv x_{\ell,s}\rv^2 - \lv x_{\ell-1,s}\rv^2\Big\rvert \le \eta \lv x_{\ell-1,s}\rv^2 + h\sqrt{p}\lv x_{\ell-1,s}\rv+\frac{h^2p}{4} \right) \\ &\ge 1- 2nL\exp\left(-c_3 p \eta^2 \right).
\end{align*}
Setting $\eta = \frac{1}{50L}$ and because by assumption $h\sqrt{p}<\frac{1}{50L}=\eta$ we get that 
\begin{align} 
\nonumber
  &  \Pr\left(\forall s \in [n], \ell \in [L],\; \Big\lvert \lv x_{\ell,s}\rv^2 - \lv x_{\ell-1,s}\rv^2\Big\rvert \le \eta \lv x_{\ell-1,s}\rv^2 + \eta \lv x_{\ell-1,s}\rv+\frac{\eta^2}{4} \right) \\ &\ge 1- 2nL\exp\left(-\frac{c_4 p}{L^2} \right). \label{e:event_in_hidden_layer_convergence}
\end{align}
Let us assume that the event of
\eqref{e:event_in_hidden_layer_convergence}
holds for the rest of this proof.
Starting with $\ell=1$ we know that $\lv x_{0,s} \rv = \lv x_s \rv = 1 $, thus if the event in the previous display holds then by the choice of $\eta = 1/(50L)$ we have that
\begin{align*}
    \lv x_{1,s} \rv^2 \in  [1-3\eta,1+3\eta].
\end{align*}
For any $z\in [0,1]$ we have that $(1+z)^{1/2}\le 1+z$ and $(1-z)^{1/2}\ge 1-z$. Thus, by taking square roots
\begin{align*}
     \lv x_{1,s} \rv  \in  \left[1-3\eta ,1+3\eta\right].
\end{align*}
We will now prove that $\lv x_{\ell,s} \rv  \in [1-3\ell \eta,1+3\ell \eta]$ using an inductive argument over $\ell=1,\ldots,L$. The base case when $\ell=1$ of course holds by the display above. Now let us prove it for a layer $\ell > 1$ assuming it holds at layer $\ell-1$. 

Let us first prove the upper bound on $\lv x_{\ell,s}\rv$, the lower bound will follow by the same logic. If the event in \eqref{e:event_in_hidden_layer_convergence} holds then we know that
\begin{align*}
    \lv x_{\ell,s} \rv^2 - \lv x_{\ell-1,s} \rv^2  \le \eta \lv x_{\ell-1,s}\rv^2 + \eta\lv x_{\ell-1,s}\rv+\frac{\eta^2}{4} 
   \end{align*}
   which implies that
   \begin{align*}
   \lv x_{\ell,s} \rv^2 &\le \lv x_{\ell-1,s} \rv^2\left(1+\eta\right)+\eta\lv x_{\ell-1,s}\rv+\frac{\eta^2}{4}\\
   &= \lv x_{\ell-1,s} \rv^2\left(1+\eta +\frac{\eta}{ \lv x_{\ell-1,s} \rv}+\frac{\eta^2}{4 \lv x_{\ell-1,s} \rv^2}\right)\\
   &\overset{(i)}{\le} \lv x_{\ell-1,s} \rv^2\left(1+\eta +\frac{10\eta}{9}+\frac{25\eta^2 }{ 81}\right)\\&=\lv x_{\ell-1,s} \rv^2\left(1 +\frac{19\eta}{9}+\frac{25\eta^2 }{ 81}\right)
   \overset{(ii)}{\le} \lv x_{\ell-1,s} \rv^2\left(1+\frac{20\eta}{9}\right)
\end{align*}
where $(i)$ follows since by the inductive hypothesis $\lv x_{\ell-1,s}\rv \ge 1-3(\ell-1)\eta$ and because $\eta=\frac{1}{50L}$, therefore $\lv x_{\ell-1,s}\rv \ge 1-\frac{3(\ell-1)}{100L} \ge \frac{97}{100} > \frac{9}{10}$, and $(ii)$ again follows because  $\eta=\frac{1}{50L}$ and $L\ge 1$. 
Taking square roots we find that
\begin{align*}
     \lv x_{\ell,s} \rv &\le \lv x_{\ell-1,s} \rv \sqrt{1+\frac{20\eta}{9}}\\
     &\le \left(1+3(\ell-1)\eta\right)\sqrt{1+\frac{20\eta}{9}} \qquad \mbox{(by the IH)}\\
     &\overset{(i)}{\le} \left(1+3(\ell-1)\eta\right)\left(1+\frac{20\eta}{9}\right)\\
     &= 1+3\left(\ell-1\right)\eta+\frac{20\eta}{9}+\frac{60(\ell-1)\eta^2}{9} \overset{(ii)}{\le} 1+3\ell\eta,
\end{align*}
where $(i)$ follows since $\sqrt{1+z}\le 1+z$ and $(ii)$ follows since $\eta =\frac{1}{50L}$ and $L\ge 1$. This establishes the desired upper bound on $\lv x_{\ell,s}\rv$. As mentioned above, the lower bound $(1-3\ell\eta)\le \lv x_{\ell,s}\rv$ follows by mirroring the logic. This completes our induction and proves that for all $s$ and all $\ell$ with probability at least  $1- \delta/8$
\begin{align*}
    \lv x_{\ell,s}\rv \in \left[1-3\ell \eta,1+3\ell\eta\right].
\end{align*}
Our choice of $\eta = \frac{1}{50L}$ establishes that 
\begin{align}
    \lv x_{\ell,s}\rv \in \left[\frac{9}{10},\frac{11}{10}\right] \label{e:bound_on_post_activation_feature_high_probability_statement}
\end{align}
for all $s\in [n]$ and $\ell \in [L]$ with probability at least $1- 2nL\exp\left(-\frac{c_4 p}{L^2} \right) \ge 1-\delta/8$, which
follows since $p \ge \poly\left(L,\log\left(\frac{n}{\delta}\right)\right)$ for a large enough polynomial. This wraps up our proof.
\end{proof}
\subsubsection{Proof of Part~(b)}
\begin{lemma}\label{l:conc_event_part_b}
For any $\delta>0$ suppose that $p \ge \poly\left(L,\log\left(\frac{n}{\delta}\right)\right)$ for a large enough polynomial, then with probability at least
$1-\delta/8$ 
over the randomness in $V^{(1)}$:
     \begin{align*}
         \text{for all } \ell\in [L], \quad \lv V^{(1)}_{\ell}\rv_{op} \le O(1), \quad \text{ and }\quad \lv V^{(1)}_{L+1}\rv \le O(\sqrt{p}).
     \end{align*}
\end{lemma}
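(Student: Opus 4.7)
The plan is to invoke standard Gaussian concentration results for the operator norm of a square Gaussian random matrix and for the Euclidean norm of a Gaussian random vector, and then to combine them by a union bound.

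First, I would handle the outer layer. The vector $V^{(1)}_{L+1} \in \R^{1\times p}$ has entries drawn i.i.d.\ from $\cN(0,1)$, so $\lv V^{(1)}_{L+1}\rv^2$ is a $\chi^2_p$ random variable. By a standard Laurent--Massart tail bound (or a direct Bernstein estimate for sums of i.i.d.\ sub-exponential variables, as used elsewhere in this appendix), there is an absolute constant $c_0>0$ such that
\[
\Pr\!\left[\lv V^{(1)}_{L+1}\rv \le 2\sqrt{p}\right] \ge 1 - 2\exp(-c_0 p).
\]
Since $p\ge \poly(L,\log(n/\delta))$ for a sufficiently large polynomial, the right-hand side is at least $1-\delta/16$.

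Next, for each $\ell\in[L]$, the matrix $\sqrt{p/2}\, V^{(1)}_\ell$ has i.i.d.\ $\cN(0,1)$ entries. By the standard deviation bound for the operator norm of a $p\times p$ Gaussian matrix (e.g., a net argument combined with Gaussian concentration, as in Vershynin's high-dimensional probability text), there are absolute constants $C, c_1 > 0$ and $t \ge 0$ with
\[
\Pr\!\left[\left\lv \sqrt{p/2}\, V^{(1)}_\ell\right\rv_{op} \le C\sqrt{p} + t\right] \ge 1 - 2\exp(-c_1 t^2).
\]
Rescaling gives $\lv V^{(1)}_\ell\rv_{op} \le \sqrt{2/p}(C\sqrt{p}+t)$. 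Choosing $t = \Theta(\sqrt{p})$, the bound becomes $\lv V^{(1)}_\ell\rv_{op} \le C'$ for an absolute constant $C'$, with failure probability at most $2\exp(-c_2 p)$. Under the assumption $p\ge \poly(L,\log(n/\delta))$, this failure probability is at most $\delta/(16L)$, and a union bound over $\ell\in[L]$ gives that $\max_\ell \lv V^{(1)}_\ell\rv_{op} \le C'$ with probability at least $1-\delta/16$.

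Combining the two estimates by a final union bound yields both claims with probability at least $1-\delta/8$. There is no real obstacle here: the proof is entirely routine Gaussian concentration, and the only thing to check is that the polynomial in $p\ge \poly(L,\log(n/\delta))$ is large enough to absorb the union bound over the $L$ inner layers and over the single outer-layer event.
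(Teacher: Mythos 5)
Your proposal is correct and follows essentially the same route as the paper: a standard operator-norm concentration bound for the $p\times p$ Gaussian matrices in the hidden layers (the paper cites Vershynin, Theorem~4.4.5), Gaussian norm concentration for the outer-layer vector, and a union bound over the $L+1$ layers absorbed by $p\ge \poly\left(L,\log\left(\frac{n}{\delta}\right)\right)$. The only cosmetic difference is the specific tail bound quoted for $\lv V^{(1)}_{L+1}\rv$, which does not change the argument.
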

\begin{proof}
For any fixed $\ell \in [L]$ recall that each entry of $V_{\ell}^{(1)}$ is drawn independently from $\cN\left(0,\frac{2}{p}\right)$. Thus, by invoking \citep[Theorem~4.4.5]{vershynin2018high} we know that
\begin{align*}
    \lv V^{(1)}_{\ell}\rv_{op} \le O(1)
\end{align*}
with probability at least $1-\exp(-\Omega(p))$. The entries of $V_{L+1}^{(1)}$ are drawn from $\cN(0,1)$, therefore by Theorem~\ref{thm:gaussian_concentration} we find that
\begin{align*}
    \lv V_{L+1}^{(1)}\rv^2 \le 2p
\end{align*}
with probability $1-\exp(-\Omega(p))$. By a union bound over the $L+1$ layers and noting that $p \ge \poly\left(L,\log\left(\frac{n}{\delta}\right)\right)$ yields that 
\begin{align*}
   \text{for all } \ell\in [L], \quad \lv V^{(1)}_{\ell}\rv_{op} \le O(1) \quad \text{ and }\quad \lv V^{(1)}_{L+1}\rv \le O(\sqrt{p})
\end{align*}
with probability at least $1-\delta/8$ as claimed.
\end{proof}

\subsubsection{Proof of Part~(c)}
\begin{lemma}\label{l:conc_event_part_c}
For any $\delta>0$ suppose that $p \ge \poly\left(L,\log\left(\frac{n}{\delta}\right)\right)$ for a large enough polynomial, then with probability at least $1-\delta/8$ over the randomness in $V^{(1)}$ we have that for all $s \in [n]$ and all $1\le \ell_1 \le \ell_2 \le L$,
     \begin{align*}
         \left\lv V^{(1)}_{\ell_2} \Sigma_{\ell_2-1,s}^{V^{(1)}} \cdots \Sigma_{\ell_1,s}^{V^{(1)}} V_{\ell_1}^{(1)}\right\rv_{op} \le O(L),
     \end{align*}
          and all $1\le \ell_1  \le L$
     \begin{align*}
         \left\lv V^{(1)}_{L+1} \Sigma_{L,s}^{V^{(1)}} \cdots \Sigma_{\ell_1,s}^{V^{(1)}} V_{\ell_1}^{(1)}\right\rv_{op} \le O(\sqrt{p}L).
     \end{align*}
\end{lemma}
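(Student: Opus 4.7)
The lemma is the Huberized-ReLU version of a classical bound for products of independent Gaussian matrices interleaved with ReLU activation patterns at initialization, originally proved by Allen-Zhu, Li, and Song and reused by Zou et al. The plan is to transcribe their inductive argument (on the layer index) to our setting and to check that the only place where a ReLU-specific property is invoked --- the $\{0,1\}$-valuedness of $\phi'$ --- is replaced by the observation that $\phi'(z) \in \{0, 1\}$ for $z \notin (0, h)$, together with a Gaussian-anti-concentration bound showing that the fraction of pre-activations falling in the transition zone $(0, h)$ is small under the hypothesis $h < 1/(50\sqrt{p}L)$.

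Fix $s \in [n]$ and $1 \le \ell_1 \le \ell_2 \le L$, and abbreviate $M_k := V_{\ell_1+k}^{(1)} \Sigma_{\ell_1+k-1,s}^{V^{(1)}} V_{\ell_1+k-1}^{(1)} \cdots \Sigma_{\ell_1,s}^{V^{(1)}} V_{\ell_1}^{(1)}$ for $k = 0, \ldots, \ell_2 - \ell_1$. I will prove by induction on $k$ that $\lv M_k \rv_{op} \le O(1)$ with probability at least $1 - k \cdot \exp(-\Omega(p))$. The inductive step exploits the conditional independence of $V_{\ell_1+k}^{(1)}$ from the earlier weight matrices: given $M_{k-1}$ and $\Sigma_{\ell_1+k-1,s}^{V^{(1)}}$, the matrix $V_{\ell_1+k}^{(1)}$ is a fresh Gaussian and a standard Gaussian-matrix concentration bound (e.g.\ Corollary~5.35 of Vershynin) applied to $V_{\ell_1+k}^{(1)} A$, with $A := \Sigma_{\ell_1+k-1,s}^{V^{(1)}} M_{k-1}$, gives $\lv V_{\ell_1+k}^{(1)} A \rv_{op} \le \sqrt{2/p}(\lv A \rv_F + \sqrt{p}\, \lv A \rv_{op}) \cdot (1 + o(1))$ with probability $1 - \exp(-\Omega(p))$. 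The key to preventing exponential blow-up is a joint inductive invariant controlling, alongside the operator norm, the Frobenius-norm $\lv A \rv_F^2 = O(1)$ (not $O(p)$), which follows from the half-sparsity of each $\Sigma_{j,s}^{V^{(1)}}$: each pre-activation $V_{j,i}^{(1)} x_{j-1,s}^{V^{(1)}}$ is a centered Gaussian with variance $\Theta(1/p)$ (using $\lv x_{j-1,s}^{V^{(1)}} \rv = \Theta(1)$ from Part~(a)), so $\phi'$ evaluates to $0$ with probability exactly $1/2$, to $1$ with probability $1/2 - O(\sqrt{p}\, h) = 1/2 - O(1/L)$, and lies in $(0, 1)$ on at most an $O(1/L)$ fraction of indices --- the transition zone, bounded by the same Gaussian density argument used for Part~(h).

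Combining these ingredients, the Gaussian concentration bound applied to the effectively rank-$(p/2)$ matrix $A$ yields $\lv M_k \rv_{op} = \sqrt{2/p}(O(1) + \sqrt{p}\, \lv M_{k-1} \rv_{op})(1 + o(1)) \le O(1)$, preserving the inductive hypothesis. Iterating over $O(L)$ layers and union-bounding over samples $s \in [n]$ and the $O(L^2)$ pairs $(\ell_1, \ell_2)$ yields total failure probability at most $nL^3 \exp(-\Omega(p)) \le \delta/8$ provided $p \ge \poly(L, \log(n/\delta))$ for a sufficiently large polynomial. The second displayed bound $\lv V_{L+1}^{(1)} \Sigma_{L,s}^{V^{(1)}} \cdots V_{\ell_1}^{(1)} \rv_{op} \le O(\sqrt{p}\, L)$ follows from the first applied with $\ell_2 = L$, together with $\lv V_{L+1}^{(1)} \rv_{op} \le O(\sqrt{p})$ from Part~(b) and sub-multiplicativity of the operator norm.

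The main obstacle is the joint inductive invariant: a per-layer operator-norm bound alone would blow up by a constant factor $>1$ per layer (yielding an exponential $C^L$), so one must carry along the Frobenius-norm control and update it carefully under both Gaussian-matrix multiplication and Huberized-ReLU masking. The transition-zone contribution is a secondary concern, handled quantitatively by the same Gaussian anti-concentration input that appears in Part~(h).
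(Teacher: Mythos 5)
There is a genuine gap in the core inductive step. Your recursion is driven by the bound $\lv V^{(1)}_{\ell_1+k} A \rv_{op} \lesssim \sqrt{2/p}\,\lv A \rv + \sqrt{2}\,\lv A\rv_{op}$ with $A = \Sigma_{\ell_1+k-1,s}M_{k-1}$, and you claim the exponential blow-up from the factor $\sqrt{2}$ is prevented by the joint invariant $\lv A \rv^2 = O(1)$, attributed to the half-sparsity of the masks. That invariant is false: since for each fixed vector $z$ one has $\E\lv \Sigma V z\rv^2 \approx \lv z\rv^2$ (the variance $2/p$ exactly compensates the half-masking), the matrix $M_{k}$ has roughly $p$ singular values of order one and $\lv A\rv = \Theta(\sqrt{p})$, not $O(1)$. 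Half-sparsity of $\Sigma$ reduces the Frobenius norm by only a constant factor and does not reduce the operator norm at all in the worst case (the top left singular vector of $M_{k-1}$ may live on surviving rows), so with the correct Frobenius value your recursion reads $\lv M_k\rv_{op} \lesssim \sqrt{2} + \sqrt{2}\lv M_{k-1}\rv_{op}$ and yields $2^{L/2}$, not $O(1)$. To rescue a matrix-level induction one would have to prove that the subsequent mask cuts the top singular value back by a factor $\approx \sqrt{2}$, which is exactly the delicate correlated-mask step your write-up does not carry out. Note also that the lemma only claims $O(L)$, not $O(1)$, and this is not an accident of sloppiness in the paper: its proof (Appendix, proof of this part together with Lemma~\ref{l:analog_of_conc_part_a_with_sigmas}) propagates a \emph{fixed} vector through the layers, where the energy is preserved in expectation and Bernstein gives concentration only at scale $e^{-\Omega(p/L^2)}$ (the per-layer slack must be $\eta \approx 1/L$). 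That failure probability cannot absorb a net of the full sphere, so the paper takes nets of vectors supported on sets of size $O(p/L^2)$ and then decomposes an arbitrary unit vector over a partition of $[p]$ into $O(L^2)$ blocks; the Cauchy--Schwarz step over the $q = O(L^2)$ blocks is precisely where the final $O(\sqrt{q}) = O(L)$ factor comes from.

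One piece of your proposal is a legitimate simplification relative to the paper: obtaining the second display from the first (with $\ell_2 = L$) by submultiplicativity, $\lv V^{(1)}_{L+1}\Sigma^{V^{(1)}}_{L,s}\rv_{op} \le \lv V^{(1)}_{L+1}\rv = O(\sqrt{p})$ from Part~(b), does work and avoids redoing the whole analysis with the rescaled outer layer, which is how the paper handles that case. But the first display, which is the substance of the lemma, is not established by your argument as written.
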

\begin{proof} We begin by analyzing the case where $\ell_2 < L+1$. A similar analysis works to prove the claim when $\ell_2 = L+1$. This is because the variance of each entry of $V_{L+1}^{(1)}$ is $1$, whereas when $\ell_2 < L+1$ the variance of each entry of $V_{\ell}^{(1)}$ is $2/p$. Therefore the bound is simply multiplied by a factor of $\sqrt{2p}$ in the case when $\ell_2 = L+1$.

Fix the layers $1\le \ell_1\le \ell_2 \le L$ and fix the sample index $s$. At the end of the proof we shall take a union bound over all pairs of layers and all samples. Now to ease notation let us denote $V_{\ell}^{(1)}$ by simply $V_{\ell}$ and let $\Sigma_{\ell,s}^{V^{(1)}}$ be denoted by $\Sigma_{\ell,s}$. 

To bound the operator norm 
\begin{align}
\left\lv V^{(1)}_{\ell_2} \Sigma_{\ell_2-1,s}^{V^{(1)}} \cdots \Sigma_{\ell_1,s}^{V^{(1)}} V_{\ell_1}^{(1)}\right\rv_{op} = \sup_{a: \lv a \rv = 1}    \left\lv V^{(1)}_{\ell_2} \Sigma_{\ell_2-1,s}^{V^{(1)}} \cdots \Sigma_{\ell_1,s}^{V^{(1)}} V_{\ell_1}^{(1)}a\right\rv \label{e:part_c_operator_norm}
\end{align}
we will first consider a supremum over
vectors that are non-zero only on an arbitrary fixed subset $S \subseteq [p]$ with cardinality $|S|\le \floor{\frac{c_1p}{L^2}}$, where $c_1$ is small enough absolute constant. That is, we shall bound
\begin{align*}
\Xi:= \sup_{a: \lv a \rv = 1, \supp(a) \subseteq S}    \left\lv V^{(1)}_{\ell_2} \Sigma_{\ell_2-1,s}^{V^{(1)}} \cdots \Sigma_{\ell_1,s}^{V^{(1)}} V_{\ell_1}^{(1)}a\right\rv.
\end{align*}
Using this we will then bound the operator norm in \eqref{e:part_c_operator_norm} by decomposing any unit vector $a$ into $\frac{p}{\floor{\frac{c_1p}{L^2}}}$ vectors that are non-zero only on subsets of size at most $\floor{\frac{c_1p}{L^2}}$.

Let us begin by first bounding $\Xi$. 
 Part~(b) of Lemma~\ref{l:analog_of_conc_part_a_with_sigmas}, which is proved below, establishes that,
for any fixed unit vector $z \in \S^{p-1}$
\begin{align*}
    \lv V_{\ell_2} \Sigma_{\ell_2-1,s} \cdots \Sigma_{\ell_1,s}V_{\ell_1}z \rv \le 2
\end{align*}
with probability at least $1-O(nL^3)e^{-\Omega\left(\frac{p}{L^2}\right)}$.

We take a $1/4$-net (see the definition of an $\epsilon$-net in Definition~\ref{def:epsilon_net}) of unit vectors $\{a_i\}_{i=1}^m$
whose coordinates are non-zero only on this particular subset $S$, with respect to the Euclidean norm.
There exists such a $1/4$-net of size  $m=9^{c_1p/L^2}$ (see Lemma~\ref{l:covering_numbers_unit_vectors}). By a union bound, 
\begin{align} \label{e:cover_good_event}
  \forall i \in [m],\;
    \lv V_{\ell_2} \Sigma_{\ell_2-1,s} \cdots \Sigma_{\ell_1,s}V_{\ell_1}a_i \rv \le 2 
\end{align}
with probability at least $1-O(nL^3 \cdot 9^{c_1p/L^2})e^{-\Omega\left(\frac{p}{L^2}\right)} = 1-e^{-\Omega\left(\frac{p}{L^2}\right)}$ since $p \ge \poly\left(L,\log\left(\frac{n}{\delta}\right)\right)$ for a large enough polynomial, and because $c_1$ is a small enough constant. We will now proceed to show that if the ``good event'' 
\eqref{e:cover_good_event} 
regarding the $1/4$-net holds
then we can use it to establish guarantees for all unit vectors $a$ that are only non-zero on this subset $S$. 
To see this, 
if $\zeta(a)$ maps
each unit vector $a$ with support
contained in $S$ to its nearest neighbor in $\{ a_1,\ldots,a_m\}$, then
if the event in \eqref{e:cover_good_event} holds then
\begin{align*}
    &\Xi = \sup_{a:\lv a\rv=1, \supp(a)\subseteq S} \lv V_{\ell_2} \Sigma_{\ell_2-1,s} \cdots \Sigma_{\ell_1,s}V_{\ell_1}a \rv \\ &    
    =
    \sup_{a:\lv a\rv=1, \supp(a)\subseteq S} \lv V_{\ell_2} \Sigma_{\ell_2-1,s} \cdots \Sigma_{\ell_1,s}V_{\ell_1}(a-\zeta(a)+\zeta(a)) \rv\\
    &    \le \sup_{j \in [m]}\lv V_{\ell_2} \Sigma_{\ell_2-1,s} \cdots \Sigma_{\ell_1,s}V_{\ell_1}a_j \rv + \sup_{a:\lv a\rv=1, \supp(a)\subseteq S} \left\lv V_{\ell_2} \Sigma_{\ell_2-1,s} \cdots \Sigma_{\ell_1,s}V_{\ell_1}(a-\zeta(a)) \right\rv\\
    &  \overset{(i)}{\le} \sup_{j \in [m]}\lv V_{\ell_2} \Sigma_{\ell_2-1,s} \cdots \Sigma_{\ell_1,s}V_{\ell_1}a_j \rv +\frac{1}{4} \sup_{a:\lv a\rv=1, \supp(a)\subseteq S} \left\lv V_{\ell_2} \Sigma_{\ell_2-1,s} \cdots \Sigma_{\ell_1,s}V_{\ell_1}\frac{(a-\zeta(a))}{\lv a-\zeta(a)\rv} \right\rv\\
    & \overset{(ii)}{\le} 2+\frac{\Xi}{4}
\end{align*}
where 
and $(i)$ follows since $\lv a- \zeta(a)\rv \le 1/4$, inequality~$(ii)$ follows since we assumed the event \eqref{e:cover_good_event} to hold and by the definition of $\Xi$. By rearranging terms we find that,
with the same probability that is at least $1-e^{-\Omega\left(\frac{p}{L^2}\right)}$,
for any unit vector $a$ that is only non-zero on subset $S$, we have that
\begin{align}
\left\lv V_{\ell_2} \Sigma_{\ell_2-1,s} \cdots \Sigma_{\ell_1,s}V_{\ell_1}a \right\rv \le \frac{1}{1-\frac{1}{4}} \times 2 < 3. \label{e:allen-zhu_lemma_7.3_part_a_midway}
\end{align}
%

As mentioned above we will now consider a partition of $[p] = S_1 \cup \ldots \cup S_q$, such that for all $i \in [q]$, $|S_i|\le \floor{\frac{c_1p}{L^2}}$, and the number of sets ($q$) in the partition satisfies $q \le \frac{p}{\floor{\frac{c_1p}{L^2}}}= \ceil{\frac{L^2}{c_1}}$. Given an arbitrary unit vector $b\in \S^{p-1}$, we can decompose it as $b = u_1 + \ldots + u_q$, where each $u_i$ is non-zero only on the set $S_i$. Invoking the triangle inequality
\begin{align*}
    \lv V_{\ell_2} \Sigma_{\ell_2-1,s} \cdots \Sigma_{\ell}V_{\ell_1}b \rv &\le \sum_{i=1}^q \lv V_{\ell_2} \Sigma_{\ell_2-1,s} \cdots \Sigma_{\ell_1,s}V_{\ell_1}u_i \rv\\
    &=\sum_{i=1}^q \left\lv V_{\ell_2} \Sigma_{\ell_2-1,s} \cdots \Sigma_{\ell_1,s}V_{\ell_1}\frac{u_i}{\lv u_i\rv} \right\rv \lv u_i\rv.
\end{align*}
By applying the result of \eqref{e:allen-zhu_lemma_7.3_part_a_midway} to each term in the sum above along with a union bound over the $q$ sets $S_1,\ldots,S_q$ we find the following: 
with probability at least $1-qe^{-\Omega\left(\frac{p}{L^2}\right)}= 1-O(L^2)e^{-\Omega\left(\frac{p}{L^2}\right)}$, for all unit vectors $b \in \S^{p-1}$
\begin{align*}
    \lv V_{\ell_2} \Sigma_{\ell_2-1,s} \cdots \Sigma_{\ell_1,s}V_{\ell_1}b \rv &\le 3\sum_{i=1}^q \lv u_i\rv \le 3\sqrt{q}\left(\sum_{i=1}^q \lv u_i\rv^2  \right)^{1/2}= 3\sqrt{q}  =O(L).
\end{align*}
%
The definition of the operator norm of a matrix $\lv A\rv_{op} = \sup_{v: \lv v \rv =1} \lv Av\rv $ along with the previous display establishes the claim for this particular pair of layers $\ell_1$ and $\ell_2$ and sample $s$. A union bound over pairs of layers and all samples to establish that,
with probability at least $1-O(nL^4)e^{-\Omega\left(\frac{p}{L^2}\right)}$,
for all pairs $1\le \ell_1\le \ell_2\le L$ and all $s\in [n]$
\begin{align}
    \lv V_{\ell_2} \Sigma_{\ell_2-1} \cdots \Sigma_{\ell}V_{\ell_1} \rv_{op} \le O(L). \label{e:part_c_bound_when_ell_2_less_than_L+1}
\end{align}
As claimed above, a similar analysis shows that,
with probability at least $1-O(nL^4)e^{-\Omega\left(\frac{p}{L^2}\right)}$,
for all $s \in [n]$ and all $\ell_1 \in [L]$, we have
\begin{align}
    \lv V_{L+1} \Sigma_{L} \cdots \Sigma_{\ell}V_{\ell_1} \rv_{op} \le O(\sqrt{p}L). \label{e:part_c_bound_when_ell_2_equal_to_L+1}
\end{align}
Since $p \ge \poly\left(L,\log\left(\frac{n}{\delta}\right)\right)$ we can ensure that both events in \eqref{e:part_c_bound_when_ell_2_less_than_L+1} and \eqref{e:part_c_bound_when_ell_2_equal_to_L+1} occur simultaneously with probability at least $1-\delta/8$.
\end{proof}
\subsubsection{Proof of Part~(d)}
\begin{lemma}\label{l:conc_event_part_d}
For any $\delta>0$, suppose that $p \ge \poly\left(L,\log\left(\frac{n}{\delta}\right)\right)$ for a large enough polynomial, then with probability at least $1-\delta/8$ over the randomness in $V^{(1)}$ we have that for all $s \in [n]$ and all $1\le \ell_1 \le \ell_2 \le L$,
     \begin{align*}
         \left\lv V^{(1)}_{\ell_2} \Sigma_{\ell_2-1,s}^{V^{(1)}} \cdots \Sigma_{\ell_1,s}^{V^{(1)}} V_{\ell_1}^{(1)}a\right\rv \le 3 \lv a \rv 
     \end{align*}
     for all vectors $a$ with $\lv a \rv_{0}\le k= \frac{cp}{\log(p)L^2}$, where $c$ is a small enough positive absolute constant.
\end{lemma}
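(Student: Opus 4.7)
The plan is to adapt the epsilon-net argument from the proof of Part~(c) (Lemma~\ref{l:conc_event_part_c}), but now to union-bound over all possible supports of size $k$, rather than only over a single fixed small subset. The weaker conclusion we seek here (bound $3 \lv a\rv$ for $k$-sparse $a$ only, rather than $O(L)\lv a \rv$ for all $a$) is exactly what one expects once one restricts attention to a single support of size $k$; the work is to ensure the failure probability survives the union bound over $\binom{p}{k}$ choices of support.

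Concretely, I would first fix $1 \le \ell_1 \le \ell_2 \le L$, a sample $s \in [n]$, and a subset $S \subseteq [p]$ with $|S| = k = \frac{cp}{\log(p)L^2}$ for a small absolute constant $c$ to be pinned down later. For vectors $z$ supported on $S$ with $\lv z\rv = 1$, Part~(b) of Lemma~\ref{l:analog_of_conc_part_a_with_sigmas} gives, for any \emph{fixed} such $z$, the pointwise bound $\lv V_{\ell_2}\Sigma_{\ell_2-1,s}\cdots \Sigma_{\ell_1,s} V_{\ell_1} z \rv \le 2$ with probability at least $1 - O(nL^3)e^{-\Omega(p/L^2)}$. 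Apply this to each element of a $1/4$-net $\{z_1,\ldots,z_m\}$ of the unit sphere of $\R^S$, which exists with $m \le 9^{|S|} = 9^k$ by Lemma~\ref{l:covering_numbers_unit_vectors}, and union-bound over the net.

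Next I would union-bound over all choices of $S$, of $\ell_1 \le \ell_2$, and of $s$; the number of $S$ is $\binom{p}{k} \le e^{k\log p}$, so the total failure probability is at most
\[
O(nL^2)\cdot \binom{p}{k}\cdot 9^k \cdot O(nL^3)\, e^{-\Omega(p/L^2)}
\;\le\; e^{O(\log(nL)) + k\log p + k\log 9 - \Omega(p/L^2)}.
\]
With $k = cp/(\log(p)L^2)$, the terms $k\log p$ and $k\log 9$ are $O(cp/L^2)$, so choosing $c$ small enough makes the exponent $-\Omega(p/L^2)$; since $p \ge \poly(L,\log(n/\delta))$ for a large enough polynomial, the failure probability is at most $\delta/8$. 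On the complement of the failure event, the standard net-to-sphere step used in Part~(c) converts the bound on the net into the bound $\frac{2}{1-1/4} = 8/3 < 3$ on every unit vector supported on $S$; since $S$ was an arbitrary size-$k$ support covered by the union bound, this extends to every $k$-sparse $a$ by the homogeneity $\lv M a\rv = \lv a\rv \cdot \lv M (a/\lv a\rv)\rv$.

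The main obstacle is not conceptual but bookkeeping: we need the exponent $-\Omega(p/L^2)$ arising from the pointwise bound to dominate \emph{both} the $\log 9^k$ coming from the $1/4$-net \emph{and} the $\log\binom{p}{k}$ coming from enumerating supports. This is precisely why $k$ must carry a $1/\log(p)$ factor (absent from the proof of Part~(c), where only one support was considered), and why the constant $c$ in the definition of $k$ must be chosen small. Once $c$ is fixed so that $k\log p \le c'p/L^2$ for a sufficiently small $c'$, every subsequent inequality goes through as in Part~(c).
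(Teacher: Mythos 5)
Your proposal is correct and follows essentially the same route as the paper's proof: apply the fixed-vector bound from Part~(b) of Lemma~\ref{l:analog_of_conc_part_a_with_sigmas} to a $1/4$-net of $k$-sparse unit vectors of cardinality $\binom{p}{k}9^{k}$, run the standard net-to-sphere bootstrap to get $2/(1-1/4)<3$, and union bound over layer pairs and samples, with $k=\frac{cp}{\log(p)L^2}$ chosen so that the entropy terms $k\log(9ep)$ are absorbed by the $\Omega(p/L^2)$ exponent. Your support-by-support organization of the net is exactly how the paper's Lemma~\ref{l:covering_numbers_sparse_vectors} constructs its net (and it conveniently keeps the residual $a-\zeta(a)$ in the same support), so the two arguments coincide.
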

\begin{proof}We fix the layers $1\le \ell_1\le \ell_2 \le L$ and fix the sample index $s$. At the end of the proof we shall take a union bound over all pairs of layers and all samples. Again, to ease notation, let us denote $V_{\ell}^{(1)}$ by simply $V_{\ell}$ and let $\Sigma_{\ell,s}^{V^{(1)}}$ be denoted by $\Sigma_{\ell,s}$.

For a fixed unit vector $z \in \S^{p-1}$ by Part~(b) of Lemma~\ref{l:analog_of_conc_part_a_with_sigmas} that is proved below we have
\begin{align} \label{e:bound_on_norm_sparse_single_vector}
    \lv V_{\ell_2} \Sigma_{\ell_2-1,s} \cdots \Sigma_{\ell_1,s}V_{\ell_1}z \rv \le 2
\end{align}
with probability at least $1-O(nL^3)e^{-\Omega\left(\frac{p}{L^2}\right)}$. Consider a $1/4$-net of $k$-sparse unit vectors $\{a_i\}_{i=1}^m$, where $m=\binom{p}{k}9^{k}$ (such a net exists, see Lemma~\ref{l:covering_numbers_sparse_vectors}).

Using \eqref{e:bound_on_norm_sparse_single_vector} and taking a union bound, 
%
with probability at least $1-O\left(\binom{p}{k}9^k \cdot nL^3\right)e^{-\Omega\left(\frac{p}{L^2}\right)}$,
for all vectors $\{a_i\}_{i=1}^m$ 
\begin{align*}
    \lv V_{\ell_2} \Sigma_{\ell_2-1,s} \cdots \Sigma_{\ell_1,s}V_{\ell_1}a_i \rv \le 2.
\end{align*}
 
Now by mirroring the logic that lead from inequality~\eqref{e:cover_good_event} to inequality~\eqref{e:allen-zhu_lemma_7.3_part_a_midway} in the proof of the previous lemma, we can establish that,
again with probability that is at least $1-\binom{p}{k}9^k \cdot O(nL^3)e^{-\Omega\left(\frac{p}{L^2}\right)}$,
for any vector $a$ that is $k$-sparse 
\begin{align*}
    \left\lv V_{\ell_2} \Sigma_{\ell_2-1,s} \cdots \Sigma_{\ell_1,s}V_{\ell_1}\frac{a}{\lv a\rv} \right\rv \le 3.
\end{align*}
A union bound over all pairs of layers and all samples we find that,
with probability at least 
$1- O\left(\binom{p}{k}9^k \cdot n^2L^5\right)e^{-\Omega\left(\frac{p}{L^2}\right)}$,
for all $1\le \ell_1\le \ell_2\le L$, for all $s\in [n]$ and for all vectors $a$ that are $k$-sparse
\begin{align*}
    \left\lv V_{\ell_2} \Sigma_{\ell_2-1,s} \cdots \Sigma_{\ell_1,s}V_{\ell_1}\frac{a}{\lv a\rv} \right\rv \le 3.
\end{align*}
Moreover,
\begin{align*}
 1- O\left(\binom{p}{k}9^k \cdot n^2L^5\right)e^{-\Omega\left(\frac{p}{L^2}\right)}   & \ge 1- O\left(\left(\frac{ep}{k}\right)^k9^k \cdot n^2L^5\right)e^{-\Omega\left(\frac{p}{L^2}\right)}   \qquad \mbox{(since $\binom{p}{k}\le \left(\frac{ep}{k} \right)^k$)}\\
 &= 1- O\left(\left(\frac{9ep}{k}\right)^k \cdot n^2L^5\right)e^{-\Omega\left(\frac{p}{L^2}\right)}\\
 &= 1- O\left(n^2L^5\right)e^{-\Omega\left(\frac{p}{L^2}-k\log(9ep)\right)}\\
 &\ge 1-\delta/8
\end{align*}
where the last inequality follows since $k \le \frac{cp}{\log(p)L^2}$ where $c$ is a small enough absolute constant and $p \ge \poly\left(L,\log\left(\frac{n}{\delta}\right)\right)$ for a large enough polynomial. This completes the proof.
\end{proof}

\subsubsection{Proof of Part~(e)}
\begin{lemma}\label{l:conc_event_part_e}
For any $\delta>0$, suppose that $p \ge \poly\left(L,\log\left(\frac{n}{\delta}\right)\right)$ for a large enough polynomial, then with probability at least $1-\delta/8$ over the randomness in $V^{(1)}$ we have that for all $s \in [n]$ and all $1\le \ell_1 \le \ell_2 \le L$,
     \begin{align*}
         \left\lv a^{\top} V^{(1)}_{\ell_2} \Sigma_{\ell_2-1,s}^{V^{(1)}} \cdots \Sigma_{\ell_1,s}^{V^{(1)}} V_{\ell_1}^{(1)}\right\rv \le O(\lv a \rv)
     \end{align*}
     for all vectors $a$ with $\lv a \rv_{0} \le k  = \frac{cp}{\log(p)L^2}$, where $c$ is a small enough positive absolute constant.
\end{lemma}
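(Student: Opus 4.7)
The plan is to mirror the structure of the proof of Lemma~\ref{l:conc_event_part_d}, but with the sparse unit vector acting on the product from the \emph{left} rather than from the right. Let me fix a sample index $s \in [n]$ and layers $1 \le \ell_1 \le \ell_2 \le L$, and write $V_\ell, \Sigma_{\ell,s}$ as shorthand for the initial-weight quantities. Set $\Pi := V_{\ell_2}\Sigma_{\ell_2-1,s}\cdots \Sigma_{\ell_1,s}V_{\ell_1}$. Since $\lv a^\top \Pi\rv = \lv \Pi^\top a\rv$, the quantity we must control is morally of the same type as in Part~(d), but with $\Pi^\top$ in place of $\Pi$. The first step is therefore to invoke the left-action companion of Lemma~\ref{l:analog_of_conc_part_a_with_sigmas} (or equivalently, apply the right-action version to the transposed product, noting that $V_\ell$ and $V_\ell^\top$ are equidistributed) to obtain, for any \emph{fixed} unit vector $z \in \S^{p-1}$,
\[
\lv z^\top \Pi \rv \le 2
\qquad\text{with probability at least }1 - O(nL^3)\,e^{-\Omega(p/L^2)}.
\]

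Next I would run the standard $1/4$-net argument on $k$-sparse unit vectors. By Lemma~\ref{l:covering_numbers_sparse_vectors} there is a $1/4$-net $\{a_i\}_{i=1}^m$ of the $k$-sparse unit sphere with $m \le \binom{p}{k}9^k$, constructed as the union of $1/4$-nets of the $\binom{p}{k}$ coordinate subspheres of dimension $\le k$. A union bound over the net, over all pairs $1 \le \ell_1 \le \ell_2 \le L$, and over all $s \in [n]$ gives that, simultaneously, $\lv a_i^\top \Pi\rv \le 2$ for every net element and every $(s,\ell_1,\ell_2)$, with probability at least
\[
1 - O(n^2 L^5)\,\binom{p}{k}9^k \, e^{-\Omega(p/L^2)}
\;=\; 1 - O(n^2 L^5)\,e^{-\Omega(p/L^2) + k\log(9ep/k)}.
\]
For $k \le cp/(\log(p)L^2)$ with $c$ a small enough absolute constant, and $p \ge \poly(L,\log(n/\delta))$ for a sufficiently large polynomial, this probability is at least $1 - \delta/8$.

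To promote the net bound to a uniform bound over all $k$-sparse unit vectors, I would use the absorption trick exactly as in the chain leading from \eqref{e:cover_good_event} to \eqref{e:allen-zhu_lemma_7.3_part_a_midway}. Let $\Xi := \sup\{\lv a^\top \Pi\rv : \lv a\rv = 1,\ \lv a\rv_0 \le k\}$. For any $k$-sparse unit $a$ with support $T$, pick $\zeta(a)$ in the sub-net associated with $T$; then $\lv a - \zeta(a)\rv \le 1/4$ and $a - \zeta(a)$ is still supported on $T$, hence $k$-sparse. The triangle inequality and the definition of $\Xi$ give
\[
\Xi \;\le\; \sup_i \lv a_i^\top \Pi \rv + \tfrac{1}{4}\,\Xi \;\le\; 2 + \tfrac{\Xi}{4},
\]
so $\Xi \le 8/3 = O(1)$. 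Scaling by $\lv a\rv$ yields the advertised bound $\lv a^\top \Pi\rv \le O(\lv a\rv)$ for every $k$-sparse $a$.

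The main obstacle is the first step: rigorously establishing the fixed-vector left-action concentration with the correct probability. The $\Sigma_{\ell,s}$ matrices depend on the forward pass through $V_1,\ldots,V_{\ell_2-1}$ but not on $V_{\ell_2}$, so conditioning on the early layers keeps $V_{\ell_2}$ (and the coordinates of $V_{\ell_1},\ldots,V_{\ell_2-1}$ that are independent of the $\Sigma$'s via a careful decomposition) Gaussian and amenable to the same conditioning/martingale argument used for the right-action bound. Once this companion concentration lemma is in place, the remainder is the routine sparse-net and union-bound machinery.
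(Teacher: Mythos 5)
Your net-plus-absorption machinery and the probability bookkeeping are fine, but the foundation of the argument — the ``left-action companion'' of Lemma~\ref{l:analog_of_conc_part_a_with_sigmas} — is a genuine gap, and the shortcut you offer for it does not work. Transposing the product and invoking equidistribution of $V_\ell$ and $V_\ell^{\top}$ does not reduce your claim to the right-action lemma: in $\Sigma_{\ell_2-1,s}V_{\ell_2-1}\cdots\Sigma_{\ell_1,s}V_{\ell_1}z$ each diagonal matrix $\Sigma_{\ell,s}$ is generated by the very matrix $V_\ell$ that is applied immediately after it in the order of application, and the proof of Lemma~\ref{l:analog_of_conc_part_a_with_sigmas} exploits exactly this pairing (fresh randomness $V_\ell$ at each step, split into components parallel and perpendicular to $x_{\ell-1,s}$). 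In the transposed product $V_{\ell_1}^{\top}\Sigma_{\ell_1,s}\cdots\Sigma_{\ell_2-1,s}V_{\ell_2}^{\top}z$, each $\Sigma_{\ell,s}$ is instead tied to the matrix applied \emph{after} it, so when you apply $V_\ell^{\top}$ the vector you act on already depends on $V_\ell$ through $\Sigma_{\ell,s}$; the joint law is not that of any product covered by the existing lemma, so you would have to prove a new layer-by-layer concentration result (with the same $e^{-\Omega(p/L^2)}$ failure probability, needed to survive your $\binom{p}{k}9^{k}$-sized union bound) from scratch. You flag this as ``the main obstacle,'' but you neither prove it nor can you cite it, and it is precisely the hard part of the lemma.

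The paper sidesteps the need for any left-action analogue. It controls the \emph{bilinear} form: conditioned on the right-action event $\lv \Sigma_{\ell_2-1,s}\cdots V_{\ell_1}z\rv\le 2\lv z\rv$ from Lemma~\ref{l:analog_of_conc_part_a_with_sigmas}, the scalar $w^{\top}V_{\ell_2}\left(\Sigma_{\ell_2-1,s}\cdots V_{\ell_1}z\right)$ is Gaussian in the fresh randomness $V_{\ell_2}$ with variance $O(\lv w\rv^2\lv z\rv^2/p)$, hence bounded by $\frac{4}{L}\lv w\rv\lv z\rv$ with probability $1-e^{-\Omega(p/L^2)}$. It then takes a double net (a sparse net for $w$ and a net of vectors supported on a fixed subset $S$ of size $O(p/L^2)$ for $z$), absorbs the four cross terms, and finally recovers $\lv a^{\top}\Pi\rv=\sup_{b}\lvert a^{\top}\Pi b\rvert$ by partitioning $[p]$ into $O(L^2)$ blocks, which is where the factor $\frac{10\sqrt{q}}{L}=O(1)$ comes from. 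If you want to keep your single-net structure you must first supply and prove the left-action concentration lemma; otherwise the bilinear route is the one supported by the tools already established in the paper.
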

\begin{proof}We fix the layers $1\le \ell_1\le \ell_2 \le L$ and fix the sample index $s$. At the end of the proof we shall take a union bound over all pairs of layers and all samples. 
In the proof let us denote $V_{\ell}^{(1)}$ by simply $V_{\ell}$ and let $\Sigma_{\ell,s}^{V^{(1)}}$ be denoted by $\Sigma_{\ell,s}$.

For any fixed vector $z$ we know from Part~(a) of Lemma~\ref{l:analog_of_conc_part_a_with_sigmas} that with probability at least $1-O(nL^3)e^{-\Omega\left(\frac{p}{L^2}\right)}$ over the randomness in $(V_{\ell_2-1},\ldots,V_{1})$
\begin{align} \label{e:part_d_good_event_1}
    \lv \Sigma_{\ell_2-1,s}V_{\ell_2-1}\ldots \Sigma_{\ell_1,s}V_{\ell_1} z\rv\le 2\lv z \rv.
\end{align}
Recall that the entries of $V_{\ell_2}$ are drawn independently from $\cN(0,\frac{2}{p})$. Thus, conditioned on this event above, for any fixed vector $w$ the random variable $w^{\top} V_{\ell_2} \left(\Sigma_{\ell_2-1,s}\cdots \Sigma_{\ell_1}V_{\ell_1} z\right)$ is a mean-zero Gaussian with variance at most $\frac{8\lv w \rv^2 \lv z \rv^2}{p}$. Thus over the randomness in $V_{\ell_2}$
\begin{align}\label{e:part_d_good_event_2}
    \Pr\left(\left\lvert w^{\top} V_{\ell_2} \Sigma_{\ell_2-1,s}\cdots \Sigma_{\ell_1,s}V_{\ell_1} z \right\rvert \le \frac{4}{L}\lv w\rv\lv z \rv\Big| V_{\ell_2-1},\ldots,V_{1}\right)\ge 1- e^{-\Omega\left(\frac{p}{L^2}\right)}. 
\end{align}
By union bound over the events in \eqref{e:part_d_good_event_1} and \eqref{e:part_d_good_event_2} we have
\begin{align}\label{e:part_d_good_event_3} 
    \Pr\left(\left\lvert w^{\top} V_{\ell_2} \Sigma_{\ell_2-1,s}\cdots \Sigma_{\ell_1,s}V_{\ell_1} z \right\rvert \le \frac{4}{L}\lv w\rv\lv z \rv\right)\ge 1-O(nL^3)e^{-\Omega\left(\frac{p}{L^2}\right)}. 
\end{align}
Similar to the proof of Lemma~\ref{l:conc_event_part_c} our strategy will be to first bound  
\begin{align*}
     \sup_{a: \lv a\rv =1,\lv a \rv_0 \le k}\sup_{b: \lv b\rv =1,\supp(b)\in \subseteq S}\left\lvert a^{\top} V_{\ell_2} \Sigma_{\ell_2-1,s}\cdots \Sigma_{\ell_1,s}V_{\ell_1} b \right\rvert
\end{align*}
where $S$ is a fixed subset of $[p]$ with $|S|\le \frac{c_1p}{L^2}$, where $c_1$ is a small enough absolute constant. Let $\{z_i\}_{i=1}^r$ be a $1/4$-net of unit vectors with respect to the Euclidean norm whose coordinates are non-zero only on this subset $S$. There exists such a $1/4$-net of size  $r=9^{c_1p/L^2}$ (see Lemma~\ref{l:covering_numbers_unit_vectors}). Let $\{w_i\}_{i=1}^m$ be a $1/4$-net of $k$-sparse unit vectors in Euclidean norm of size $m=\binom{p}{k}9^{k}$ (Lemma~\ref{l:covering_numbers_sparse_vectors} guarantees the existence of such a net). Therefore by using \eqref{e:part_d_good_event_3} and taking a union bound we get that 
\begin{align}\label{e:part_d_good_event_cover}
  \forall \; i \in [r],j \in [m], \;\;
    \left\lvert w_j^{\top} V_{\ell_2} \Sigma_{\ell_2-1,s}\cdots \Sigma_{\ell_1,s}V_{\ell_1} z_i \right\rvert \le \frac{4}{L}
\end{align}
with probability at least $1-mrO(nL^3)e^{-\Omega\left(\frac{p}{L^2}\right)}=1-O(9^{c_1p/L^2}\binom{p}{k}9^{k}nL^3)e^{-\Omega\left(\frac{p}{L^2}\right)}=1-e^{-\Omega\left(\frac{p}{L^2}\right)}$, since $k=\frac{cp}{\log(p)L^2}$ where both $c$ and $c_1$ are small enough absolute constants and because $p \ge \poly\left(L,\log\left(\frac{n}{\delta}\right)\right)$ for a large enough polynomial.

We will now demonstrate that if the ``good event'' in \eqref{e:part_d_good_event_cover} holds then we can use this to establish a similar guarantee for all $k$-sparse unit vectors $a$ and all unit vectors $b$ that are only non-zero on the subset $S$. To see this, 
as before, suppose $\zeta$ maps any unit-length
vector with support in $S$
to its nearest neighbor in $\{ z_1,\ldots,z_r\}$
and
$\lambda$ maps any $k$-sparse unit vector
to 
its nearest neighbor in $\{ w_1,\ldots,w_m\}$.
Then
if the event in \eqref{e:part_d_good_event_cover} holds, we have
\begin{align*}
    &\Xi:= \sup_{a:\lv a \rv=1,\lv a\rv_0 \le k} \sup_{b:\lv b \rv=1,\supp(b) \subseteq S} \left\lvert a^{\top} V_{\ell_2} \Sigma_{\ell_2-1,s}\cdots \Sigma_{\ell_1,s}V_{\ell_1} b \right\rvert \\
    & =
     \sup_{a:\lv a \rv=1,\lv a\rv_0 \le k} \sup_{b:\lv b \rv=1,\supp(b) \subseteq S} \left\lvert (a-\lambda(a)+\lambda(a) )^{\top}V_{\ell_2} \Sigma_{\ell_2-1,s}\cdots \Sigma_{\ell_1,s}V_{\ell_1} (b-\zeta(b)+\zeta(b)) \right\rvert \\
    &\le  \sup_{i \in [m],j\in [r]}\left\lvert w_i^{\top} V_{\ell_2} \Sigma_{\ell_2-1,s}\cdots \Sigma_{\ell_1,s}V_{\ell_1} z_j \right\rvert \\
    & \qquad + \sup_{a:\lv a \rv=1,\lv a\rv_0 \le k, j \in [r]}  \left\lvert (a-\lambda(a))^{\top} V_{\ell_2} \Sigma_{\ell_2-1,s}\cdots \Sigma_{\ell_1,s}V_{\ell_1} z_j \right\rvert\\
    & \qquad +\sup_{i \in [m], b:\lv b \rv=1,\supp(b) \subseteq S} \left\lvert w_i^{\top} V_{\ell_2} \Sigma_{\ell_2-1,s}\cdots \Sigma_{\ell_1,s}V_{\ell_1} (b-\zeta(b)) \right\rvert\\
  &\qquad +\sup_{a:\lv a \rv=1,\lv a\rv_0 \le k} \sup_{i \in [m], b:\lv b \rv=1,\supp(b) \subseteq S} \left\lvert (a-\lambda(a))^{\top} V_{\ell_2} \Sigma_{\ell_2-1,s}\cdots \Sigma_{\ell_1,s}V_{\ell_1} (b-\zeta(b)) \right\rvert  \\
  &\overset{(i)}{\le} \frac{4}{L}+
 \frac{\Xi}{4} + \frac{\Xi}{4} +\frac{\Xi}{16} \le \frac{4}{L}+\frac{9}{16}\Xi \numberthis \label{e:part_d_good_event_general_point}
\end{align*}
where 
$(i)$ follows  
by the definition of $\Xi$ along with Lemma~\ref{l:covering_numbers_sparse_vectors}, because we assume that the event in \eqref{e:part_d_good_event_cover} holds, and also because $\lv a-\lambda(a)\rv\le 1/4$ and $\lv b-\zeta(b)\rv\le 1/4$. 

By rearranging terms in the previous display we can infer that 
\begin{align}\label{e:part_d_pre_final_good_event}
   \Xi := \sup_{a:\lv a \rv=1,\lv a\rv_0 \le k} \sup_{b:\lv b \rv=1,\supp(b) \subseteq S}\left\lvert a^{\top} V_{\ell_2} \Sigma_{\ell_2-1,s}\cdots \Sigma_{\ell_1,s}V_{\ell_1} b \right\rvert \le \frac{1}{\left(1-\frac{9}{16} \right)}\frac{4}{L} < \frac{10}{L}
\end{align}
with probability at least $1-e^{-\Omega\left(\frac{p}{L^2}\right)}$.

Finally, when $b$ is an arbitrary unit vector we can partition $[p]=S_1 \cup \ldots \cup S_m$, such that for all $i \in [m]$, $|S_i|\le \floor{\frac{c_1p}{L^2}}$ and the number of the sets in the partition $q \le \frac{p}{\floor{\frac{c_1p}{L^2}}}= \ceil{\frac{L^2}{c_1}}$. Thus, given an arbitrary unit vector $b\in \S^{p-1}$, we can decompose it as $b = u_1 + \ldots + u_q$, where each $u_i$ is non-zero only on the set $S_i$. By invoking the triangle inequality
\begin{align*}
    \lvert a^{\top} V_{\ell_2} \Sigma_{\ell_2-1,s} \cdots \Sigma_{\ell}V_{\ell_1}b \rvert &\le \sum_{i=1}^q \lvert a^{\top}V_{\ell_2} \Sigma_{\ell_2-1} \cdots \Sigma_{\ell_1,s}V_{\ell_1}u_i \rvert.
\end{align*}
By applying the result of \eqref{e:part_d_pre_final_good_event} to each term in the sum above we find that: for all $k$-sparse unit vectors $a$ and all unit vectors $b \in \S^{p-1}$
\begin{align*}
    \lvert a^{\top}V_{\ell_2} \Sigma_{\ell_2-1,s} \cdots \Sigma_{\ell_1,s}V_{\ell_1}b \rvert &\le \frac{10}{L}\sum_{i=1}^q \lv u_i\rv \le \frac{10}{L}\sqrt{q}\left(\sum_{i=1}^q \lv u_i\rv^2  \right)^{1/2}= \frac{10\sqrt{q}}{L} =O(1)
\end{align*}
with probability at least $1-qe^{-\Omega\left(\frac{p}{L^2}\right)}= 1-O(L^2)e^{-\Omega\left(\frac{p}{L^2}\right)}$. In other words for all $k$-sparse unit vectors $a$
\begin{align*}
    \lv a^{\top}V_{\ell_2} \Sigma_{\ell_2-1,s} \cdots \Sigma_{\ell_1,s}V_{\ell_1} \rv &= \sup_{b:\lv b\rv=1} \lvert a^{\top}V_{\ell_2} \Sigma_{\ell_2-1,s} \cdots \Sigma_{\ell_1,s}V_{\ell_1}b \rvert\le O(1)
\end{align*}
with the same probability that is at least $1-O(L^2)e^{-\Omega\left(\frac{p}{L^2}\right)}$. By a union bound over the pairs of layers $\ell_1$ and $\ell_2$ and all samples $s\in [n]$ we establish that,
with probability at least $1-O(nL^4)e^{-\Omega\left(\frac{p}{L^2}\right)}$,
for all pairs $1\le \ell_1\le \ell_2\le L$, all $s\in [n]$ and all $k$-sparse vectors $a$
\begin{align*}
    \left\lv \frac{a^{\top}}{\lv a \rv}V_{\ell_2} \Sigma_{\ell_2-1,s} \cdots \Sigma_{\ell_1,s}V_{\ell_1} \right\rv \le O(1).
\end{align*}
Since, $p \ge \poly\left(L,\log\left(\frac{n}{\delta}\right)\right)$ we can ensure that this happens with probability at least $1-\delta/8$ which completes the proof.
\end{proof}
\subsubsection{Proof of Part~(f)}
\begin{lemma}\label{l:conc_event_part_f}
For any $\delta>0$, if
$p \ge \poly\left(L,\log\left(\frac{n}{\delta}\right)\right)$ for a large enough polynomial, then with probability at least $1-\delta/8$ over the randomness in $V^{(1)}$ we have that for all $s \in [n]$ and all $1\le \ell_1 \le \ell_2 \le L$,
\begin{align*}
        \lvert a^{\top} V_{\ell_2}^{(1)}\Sigma_{\ell_2-1,s}^{V^{(1)}}\cdots \Sigma_{\ell_1,s}^{V^{(1)}}V_{\ell_1}^{(1)}b\rvert \le O\left( \lv a \rv \lv b \rv\sqrt{\frac{k\log(p)}{p}}\right)
     \end{align*}
     for all vectors $a,b$ with $\lv a \rv_{0},\lv b \rv_{0}\le k= \frac{cp}{\log(p) L^2 }$, where $c$ is a small enough positive absolute constant.
\end{lemma}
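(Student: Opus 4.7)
My plan is to imitate the proof of Lemma~\ref{l:conc_event_part_e} almost verbatim, but replace the fine cover on $b$ (drawn from unit vectors supported on a fixed subset of size $\Theta(p/L^2)$) with a sparse $1/4$-net of $k$-sparse unit vectors. Because the sparse cover has far fewer points than the dense one used in Part~(e), we can afford a tighter per-pair Gaussian tail bound, and the resulting bound on the bilinear form improves from $O(1)$ to $O(\sqrt{k\log(p)/p})$.

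\textbf{Step 1: Per-pair tail bound.} Fix layers $1\le \ell_1\le \ell_2\le L$, a sample $s$, and abbreviate $V_\ell := V_\ell^{(1)}$, $\Sigma_{\ell,s} := \Sigma_{\ell,s}^{V^{(1)}}$. For any fixed unit vector $z\in\S^{p-1}$, Part~(a) of Lemma~\ref{l:analog_of_conc_part_a_with_sigmas} gives
\[
\lv \Sigma_{\ell_2-1,s}V_{\ell_2-1}\cdots\Sigma_{\ell_1,s}V_{\ell_1}z\rv \;\le\; 2
\]
with probability at least $1-O(nL^3)e^{-\Omega(p/L^2)}$ over $V_{\ell_2-1},\ldots,V_1$. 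Since $V_{\ell_2}$ is independent and has i.i.d.\ $\cN(0,2/p)$ entries, conditional on this event and on $V_{\ell_2-1},\ldots,V_1$, the scalar $w^\top V_{\ell_2}\Sigma_{\ell_2-1,s}V_{\ell_2-1}\cdots\Sigma_{\ell_1,s}V_{\ell_1}z$ is $\cN(0,\sigma^2)$ with $\sigma^2\le 8/p$ for any fixed unit $w$. A standard Gaussian tail bound therefore yields
\[
\lvert w^\top V_{\ell_2}\Sigma_{\ell_2-1,s}\cdots V_{\ell_1}z\rvert \;\le\; C\sqrt{\tfrac{k\log(p)}{p}}
\]
with probability $\ge 1-e^{-\Omega(k\log p)}$ for a sufficiently large absolute constant $C$.

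\textbf{Step 2: Double sparse net.} Invoke Lemma~\ref{l:covering_numbers_sparse_vectors} to fix $1/4$-nets $\{w_i\}_{i=1}^m$ and $\{z_j\}_{j=1}^m$ of $k$-sparse unit vectors, each of size $m\le\binom{p}{k}9^k\le (9ep/k)^k$, with the property that the nearest-point map $\lambda(\cdot)$ preserves support (so that $a-\lambda(a)$ is again $k$-sparse with norm $\le 1/4$, and likewise $\zeta(\cdot)$ for $b$). A union bound over all $m^2$ pairs gives
\[
\lvert w_i^\top V_{\ell_2}\Sigma_{\ell_2-1,s}\cdots V_{\ell_1}z_j\rvert \;\le\; C\sqrt{\tfrac{k\log(p)}{p}} \quad \text{for all } i,j,
\]
with failure probability at most $(9ep/k)^{2k}\bigl(O(nL^3)e^{-\Omega(p/L^2)}+e^{-\Omega(k\log p)}\bigr)$. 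Plugging in $k=cp/(\log(p)L^2)$ with $c$ sufficiently small, $2k\log(9ep/k)=O(k\log\log p+k\log L)\ll k\log p=\Theta(p/L^2)$, so this failure probability is $e^{-\Omega(p/L^2)}$.

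\textbf{Step 3: From net to all $k$-sparse pairs.} Let
\[
\Xi := \sup_{\substack{\lv a\rv=1,\\\lv a\rv_0\le k}}\sup_{\substack{\lv b\rv=1,\\ \lv b\rv_0\le k}} \lvert a^\top V_{\ell_2}\Sigma_{\ell_2-1,s}\cdots V_{\ell_1}b\rvert.
\]
Decomposing $a^\top M b$ (with $M$ the middle matrix product) as in \eqref{e:part_d_good_event_general_point},
\[
a^\top M b = \lambda(a)^\top M \zeta(b) + (a-\lambda(a))^\top M \zeta(b) + \lambda(a)^\top M(b-\zeta(b)) + (a-\lambda(a))^\top M(b-\zeta(b)),
\]
and using that $a-\lambda(a)$ and $b-\zeta(b)$ are $k$-sparse of norm $\le 1/4$, Step~2 gives
\[
\Xi \;\le\; C\sqrt{\tfrac{k\log(p)}{p}} + \tfrac{1}{4}\Xi + \tfrac{1}{4}\Xi + \tfrac{1}{16}\Xi,
\]
so $\Xi = O(\sqrt{k\log(p)/p})$. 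Finally, a union bound over the $O(nL^2)$ choices of $(s,\ell_1,\ell_2)$ costs a polynomial factor that is absorbed by the exponential decay, since $p\ge \poly(L,\log(n/\delta))$ for a sufficiently large polynomial; this yields the overall probability $\ge 1-\delta/8$.

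\textbf{Main obstacle.} The delicate part is balancing the probability budget. The bilinear form must be controlled simultaneously over $\binom{p}{k}^2 9^{2k}$ net pairs, so the Gaussian tail at a single pair must be $\le e^{-\Omega(k\log p)}$ with a constant large enough to overcome $2k\log(9ep/k)$. This is what forces the restriction $k\le cp/(\log(p) L^2)$ with $c$ a small enough constant: only then does $k\log p$ dominate $2k\log(9ep/k)$ by a constant factor, and only then does the resulting deviation rate $\sqrt{k\log(p)/p}$ match the target. Once that book-keeping is in place, the rest is a direct transcription of the Part~(e) argument with the sparse $b$-net in place of the fixed-support $b$-net.
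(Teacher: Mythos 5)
Your proposal is correct and follows essentially the same route as the paper's proof of this lemma: a conditional Gaussian tail bound built on Part~(a) of Lemma~\ref{l:analog_of_conc_part_a_with_sigmas}, a union bound over a pair of support-preserving $1/4$-nets of $k$-sparse unit vectors from Lemma~\ref{l:covering_numbers_sparse_vectors}, the $\frac{1}{4}+\frac{1}{4}+\frac{1}{16}$ bootstrap from the net to all $k$-sparse pairs, and a final union bound over layers and samples. The only cosmetic difference is that you count the net entropy as $2k\log(9ep/k)$ while the paper uses the cruder $2k\log(9ep)$ and absorbs it by taking the constant in the Gaussian tail large enough; both balance the probability budget in the same way.
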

\begin{proof}Fix the layers $1\le \ell_1\le \ell_2 \le L$ and the sample index $s$. At the end of the proof we shall take a union bound over all pairs of layers and all samples. 
In the proof, let us denote $V_{\ell}^{(1)}$ by 
$V_{\ell}$ and 
$\Sigma_{\ell,s}^{V^{(1)}}$ 
by $\Sigma_{\ell,s}$.

For any fixed vector $z$ we know from Part~(a) of Lemma~\ref{l:analog_of_conc_part_a_with_sigmas} that with probability at least $1-O(nL^3)e^{-\Omega\left(\frac{p}{L^2}\right)}$ over the randomness in $(V_{\ell_2-1},\ldots,V_{1})$
\begin{align} \label{e:part_e_good_event_1}
    \lv \Sigma_{\ell_2-1,s}V_{\ell_2-1}\ldots \Sigma_{\ell_1,s}V_{\ell_1} z\rv\le 2\lv z \rv.
\end{align}
Recall that the entries of $V_{\ell_2}$ are drawn independently from $\cN(0,\frac{2}{p})$. Thus, conditioned on this event above, for any fixed vector $w$ the random variable $w^{\top} V_{\ell_2} \left(\Sigma_{\ell_2-1,s}\cdots \Sigma_{\ell_1}V_{\ell_1} z\right)$ is a mean-zero Gaussian with variance at most $\frac{8\lv w \rv^2 \lv z \rv^2}{p}$. Therefore over the randomness in $V_{\ell_2}$
\begin{align}\label{e:part_e_good_event_2}
    \Pr\left(\left\lvert w^{\top} V_{\ell_2} \Sigma_{\ell_2-1,s}\cdots \Sigma_{\ell_1,s}V_{\ell_1} z \right\rvert \ge \frac{1}{c_2}\sqrt{\frac{k\log(p)}{p}}\lv w\rv\lv z \rv\Big| V_{\ell_1-1},\ldots,V_{1}\right)\le e^{-\frac{k\log(p)}{128 c_2^2}},
\end{align}
where $c_2$ is a small enough positive absolute constant that will be chosen only as a function of the constant $c$. A union bound over the events in \eqref{e:part_e_good_event_1} and \eqref{e:part_e_good_event_2} yields
\begin{align}\nonumber
    \Pr\left(\left\lvert w^{\top} V_{\ell_2} \Sigma_{\ell_2-1,s}\cdots \Sigma_{\ell_1,s}V_{\ell_1} z \right\rvert \le \frac{1}{c_2}\sqrt{\frac{k\log(p)}{p}}\lv w\rv\lv z \rv\right)&\ge 1-O(nL^3)e^{-\Omega\left(\frac{p}{L^2}\right)}-e^{-\frac{k\log(p)}{128 c_2^2}}\\
    & = 1-e^{-\Omega\left(\frac{p}{L^2}\right)}-e^{-\frac{k\log(p)}{128 c_2^2}}\label{e:part_e_good_event_3} 
\end{align}
where the last equality holds since $p\ge \poly\left(L,\log\left(\frac{n}{\delta}\right)\right)$ for a large enough polynomial.

Let $\{w_i\}_{i=1}^m$ be a $1/4$-net of $k$-sparse unit vectors in Euclidean norm of size $m=\binom{p}{k}9^{k}$ (such a net exists, see Lemma~\ref{l:covering_numbers_sparse_vectors}). Therefore by using \eqref{e:part_e_good_event_3} and taking a union bound we find that
\begin{align}\label{e:part_e_good_event_cover}
 \forall \; i,j \in [m], \; \;   \left\lvert w_i^{\top} V_{\ell_2} \Sigma_{\ell_2-1,s}\cdots \Sigma_{\ell_1,s}V_{\ell_1} w_j \right\rvert \le \frac{1}{c_2}\sqrt{\frac{k\log(p)}{p}}
\end{align}
with probability at least
\begin{align*}
    1-m^2\left(e^{-\Omega(p/L^2)}+e^{-\frac{k\log(p)}{128 c_2^2}}\right)&=1-O\left(\left(\binom{p}{k}9^{k}\right)^2\right)(e^{-\Omega\left(\frac{p}{L^2}\right)}+e^{-\frac{k\log(p)}{128 c_2^2}})\\
    &\overset{(i)}{\ge} 1-O\left(\left(\frac{9ep}{k}\right)^{2k}\right)(e^{-\Omega(p/L^2)}+e^{-\frac{k\log(p)}{128 c_2^2}})\\
    &= 1-\left(e^{-\Omega\left(\frac{p}{L^2}\right)+2k\log(9ep)}+e^{-\frac{k\log(p)}{128 c_2^2}+2k\log(9ep)}\right) \\
    &\overset{(ii)}{=} 1-\left(e^{-\Omega\left(\frac{p}{L^2}\right)}+e^{-\Omega(k\log(p))}\right) \overset{(iii)}{=} 1-e^{-\Omega\left(\frac{p}{L^2}\right)}
\end{align*}
where $(i)$ follows since $\binom{p}{k}\le \left(\frac{ep}{k}\right)^k$, $(ii)$ follows since $p\ge \poly\left(L,\log\left(\frac{n}{\delta}\right)\right)$, $k = \frac{cp}{\log(p)L^2}$ and because $c_2$ is a small enough absolute constant (which can be chosen given the constant $c$), and $(iii)$ again follows since $k = \frac{cp}{\log(p)L^2}$.
 
We will now demonstrate that if the ``good event'' in \eqref{e:part_e_good_event_cover} holds then we can use this to establish a similar guarantee for all $k$-sparse unit vectors $a$ and $b$. 
Suppose, for each $k$-sparse unit vector
$w$, that $\zeta(w)$ 
is 
its nearest neighbor in $\{ w_1,\ldots,w_m\}$.
Then,
if
the event in \eqref{e:part_e_good_event_cover} holds,
\begin{align*}
    &\Xi:= \sup_{a:\lv a \rv=1,\lv a\rv_0 \le k} \sup_{b:\lv b \rv=1,\lv b\rv_0 \le k} \left\lvert a^{\top} V_{\ell_2} \Sigma_{\ell_2-1,s}\cdots \Sigma_{\ell_1,s}V_{\ell_1} b \right\rvert \\
    &
     =
     \sup_{a:\lv a \rv=1,\lv a\rv_0 \le k} \sup_{b:\lv b \rv=1,\lv b\rv_0 \le k} \left\lvert (a-\zeta(a)+\zeta(a))^{\top} V_{\ell_2} \Sigma_{\ell_2-1,s}\cdots \Sigma_{\ell_1,s}V_{\ell_1} (b-\zeta(b)+\zeta(b)) \right\rvert \\
    &\le  \sup_{i,j \in [m]}\left\lvert w_i^{\top} V_{\ell_2} \Sigma_{\ell_2-1,s}\cdots \Sigma_{\ell_1,s}V_{\ell_1} w_j \right\rvert \\
    & \qquad 
    + \sup_{a:\lv a \rv=1,\lv a\rv_0 \le k, j \in [m]}  \left\lvert (a-\zeta(a))^{\top} V_{\ell_2} \Sigma_{\ell_2-1,s}\cdots \Sigma_{\ell_1,s}V_{\ell_1} w_j \right\rvert\\
    & \qquad  +\sup_{i \in [m], b:\lv b \rv=1,\lv b\rv_0 \le k} \left\lvert w_i^{\top} V_{\ell_2} \Sigma_{\ell_2-1,s}\cdots \Sigma_{\ell_1,s}V_{\ell_1} (b-\zeta(b)) \right\rvert\\
  & \qquad +\sup_{a:\lv a \rv=1,\lv a\rv_0 \le k} \sup_{b:\lv b \rv=1,\lv b\rv_0 \le k} \left\lvert (a-\zeta(a))^{\top} V_{\ell_2} \Sigma_{\ell_2-1,s}\cdots \Sigma_{\ell_1,s}V_{\ell_1} (b-\zeta(b)) \right\rvert  \\
  &\overset{(i)}{\le} \frac{1}{c_2}\sqrt{\frac{k\log(p)}{p}}+ \frac{\Xi}{4}+\frac{\Xi}{4}+\frac{\Xi}{16}  = \frac{1}{c_2}\sqrt{\frac{k\log(p)}{p}}+\frac{9}{16}\Xi 
\end{align*}
where $(i)$ follows by the definition of $\Xi$ along with Lemma~\ref{l:covering_numbers_sparse_vectors}, because we assume that the event in \eqref{e:part_e_good_event_cover} holds, and also since $\lv a-\zeta(a)\rv\le 1/4$ and $\lv b-\zeta(b)\rv\le 1/4$. 

By rearranging terms in the previous display we can infer that
\begin{align*}
   \Xi = \sup_{a:\lv a \rv=1,\lv a\rv_0 \le k} \sup_{b:\lv b \rv=1,\lv b\rv_0 \le k} \left\lvert a^{\top} V_{\ell_2} \Sigma_{\ell_2-1,s}\cdots \Sigma_{\ell_1,s}V_{\ell_1} b\right\rvert & \le \frac{1}{\left(1-\frac{9}{16} \right)}\frac{1}{c_2}\sqrt{\frac{k\log(p)}{p}}\\ &=  O\left(\sqrt{\frac{k\log(p)}{p}}\right)
\end{align*}
with probability at least $1-e^{-\Omega\left(\frac{p}{L^2}\right)}$. Taking a union bound over all pairs of layers and all sample we find that,
with probability at least $1-O(nL^2)e^{-\Omega\left(\frac{p}{L^2}\right)}$,
for all $1\le \ell_1 \le \ell_2 \le L$, for all $s \in [n]$ and all $k$-sparse vectors $a$ and $b$
\begin{align}\label{e:part_e_final_good_event}
    \left\lvert \frac{a^{\top}}{\lv a \rv} V_{\ell_2} \Sigma_{\ell_2-1,s}\cdots \Sigma_{\ell_1,s}V_{\ell_1} \frac{b}{\lv b\rv} \right\rvert = O\left(\sqrt{\frac{k\log(p)}{p}}\right).
\end{align}
Since $p\ge \poly\left(L,\log\left(\frac{n}{\delta}\right)\right)$ for a large enough polynomial we can ensure that this probability is at least $1-\delta/8$ which completes our proof.
\end{proof}
\subsubsection{Proof of Part~(g)}
\begin{lemma}\label{l:conc_event_part_g}
For any $\delta>0$, if $p \ge \poly\left(L,\log\left(\frac{n}{\delta}\right)\right)$ for a large enough polynomial, then, with probability at least $1-\delta/8$ over the randomness in $V^{(1)}$, for all $s \in [n]$ and all $1\le \ell \le L$,
\begin{align*}
        \lvert  V_{L+1}^{(1)}\Sigma_{L,s}^{V^{(1)}}\cdots \Sigma_{\ell,s}^{V^{(1)}}V_{\ell}^{(1)}a\rvert \le O\left( \lv a \rv \sqrt{k\log(p)}\right)
     \end{align*}
     for all vectors $a$ with $\lv a \rv_{0}\le k= \frac{cp}{\log(p) L^2 }$, where $c$ is a small enough positive absolute constant.
\end{lemma}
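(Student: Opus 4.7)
The plan is to adapt the argument of Lemma~\ref{l:conc_event_part_f} (Part~(f)), replacing the bilinear form with a linear form since the output layer vector $V_{L+1}^{(1)}$ plays the role of the ``$a$'' side but is not subject to a sparsity constraint (it is a single row drawn entirely from $\cN(0,1)$). The key arithmetic difference is that the entries of $V_{L+1}^{(1)}$ have variance $1$ rather than $2/p$, which accounts for the missing $1/\sqrt{p}$ factor in the conclusion relative to Part~(f).

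First, I would fix layer $\ell \in [L]$ and sample $s \in [n]$, and abbreviate $V_j^{(1)} = V_j$ and $\Sigma_{j,s}^{V^{(1)}} = \Sigma_{j,s}$. For a fixed $k$-sparse unit vector $z$, Part~(a) of Lemma~\ref{l:analog_of_conc_part_a_with_sigmas} (the same lemma invoked in the proofs of Parts~(d)--(f)) gives $\lv \Sigma_{L,s}V_L \cdots \Sigma_{\ell,s} V_\ell z\rv \le 2\lv z\rv$ with probability $1 - O(nL^3)e^{-\Omega(p/L^2)}$, over the randomness in $V_\ell,\ldots,V_L$. Conditioning on this event and on $V_\ell,\ldots,V_L$, the scalar $V_{L+1}\cdot(\Sigma_{L,s}V_L \cdots \Sigma_{\ell,s} V_\ell z)$ is a mean-zero Gaussian with variance at most $4\lv z\rv^2$. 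Standard Gaussian tail bounds then give
\[
\Pr\!\left(\bigl|V_{L+1}\Sigma_{L,s}V_L \cdots \Sigma_{\ell,s} V_\ell z\bigr| \ge \tfrac{1}{c_2}\sqrt{k\log(p)}\,\lv z\rv\right) \le e^{-\Omega(k\log p)}
\]
for an appropriate positive absolute constant $c_2$ (chosen small depending on $c$ in the statement). A union bound with the previous event yields the same bound with probability $1-e^{-\Omega(p/L^2)} - e^{-\Omega(k\log p)}$.

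Next I would discretize the set of $k$-sparse unit vectors with a $1/4$-net $\{w_i\}_{i=1}^m$ of size $m = \binom{p}{k} 9^k$ (existence by Lemma~\ref{l:covering_numbers_sparse_vectors}), and union bound the previous inequality over the net. Since $\binom{p}{k} \le (ep/k)^k$ and $k = cp/(\log(p)L^2)$ with $c$ sufficiently small, one checks that $m \cdot e^{-\Omega(k\log p)} = e^{-\Omega(k\log p)}$, so for $p \ge \poly(L,\log(n/\delta))$ large enough the union bound still succeeds with probability $1-e^{-\Omega(p/L^2)}$. A standard $1/4$-net bootstrapping (exactly as in the passage from \eqref{e:part_e_good_event_cover} to \eqref{e:part_e_final_good_event} in Part~(f), but now with a net only on the right side and no cover on $V_{L+1}$) gives
\[
\Xi := \sup_{a: \lv a\rv = 1,\ \lv a\rv_0 \le k}\bigl|V_{L+1}\Sigma_{L,s}V_L\cdots \Sigma_{\ell,s}V_\ell a\bigr| \le \tfrac{1}{c_2}\sqrt{k\log(p)} + \tfrac{1}{4}\Xi,
\]
which rearranges to $\Xi \le O(\sqrt{k\log p})$.

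Finally, I would take a union bound over all $L$ choices of $\ell$ and all $n$ samples $s$, which multiplies the failure probability by $nL$, still giving $1-e^{-\Omega(p/L^2)}$, and hence $\ge 1-\delta/8$ for $p$ larger than a sufficiently large polynomial in $L$ and $\log(n/\delta)$. The only substantive step is the Gaussian tail bound and the verification that the $k\log p$ exponent dominates $\log\binom{p}{k} + k\log 9$; this already works in Part~(f), and is the same calculation here with exactly the same choice of $k$, so I do not expect any genuine obstacle beyond transcribing the argument with $V_{L+1}$ taking the role of the left-hand covered vector.
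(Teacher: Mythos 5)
Your proposal matches the paper's own proof of Part~(g) essentially step for step: the same conditioning via Part~(a) of Lemma~\ref{l:analog_of_conc_part_a_with_sigmas}, the same Gaussian tail bound at threshold $\frac{1}{c_2}\sqrt{k\log(p)}$, the same $1/4$-net of $k$-sparse unit vectors of size $\binom{p}{k}9^k$ with the $\Xi \le \frac{1}{c_2}\sqrt{k\log(p)} + \frac{\Xi}{4}$ bootstrapping, and the same final union bound over $\ell$ and $s$. The proof is correct as proposed.
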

\begin{proof}Fix the layer $1\le \ell \le L$ and the sample index $s$. At the end of the proof we shall take a union bound over all  layers and all samples. 
Let us denote $V_{\ell}^{(1)}$ by 
$V_{\ell}$ and 
$\Sigma_{\ell,s}^{V^{(1)}}$ 
by $\Sigma_{\ell,s}$.

For any fixed vector $z$ we know from Part~(a) of Lemma~\ref{l:analog_of_conc_part_a_with_sigmas} that with probability at least $1-O(nL^3)e^{-\Omega\left(\frac{p}{L^2}\right)}$ over the randomness in $(V_{L},\ldots,V_{1})$
\begin{align} \label{e:part_g_good_event_1}
    \lv \Sigma_{L,s}V_{L}\ldots \Sigma_{\ell,s}V_{\ell} z\rv\le 2\lv z \rv.
\end{align}
Recall that the entries of $V_{L+1}$ are drawn independently from $\cN(0,1)$. Thus, conditioned on this event above, for any fixed vector $w$ the random variable $w^{\top} V_{L+1} \left(\Sigma_{L,s}\cdots \Sigma_{\ell}V_{\ell} z\right)$ is a mean-zero Gaussian with variance at most $4\lv z \rv^2$. Therefore over the randomness in $V_{L+1}$
\begin{align}\label{e:part_g_good_event_2}
    \Pr\left(\left\lvert V_{L+1} \Sigma_{L,s}\cdots \Sigma_{\ell,s}V_{\ell} z \right\rvert \ge \frac{\sqrt{k\log(p)}}{c_2} \lv z \rv\Big| V_{L},\ldots,V_{1}\right)\le e^{-\frac{k\log(p)}{32 c_2^2}},
\end{align}
where $c_2$ is a small enough positive absolute constant that will be chosen only as a function of the constant $c$. A union bound over the events in \eqref{e:part_g_good_event_1} and \eqref{e:part_g_good_event_2} yields
\begin{align}\nonumber
    \Pr\left(\left\lvert  V_{L+1} \Sigma_{L,s}\cdots \Sigma_{\ell,s}V_{\ell} z \right\rvert \le \frac{\sqrt{k\log(p)}}{c_2}\lv z \rv\right)&\ge 1-O(nL^3)e^{-\Omega\left(\frac{p}{L^2}\right)}-e^{-\frac{k\log(p)}{32 c_2^2}}\\
    & = 1-e^{-\Omega\left(\frac{p}{L^2}\right)}-e^{-\frac{k\log(p)}{32 c_2^2}}\label{e:part_g_good_event_3} 
\end{align}
where the last equality holds since $p\ge \poly\left(L,\log\left(\frac{n}{\delta}\right)\right)$ for a large enough polynomial.

Let $\{z_i\}_{i=1}^m$ be a $1/4$-net of $k$-sparse unit vectors in Euclidean norm of size $m=\binom{p}{k}9^{k}$ (such a net exists, see Lemma~\ref{l:covering_numbers_sparse_vectors}). Therefore by using \eqref{e:part_g_good_event_3} and taking a union bound we find that
\begin{align}\label{e:part_g_good_event_cover}
    \forall \; i\in [m], \; \;\left\lvert  V_{L+1} \Sigma_{L,s}\cdots \Sigma_{\ell,s}V_{\ell} z_i \right\rvert \le \frac{\sqrt{k\log(p)}}{c_2}
\end{align}
with probability at least
\begin{align*}
    1-m\left(e^{-\Omega(p/L^2)}+e^{-\frac{k\log(p)}{128 c_2^2}}\right)&=1-O\left(\binom{p}{k}9^{k}\right)(e^{-\Omega\left(\frac{p}{L^2}\right)}+e^{-\frac{k\log(p)}{32 c_2^2}})\\
    &\overset{(i)}{\ge} 1-O\left(\left(\frac{9ep}{k}\right)^{k}\right)(e^{-\Omega(p/L^2)}+e^{-\frac{k\log(p)}{32 c_2^2}})\\
    &= 1-\left(e^{-\Omega\left(\frac{p}{L^2}\right)+k\log(9ep)}+e^{-\frac{k\log(p)}{32 c_2^2}+k\log(9ep)}\right) \\
    &\overset{(ii)}{=} 1-\left(e^{-\Omega\left(\frac{p}{L^2}\right)}+e^{-\Omega(k\log(p))}\right) \overset{(iii)}{=} 1-e^{-\Omega\left(\frac{p}{L^2}\right)}
\end{align*}
where $(i)$ follows since $\binom{p}{k}\le \left(\frac{ep}{k}\right)^k$, $(ii)$ follows since $p\ge \poly\left(L,\log\left(\frac{n}{\delta}\right)\right)$, $k = \frac{cp}{\log(p)L^2}$ and because $c_2$ is a small enough absolute constant (which can be chosen given the constant $c$), and $(iii)$ follows again since $k = \frac{cp}{\log(p)L^2}$.
 
We will now demonstrate that if the ``good event'' \eqref{e:part_g_good_event_cover} holds then we can use this to establish a similar guarantee for all $k$-sparse unit vectors $a$. To see this, 
as before, suppose $\zeta$ maps any unit-length
$k$-sparse vector to 
its nearest neighbor in $\{ z_1,\ldots,z_r\}$.
Suppose that the event in \eqref{e:part_g_good_event_cover} holds then
\begin{align*}
    &\Xi:= \sup_{a:\lv a \rv=1,\lv a\rv_0 \le k} \left\lvert  V_{L+1} \Sigma_{L,s}\cdots \Sigma_{\ell,s}V_{\ell} a \right\rvert \\
    &= \sup_{a:\lv a \rv=1,\lv a\rv_0 \le k}  \left\lvert  V_{L+1} \Sigma_{L,s}\cdots \Sigma_{\ell,s}V_{\ell} (a-\zeta(a)+\zeta(a)) \right\rvert \\
    &\le  \sup_{i \in [m]}\left\lvert  V_{L+1} \Sigma_{L,s}\cdots \Sigma_{\ell,s}V_{\ell} z_j \right\rvert 
 +\sup_{a:\lv a \rv=1,\lv a\rv_0 \le k} \left\lvert V_{L+1} \Sigma_{L,s}\cdots \Sigma_{\ell,s}V_{\ell} (a-\zeta(a)) \right\rvert\\
  &\overset{(i)}{\le} \frac{\sqrt{k\log(p)}}{c_2}+\frac{1}{4} \sup_{a:\lv a \rv=1,\lv a\rv_0 \le k} \left\lvert V_{L+1} \Sigma_{L,s}\cdots \Sigma_{\ell,s}V_{\ell} \frac{a-\zeta(a)}{\lv a-\zeta(a)\rv} \right\rvert\overset{(ii)}{\le} \frac{\sqrt{k\log(p)}}{c_2} +\frac{\Xi}{4}
\end{align*}
where $(i)$ holds because we assume that the event in \eqref{e:part_g_good_event_cover} holds and since $\lv a-\zeta(a)\rv\le 1/4$, and $(ii)$ follows by the definition of $\Xi$ along with Lemma~\ref{l:covering_numbers_sparse_vectors}. 

By rearranging terms in the previous display we infer that
\begin{align*}
   \Xi = \sup_{a:\lv a \rv=1,\lv a\rv_0 \le k}  \left\lvert  V_{L+1} \Sigma_{L,s}\cdots \Sigma_{\ell,s}V_{\ell} a\right\rvert & \le \frac{1}{\left(1-\frac{1}{4} \right)}\frac{\sqrt{k\log(p)}}{c_2}=  O\left(\sqrt{k\log(p)}\right)
\end{align*}
with probability at least $1-e^{-\Omega\left(\frac{p}{L^2}\right)}$. Taking a union bound over all layers and all sample we find that, for all $1\le \ell \le L$, for all $s \in [n]$ and all $k$-sparse vectors $a$ 
\begin{align}\label{e:part_g_final_good_event}
    \left\lvert V_{L+1} \Sigma_{L,s}\cdots \Sigma_{\ell,s}V_{\ell} \frac{a}{\lv a\rv} \right\rvert = O\left(\sqrt{k\log(p)}\right)
\end{align}
with probability at least $1-O(nL)e^{-\Omega\left(\frac{p}{L^2}\right)}$. Since $p\ge \poly\left(L,\log\left(\frac{n}{\delta}\right)\right)$ for a large enough polynomial we can ensure that this probability is at least $1-\delta/8$ which completes our proof.
\end{proof}
\subsubsection{Proof of Part~(h)} 
\begin{lemma}\label{l:conc_event_part_h}
For any $\delta>0$, suppose that $h< \frac{1}{50\sqrt{p}L}$, $p \ge \poly\left(L,\log\left(\frac{n}{\delta}\right)\right)$ for a large enough polynomial and $\tau = \Omega\left(\frac{\log^{2}(\frac{nL}{\delta})}{p^{\frac{3}{2}}L^{3}}\right)$. 
%
For $\beta =O\left(\frac{L^2 \tau^{2/3}}{\sqrt{p}}\right)$, if
\begin{align*}
    \cS_{\ell,s}(\beta):= \left\{j\in [p]: |V^{(1)}_{\ell,j} x_{\ell,s}^{V^{(1)}}| \le \beta \right\}
\end{align*}
where $V^{(1)}_{\ell,j}$ refers to the $j$th row of $V^{(1)}_{\ell}$, then with probability at least $1-\delta/8$ over the randomness in $V^{(1)}$ we have that for all $\ell \in [L]$ and all $s \in [n]$: $$|\cS_{\ell,s}(\beta)|\le O(p^{3/2}\beta)=O(p L^2 \tau^{2/3}).$$
\end{lemma}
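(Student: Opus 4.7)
The plan is to fix $\ell \in [L]$ and $s \in [n]$, then condition on $V_1^{(1)},\ldots,V_{\ell-1}^{(1)}$ (which determines $x_{\ell-1,s}^{V^{(1)}}$) and invoke the already-established Part~(a) to assume $\lv x_{\ell-1,s}^{V^{(1)}} \rv \in [9/10,11/10]$; this holds on an event of probability at least $1 - \delta/(16nL)$ that will be absorbed into the final union bound. I then analyze $\cS_{\ell,s}(\beta)$ for this fixed pair and conclude by a union bound over all $(\ell,s)\in [L]\times [n]$. The high-level strategy is a first-moment plus Chernoff argument: show that for each $j$ the scalar $V^{(1)}_{\ell,j}\cdot x_{\ell,s}^{V^{(1)}}$ is well-approximated by a Gaussian of variance $\Theta(1/p)$, so the per-coordinate probability of lying in $[-\beta,\beta]$ is $O(\sqrt{p}\beta)$, giving $\E[|\cS_{\ell,s}(\beta)|] = O(p^{3/2}\beta)$; then promote this to a high-probability bound using independence across rows of $V_\ell^{(1)}$.

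For the per-coordinate analysis I would decompose
\[
V^{(1)}_{\ell,j}\cdot x_{\ell,s}^{V^{(1)}}
 = V^{(1)}_{\ell,j}\cdot y_j
 + (V^{(1)}_{\ell,j})_j\,\phi\bigl((u_{\ell,s}^{V^{(1)}})_j\bigr),
\]
where $y_j$ denotes $x_{\ell,s}^{V^{(1)}}$ with its $j$th coordinate replaced by $0$. The crucial point is that $y_j$ depends only on the rows $V^{(1)}_{\ell,k}$ with $k\neq j$ together with the lower layers, so after further conditioning on $V^{(1)}_{\ell,-j}$ the vector $y_j$ is fixed while $V^{(1)}_{\ell,j}\sim \cN(0,(2/p)I)$ remains a fresh draw. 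Applying the norm control of Part~(a) to the features contributing to $y_j$ gives $\lv y_j\rv = \Theta(1)$ uniformly in $j$ with high probability, so the dominant term $V^{(1)}_{\ell,j}\cdot y_j$ is a mean-zero Gaussian with variance $2\lv y_j\rv^2/p$, yielding the per-coordinate bound $\Pr[|V^{(1)}_{\ell,j}\cdot y_j|\le \beta \mid V^{(1)}_{\ell,-j}] \le C\sqrt{p}\beta$ via the uniform bound on the Gaussian density.

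Independence of the rows $\{V^{(1)}_{\ell,j}\}_{j\in [p]}$ makes the indicators $\mathbb{I}[j\in\cS_{\ell,s}(\beta)]$ close to i.i.d.\ Bernoullis with success probability at most $C\sqrt{p}\beta$, so a Chernoff bound gives $|\cS_{\ell,s}(\beta)|\le 2Cp^{3/2}\beta + O(\log(nL/\delta))$ with probability at least $1-\delta/(16nL)$. The logarithmic correction is absorbed into the leading term since $p^{3/2}\beta = pL^2\tau^{2/3} = \Omega(\log^{4/3}(nL/\delta))$ by the assumed lower bound $\tau=\Omega(\log^2(nL/\delta)/(p^{3/2}L^3))$, and a union bound over the $nL$ pairs $(\ell,s)$ completes the proof. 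The main obstacle is the self-referential coupling between the row $V^{(1)}_{\ell,j}$ and the feature vector $x_{\ell,s}^{V^{(1)}}$ (which itself depends on $V_\ell^{(1)}$); the leave-one-row-out decomposition above is designed precisely to isolate the pure Gaussian contribution $V^{(1)}_{\ell,j}\cdot y_j$ from the small self-dependent residual $(V^{(1)}_{\ell,j})_j\phi((u_{\ell,s}^{V^{(1)}})_j)$, which can be controlled uniformly in $j$ by applying standard Gaussian concentration to each of its two factors so that the residual is negligible relative to $\beta$ under the hypotheses on $\tau$ and $p$.
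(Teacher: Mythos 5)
Your skeleton---condition on the features entering layer $\ell$, use Part~(a) to pin their norm at $\Theta(1)$, bound the per-coordinate probability by a Gaussian density estimate $O(\sqrt{p}\,\beta)$, exploit independence across the rows of $V^{(1)}_{\ell}$, and finish with a Chernoff bound plus a union bound over $(\ell,s)$, absorbing the $O(\log(nL/\delta))$ deviation into the mean via the lower bound on $\tau$---is exactly the paper's argument (the paper conditions on $x_{\ell-1,s}$, notes $V^{(1)}_{\ell,j}x_{\ell-1,s}\sim\cN(0,2\lv x_{\ell-1,s}\rv^2/p)$, bounds the density, and applies Hoeffding with deviation $p^{3/2}\beta$); your multiplicative Chernoff with an additive logarithmic term is, if anything, the more careful way to reach the stated failure probability. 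The one place you diverge is the leave-one-row-out decomposition, which you introduce only because the statement writes $x_{\ell,s}^{V^{(1)}}$. In the paper's own proof, and in the sole place Part~(h) is invoked (the proof of Lemma~\ref{l:ntk_x_perturbation_in_a_ball}, where the set is $\{j:|V_{\ell,j}x_{\ell-1}|\le\beta\}$), the relevant scalar is $V^{(1)}_{\ell,j}x_{\ell-1,s}$, the pre-activation input, which is independent of $V^{(1)}_{\ell}$; the subscript in the statement is an index slip, and no leave-one-out step is needed.

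If one insists on the literal statement with $x_{\ell,s}$, your handling of the residual $(V^{(1)}_{\ell,j})_j\,\phi\bigl((u^{V^{(1)}}_{\ell,s})_j\bigr)$ has a genuine gap. Uniformly over $j,\ell,s$ that residual is of size $\Theta\bigl(\log(npL/\delta)/p\bigr)$ (each factor is $\widetilde{O}(p^{-1/2})$), whereas the hypotheses only guarantee $\beta \gtrsim \log^{4/3}(nL/\delta)\,p^{-3/2}$ when $\tau$ sits at its allowed lower bound, so the residual can exceed $\beta$ by a factor of order $\sqrt{p}$ up to polylogarithms; it is not ``negligible relative to $\beta$'' in an admissible regime. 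Nor can you treat it as a fixed shift and apply Gaussian anti-concentration to $V^{(1)}_{\ell,j}\cdot y_j$ on the shifted interval, since the residual is a function of the same row and hence correlated with the Gaussian term; enlarging the window to $\beta+r$ with $r=\widetilde{\Theta}(1/p)$ only yields a per-coordinate probability $\widetilde{O}(p^{-1/2})$ and a set-size bound $\widetilde{O}(\sqrt{p})$, weaker than the target $O(p^{3/2}\beta)$ near the minimal $\tau$. A further (unaddressed) point in the literal reading is that the indicators are then \emph{not} independent across $j$, since each depends on all rows of $V^{(1)}_{\ell}$ through $y_j$, so ``close to i.i.d.'' would itself need an argument. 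Working with $x_{\ell-1,s}$, as the paper intends, removes both problems and reduces your proof to the paper's.
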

\begin{proof} To ease notation let us refer to $V_{\ell}^{(1)}$ as $V_{\ell}$ and $x_{\ell,s}^{V^{(1)}}$ as $x_{\ell,s}$. For a fixed $\ell \in [L]$ and sample $s \in [n]$ define $$Z(\ell,j,s):= \mathbb{I}\left[|V_{\ell,j} x_{\ell,s}|\le \beta\right]$$
so
that $|\cS_{\ell,s}(\beta)| = \sum_{j=1}^p Z(j,\ell,s)$. Define 
$\cE$ to be the event that $\lv x_{\ell-1,s}\rv\ge \frac{1}{2}$.
By inequality~\eqref{e:bound_on_post_activation_feature_high_probability_statement} in the proof of Lemma~\ref{l:conc_event_part_a} above 
\begin{align}\label{e:part_f_good_event_norm_of_x}
    \Pr\left[\cE \right] \ge 1-O(nL)\exp\left(-\Omega\left(\frac{p}{L^2}\right)\right).
\end{align}
 Conditioned on $x_{\ell-1,s}$ since each entry of $V_{\ell,j}$ is drawn independently from $\cN(0,\frac{2}{p})$ we know that the distribution of $V_{\ell,j}x_{\ell-1,s} \sim \cN\left(0,\frac{2\lv x_{\ell-1,s}\rv^2}{p}\right)$. Thus, conditioned on the event $\cE$,
 which is determined by the random weights {\em before} layer $\ell$,
 we have that
\begin{align*}
    \mathbb{E}\left[ Z(j,\ell,s) \; \big| \; \cE \right]  
    = \Pr\left[j \in \cS_{\ell,s}(\beta)  \; \big| \; \cE \right]&= \sqrt{\frac{p}{4\pi\lv x_{\ell-1,s}\rv^2}}\int_{-\beta}^{\beta} \exp\left(-\frac{x^2 p}{4 \lv x_{\ell-1,s}\rv^2}\right) \mathrm{d} x \\ &\le \sqrt{\frac{p}{\pi}}\int_{-\beta}^{\beta} \exp\left(-\frac{x^2 p}{4 \lv x_{\ell-1,s}\rv^2}\right) \mathrm{d} x \le 2\beta \sqrt{\frac{p}{\pi}}.
\end{align*}
On applying Hoeffding's inequality (see Theorem~\ref{thm:hoeffding}) we find that
\begin{align*}
  &\Pr\left[ \left|\cS_{\ell,s}(\beta)\right| \le \E\mleft[\sum_{j=1}^{p}Z(j,\ell,s)\;  \Big| \; \cE\mright]+p^{3/2}\beta  \le p\left(2\beta \sqrt{\frac{p}{\pi}}\right) +p^{3/2}\beta \le 3p^{3/2}\beta \;  \bigg| \; \cE \right] \\
  & \qquad \qquad \ge 1-\exp(-\Omega(p^{3/2}\beta)). \numberthis \label{e:part_f_one_sample_one_layer_conditional_bound}
\end{align*}
Taking a union bound over the events in \eqref{e:part_f_good_event_norm_of_x} and \eqref{e:part_f_one_sample_one_layer_conditional_bound} we find that 
\begin{align*}
   \Pr\left[ |\cS_{\ell,s}(\beta)| \le 3p^{3/2}\beta\right] \ge 1-\exp(-\Omega(p^{3/2}\beta))-O(nL)\exp\left(-\Omega\left(\frac{p}{L^2}\right)\right).
\end{align*}
Applying a union bound over all samples and all layers we find that,
with probability at least $1-O(nL)\exp(-\Omega(p^{3/2}\beta))-O(n^2L^2)\exp\left(-\Omega\left(\frac{p}{L^2}\right)\right)$,
for all $\ell \in [L]$ and all $s \in [n]$,
\begin{align*}
    |\cS_{\ell,s}(\beta)| \le 3p^{3/2}\beta = O(p L^2 \tau^{2/3}).
\end{align*}
We shall now demonstrate that this probability of success is at least $1-\delta/8$. On substituting the value of $\beta = O(\frac{L^{8/3} \tau^{2/3}}{\sqrt{p}})$ we find that this probability is at least
\begin{align*}
    &1-O(nL)\exp(-\Omega(p^{3/2}\beta))-O(n^2L^2)\exp\left(-\Omega\left(\frac{p}{L^2}\right)\right) \\& \qquad \quad= 1-O(nL)\exp(-\Omega(p L^2 \tau^{2/3}))-O(n^2L^2)\exp\left(-\Omega\left(\frac{p}{L^2}\right)\right)\\
    & \qquad \quad \overset{(i)}{=} 1-O(nL)\exp(-\Omega(p L^2 \tau^{2/3}))-\exp\left(-\Omega\left(\frac{p}{L^2}\right)\right)\\
    & \qquad \quad \overset{(ii)}{=} 1-O(nL)\exp\left(-\Omega\left(\log^{\frac{4}{3}}\left( \frac{nL}{\delta}\right)\right)\right)-\exp\left(-\Omega\left(\frac{p}{L^2}\right)\right)\\& \qquad \quad\ge 1-\delta/8, 
\end{align*}
where $(i)$ follows since $p \ge \poly\left(L,\log\left(\frac{n}{\delta}\right)\right)$ for a large enough polynomial and $(ii)$ follows by assumption that $\tau = \Omega\left(\frac{\log^{2}(\frac{nL}{\delta})}{p^{\frac{3}{2}}L^3}\right)$. This completes our proof.
\end{proof}

\subsubsection{Other Useful Concentration Lemmas} \label{sss:other_concentration_lemmas}
The following lemma is useful in the proofs of Lemmas~\ref{l:conc_event_part_c}-\ref{l:conc_event_part_g}. It bounds the norm of an arbitrary unit vector $z$ that is multiplied by alternating weight matrices $V_{\ell}^{(1)}$ and corresponding 
$\Sigma^{V^{(1)}}_{\ell,s}$.
\begin{lemma} \label{l:analog_of_conc_part_a_with_sigmas}
If
$p \ge \poly(L,\log(n))$ for a large enough polynomial, then given an arbitrary unit vector $z\in \S^{p-1}$, with probability at least $1-O(nL^3)\exp\left(-\Omega\left( \frac{p}{L^2}\right)\right)$ over the randomness in $V^{(1)}$, for all $1\le \ell_1 \le \ell_2 \le L$ and for all $s\in [n]$,
\begin{enumerate}[(a)]
\item $\left\lv \Sigma_{\ell_2-1,s}^{V^{(1)}} \cdots \Sigma_{\ell_1,s}^{V^{(1)}} V_{\ell_1}^{(1)}z\right\rv \le 2,$ and
    \item $
         \left\lv V^{(1)}_{\ell_2} \Sigma_{\ell_2-1,s}^{V^{(1)}} \cdots \Sigma_{\ell_1,s}^{V^{(1)}} V_{\ell_1}^{(1)}z\right\rv \le 2.$
\end{enumerate}
     
\end{lemma}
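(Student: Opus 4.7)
The plan is to prove both bounds simultaneously by induction on the layer index, exploiting the independence of the Gaussian weight matrices across layers. Fix $1 \le \ell_1 \le \ell_2 \le L$, a sample index $s \in [n]$, and the unit vector $z$; I will take a union bound over these choices ($O(nL^2)$ options) at the end. Define $y_{\ell_1} := V_{\ell_1}^{(1)} z$ and, for $\ell_1 < \ell \le \ell_2$, set recursively
\[
y_\ell := V_\ell^{(1)} \Sigma_{\ell-1,s}^{V^{(1)}} y_{\ell-1},
\]
so that part (b) is the bound on $\lv y_{\ell_2}\rv$ and part (a) is the bound on $\lv \Sigma_{\ell_2-1,s}^{V^{(1)}} y_{\ell_2-1}\rv$. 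The inductive claim is that, for every $\ell_1 \le \ell \le \ell_2$, one has $\lv y_\ell \rv \le \sqrt{2}\,(1+\eta)^{\ell-\ell_1+1}$ and $\lv \Sigma_{\ell,s}^{V^{(1)}} y_\ell \rv \le (1+\eta)^{\ell-\ell_1+1}$ for some $\eta = \Theta(1/L)$, so that telescoping over at most $L$ layers yields the final bound of $2$ (since $(1+\eta)^L \le e^{O(1)}$ can be made $\le \sqrt{2}$ by choosing the constant in $\eta$ small enough).

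For the base case, $V_{\ell_1}^{(1)}$ has i.i.d.\ $\cN(0,2/p)$ entries, so $\lv y_{\ell_1}\rv^2 = \lv V_{\ell_1}^{(1)} z\rv^2$ is distributed as $(2/p)\chi^2_p$, and standard chi-squared (or sub-exponential Bernstein) concentration gives $\lv y_{\ell_1}\rv^2 \le 2(1+\eta)$ except with probability $\exp(-\Omega(p\eta^2))$. For the inductive step, condition on all previous weight matrices $V_1^{(1)},\ldots, V_{\ell-1}^{(1)}$; this fixes $\Sigma_{\ell-1,s}^{V^{(1)}}$, the vector $y_{\ell-1}$, and the post-activation $x_{\ell-1,s}^{V^{(1)}}$, and leaves $V_\ell^{(1)}$ independent. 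I would then write $u := \Sigma_{\ell-1,s}^{V^{(1)}}y_{\ell-1}$ so that $y_\ell = V_\ell^{(1)} u$ is conditionally Gaussian with $\lv y_\ell\rv^2 \sim (2\lv u\rv^2/p)\chi^2_p$, yielding $\lv y_\ell\rv^2 \le 2(1+\eta)\lv u\rv^2$ with failure probability $\exp(-\Omega(p\eta^2))$.

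The delicate step is passing from $\lv y_\ell \rv$ to $\lv \Sigma_{\ell,s}^{V^{(1)}} y_\ell \rv$: naively $\Sigma_{\ell,s}$ has operator norm $\le 1$, which would cause the inductive norm to grow by $\sqrt{2}$ per layer and overshoot. The resolution is that $\Sigma_{\ell,s}$ is determined by $V_\ell^{(1)} x_{\ell-1,s}^{V^{(1)}}$, which for Gaussian preactivations kills roughly half of its diagonal entries. I will use the rotation decomposition $w := y_{\ell-1}$ along and perpendicular to $x_{\ell-1,s}^{V^{(1)}}$: writing each row $V_{\ell,i}^{(1)} = \alpha_i \hat{x} + V_{\ell,i}^{\perp}$ where $\hat{x}$ is the unit vector along $x_{\ell-1,s}^{V^{(1)}}$, the activations $\phi'(V_{\ell,i}^{(1)}\cdot x_{\ell-1,s}^{V^{(1)}})$ depend only on $\alpha_i$, while the component of $V_\ell^{(1)} u$ orthogonal to the $\hat x$-direction is driven by $V_{\ell,i}^{\perp}$ which is independent of $\Sigma_{\ell,s}$. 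Conditioning on $\Sigma_{\ell,s}^{V^{(1)}}$ and combining the parallel and perpendicular contributions via sub-exponential Bernstein concentration (Theorem~\ref{thm:bernstein}) of the sum of squares $\sum_i \phi'(V_{\ell,i}^{(1)}\cdot x_{\ell-1,s}^{V^{(1)}})^2 \cdot (V_\ell^{(1)} u)_i^2$ around its conditional expectation (which is close to $\lv u\rv^2$ since $\E[\phi'(g)^2] \to 1/2$ for standard Gaussian $g$ and the Huberized transition region is negligible under the assumption $h \le 1/(50\sqrt{p}L)$) gives $\lv \Sigma_{\ell,s}^{V^{(1)}} y_\ell\rv^2 \le (1+\eta)\lv u\rv^2 = (1+\eta)\lv \Sigma_{\ell-1,s}^{V^{(1)}} y_{\ell-1}\rv^2$ with failure probability $\exp(-\Omega(p/L^2))$.

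The main obstacle is exactly this coupling between $V_\ell^{(1)}$ (appearing in the matrix product) and $\Sigma_{\ell,s}^{V^{(1)}}$ (also a function of $V_\ell^{(1)}$), and managing the Huberized ReLU transition region; the rotation decomposition above is the key tool, together with the assumption $h \le 1/(50\sqrt{p}L)$ ensuring the transition region is traversed by at most $O(ph\sqrt{p})=O(\sqrt{p}/L)$ coordinates per layer. Finally, since each inductive step fails with probability at most $\exp(-\Omega(p/L^2))$ (choosing $\eta = \Theta(1/L)$), a union bound over the at most $L$ layers in the induction, the $O(L^2)$ pairs $(\ell_1,\ell_2)$, and the $n$ samples yields the advertised failure probability $O(nL^3)\exp(-\Omega(p/L^2))$.
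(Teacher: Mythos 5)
Your proposal is correct and follows essentially the same route as the paper's proof: a per-layer induction on the norms of the partial products, with the correlation between $V_\ell^{(1)}$ and $\Sigma_{\ell,s}^{V^{(1)}}$ handled by the same decomposition of each row of $V_\ell^{(1)}$ into components parallel and perpendicular to $x_{\ell-1,s}^{V^{(1)}}$ (so that the sign pattern depends only on the parallel part), giving $\E\bigl[\lv \Sigma_{\ell,s}^{V^{(1)}}V_\ell^{(1)}u\rv^2 \mid \text{previous layers}\bigr]\le \lv u\rv^2$, followed by Bernstein concentration with $\eta=\Theta(1/L)$, a single Gaussian concentration step for the outer factor $V_{\ell_2}^{(1)}$ in part (b), and a union bound over layers, layer pairs, and samples. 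The only differences are cosmetic bookkeeping (you track both $\lv y_\ell\rv$ and $\lv\Sigma_{\ell,s}y_\ell\rv$, the paper tracks only the $\Sigma$-ed partial product), and your appeal to $h\le 1/(50\sqrt{p}L)$ and the smallness of the Huberized transition region is unnecessary, since the one-sided bound $\phi'(z)^2\le \mathbb{I}[z\ge 0]$ already gives the needed upper bound on the conditional expectation, which is how the paper argues.
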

\begin{proof} 
We denote $V_{\ell}^{(1)}$ by 
$V_{\ell}$, 
$\Sigma_{\ell,s}^{V^{(1)}}$ 
by $\Sigma_{\ell,s}$, and 
$x_{\ell,s}^{V^{(1)}}$ 
by
$x_{\ell,s}$. 

\textit{Proof of Part~(a):} For any layer $\ell \in \{\ell_1,\ldots,\ell_2-1\}$ define $$z_{\ell,s}:= \Sigma_{\ell,s}V_{\ell}\Sigma_{\ell-1,s}\cdots\Sigma_{\ell_1,s}V_{\ell_1}z$$
with the convention that $z_{\ell_1-1,s}:=z$.

Conditioned on $z_{\ell-1,s}$ the distribution of $V_{\ell}z_{\ell-1,s} \sim \cN\left(0,\frac{2\lv z_{\ell-1,s}\rv^2 I}{p}\right)$, since each entry of $V_{\ell}$ is drawn independently from $\cN(0,\frac{2}{p})$. We begin by evaluating the expected value of its squared norm conditioned on the randomness in $V_{\ell-1},\ldots,V_{1}$. Let $V_{\ell,j}$ denote the $j$th row of $V_{\ell}$ and let $(\Sigma_{\ell,s})_{jj}$ denote the $j$th element on the diagonal of $\Sigma_{\ell,s}$, then
 \begin{align*}
     \E\left[\lv z_{\ell,s} \rv^2 |V_{\ell-1},\ldots,V_{1}\right] & = \E\left[\lv \Sigma_{\ell,s}V_{\ell} z_{\ell-1,s} \rv^2 |V_{\ell-1},\ldots,V_{1}\right]\\   &= \E\left[\sum_{j=1}^p \left((\Sigma_{\ell,s})_{jj}V_{\ell,j} z_{\ell-1,s} \right)^2 \big|V_{\ell-1},\ldots,V_{1}\right].
 \end{align*}
 By the definition of the Huberized ReLU observe that each entry $$(\Sigma_{\ell,s})_{jj}= \phi'(V_{\ell,j}x_{\ell,s}) \le \mathbb{I}\left[V_{\ell,j}x_{\ell-1,s} \ge 0\right]$$ and therefore
 \begin{align} \label{e:aux_lemma_upper_bound_in_terms_of_indicator}
     \E\left[\lv z_{\ell,s} \rv^2 |V_{\ell-1},\ldots,V_{1}\right]
     & \le \E\left[\sum_{j=1}^p \mathbb{I}\left[V_{\ell,j}x_{\ell-1,s} \ge 0\right] \left(V_{\ell,j} z_{\ell-1,s} \right)^2 \Big|V_{\ell-1},\ldots,V_{1}\right].
 \end{align}
Let us decompose 
$V_{\ell,j}$ into its component in
the $x_{\ell-1,s}$ direction, and its
a component that is perpendicular to
$x_{\ell-1,s}$.  That is, define
\begin{align*}
V_{\ell,j}^{\parallel}
 := \left( V_{\ell,j} \cdot
 \frac{x_{\ell-1,s}}{\lv x_{\ell-1,s}\rv}\right)
 \frac{x_{\ell-1,s}}{\lv x_{\ell-1,s}\rv} \quad\text{and}\quad
V_{\ell,j}^{\perp} := 
V_{\ell,j} - V_{\ell,j}^{\parallel}.
\end{align*}
Since $V_{\ell,j}$ is Gaussian,
$V_{\ell,j}^{\parallel}$ and 
$V_{\ell,j}^{\perp}$ are conditionally independent
given the previous layers. 
 
Thus,
 continuing from inequality~\eqref{e:aux_lemma_upper_bound_in_terms_of_indicator}, we have
 \begin{align*}
      & \E\left[\lv z_{\ell,s} \rv^2 |V_{\ell-1},\ldots,V_{1}\right] \\
      & \le \E\left[\sum_{j=1}^p \mathbb{I}\left[V_{\ell,j}x_{\ell-1,s}^{V^{(1)}} \ge 0\right] \left((V_{\ell,j}^{\parallel} + V_{\ell,j}^{\perp} ) z_{\ell-1,s}\right)^2 \big|V_{\ell-1},\ldots,V_{1}\right]\\
      & = \E\left[\sum_{j=1}^p \mathbb{I}\left[V_{\ell,j}^{\parallel} x_{\ell-1,s}^{V^{(1)}} \ge 0\right] \left((V_{\ell,j}^{\parallel} + V_{\ell,j}^{\perp}) z_{\ell-1,s} \right)^2 \big|V_{\ell-1},\ldots,V_{1}\right]\\
      & = \E\left[\sum_{j=1}^p \mathbb{I}\left[V_{\ell,j}^{\parallel} x_{\ell-1,s}^{V^{(1)}} \ge 0\right] \left(V_{\ell,j}^{\parallel}  z_{\ell-1,s}\right)^2 \big|V_{\ell-1},\ldots,V_{1}\right]\\
      & \qquad +\E\left[\sum_{j=1}^p \mathbb{I}\left[V_{\ell,j}^{\parallel} x_{\ell-1,s}^{V^{(1)}} \ge 0\right]  \left( V_{\ell,j}^{\perp}  z_{\ell-1,s}\right)^2 \big|V_{\ell-1},\ldots,V_{1}\right] \\
      & \qquad +2 \E\left[\sum_{j=1}^p \mathbb{I}\left[V_{\ell,j}^{\parallel} x_{\ell-1,s}^{V^{(1)}} \ge 0\right]  
       \left( V_{\ell,j}^{\parallel}  z_{\ell-1,s}\right)
       \left( V_{\ell,j}^{\perp}  z_{\ell-1,s}\right)
       \big|V_{\ell-1},\ldots,V_{1}\right] \\
      & = \E\left[\sum_{j=1}^p \mathbb{I}\left[V_{\ell,j} x_{\ell-1,s}^{V^{(1)}} \ge 0\right] \left(V_{\ell,j}^{\parallel}  z_{\ell-1,s}\right)^2 \big|V_{\ell-1},\ldots,V_{1}\right]\\
      & \qquad +\E\left[\sum_{j=1}^p \mathbb{I}\left[V_{\ell,j} x_{\ell-1,s}^{V^{(1)}} \ge 0\right]  \left( V_{\ell,j}^{\perp}  z_{\ell-1,s}\right)^2 \big|V_{\ell-1},\ldots,V_{1}\right],
       \numberthis \label{e:expected_value_decomposition.V}
 \end{align*}
 since, after conditioning on
 $V_{\ell-1},\ldots,V_{1}$, we have that
 $V_{\ell,j}^{\parallel}$ and
 $V_{\ell,j}^{\perp}$ 
 independent 
 and $V_{\ell,j}^{\perp}  z_{\ell-1,s}$
 is zero mean.
 
Now, decompose
the vector $z_{\ell-1,s}$ into {\em its} 
component
in the $x_{\ell-1,s}$ direction, which we refer to
 as $z_{\ell-1,s}^{\parallel}$, and a component that is perpendicular to $x_{\ell-1,s}$, which we refer to as $z_{\ell-1,s}^{\perp}$. That is, define $$z_{\ell-1,s}^{\parallel} := 
 \left( z_{\ell-1,s}\cdot
 \frac{x_{\ell-1,s}}{\lv x_{\ell-1,s}\rv}\right)
 \frac{x_{\ell-1,s}}{\lv x_{\ell-1,s}\rv}  \qquad \text{ and} \qquad z_{\ell-1,s}^{\perp} := z_{\ell-1,s}-z_{\ell-1,s}^{\parallel}.$$
Since $V_{\ell,j}^{\parallel} z_{\ell-1,s}
 = V_{\ell,j} z_{\ell-1,s}^{\parallel}$
and $
V_{\ell,j}^{\perp} z_{\ell-1,s}
 = V_{\ell,j} z_{\ell-1,s}^{\perp}$,
inequality~\eqref{e:expected_value_decomposition.V}
 implies
 \begin{align*}
      & \E\left[\lv z_{\ell,s} \rv^2 |V_{\ell-1},\ldots,V_{1}\right] \\
      & \quad\le \E\left[\sum_{j=1}^p \mathbb{I}\left[V_{\ell,j} x_{\ell-1,s}^{V^{(1)}} \ge 0\right] \left(V_{\ell,j}  z_{\ell-1,s}^{\parallel} \right)^2 \big|V_{\ell-1},\ldots,V_{1}\right]\\
      & \quad \qquad +\E\left[\sum_{j=1}^p \mathbb{I}\left[V_{\ell,j} x_{\ell-1,s}^{V^{(1)}} \ge 0\right]  \left( V_{\ell,j}  z_{\ell-1,s}^{\perp}\right)^2 \big|V_{\ell-1},\ldots,V_{1}\right].
       \numberthis \label{e:expected_value_decomposition}
\end{align*}

We begin by evaluating the term involving the parallel components. 
 For any $j$, conditioned on $V_{\ell-1},\ldots,V_{1}$, recalling
 that $V_{\ell,j}$ is the $j$th row
 of $V_{\ell}$, the random variable $V_{\ell,j}x_{\ell-1,s} \sim \cN\left(0,\frac{2\lv x_{\ell-1,s}\rv^2}{p}\right)$, and therefore
 \begin{align*}
     &\E\left[\mathbb{I}\left[V_{\ell,j}x_{\ell-1,s} \ge 0\right] \left(V_{\ell,j} z_{\ell-1,s}^{\parallel} \right)^2 \big|V_{\ell-1},\ldots,V_{1}\right] \\& = \left(z_{\ell-1,s}\cdot \left(\frac{x_{\ell-1,s}}{\lv x_{\ell-1,s}\rv^2}\right)\right)^2 \E\left[\mathbb{I}\left[V_{\ell,j}x_{\ell-1,s} \ge 0\right] \left(V_{\ell,j} x_{\ell-1,s} \right)^2 \big|V_{\ell-1},\ldots,V_{1}\right]\\
     &= \left(z_{\ell-1,s}\cdot \left(\frac{x_{\ell-1,s}}{\lv x_{\ell-1,s}\rv^2}\right)\right)^2 \frac{1}{2} \times \E\left[\left(V_{\ell,j} x_{\ell-1,s} \right)^2 \big|V_{\ell-1},\ldots,V_{1}\right] \\
     & = \left(z_{\ell-1,s}\cdot \left(\frac{x_{\ell-1,s}}{\lv x_{\ell-1,s}\rv^2}\right)\right)^2 \frac{1}{2} \times \frac{2\lv x_{\ell-1,s}\rv^2}{p} = \frac{\lv z_{\ell-1,s}^{\parallel}\rv^2}{p}. \label{e:parallel_component_bound} \numberthis 
 \end{align*}
 For the perpendicular component, notice that, conditioned on $(V_{\ell-1},\ldots,V_{1})$, 
 we have that
 $V_{\ell,j} z_{\ell-1,s}^{\perp} = V_{\ell,j}^{\perp} z_{\ell-1,s}$ 
and $\mathbb{I} \left[V_{\ell,j}x_{\ell-1,s} \ge 0 \right] = \mathbb{I} \left[V_{\ell,j}^{\parallel} x_{\ell-1,s} \ge 0 \right]$ are independent,
 and hence 
 \begin{align*}
        &\E\left[\mathbb{I}\left[V_{\ell,j}x_{\ell-1,s} \ge 0\right] \left(V_{\ell,j} z_{\ell-1,s}^{\perp} \right)^2 \big|V_{\ell-1},\ldots,V_{1}\right] \\& = \frac{1}{2} \E\left[\left(V_{\ell,j} z_{\ell-1,s}^{\perp} \right)^2 \big|V_{\ell-1},\ldots,V_{1}\right] = \frac{1}{2}\times \frac{2\lv z_{\ell-1,s}^{\perp}\rv^2}{p} = \frac{\lv z_{\ell-1,s}^{\perp}\rv^2}{p}. \label{e:perp_component_bound} \numberthis 
 \end{align*}
 By combining the results of \eqref{e:expected_value_decomposition}-\eqref{e:perp_component_bound} we find that 
 \begin{align*}
      \E\left[\lv z_{\ell,s} \rv^2 |V_{\ell-1},\ldots,V_{1}\right] &\le p\left(\frac{\lv z_{\ell-1,s}^{\perp}\rv^2+\lv z_{\ell-1,s}^{\parallel}\rv^2}{p}\right) = \lv z_{\ell-1,s}\rv^2. \label{e:expected_value_norm_squared_bound} \numberthis
 \end{align*}
By symmetry among the $p$ coordinates we can also infer that $\E\left[ (z_{\ell,s})_{i}^2 |V_{\ell-1},\ldots,V_{1}\right]\le \lv z_{\ell-1,s}\rv^2/p$ for each $i \in [p]$. Thus, by the same argument as we used in Lemma~\ref{l:conc_event_part_a} to arrive at \eqref{e:sub-Gaussian_norm} we can show that conditioned on $V_{\ell-1},\ldots,V_{1}$ the sub-Gaussian norm $\lv (z_{\ell,s})_i\rv_{\psi_2}$ is at most $c_1\lv z_{\ell-1,s}\rv/\sqrt{p}$ and hence the sub-exponential norm $\lv (z_{\ell,s})_i^2\rv_{\psi_1} \le \lv (z_{\ell,s})_i\rv_{\psi_2}^2 \le c_2 \lv z_{\ell-1,s}\rv^2/p$ (by Lemma~\ref{l:sub_gaussian_squared}). Therefore by Bernstein's inequality (see Theorem~\ref{thm:bernstein}) for any $\eta \in (0,1]$
\begin{align*}
    \Pr\left[\lv z_{\ell,s}\rv^2 \le  \lv z_{\ell-1,s}\rv^2(1+ \eta) \big| V_{\ell-1},\ldots,V_{1} \right] \ge 1-\exp\left( -c_3 p\eta^2\right).
\end{align*}
Setting $\eta=\frac{1}{50L}$ and taking a union bound we infer that
\begin{align}
    \nonumber &\Pr\left[
    \forall s \in [n], \ell \in \{\ell_2-1,\ldots,\ell_1\}, \;
    \lv z_{\ell,s}\rv \le  \lv z_{\ell-1,s}\rv
    \sqrt{1+ \eta} \right] \\ &\qquad \qquad \ge 1-O(nL)\exp\left( -\Omega\left(\frac{p}{L^2}\right)\right). \label{good_event_sigma_feature_analog_proof_1}
\end{align}
 We will now show by an inductive argument for the layers that if the ``good event'' in \eqref{good_event_sigma_feature_analog_proof_1} holds then $\lv z_{\ell,s}\rv \le 1+3(\ell-\ell_1+1) \eta$, for all $\ell \in \{\ell_1-1,\ldots,\ell_2-1\}$ and all $s \in [n]$. The base case holds at $\ell_1-1$ since by definition $\lv z_{\ell_1-1,s}\rv = \lv z\rv =1$. Now assume that the inductive argument holds at any layers $\ell_1,\ldots,\ell-1$. Then if the event in \eqref{good_event_sigma_feature_analog_proof_1} holds we have
 \begin{align*}
   \lv z_{\ell,s}\rv &\le   \lv z_{\ell-1,s}\rv \sqrt{1+\eta} \\ &\le (1+3(\ell-\ell_1) \eta)\left(1+\eta\right) \qquad \mbox{(by the IH and because $\sqrt{1+\eta}\le 1+\eta$)}\\
   &= 1+3\left(\ell-\ell_1+\frac{1}{3}\right)\eta + 3(\ell-\ell_1)\eta^2 \\
  &\le  1+3(\ell-\ell_1+1)\eta \qquad  \mbox{(since $\eta = \frac{1}{50 L}$ and $L\ge 1$).}
 \end{align*}
  This completes the induction. Hence we have shown that for all 
 \begin{align}
        \Pr\left[
        \forall s \in [n],\;
        \lv z_{\ell_2-1,s}\rv \le 1+\frac{3(\ell_2-\ell_1+1)}{50L} \right] \ge 1-O(nL)\exp\left( -\Omega\left(\frac{p}{L^2}\right)\right). \label{good_event_sigma_feature_analog_proof_2}
 \end{align}
 Recall that $z_{\ell_2-1,s}:= \Sigma_{\ell_2-1}\cdots\Sigma_{\ell_1,s}V_{\ell_1}z$, therefore taking union bound over all pairs of layers we get that,
 with probability at least 
 $1-O(nL^3)\exp\left( -\Omega\left(\frac{p}{L^2}\right)\right)$,
 for all $1\le \ell_1\le \ell_2\le L+1$ and all $s\in [n]$,
\begin{align*}
\lv z_{\ell_2-1,s}\rv \le 1+\frac{3(\ell_2-\ell_1+1)}{50L} .
 \end{align*}
 This completes the proof of the first part of the lemma.
 
 \textit{Proof of Part~(b):} For a fixed $s \in [n]$ we condition on $z_{\ell_2-1,s}$ and consider the random variable $a_s = V_{\ell_2} z_{\ell_2-1,s}$. Since $\ell_2 \in [L]$ each entry of $V_{\ell}$ is drawn independently from $\cN(0,\frac{2}{p})$. 
 
 The distribution of each entry of $a_s$ conditioned on $z_{\ell_2-1,s}$ is $\cN\left(0,\frac{2\lv z_{\ell_2-1,s}\rv^2}{p}\right)$. Therefore by the Gaussian-Lipschitz concentration inequality (see Theorem~\ref{thm:gaussian_concentration}) for any $\eta' >0$
 \begin{align*}
     \Pr\left[\lv a_s\rv \le \sqrt{2}\lv z_{\ell_2-1,s}\rv  (1+\eta') \; \big| \; z_{\ell_2-1,s}\right] \ge 1-\exp\left(-c_4 p\eta'^2\right).
 \end{align*}
 Setting $\eta'= \frac{1}{50L}$ and taking a union bound over all samples we get that
\begin{align*}
     \Pr\left[
     \forall s \in [n],\;
     \lv a_s\rv \le \sqrt{2}\lv z_{\ell_2-1,s}\rv  \left(1+\frac{1}{50L}\right) \big| z_{\ell_2-1,s}\right] \ge 1-n\exp\left(- \frac{c_4p}{L^2}\right). \label{good_event_sigma_feature_analog_proof_3} \numberthis
 \end{align*}
By a union bound over the events in \eqref{good_event_sigma_feature_analog_proof_2} and \eqref{good_event_sigma_feature_analog_proof_3} we find that 
 \begin{align*}
        \Pr\left[\forall s \in [n],\;
        \lv a_s\rv \le \sqrt{2}\left( 1+\frac{3(\ell_2-\ell_1+1)}{50L}\right) \left(1+\frac{1}{50L}\right) \right] \ge 1-O(nL)\exp\left(-\Omega\left( \frac{p}{L^2}\right)\right).
 \end{align*}
The definition of $a_s = V_{\ell_2}z_{\ell_2-1} =V_{\ell_2}\Sigma_{\ell_2-1,s}\cdots\Sigma_{\ell_1,s}V_{\ell_1,s}$ and the previous display above yields that 
 \begin{align*}
        &\Pr\left[
        \forall s \in [n],\;
        \lv V_{\ell_2}z_{\ell_2-1} =V_{\ell_2}\Sigma_{\ell_2-1,s}\cdots\Sigma_{\ell_1,s}V_{\ell_1,s} \rv \le \sqrt{2}\left( 1+\frac{3L}{50L}\right)\left( 1+\frac{1}{50L}\right)\le 2 \right]\\ & \qquad \qquad \ge 1-O(nL)\exp\left(-\Omega\left( \frac{p}{L^2}\right)\right).
 \end{align*}
 Finally a union bound over all pairs of  $1\le \ell_1\le \ell_2\le L$  completes the proof of the second part.
\end{proof}

The next lemma bounds the magnitude of the initial function values with high probability.
\begin{lemma}
\label{l:bound_on_magnitude_of_function}For any $\delta >0$, suppose that $p \ge \poly\left(L,\log\left(\frac{n}{\delta}\right)\right)$ for a large enough polynomial, then with probability at least $1-\delta$ over the randomness in $V^{(1)}$ for all $s \in [n]$, $$\lvert f_{V^{(1)}}(x_s)\rvert \le c \sqrt{\log(2n/\delta)}.$$
\end{lemma}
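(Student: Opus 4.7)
The strategy is to condition on the final hidden layer embeddings $x_{L,s}^{V^{(1)}}$ and exploit the fact that $V_{L+1}^{(1)}$ is a fresh Gaussian vector, independent of $V_1^{(1)},\ldots,V_L^{(1)}$, so that $f_{V^{(1)}}(x_s) = V_{L+1}^{(1)} x_{L,s}^{V^{(1)}}$ is a centered Gaussian with easily controlled variance.

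First I would invoke Part~(a) of Lemma~\ref{l:conc_main_lemma} (which gave us $\lv x_{L,s}^{V^{(1)}}\rv \in [9/10,11/10]$ simultaneously for all $s \in [n]$ with probability at least $1-\delta/2$, provided $p \ge \poly(L,\log(n/\delta))$ is large enough and $h$ is sufficiently small; both hypotheses follow from our assumption $p \geq \poly(L,\log(n/\delta))$ for a large enough polynomial, which also lets us absorb the $h$ requirement if needed, or we may simply set the necessary restriction on $h$). Call this event $\cE_x$ and assume it holds for the rest of the argument.

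Next, fix any sample $s \in [n]$ and condition on $V_1^{(1)},\ldots,V_L^{(1)}$. Since the entries of $V_{L+1}^{(1)}$ are i.i.d.\ $\cN(0,1)$ and are independent of the lower-layer weights, the scalar random variable
\[
f_{V^{(1)}}(x_s) = V_{L+1}^{(1)} x_{L,s}^{V^{(1)}}
\]
is (conditionally) a centered Gaussian with variance $\lv x_{L,s}^{V^{(1)}}\rv^2$. On the event $\cE_x$ this variance is at most $(11/10)^2 \le 2$, so a standard Gaussian tail bound gives, for any $t > 0$,
\[
\Pr\!\left[|f_{V^{(1)}}(x_s)| > t \;\Big|\; V_1^{(1)},\ldots,V_L^{(1)},\, \cE_x\right] \le 2\exp\!\left(-\frac{t^2}{4}\right).
\]
Setting $t = c\sqrt{\log(2n/\delta)}$ for a sufficiently large absolute constant $c$ and union bounding over $s\in [n]$ yields
\[
\Pr\!\left[\forall s \in [n],\; |f_{V^{(1)}}(x_s)| \le c\sqrt{\log(2n/\delta)} \;\Big|\; \cE_x\right] \ge 1 - \delta/2.
\]
Combining with $\Pr[\cE_x] \ge 1-\delta/2$ via a final union bound establishes the claim with total probability at least $1-\delta$.

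The main (minor) obstacle is just making sure the probability budgets match: the $\delta/2$ from the event on the norms of the last hidden layer features and the $\delta/2$ from the Gaussian tail bound combined with the union bound over $n$ samples. Everything else is routine—the key conceptual point is that for the logistic loss question the last layer $V_{L+1}^{(1)}$ has $\Theta(1)$-variance Gaussian entries (not the $2/p$ scaling of the inner layers), yet because it is averaged against a bounded-norm feature vector $x_{L,s}^{V^{(1)}}$, the output $f_{V^{(1)}}(x_s)$ remains only logarithmic in $n/\delta$.
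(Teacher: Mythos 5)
Your proposal is correct and follows essentially the same route as the paper: invoke the high-probability bound on $\lv x_{L,s}^{V^{(1)}}\rv$ from the concentration lemma at initialization, note that conditionally on the lower layers $f_{V^{(1)}}(x_s)=V_{L+1}^{(1)}x_{L,s}^{V^{(1)}}$ is a centered Gaussian with variance $\lv x_{L,s}^{V^{(1)}}\rv^2$, apply a Gaussian tail bound with a union bound over the $n$ samples, and combine the two failure probabilities. The only cosmetic differences are the split of the probability budget ($\delta/2+\delta/2$ versus the paper's $\delta/8$ plus the remainder) and the implicit smallness assumption on $h$ needed for the feature-norm bound, which the paper also leaves implicit since the lemma is applied with $h=h_{\mathsf{NT}}$.
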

\begin{proof}By Lemma~\ref{l:conc_event_part_a}, with probability at least $1-\delta/8$
\begin{align} \label{e:x_L_norm_large}
    \lv x_{L,s}^{V^{(1)}}\rv \le 2
\end{align}
for all $s \in [n]$. Fix a sample with index $s\in [n]$. Conditioned on $x_{L,s}^{V^{(1)}}$, the random variable $V_{L+1}^{(1)}x_{L,s}^{V^{(1)}} \sim \cN(0,\lv x_{L,s}^{V^{(1)}}\rv^2)$ since each entry of $V_{L+1}^{(1)}$ is drawn independently from $\cN(0,1)$. Therefore for any $\eta >0$
\begin{align*}
    \Pr\left[ |f_{V^{(1)}}(x_s)| \le \eta \lv x_{L,s}^{V^{(1)}}\rv^2 \; \big| \; x_{L,s}^{V^{(1)}} \right] \ge 1- 2\exp\left(-c_1 \eta^2 \right).
\end{align*}
A union bound over all samples implies 
\begin{align*}
    \Pr\left[\forall \; s \in [n], \;|f_{V^{(1)}}(x_s)| \le \eta \lv x_{L,s}^{V^{(1)}}\rv^2 \; \big| \; x_{L,s}^{V^{(1)}} \right] \ge 1- 2n\exp\left(-c_1 \eta^2 \right).
\end{align*}
Setting $\eta = c_2\sqrt{\log(n/\delta)}$ where $c_2$ is a large enough absolute constant we get that
\begin{align}  \label{e:conditional_good_event}
    \Pr\left[\forall \; s \in [n], \;|f_{V^{(1)}}(x_s)| \le c_2\sqrt{\log(n/\delta)} \lv x_{L,s}^{V^{(1)}}\rv^2 \; \big| \;x_{L,s}^{V^{(1)}} \right] \ge 1-\frac{7\delta}{8}.
\end{align}
Taking union bound over the events in \eqref{e:x_L_norm_large} and \eqref{e:conditional_good_event} we find that
\begin{align*} 
        \Pr\left[\forall \; s \in [n],\; |f_{V^{(1)}}(x_s)| \le c_3 \sqrt{\log(n/\delta)}  \right] \ge 1-\delta
\end{align*}
which completes the proof.
\end{proof}

Lastly we prove a lemma that bounds the norm of the initial weight matrix with high probability.

\begin{lemma}
\label{l:bound_on_norm_of_matrix}For any $\delta >0$, suppose that $p \ge \poly\left(L,\log\left(\frac{n}{\delta}\right)\right)$ for a large enough polynomial, then with probability at least $1-\delta$ over the randomness in $V^{(1)}$ $$\lv V^{(1)}\rv \le \sqrt{5pL}.$$
\end{lemma}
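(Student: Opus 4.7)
The approach is to control each block $\lv V_\ell^{(1)}\rv^2$ separately via standard Gaussian concentration, then sum and union bound over the $L+1$ layers. Recall that for $\ell \in [L]$, $V_\ell^{(1)}$ consists of $p^2$ i.i.d.\ $\cN(0,2/p)$ entries, so
$$\E\bigl[\lv V_\ell^{(1)}\rv^2\bigr] = p^2\cdot\frac{2}{p} = 2p,$$
while $V_{L+1}^{(1)}$ consists of $p$ i.i.d.\ $\cN(0,1)$ entries, giving $\E[\lv V_{L+1}^{(1)}\rv^2]=p$. Since $\sum_{\ell=1}^{L+1}\E[\lv V_\ell^{(1)}\rv^2] = 2pL+p < 5pL$ for all $L\ge 1$, there is slack to absorb fluctuations.

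For $\ell\in[L]$, view $\lv V_\ell^{(1)}\rv$ as a $\sqrt{2/p}$-Lipschitz function of the $p^2$ underlying standard Gaussian entries. The Gaussian-Lipschitz concentration inequality (Theorem~\ref{thm:gaussian_concentration}) combined with the elementary bound $\E[\lv V_\ell^{(1)}\rv]\le \sqrt{\E[\lv V_\ell^{(1)}\rv^2]}=\sqrt{2p}$ then yields, for any $t\ge 0$,
$$\Pr\bigl[\lv V_\ell^{(1)}\rv \ge \sqrt{2p}+t\bigr]\le \exp\bigl(-c\, p\, t^2\bigr).$$
Taking $t=\Theta\bigl(\sqrt{\log((L+1)/\delta)/p}\bigr)$, and using that $p\ge \poly(L,\log(n/\delta))$ is large enough, I obtain $\lv V_\ell^{(1)}\rv^2 \le 3p$ with probability at least $1-\delta/(2(L+1))$. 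An identical argument applied to $V_{L+1}^{(1)}$ (which is $1$-Lipschitz in its entries, with $\E[\lv V_{L+1}^{(1)}\rv]\le\sqrt{p}$) gives $\lv V_{L+1}^{(1)}\rv^2\le 2p$ with the same probability.

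A union bound over all $L+1$ layers shows that, with probability at least $1-\delta$, each of these layer-wise bounds holds simultaneously, and then
$$\lv V^{(1)}\rv^2 = \sum_{\ell=1}^{L+1}\lv V_\ell^{(1)}\rv^2 \le 3pL + 2p \le 5pL$$
(since $L\ge 1$), and taking square roots completes the proof.

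There is no real obstacle here: the bookkeeping is entirely routine once one identifies the right concentration inequality and checks that the constants work out to $5pL$. The only minor care needed is making sure the deviation $t$ in the Gaussian-Lipschitz bound is simultaneously (i) large enough that the failure probability after union bounding over $L+1$ layers is at most $\delta$ and (ii) small enough that $(\sqrt{2p}+t)^2\le 3p$, both of which hold for $p\ge \poly(L,\log(n/\delta))$ with a sufficiently large polynomial.
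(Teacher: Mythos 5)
Your proposal is correct and follows essentially the same route as the paper: decompose $\lv V^{(1)}\rv^2$ layer by layer, apply Gaussian concentration (Theorem~\ref{thm:gaussian_concentration}) to each block, union bound over the $L+1$ layers, and use $L\ge 1$ to absorb the sum into $5pL$. The only differences are cosmetic (slightly different per-layer constants and phrasing the concentration via Lipschitz concentration of the norm rather than the squared norm), so no further changes are needed.
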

\begin{proof}By definition
\begin{align*}
    \lv V^{(1)}\rv^2 = \sum_{\ell \in [L+1]} \lv V_{\ell}^{(1)} \rv^2.
\end{align*}
When $\ell \in [L]$, the matrix $V^{(1)}_{\ell}$ has its entries drawn from $\cN(0,\frac{2}{p})$. Therefore by applying Theorem~\ref{thm:gaussian_concentration} we find that for any fixed $\ell \in [L]$, 
\begin{align*}
    \Pr\left[\lv V_{\ell}^{(1)}\rv^2 \le p^2\times\frac{2}{p}\times\frac{5}{4} = \frac{5p}{2}  \right] \le \exp\left(-\Omega(p)\right).
\end{align*}
While when $\ell = L+1$, the $p$-dimensional vector $V_{L+1}^{(1)}$ has its entries drawn from $\cN(0,1)$. Hence, again applying Theorem~\ref{thm:gaussian_concentration} we get
\begin{align*}
     \Pr\left[\lv V_{L+1}^{(1)}\rv^2 \le p\times1\times\frac{5}{4} = \frac{5p}{4}  \right] \le \exp\left(-\Omega(p)\right).
\end{align*}
Taking a union bound over all $L+1$ layers we find that
\begin{align*}
     \Pr\left[\forall \; \ell \in [L+1]: \lv V_{\ell}^{(1)}\rv^2 \le \frac{5p}{2}  \right] \le (L+1)\exp\left(-\Omega(p)\right) \le 1-\delta
\end{align*}
where the last inequality follows since $p \ge \poly\left(L,\log\left(\frac{n}{\delta}\right)\right)$. Therefore,
\begin{align*}
     \lv V^{(1)}\rv^2 \le (L+1)\times \frac{5p}{2} \le 5pL
\end{align*}
with probability at least $1-\delta$. Taking square roots establishes the claim.
\end{proof}

\subsection{Useful Properties in a Neighborhood Around the Initialization} \label{ss:ntk_lemmas_around_init}
In the next two lemmas we shall assume that the ``good event'' described in Lemma~\ref{l:conc_main_lemma} holds. We shall show that when the initial weight matrices satisfy those properties, we can also extend some of these properties to matrices in a neighborhood around the initial parameters.

\begin{lemma}
\label{l:ntk_product_matrices_in_a_ball} Let the event in Lemma~\ref{l:conc_main_lemma} hold and suppose that the conditions on $h$, $p$ and $\tau$ described in that lemma hold. Let $\tV$ be 
weights
such that $\lv \tV_{\ell} - 
V_{\ell}^{(1)}
\rv_{op}\le \tau$ for all $\ell \in [L]$. For all $\ell \in [L]$ and $s\in [n]$, let $\widetilde{\Sigma}_{\ell,s}$ be diagonal matrices such that $\lv \tSigma_{\ell,s} - 
\Sigma^{V^{(1)}}_{\ell,s} 
\rv_{0}\le k$, and $(\widetilde{\Sigma}_{\ell,s})_{jj} \in [-3,3]$ for all $j \in [p]$. 
There is an absolute constant
$C'$ such that, for all small
enough $c > 0$, if $\tau \le \sqrt{\frac{k\log(p)}{p}}\le \frac{c}{L^3}$ then,
for all $1 \leq \ell_1 \leq \ell_2 \leq L$,
\begin{align*}
    \left\lv \prod_{j=\ell_1}^{\ell_2} \tV_{j}^{\top}\tSigma_{j} \right\rv_{op} \le C' L^2.
\end{align*}
\end{lemma}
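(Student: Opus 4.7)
The plan is to write each perturbed factor as
\[
\tV_j^\top\tSigma_j \;=\; A_j + R_j + S_j,
\]
where $A_j := V^{(1)\top}_j \Sigma^{V^{(1)}}_{j,s}$ is the ``clean'' factor, $R_j := V^{(1)\top}_j D_j$ captures the diagonal perturbation $D_j := \tSigma_j - \Sigma^{V^{(1)}}_{j,s}$ (diagonal, supported on a set of size at most $k$, with $\lv D_j\rv_{op} = O(1)$), and $S_j := E_j^\top \tSigma_j$ captures the weight perturbation $E_j := \tV_j - V^{(1)}_j$ (so that $\lv S_j\rv_{op} \le 3\tau$, using that $\lv \tSigma_j\rv_{op} \leq 3$). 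Fully expanding $\prod_{j=\ell_1}^{\ell_2}\tV_j^\top\tSigma_j$ yields one summand per pattern $\sigma \in \{A,R,S\}^{\ell_2-\ell_1+1}$; the all-$A$ pattern contributes $O(L)$ by Part~(c) of Lemma~\ref{l:conc_main_lemma} (up to an extra factor $\lv\Sigma^{V^{(1)}}_{\ell_2,s}\rv_{op}\le 1$), and the remainder is a sum of ``perturbation'' terms that I bound individually.

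To estimate each pattern's operator norm I would fix unit vectors $u,v$ and evaluate $u^\top(\text{pattern})\,v$ by sweeping from both sides toward the interior, classifying intermediate vectors as either \emph{generic} (bounded only in norm) or \emph{sparse-structured} (of the form $V^{(1)\top}_{k'}a$ with $a$ being $k$-sparse). A clean $A$-stretch applied to a generic vector costs $O(L)$ via Part~(c), while applied to a sparse-structured input it costs only $O(1)$ via Part~(e); an $S_j$-factor contributes $O(\tau)$ and resets the type to generic; and an $R_j$-factor is norm-preserving up to $O(1)$ while producing sparse-structured output. The critical saving is that a clean $A$-stretch sandwiched between two consecutive $R$-factors can be evaluated as a bilinear form on two $k$-sparse vectors (the sparse supports being $\supp(D_{k_i})$ on either side), and Part~(f) then contributes a factor $O\bigl(\sqrt{k\log(p)/p}\bigr)$ rather than $O(1)$.

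Collecting these estimates, a pattern with $m$ $R$-factors and $s$ $S$-factors contributes at most
\[
O(L)\cdot C^{m+s}\cdot \bigl(\sqrt{k\log(p)/p}\bigr)^{(m-1)_{+}}\cdot (\tau L)^{s}
\]
for an absolute constant $C$, and there are $\binom{L}{m}\binom{L-m}{s}$ such patterns. Summing gives
\[
O(L)\sum_{m=0}^{L}\binom{L}{m}C^{m}\bigl(\sqrt{k\log(p)/p}\bigr)^{(m-1)_{+}}\sum_{s=0}^{L-m}\binom{L-m}{s}(C\tau L)^{s},
\]
and the hypothesis $\tau\le\sqrt{k\log(p)/p}\le c/L^{3}$ for sufficiently small $c$ is exactly what is needed to turn both the $m$-sum and the $s$-sum into convergent geometric series whose product is $O(L)$. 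Adding the $O(L)$ clean term yields the claimed $C'L^{2}$ bound.

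The main obstacle is the combinatorial bookkeeping of how sparse structure propagates through multiple $R$-factors. Without invoking the bilinear bound of Part~(f) on consecutive sparse crossings, the per-pattern estimate is only $O(L)$, and summing over the $2^{L}$ possible $R$-patterns would blow up exponentially in $L$. The role of the hypothesis $\sqrt{k\log(p)/p}\le c/L^{3}$ is precisely to suppress this blow-up by pushing the inner series into the convergent geometric regime, so that the overall estimate matches the target $C'L^{2}$.
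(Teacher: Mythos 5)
Your decomposition is the same as the paper's: writing each factor as the clean term plus the two flips $R_j=V_j^{(1)\top}(\tSigma_j-\Sigma^{V^{(1)}}_{j,s})$ and $S_j=(\tV_j-V^{(1)}_j)^\top\tSigma_j$, expanding the product, and controlling terms with Parts (b), (c), (d)/(e) and (f) of Lemma~\ref{l:conc_main_lemma} before summing binomially under $\sqrt{k\log(p)/p}\le c/L^3$. The difference is in the per-term bookkeeping, and that is where there is a genuine gap: your per-pattern estimate $O(L)\,C^{m+s}\bigl(\sqrt{k\log(p)/p}\bigr)^{(m-1)_+}(\tau L)^s$ is not delivered by your own sweep rules. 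The exponent $(m-1)_+$ presumes that every one of the $m$ $R$-factors after the first sits in a sparse--sparse crossing handled by Part~(f); but you yourself stipulate that an $S$-factor ``resets the type to generic,'' so when $S$-factors separate the $R$-factors no Part~(f) saving is available across that gap. Concretely, for the pattern $[\text{A}]\,R\,[\text{A}]\,S\,[\text{A}]\,R\,[\text{A}]$ your formula claims $O\bigl(\sqrt{k\log(p)/p}\,\tau L^2\bigr)$, whereas the lemmas only give something of order $\tau L$ (left segment through the first $R$: $O(1)$ by Part (e); chain times $S$: $O(\tau L)$; remaining segment: $O(L)$, or $O(1)$ plus $O(L)$). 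So the step ``a pattern with $m$ $R$-factors contributes $(\sqrt{k\log(p)/p})^{(m-1)_+}$'' fails as stated.

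The good news is that the gap is local and the summation is robust to the correct bookkeeping: the achievable exponent is $m-t$, where $t$ is the number of maximal runs of $R$-factors separated only by clean factors, and since runs are separated by $S$-factors one has $t\le s+1$; redoing your double sum with $\epsilon^{(m-s-1)_+}$ in place of $\epsilon^{(m-1)_+}$ (and $\binom{L}{m}\binom{L-m}{s}$ patterns) still yields $O(L^2)$ under $\tau\le\sqrt{k\log(p)/p}\le c/L^3$, though the $m$- and $s$-sums now interact and the argument is no longer two independent geometric series. Alternatively, you can adopt the paper's simpler device, which sidesteps run-tracking entirely: since $\tau\le\sqrt{k\log(p)/p}$, \emph{every} consecutive pair of flips (whatever their types, $\Gamma\Gamma$, $\Gamma\Delta$, $\Delta\Gamma$ or $\Delta\Delta$) together with the clean stretch ending at the second flip has operator norm $O\bigl(L\sqrt{k\log(p)/p}\bigr)$, so a term with $r$ flips is bounded by $O(L)\cdot O\bigl(L\sqrt{k\log(p)/p}\bigr)^{\lfloor r/2\rfloor}$, and the sum over the $\binom{L}{r}2^r$ terms closes by the binomial theorem. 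Either repair completes your argument; as written, the per-pattern claim is the missing step.
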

\begin{proof}Fix an arbitrary sample index $s$.
To ease notation let us refer to $V^{(1)}$ as $V$, $\Sigma_{\ell,s}^{V^{(1)}}$ as $\Sigma_{\ell}$, and $\tSigma_{\ell,s}$ as $\tSigma_{\ell}$. Note that for any $j \in [L]$
\begin{align}
    \tV_{j}^{\top}\tSigma_{j} = V_{j}^{\top}\Sigma_{j} + \underbrace{V_{j}^{\top}\left(\tSigma_{j} -\Sigma_{j}  \right)}_{=:\Gamma_{j}}+\underbrace{(\tV_{j}- V_j)^{\top} \tSigma_{j}}_{=:\Delta_{j}}. \label{e:def_flip_matrices}
\end{align}
Let us refer to $\Gamma_j$ and $\Delta_j$ as ``flip matrices''. Then, if we define the set $\cA_j = \{ V_{j}^{\top}\Sigma_{j}, \Gamma_{j}, \Delta_{j}$\}, expanding the product into a sum of terms yields
\begin{align}\label{e:decomposition_into_sum_of_products}
     \prod_{j=\ell_1}^{\ell_2}\tV_{j}^{\top}\tSigma_{j} =  \prod_{j=\ell_1}^{\ell_2}\left( V_{j}^{\top}\Sigma_{j,s} + \Gamma_{j,s}+\Delta_{j,s}\right) 
&= \sum_{A_{\ell_1} \in \cA_{\ell_1},\ldots,
        A_{\ell_2} \in \cA_{\ell_2}}
        \prod_{j = \ell_1}^ {\ell_2}A_{j}.
\end{align}

Each term in the sum on the RHS of \eqref{e:decomposition_into_sum_of_products}
is a product of $\ell_2 - \ell_1 + 1$ matrices ($A_j$),
some of which are flip matrices.
We will bound the operator norm of the sum
by bounding the operator norms of each of the
terms, and applying the triangle inequality.
To bound the operator norms of the terms, we
will decompose the terms into products of
subsequences of matrices, and bound the operator
norms of the subsequences.  
The subsequences will have at most two flip matrices, and will be determined by the positions of those flip matrices. 
One term in the sum has no flip matrices---it will have
a single subsequence that is the entire term.
Some terms have exactly one flip matrix. 
Those terms will be broken into two subsequences,
one that ends at the flip matrix, and the 
other consisting of the rest of the term.
The other terms in the sum have at least two flips.
Each such term can be broken down as follows:
\begin{itemize}
\item one or more subsequences with at exactly two
flip matrices ending in a flip matrix, 
\item possibly a subsequence with one
flip matrix, ending with the flip matrix, and
\item a (possibly empty) subsequence with no flip matrices.
\end{itemize}

In the calculations that follow the indices $q_1$, $q_2$ and $q_3$ satisfy: $1\le \ell_1\le q_1 \le q_2\le q_3 \le \ell_2 \le L$. Let $C>1$ be a large enough positive constant such that all the upper bounds in Lemma~\ref{l:conc_main_lemma} hold with this constant. 


\textit{Subsequences with no flip matrices:} First, 
subsequences for which
which $A_j = V_{j}^{\top}\Sigma_{j}$
for all $j$ can be bounded by Part~(c) of Lemma~\ref{l:conc_main_lemma}:
\begin{align*}\label{e:no_flips} \numberthis
\left\lv \prod_{j=q_1}^{q_2}V_{j}^{\top}\Sigma_{j}\right\rv_{op} = \left\lv \prod_{j=q_2}^{q_1}\Sigma_{j}V_{j}\right\rv_{op} 
   \leq CL.
\end{align*}

\textit{Subsequences with one flip matrix:} There will be two types of sub-sequences with just one flip matrix. First, let us consider the following type of subsequence:
\begin{align*}
    \left\lv \left( \prod_{j=q_1}^{q_2-1} V_{j}^{\top}\Sigma_{j} \right)\Delta_{q_2}\right\rv_{op} = \left\lv \Delta_{q_2}^{\top}\left( \prod_{j=q_2-1}^{q_1} \Sigma_{j}V_{j} \right)\right\rv_{op} &\le \left\lv \Delta_{q_2}^{\top}\right\rv_{op}\left\lv \prod_{j=q_2-1}^{q_1} \Sigma_{j}V_{j} \right\rv_{op}\\
    &\overset{(i)}{\le} CL \left\lv \Delta_{q_2}^{\top}\right\rv_{op} \\
    &= CL \left\lv \tSigma_{q_2}(\tV_{q_2} -V_{q_2} )\right\rv_{op}\\
    &\overset{(ii)}{\le} CL \left\lv \tV_{q_2} -V_{q_2} \right\rv_{op}\\ &\overset{(iii)}{\le} C\tau L\overset{(iv)}{\le} \frac{cC}{L^2}, \numberthis \label{e:delta_single_flip}
\end{align*}
where $(i)$ follows by again invoking Part~(c) of Lemma~\ref{l:conc_main_lemma}, $(ii)$ follows since by assumption the diagonal matrix $\tSigma_{q_2}$ has its entries bounded between $[-3,3]$, $(iii)$ follows since by assumption $\left\lv \tV_{q_2} -V_{q_2} \right\rv_{op} \le \tau$ and $(iv)$ follows since by assumption $\tau = c/L^3$.

Next, let us consider the second type of subsequence with just one flip matrix:
\begin{align*}
    \left\lv \left( \prod_{j=q_1}^{q_2-1} V_{j}^{\top}\Sigma_{j} \right)\Gamma_{q_2}\right\rv_{op}  &= \left\lv \left( \prod_{j=q_1}^{q_2-1} V_{j}^{\top}\Sigma_{j} \right)V_{q_2}^{\top} \left( \tSigma_{q_2}-\Sigma_{q_2}\right)\right\rv_{op}\\
    & = \sup_{a: \lv a \rv=1} \left\lv \left( \prod_{j=q_1}^{q_2-1} V_{j}^{\top}\Sigma_{j} \right)V_{q_2}^{\top} \left( \tSigma_{q_2}-\Sigma_{q_2}\right)a\right\rv\\
    & = \sup_{a: \lv a \rv=1} \left\lv a^{\top} \left( \tSigma_{q_2}-\Sigma_{q_2}\right)V_{q_2}  \prod_{j=q_2-1}^{q_1} \Sigma_{j}V_{j} \right\rv.
\end{align*}
For each $a$ let's define $b = \left( \tSigma_{q_2}-\Sigma_{q_2}\right) a$.
Since $\lv \tSigma_{q_2}-\Sigma_{q_2} \rv_0
\leq k$, therefore $b$ is $k$-sparse.
Also since the diagonal matrix $\tSigma_{q_2}$ has entries in $[-3,3]$ and $\Sigma_{q_2}$ has entries in $[0,1]$, therefore the entries of $\tSigma_{q_2}-\Sigma_{q_2}$ lie in $[-4,4]$. This implies that $\lv b \rv \leq 4\lv a \rv \leq 4$.
Applying Part~(e) of Lemma~\ref{l:conc_main_lemma}, we have
\begin{align*}
    \left\lv \left( \prod_{j=q_1}^{q_2-1} V_{j}^{\top}\Sigma_{j} \right)\Gamma_{q_2}\right\rv_{op} 
    & \leq C \lv b\rv = 4C. \numberthis \label{e:gamma_single_flip}
\end{align*}

\textit{Subsequences with two flip matrices:} Now we continue to subsequences with two flip matrices. There shall be four types of such subsequences. We begin by consider subsequences of the type
\begin{align*}
    &\left\lv \left( \prod_{j=q_1}^{q_2-1} V_{j}^{\top}\Sigma_{j} \right)  \Delta_{q_2}\left( \prod_{j=q_2+1}^{q_3-1} V_{j}^{\top}\Sigma_{j} \right)\Delta_{q_3}\right\rv_{op} \\& \qquad \le  \left\lv \left( \prod_{j=q_1+1}^{q_2-1} V_{j}^{\top}\Sigma_{j} \right)\Delta_{q_2} \right\rv_{op} \left\lv \left( \prod_{j=q_2+1}^{q_3-1} V_{j}^{\top}\Sigma_{j} \right)\Delta_{q_3}\right\rv_{op}
    \numberthis \label{e:delta_double_flip}   \overset{(i)}{\le} (C\tau L)^2 \overset{(ii)}{\le} \frac{cC}{L^2}\cdot C\tau L, 
\end{align*}
where $(i)$ follows by \eqref{e:delta_single_flip} and $(ii)$ follows since $\tau \leq c/L^3$. 

Next, we bound the operator norm of a subsequence of the type
\begin{align*} 
     & \left\lv  \left( \prod_{j=q_1}^{q_2-1} V_{j}^{\top}\Sigma_{j} \right) \Delta_{q_2}\left( \prod_{j=q_2+1}^{q_3-1} V_{j}^{\top}\Sigma_{j} \right)\Gamma_{q_3} \right\rv_{op} \\ &\qquad \le \left\lv \left( \prod_{j=q_1}^{q_2-1} V_{j}^{\top}\Sigma_{j} \right) \Delta_{q_2}\right\rv_{op} \left\lv \left( \prod_{j=q_2+1}^{q_3-1} V_{j}^{\top}\Sigma_{j} \right)\Gamma_{q_3}\right\rv_{op}
   \\& \qquad \overset{(i)}{\le} 4C\cdot C\tau L \\
   &\qquad = 4C^2\tau L, \label{e:delta_gamma_double_flip} \numberthis 
\end{align*}
where $(i)$ follows by invoking inequalities \eqref{e:delta_single_flip} and \eqref{e:gamma_single_flip}.

We continue to bound the operator norm of subsequences
\begin{align*}
    \left\lv \left( \prod_{j=q_1}^{q_2-1} V_{j}^{\top}\Sigma_{j} \right) \Gamma_{q_2}\left( \prod_{j=q_2+1}^{q_3-1} V_{j}^{\top}\Sigma_{j} \right)\Delta_{q_3} \right\rv_{op}
    &\le  \left\lv \left( \prod_{j=q_1}^{q_2-1} V_{j}^{\top}\Sigma_{j} \right)\Gamma_{q_2}\right\rv_{op}\left\lv \left( \prod_{j=q_2}^{q_3-1} V_{j}^{\top}\Sigma_{j} \right) \Delta_{q_3}\right\rv_{op}
   \\& \overset{(i)}{\le} 4C^2\tau L\label{e:gamma_delta_doubleflip}\numberthis 
\end{align*}
where $(i)$ follows again invoking inequalities \eqref{e:delta_single_flip} and \eqref{e:gamma_single_flip}.

Finally we bound the operator norm of subsequences of the type
\begin{align*}
     &\left\lv \left( \prod_{j=q_1}^{q_2-1} V_{j}^{\top}\Sigma_{j} \right)\Gamma_{q_2}\left( \prod_{j=q_2+1}^{q_3-1} V_{j}^{\top}\Sigma_{j} \right)\Gamma_{q_3} \right\rv_{op}\\
     &\qquad  \le \left\lv \prod_{j=q_1}^{q_2-1} V_{j}^{\top}\Sigma_{j} \right\rv_{op}\left\lv V_{q_2}^{\top} \left( \tSigma_{q_2}-\Sigma_{q_2}\right)\left( \prod_{j=q_2+1}^{q_3-1} V_{j}^{\top}\Sigma_{j} \right)V_{q_3}^{\top} \left( \tSigma_{q_3}-\Sigma_{q_3}\right)\right\rv_{op} \\
     &\qquad  \overset{(i)}{\le} CL \sup_{a:\lv a \rv =1}\left\lv V_{q_2}^{\top} \left( \tSigma_{q_2}-\Sigma_{q_2}\right)\left( \prod_{j=q_2+1}^{q_3-1} V_{j}^{\top}\Sigma_{j} \right)V_{q_3}^{\top} \left( \tSigma_{q_3}-\Sigma_{q_3}\right)a   \right\rv\\
     &\qquad  = CL\sup_{a:\lv a \rv =1}\sup_{b:\lv b \rv =1}\left| b^{\top} V_{q_2}^{\top} \left( \tSigma_{q_2}-\Sigma_{q_2}\right)\left( \prod_{j=q_2+1}^{q_3-1} V_{j}^{\top}\Sigma_{j} \right)V_{q_3}^{\top} \left( \tSigma_{q_3}-\Sigma_{q_3}\right)a   \right|\\
     & \qquad = CL\sup_{a:\lv a \rv =1}\sup_{b:\lv b \rv =1}\left| a^{\top}\left( \tSigma_{q_3}-\Sigma_{q_3}\right) V_{q_3} \left( \prod_{j=q_3-1}^{q_2+1} \Sigma_{j}V_{j} \right) \left( \tSigma_{q_2}-\Sigma_{q_2}\right)V_{q_2}b   \right| \\
     & \qquad = CL\sup_{a:\lv a \rv =1}\sup_{b:\lv b \rv =1}\left| a^{\top}\left( \tSigma_{q_3}-\Sigma_{q_3}\right) V_{q_3} \left( \prod_{j=q_3-1}^{q_2+1} \Sigma_{j}V_{j} \right) \left( \tSigma_{q_2}-\Sigma_{q_2}\right)\frac{V_{q_2}b}{\lv V_{q_2}b \rv}   \right| \lv V_{q_2}b\rv \\
     &\qquad  \overset{(ii)}{\le} 4C^2L\sqrt{\frac{k\log{p}}{p}} \sup_{b: \lv b \rv = 1 }\lv V_{q_2}b\rv =  4C^2L\sqrt{\frac{k\log{p}}{p}}  \lv V_{q_2}\rv_{op} \overset{(iii)}{\le} 4C^3L \sqrt{\frac{k\log{p}}{p}} \label{e:gamma_gamma_doubleflip} \numberthis
\end{align*}
where $(i)$ follows by invoking Part~(c) of Lemma~\ref{l:conc_main_lemma}. Inequality~$(ii)$ follows since the vectors $a^{\top}\left( \tSigma_{q_2}-\Sigma_{q_2}\right)$ and $ \left( \tSigma_{q_1}-\Sigma_{q_1}\right)\frac{V_{q_1}b}{\lv V_{q_1}b \rv}$ are $k$-sparse and both have norm less than or equal to 4, thus we can apply Part~(f) of Lemma~\ref{l:conc_main_lemma}, and $(iii)$ follows by applying Part~(b) of Lemma~\ref{l:conc_main_lemma}.

As stated above we can decompose each product in \eqref{e:decomposition_into_sum_of_products} that has
at least two flips into subsequences that
end in a flip and have exactly two flips,
and subsequences that have at most one
flip. The subsequences that have at most one
flip have operator norm at most $4CL$ (by inequalities~\eqref{e:no_flips}-\eqref{e:gamma_single_flip}).

The above logic \eqref{e:delta_double_flip}-\eqref{e:gamma_delta_doubleflip} implies that
subsequences with exactly two flips
that have at least one $\Delta$ flip
have operator norm at most
$4C^2\tau L \le 4C^3\sqrt{\frac{k\log{p}}{p}}L $ (since $\tau \le \sqrt{\frac{k\log{p}}{p}}$ by assumption and $C>1$). Subsequences with
two $\Gamma$ flips have operator norm
at most $4C^3\sqrt{\frac{k\log{p}}{p}}L $.  Define $\psi := 4C^3\sqrt{\frac{k\log{p}}{p}}L  $.
So, if a sequence has $r$ flip matrices then its operator norm is bounded by
\begin{align*}
 \psi^{\floor{r/2}} \times 4CL.
\end{align*}

So, putting it together, by recalling the decomposition in \eqref{e:decomposition_into_sum_of_products} we have
\begin{align*}
    \left\lv \prod_{j=\ell_1}^{\ell_2}\tV_{j}^{\top}\tSigma_{j} \right\rv_{op}
&= \left\lv \sum_{A_{\ell_1} \in \cA_{\ell_1},\ldots,
        A_{\ell_2} \in \cA_{\ell_2}}
        \prod_{j = \ell_1}^ {\ell_2}A_{j} \right\rv_{op}\\ & \overset{(i)}{\le} (1 + 2L) \times 4C L
 +  \sum_{A_{\ell_1} \in \cA_{\ell_1},\ldots,
        A_{\ell_2} \in \cA_{\ell_2}: \mbox{$\geq 2$ flips}}\left\lv
        \prod_{j = \ell_1}^ {\ell_2}A_{j} \right\rv_{op} \\
& \overset{(ii)}{\le} 12C L^2
 + \sum_{r=2}^L 
     \binom{L}{r} 2^r
     \cdot 4CL \psi^{\floor{r/2}}  \\
& \le 12C L^2
 + 8C L\sum_{r=2}^L 
     \binom{L}{r} 
     (4 \psi )^{\floor{r/2}} \\
& \leq
12C L^2
 + 8C L\left[\sum_{r \in \{2,\ldots,L \}, r \text{ even}}
     \left(\binom{L}{r} + \binom{L}{r+1}\right)
     (\sqrt{4 \psi} )^{r} \right]\\
     & = 12C L^2
 + 8C L\left[\sum_{r \in \{2,\ldots,L \}, r \text{ even}}
     \binom{L}{r}\left(1 + \frac{\binom{L}{r+1}}{\binom{L}{r}}\right)
     (\sqrt{4 \psi} )^{r} \right]\\
     & \le 12C L^2
 + 16C L^2\left[\sum_{r \in \{2,\ldots,L \}, r \text{ even}}
     \binom{L}{r}
     (\sqrt{4 \psi} )^{r} \right]\\
     & \le 12C L^2
 + 16C L^2\sum_{r =0}^L
     \binom{L}{r}
     (\sqrt{4 \psi} )^{r} \\
     & \overset{(iii)}{\le}  12CL^2
 + 16CL^2 \left(1+\sqrt{4\psi}\right)^L \\
 & \overset{(iv)}{=} 12C L^2
 + 16CL^2 \left(1+4C^{3/2}\sqrt{L}\left(\frac{k\log(p)}{p} \right)^{1/4}\right)^L \\
 & \overset{(v)}{\le}12C L^2
 + 16CL \left(1+\frac{4C^{3/2}\sqrt{c}}{L}\right)^L  \overset{(vi)}{\le}30CL^2
\end{align*}
where $(i)$ follows since the number of terms with at most one flip matrix is $(1+2L)$ and the operator norm of each such term is upper bounded by $4CL$ by inequalities~\eqref{e:no_flips}-\eqref{e:gamma_single_flip}. Inequality~$(ii)$ is because the number of terms with $r$ flip matrices is $\binom{L}{r}2^r$, $(iii)$ is by the Binomial theorem, $(iv)$ is by our definition of $\psi$. Inequality~$(v)$ follows since by assumption $\sqrt{\frac{k \log(p)}{p}}\le \frac{c}{L^{3}}$, and $(vi)$ follows since $c$ is small enough and because there exists positive constants $c_1$ and $c_2(c_1)$ such that, for any $0\le z < \frac{c_1}{L}$, $(1+z)^L \le 1+c_2Lz$. This completes our proof.
\end{proof}

The following lemma bounds the difference between post-activation features at the $\ell$th layer when the weight matrix is perturbed from its initial value. 

\begin{lemma} \label{l:ntk_x_perturbation_in_a_ball}
Let the event in Lemma~\ref{l:conc_main_lemma} hold and suppose that the conditions on $h$, $p$ and $\tau$ described in that lemma hold with the additional assumptions that $\tau \le\frac{c}{L^{12}\log^{\frac{3}{2}}(p)}$, where $c$ is a small enough constant, and $h \le \frac{\tau}{\sqrt{p}}$. Let $V^{(1)}$ be the initial 
weights 
and $\tV$, $\hV$ 
be
such that $\lv \tV_{\ell}-V^{(1)}_{\ell}\rv_{op},\lv \hV_{\ell}-V^{(1)}_{\ell}\rv_{op}\le \tau$ for all $\ell \in [L]$.  Then 
\begin{enumerate}
\item $\lv \Sigma_{\ell,s}^{\tV}-\Sigma_{\ell,s}^{\hV}\rv_0 \le O(pL^2 \tau^{2/3})$;
    \item $\lv x_{\ell,s}^{\tV}-x_{\ell,s}^{\hV}\rv \le O(L^3 \tau)$;
\end{enumerate}
for all $\ell \in [L]$ and all $s \in [n]$.
\end{lemma}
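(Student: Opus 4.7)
The plan is to prove both parts jointly by induction on $\ell$, implicitly strengthening Part~2 to also cover the case $\hV = V^{(1)}$, so the inductive hypothesis yields bounds like $\lv x_{\ell-1,s}^{\tV} - x_{\ell-1,s}^{V^{(1)}} \rv \le O(L^3\tau)$ for free. The base case $\ell = 0$ is immediate since $x_{0,s}^{\tV} = x_{0,s}^{\hV} = x_s$.

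For Part~1 at layer $\ell$, the key observation is that for the Huberized ReLU, $\phi'$ is piecewise constant outside the knee interval $[0,h]$---taking value $0$ on $(-\infty, 0]$ and $1$ on $[h, \infty)$. Hence $(\Sigma_{\ell,s}^{\tV})_{jj} \ne (\Sigma_{\ell,s}^{\hV})_{jj}$ can occur only if at least one of the preactivations $\tV_{\ell,j} x_{\ell-1,s}^{\tV}$, $\hV_{\ell,j} x_{\ell-1,s}^{\hV}$ lies within $\delta$ of the knee region $[0,h]$, where $\delta$ is an upper bound on the preactivation perturbation relative to $V^{(1)}_{\ell,j} x_{\ell-1,s}^{V^{(1)}}$. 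I will bound $\delta$ using the decomposition
\[
\tV_{\ell,j} x_{\ell-1,s}^{\tV} - V^{(1)}_{\ell,j} x_{\ell-1,s}^{V^{(1)}} = \tV_{\ell,j}\bigl(x_{\ell-1,s}^{\tV} - x_{\ell-1,s}^{V^{(1)}}\bigr) + (\tV_{\ell,j} - V^{(1)}_{\ell,j}) x_{\ell-1,s}^{V^{(1)}}
\]
together with the inductive hypothesis, Lemma~\ref{l:conc_main_lemma}(a,b), and the row-norm bound $\lv \tV_{\ell,j} - V^{(1)}_{\ell,j} \rv \le \lv \tV_\ell - V^{(1)}_\ell \rv_{op} \le \tau$. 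The bad set will then be contained in $\cS_{\ell,s}(\delta + h)$, and Part~(h) of Lemma~\ref{l:conc_main_lemma} (together with $h \le \tau/\sqrt{p}$) will yield the desired sparsity $O(p L^2 \tau^{2/3})$.

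For Part~2 at layer $\ell$, I will apply a coordinate-wise mean-value theorem---after the usual split $\phi(\tV_\ell x_{\ell-1,s}^{\tV}) - \phi(\hV_\ell x_{\ell-1,s}^{\hV}) = \bigl[\phi(\tV_\ell x_{\ell-1,s}^{\tV}) - \phi(\tV_\ell x_{\ell-1,s}^{\hV})\bigr] + \bigl[\phi(\tV_\ell x_{\ell-1,s}^{\hV}) - \phi(\hV_\ell x_{\ell-1,s}^{\hV})\bigr]$---to produce two diagonal matrices $D, D'$ with entries in $[0,1]$ satisfying
\[
x_{\ell,s}^{\tV} - x_{\ell,s}^{\hV} = D\tV_\ell(x_{\ell-1,s}^{\tV} - x_{\ell-1,s}^{\hV}) + D'(\tV_\ell - \hV_\ell) x_{\ell-1,s}^{\hV}.
\]
Unrolling this recursion down to layer $0$ gives a telescoping sum of at most $\ell$ terms, the $k$-th term being a product of diagonal-weight matrices running from layer $\ell$ down to $k+1$ applied to $D_k(\tV_k - \hV_k) x_{k-1,s}^{\hV}$. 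Each mean-value diagonal matrix will be close to $\Sigma_{k,s}^{V^{(1)}}$ in the sparse sense required by Lemma~\ref{l:ntk_product_matrices_in_a_ball}, since its $j$-th entry equals $\phi'(\xi_{k,j})$ for some $\xi_{k,j}$ between the two preactivations at layer $k$, and by the same perturbation argument used in Part~1 this agrees with $(\Sigma_{k,s}^{V^{(1)}})_{jj}$ outside a set of size $O(pL^2\tau^{2/3})$. Hence each matrix product has operator norm $O(L^2)$, and each of the $L$ terms contributes at most $O(L^2)\cdot 2\tau \cdot O(1) = O(L^2\tau)$, for a total bound of $O(L^3\tau)$.

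The hard part will be verifying that the sparse-closeness hypothesis of Lemma~\ref{l:ntk_product_matrices_in_a_ball} actually holds with $k = O(pL^2\tau^{2/3})$---this is precisely why the assumption $\tau \le c/(L^{12}\log^{3/2}(p))$ is imposed, so that $\sqrt{k \log(p) / p} \le c/L^3$ as required. Care is also required in ordering the induction: Part~1 at layer $\ell$ uses the IH on Part~2 at layer $\ell-1$, while Part~2 at layer $\ell$ uses Part~1 at layer $\ell$, so both claims must be established in this precise order at each step.
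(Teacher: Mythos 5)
Your Part~2 skeleton (coordinate-wise mean-value diagonal matrices, unrolling the recursion, bounding the matrix products via Lemma~\ref{l:ntk_product_matrices_in_a_ball}) closely parallels the paper's construction of the matrices $\cSigma_{\ell}$ (your MVT version even avoids the paper's $\chi_\ell$ error term), but both parts of your plan rest on a sparsity count that does not give the claimed bound. You bound the disagreement set by containment: a coordinate can flip only if its \emph{initial} preactivation lies within $\delta+h$ of the knee $[0,h]$, where $\delta$ is a uniform per-coordinate bound on the preactivation perturbation. The best uniform bound available from your decomposition and the inductive hypothesis is $\delta=O(L^{3}\tau)$ (a row of $\tV_{\ell}$ can align with $x^{\tV}_{\ell-1,s}-x^{V^{(1)}}_{\ell-1,s}$), so this containment only yields a set of size $O(p^{3/2}(L^{3}\tau+h))=O(p^{3/2}L^{3}\tau)$ --- and even that invokes Part~(h) of Lemma~\ref{l:conc_main_lemma} at a radius for which it is not stated, since the high-probability event there is guaranteed only at the specific $\beta=O(L^{2}\tau^{2/3}/\sqrt{p})$. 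The quantity $p^{3/2}L^{3}\tau$ exceeds the target $pL^{2}\tau^{2/3}$ by the factor $\sqrt{p}\,L\,\tau^{1/3}$, which is at least order one under the standing lower bound on $\tau$ and typically polynomially large in $p$; in particular it need not satisfy $\sqrt{k\log(p)/p}\le c/L^{3}$, so Lemma~\ref{l:ntk_product_matrices_in_a_ball} could no longer be invoked and the induction (and the downstream approximation-error bound) would break.

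What is missing is a counting argument that uses the \emph{Euclidean} norm of the preactivation perturbation rather than its sup-norm: only a few coordinates can realize the worst-case $O(L^{3}\tau)$. Concretely, fix $\beta\asymp L^{2}\tau^{2/3}/\sqrt{p}$ and note $h\le\tau/\sqrt{p}\le\beta/2$. Coordinates whose initial preactivation has magnitude at most $\beta$ number $O(p^{3/2}\beta)=O(pL^{2}\tau^{2/3})$ by Part~(h) of Lemma~\ref{l:conc_main_lemma} (this is exactly the radius at which that event is stated). For any other coordinate, a disagreement of the activation-derivative forces the per-coordinate preactivation change to be at least $\beta-h\ge\beta/2$, and since
\begin{align*}
\lv \tV_{\ell}x^{\tV}_{\ell-1,s}-V^{(1)}_{\ell}x^{V^{(1)}}_{\ell-1,s}\rv \le \lv \tV_{\ell}-V^{(1)}_{\ell}\rv_{op}\lv x^{V^{(1)}}_{\ell-1,s}\rv+\lv \tV_{\ell}\rv_{op}\lv x^{\tV}_{\ell-1,s}-x^{V^{(1)}}_{\ell-1,s}\rv = O(L^{3}\tau),
\end{align*}
the number of such coordinates is at most $O(L^{6}\tau^{2})/(\beta/2)^{2}=O(pL^{2}\tau^{2/3})$. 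Balancing these two counts is precisely what dictates the choice of $\beta$ and the exponent $2/3$. The same two-bucket count is what justifies your claim that each mean-value diagonal matrix agrees with $\Sigma^{V^{(1)}}_{k,s}$ outside an $O(pL^{2}\tau^{2/3})$-sparse set; with that repaired (and with the $O(\cdot)$ constants in the inductive hypothesis tracked layer-independently, as the paper does with explicit constants $c_{1},c_{2}$), the rest of your Part~2 argument goes through.
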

\begin{proof}
Fix the sample $s$
In this proof, we will
refer to $\Sigma_{\ell,s}^{V^{(1)}}, \Sigma_{\ell,s}^{\tV}, \Sigma_{\ell,s}^{\hV}, x_{\ell,s}^{V^{(1)}}, x_{\ell,s}^{\tV}$ and $x_{\ell,s}^{\hV}$ as $\Sigma_{\ell}$, $\tSigma_{\ell}$, $\hSigma_{\ell}$, $x_{\ell}$, $\tx_{\ell}$ and $\hx_{\ell}$ respectively.

Before the first layer (at layer $0$) define $\Sigma_{0}=\tSigma_{0}=\hSigma_{0} = I$ and recall that by definition for any sample $s \in [n]$, $x_{0,s}^{V^{(1)}} = x^{\tV}_{0,s} =x^{\hV}_{0,s} = x_s$.

For constants $c_1, c_2$ to be determined later, we will prove using induction that, for all $\ell \in [L]$,
\begin{enumerate}
\item $\lv \Sigma_{\ell}-\tSigma_{\ell}\rv_0$, $\lv \Sigma_{\ell}-\hSigma_{\ell}\rv_0 \le c_1  p L^{2} \tau^{2/3}$,
    \item $\lv \tSigma_{\ell}-\hSigma_{\ell}\rv_0\le 2c_1  p L^{2} \tau^{2/3}$, and
    \item $\lv x_{\ell}-\tx_{\ell}\rv,\lv x_{\ell}-\hx_{\ell}\rv,\lv \tx_{\ell}-\hx_{\ell}\rv
    \le c_2  L^3 \tau$.
\end{enumerate}

The base case, where $\ell=0$, is trivially true since $x_0 = \hx_0 = \tx_0$  and $\Sigma_0 = \tSigma_0 = \hSigma_0$.

Now let us assume that the inductive hypothesis holds for all layers $r = 1,\ldots,\ell-1$. We shall prove that the inductive hypothesis holds at layer $\ell$ in two steps. 

\paragraph{Step~1:}  By the triangle inequality
\begin{align}
    \lv \tSigma_{\ell} - \hSigma_{\ell} \rv_{0} \le \lv \Sigma_{\ell} - \tSigma_{\ell} \rv_{0}+\lv \Sigma_{\ell} - \hSigma_{\ell} \rv_{0}. \label{e:ntk_sigma_sparsity_decomposition}
\end{align}
Note that showing that $\lv \Sigma_{\ell} - \tSigma_{\ell} \rv_{0}$ and $\lv \Sigma_{\ell} - \hSigma_{\ell} \rv_{0}$ are at most $c_1pL^2 \tau^{2/3}$ also proves the claim that $ \lv \tSigma_{\ell} - \hSigma_{\ell} \rv_{0} \le 2c_1 pL^2 \tau^{2/3}$.

We begin by bounding $\lv \Sigma_{\ell} - \tSigma_{\ell} \rv_{0}$. Recall that by definition the diagonal matrix $(\Sigma_{\ell})_{jj} =(\phi'(V_{\ell}x_{\ell-1}))_j$. So to bound the difference between $\Sigma_{\ell} - \tSigma_{\ell}$ we characterize the difference between \begin{align*}
    V_{\ell}x_{\ell-1} - \tV_{\ell}\tx_{\ell-1} = (V_{\ell}-\tV_{\ell})x_{\ell-1} + \tV_{\ell}(x_{\ell-1}-\tx_{\ell-1}).
\end{align*}
We know that, by assumption, $\lv \tV_{\ell}- V_{\ell}\rv_{op}\le \tau$, and that $\lv x_{\ell-1}\rv\le 2$ by Part~(a) of Lemma~\ref{l:conc_main_lemma}, and $\lv \tx_{\ell-1}-x_{\ell-1}\rv\le c_2L^3\tau $ by the inductive hypothesis. Therefore,
\begin{align*}
    \lv V_{\ell}x_{\ell-1} - \tV_{\ell}\tx_{\ell-1} \rv&\le \lv(\tV_{\ell}-V_{\ell})x_{\ell-1}\rv + \lv\tV_{\ell}(\tx_{\ell-1}-x_{\ell-1}) \rv \\
    &\le \lv \tV_{\ell}-V_{\ell}\rv_{op} \lv x_{\ell-1}\rv + \lv\tV_{\ell}\rv_{op}\lv\tx_{\ell-1}-x_{\ell-1} \rv  \\
    & \le 2\tau + c_2L^3\tau \lv\tV_{\ell}\rv_{op} \\
    & \le 2\tau + c_2 L^3\tau \left(\lv\tV_{\ell}-V_{\ell}\rv_{op} + \lv V_{\ell}\rv_{op}\right)\\
    & \overset{(i)}{\le} 2\tau + c_2L^3\tau \left(\tau + c_3 \right) \overset{(ii)}{\leq} \tau \left(c_2c_4L^3+2\right),
\end{align*}
where $(i)$ follows since $\lv V_{\ell}\rv_{op}\le c_3$ by Part~(b) of Lemma~\ref{l:conc_main_lemma} and $(ii)$ follows since $\tau $ is smaller than a constant by assumption.

Let $\beta=\frac{c_5L^{2}\tau^{\frac{2}{3}}}{\sqrt{p}}>2h>0$. The reason for this particular choice of the value of $\beta$ shall become clear shortly, and $h \le \beta/2$ since $h\le \frac{\tau}{\sqrt{p}}$ by assumption. Define the set
\begin{align*}
    \cS_{\ell}(\beta) := \{j \in [p]: |V_{\ell,j}x_{\ell-1}|\le \beta \}
\end{align*}
where $V_{\ell,j}$ refers to the $j$th row of $V_{\ell}$. Also define 
\begin{align*}
    s_{\ell}^{(1)}(\beta) &:= \left|\{j\in  \cS_{\ell}(\beta): (\Sigma_{\ell})_{jj} \neq (\tSigma_{\ell})_{jj}  \}\right| \qquad \text{ and}\\
    s_{\ell}^{(2)}(\beta) &:= \left|\{j\in  \cS_{\ell}^{c}(\beta): (\Sigma_{\ell})_{jj} \neq (\tSigma_{\ell})_{jj} \}\right|.
\end{align*}
Clearly we must have that
\begin{align*}
    \lv \Sigma_{\ell} - \tSigma_{\ell}\rv_{0} = s_{\ell}^{(1)}(\beta)+s_{\ell}^{(2)}(\beta).
\end{align*}
To bound $s_{\ell}^{(1)}(\beta)$ we note that $s_{\ell}^{(1)}(\beta) \le |\cS_{\ell}(\beta)|\le c_6 p^{3/2}\beta$ by Part~(h) of Lemma~\ref{l:conc_main_lemma}. We focus on $s^{(2)}_{\ell}(\beta)$. For a $j \in \cS_{\ell}^{c}(\beta)$ by the definition of the Huberized ReLU if $(\Sigma_{\ell})_{jj} \neq (\tSigma_{\ell})_{jj}$ then we must have that
\begin{align*}
    \left| \tV_{\ell,j}\tx_{\ell-1} - V_{\ell,j}x_{\ell-1}\right| \ge \beta-h.
\end{align*}
This further implies that 
\begin{align*}
    (\beta-h)^2 s_{\ell}^{(2)}(\beta)  \le \sum_{j \in \cS_{\ell}^c(\beta):(\Sigma_{\ell})_{jj} \neq (\tSigma_{\ell})_{jj}}   \left| \tV_{\ell,j}\tx_{\ell-1} - V_{\ell,j}x_{\ell-1}\right|^2 &\le \lv V_{\ell}x_{\ell-1} - \tV_{\ell}\tx_{\ell-1} \rv^2 \\&\le  \tau^2 \left(c_2c_4L^3+2\right)^2.
\end{align*}
Therefore, we find that
\begin{align*}
     \lv \Sigma_{\ell} - \tSigma_{\ell}\rv_{0} & = s_{\ell}^{(1)}(\beta)+s_{\ell}^{(2)}(\beta) \\
     &\le \frac{ \tau^2 \left(c_2 c_4L^3+2\right)^2}{(\beta-h)^2} + c_6 p^{3/2}\beta \\
     &\le \frac{ \left(c_2 c_7 L^3 \tau \right)^2}{\beta^2} + c_6 p^{3/2}\beta,
\end{align*}
if $c_2 \geq 1/c_4$, since $h \leq \beta/2$.
Balancing both of these terms on the RHS leads to the choice $\beta = \frac{c_5L^2\tau^{2/3}}{\sqrt{p}}$. This choice of $\beta$ shows that
\begin{align*}
    \lv \Sigma_{\ell} - \tSigma_{\ell}\rv_{0} &\le 2c_6p^{3/2}\beta = 2c_6 c_5 p L^2 \tau^{2/3} = c_1p L^2 \tau^{2/3} .
\end{align*}
Similarly we can also show that $\lv \Sigma_{\ell} - \hSigma_{\ell}\rv_{0}\le c_1pL^2\tau^{2/3}$. These two bounds along with \eqref{e:ntk_sigma_sparsity_decomposition} proves the first part of the inductive hypothesis. This combined with inequality~\eqref{e:ntk_sigma_sparsity_decomposition} also proves the second part of the inductive hypothesis.

\paragraph{Step 2:} Now, for the third part we want to show that $\lv \tx_{\ell,s}-\hx_{\ell,s}\rv$ remains bounded. We can also show that $\lv x_{\ell,s}-\tx_{\ell,s}\rv$ and $\lv x_{\ell,s}-\tx_{\ell,s}\rv$ remain bounded by mirroring the logic that follows. Define a diagonal matrix $\cSigma_{\ell}$,
whose diagonal entries are
\begin{align*}
    (\cSigma_{\ell})_{jj}:= (\hSigma_{\ell} - \tSigma_{\ell})_{jj}\left[\frac{\tV_{\ell,j}\tx_{\ell-1}}{\hV_{\ell,j}\hx_{\ell-1}-\tV_{\ell,j}\tx_{\ell-1}}\right], \qquad \text{ for all, } j\in [p].
\end{align*}
In the definition above we use the convention that $0/0 = 0$. We will show that for any $j \in [p]$
\begin{align*}
    |(\cSigma_{\ell})_{jj}| = \left|(\hSigma_{\ell} - \tSigma_{\ell})_{jj}\left[\frac{\tV_{\ell,j}\tx_{\ell-1}}{\hV_{\ell,j}\hx_{\ell-1}-\tV_{\ell,j}\tx_{\ell-1}}\right] \right| &\le 1.
\end{align*}
 Firstly observe that the matrices $\tSigma_{\ell}$ and $\hSigma_{\ell}$ have entries between $[0,1]$, therefore $\tSigma_{\ell}-\Sigma_{\ell}$ has entries between $-1$ and $1$. Also recall that by the definition of the Huberized ReLU,
 \begin{align*}
     (\hSigma_{\ell})_{jj} & = \begin{cases}
      1 & \text{if } \hV_{\ell,j}\hx_{\ell-1}>h,\\
       \frac{\hV_{\ell,j}\hx_{\ell-1}}{h} & \text{if } \hV_{\ell,j}\hx_{\ell-1}\in [0,h],\\
       0 & \text{if } \hV_{\ell,j}\hx_{\ell-1}<0.
     \end{cases}
 \end{align*}
 
 Now we will analyze a few cases and show that the absolute values of the entries of $\cSigma_{\ell}$ are smaller than $1$ in each case. 
 
 If the signs of $\tV_{\ell,j}\tx_{\ell-1}$ and $\hV_{\ell,j}\hx_{\ell-1}$ are opposite then we must have that 
\begin{align*}
   \left|(\hSigma_{\ell} - \tSigma_{\ell})_{jj} \frac{\tV_{\ell,j}\tx_{\ell-1}}{\hV_{\ell,j}\hx_{\ell-1}-\tV_{\ell,j}\tx_{\ell-1}}\right| \overset{(i)}{\le}\left| \frac{\tV_{\ell,j}\tx_{\ell-1}}{\hV_{\ell,j}\hx_{\ell-1}-\tV_{\ell,j}\tx_{\ell-1}}\right|=  \frac{|\tV_{\ell,j}\tx_{\ell-1}|}{|\hV_{\ell,j}\hx_{\ell-1}|+|\tV_{\ell,j}\tx_{\ell-1}|} \le 1,
\end{align*}
where $(i)$ follows since $|(\hSigma_{\ell} - \tSigma_{\ell})_{jj}| \le 1$.
If they have the same sign and are both negative then $(\tSigma_{\ell}-\Sigma_{\ell})_{jj}=0$ in this case. The same is true when they are both positive and are bigger than $h$. Therefore, we are only left with the case when both are positive and one of them is smaller than $h$. If $\tV_{\ell,j}\tx_{\ell-1}>h$ and $\hV_{\ell,j}\hx_{\ell-1} \in [0,h]$ we have that
\begin{align*}
   \left|(\hSigma_{\ell} - \tSigma_{\ell})_{jj} \frac{\tV_{\ell,j}\tx_{\ell-1}}{\hV_{\ell,j}\hx_{\ell-1}-\tV_{\ell,j}\tx_{\ell-1}}\right| & =  \left| \frac{\left(\frac{\hV_{\ell,j}\hx_{\ell-1}}{h} - 1\right) \tV_{\ell,j}\tx_{\ell-1}}{\hV_{\ell,j}\hx_{\ell-1}-\tV_{\ell,j}\tx_{\ell-1}}\right|  =  \left| \frac{\frac{\hV_{\ell,j}\hx_{\ell-1}}{h} - 1}{\frac{\hV_{\ell,j}\hx_{\ell-1}}{\tV_{\ell,j}\tx_{\ell-1}}-1}\right| \le 1
\end{align*}
where the last inequality follows since $\tV_{\ell,j}\tx_{\ell-1}>h$.
And finally in the case where $\hV_{\ell,j}\hx_{\ell-1}>h$ and $\tV_{\ell,j}\tx_{\ell-1} \in [0,h]$ we have that
\begin{align*}
   \left|(\hSigma_{\ell} - \tSigma_{\ell})_{jj} \frac{\tV_{\ell,j}\tx_{\ell-1}}{\hV_{\ell,j}\hx_{\ell-1}-\tV_{\ell,j}\tx_{\ell-1}}\right| & =  \left|\left(1-\frac{\tV_{\ell,j}\tx_{\ell-1}}{h} \right) \frac{\tV_{\ell,j}\tx_{\ell-1}}{\hV_{\ell,j}\hx_{\ell-1}-\tV_{\ell,j}\tx_{\ell-1}}\right|   =   \frac{\left(h-\tV_{\ell,j}\tx_{\ell-1} \right)\tV_{\ell,j}\tx_{\ell-1}}{h\left(\hV_{\ell,j}\hx_{\ell-1}-\tV_{\ell,j}\tx_{\ell-1}\right)} .
\end{align*}
To show that this term of the RHS above is smaller than $1$ it is sufficient to show that
\begin{align*}
    (h-\tV_{\ell,j}\tx_{\ell-1})\tV_{\ell,j}\tx_{\ell-1} \le h(\hV_{\ell,j}\hx_{\ell-1}-\tV_{\ell,j}\tx_{\ell-1}),
\end{align*}
in our case where $0\le \tV_{\ell,j}\tx_{\ell-1}\le h < \hV_{\ell,j}\hx_{\ell-1}$. Consider the change of variables $a = \tV_{\ell,j}\tx_{\ell-1}$ and $b = \hV_{\ell,j}\hx_{\ell-1}$, then it suffices to show that
\begin{align*}
    (h-a)a \le h(b-a) \Leftarrow 0 \le a^2-2ah+hb.
\end{align*}
The derivative of $a^2-2ah+hb$ with respect to $a$ is $2(a-h)$, which is non-positive when $a \le h$. Therefore the minimum of the quadratic when $a \in [0,h]$ is at $a = h$ and the minimum value is $h^2-2h^2+hb = hb-h^2 = h(b-h)>0$. This proves that $|(\cSigma)_{jj}|\le 1$ in this final case as well.

With this established we note that
\begin{align*}
   e_{\ell} := \hx_{\ell} - \tx_{\ell} 
    &=  \phi(\hV_{\ell}\hx_{\ell-1})-\phi(\tV_{\ell}\tx_{\ell-1})  \\
    &= \hSigma_{\ell}\hV_{\ell}\hx_{\ell-1} -\tSigma_{\ell}\tV_{\ell}\tx_{\ell-1}  +\underbrace{\phi(\hV_{\ell}\hx_{\ell-1})-\hSigma_{\ell}\hV_{\ell}\hx_{\ell-1}- \phi(\tV_{\ell}\tx_{\ell-1}) +\tSigma_{\ell}\tV_{\ell}\tx_{\ell-1}}_{=:\chi_\ell}\\
    &\overset{(i)}{=}  \left( \hSigma_{\ell}+\cSigma_{\ell}\right)\left(\hV_{\ell}\hx_{\ell-1} - \tV_{\ell}\tx_{\ell-1}\right)+\chi_{\ell} \\
    &= \underbrace{\left( \hSigma_{\ell}+\cSigma_{\ell}\right)\hV_{\ell}}_{=: A_{\ell}}\underbrace{(\hx_{\ell-1}-\tx_{\ell-1})}_{=: e_{\ell-1}}+\underbrace{\left( \hSigma_{\ell}+\cSigma_{\ell}\right)\left(  \tV_{\ell}-V_{\ell}\right)\tx_{\ell-1}+\chi_{\ell}}_{=:b_{\ell}}\\
    &= A_{\ell}e_{\ell-1}+b_{\ell}
    \numberthis \label{e:unrolling_the_error_at_layer_ell_in_terms_of_ell-1}
\end{align*}
where $(i)$ follows because by the definition of the matrix $\cSigma_{\ell}$ for each coordinate $j$ we have
\begin{align*}
     \left( \hSigma_{\ell}+\cSigma_{\ell}\right)_{jj}\left(\hV_{\ell,j}\hx_{\ell-1} - \tV_{\ell,j}\tx_{\ell-1}\right)& = \left( \hSigma_{\ell}\right)_{jj}\left(\hV_{\ell,j}\hx_{\ell-1} -\tV_{\ell,j}\tx_{\ell-1}\right)+ \left( \cSigma_{\ell}\right)_{jj}\left(\hV_{\ell,j}\hx_{\ell-1} -\tV_{\ell,j}\tx_{\ell-1}\right) \\
     & = \left( \hSigma_{\ell}\right)_{jj}\left(\hV_{\ell,j}\hx_{\ell-1} -\tV_{\ell,j}\tx_{\ell-1}\right) + (\hSigma_{\ell}-\tSigma_{\ell})_{jj} \tV_{\ell,j}\tx_{\ell-1}\\
     &= (\hSigma_{\ell})_{jj}\hV_{\ell,j}\hx_{\ell-1} -(\tSigma_{\ell})_{jj}\tV_{\ell,j}\tx_{\ell-1}.
\end{align*}
In equation~\eqref{e:unrolling_the_error_at_layer_ell_in_terms_of_ell-1} above we have expressed the difference between the post-activation features at layer $\ell$ in terms of the difference at layer $\ell-1$ plus some error terms. Repeating this $\ell-1$ more times yields 
\begin{align*}
    e_{\ell} & = A_{\ell}e_{\ell-1}+b_{\ell} = A_{\ell}(A_{\ell-1}e_{\ell-2}+b_{\ell-1})+b_{\ell}  = \prod_{j=\ell}^{1}A_j e_0+ \left(\sum_{r=1}^{\ell-1} \left[\prod_{j=\ell}^{r+1}A_{j}\right] b_{r}\right) +b_{\ell}.
\end{align*}
Since $e_0=\lv \hx_{0}-\tx_{0}\rv = 0$, by re-substituting the values of $A_{\ell}$ and $b_{\ell}$ we find that
\begin{align} \nonumber
     \hx_{\ell} - \tx_{\ell}  & = \sum_{r=1}^{\ell-1} \left[\prod_{j=\ell}^{r+1} \left( \hSigma_{j}+\cSigma_j\right)\hV_j\right](\hSigma_r+\cSigma_{r})(\hV_r-\tV_r) \tx_{r-1}  +\left( \hSigma_{\ell}+\cSigma_{\ell}\right)\left(  \tV_{\ell}-\hV_{\ell}\right)\tx_{\ell-1}  \\ & \qquad+\sum_{r=1}^{\ell-1} \left[\prod_{j=\ell}^{r+1}(\hSigma_{j}+\cSigma_j)\hV_j\right]\chi_{r}+\chi_{\ell}, \label{e:ntk_x_difference_decomposition}
\end{align}
and therefore by the triangle inequality
\begin{align*}
     &\lv \hx_{\ell} - \tx_{\ell}\rv \\ & \le \left\lv\sum_{r=1}^{\ell-1} \left[\prod_{j=\ell}^{r+1} \left( \hSigma_{j}+\cSigma_j\right)\hV_j\right](\hSigma_r+\cSigma_{r})(\hV_r-\tV_r) \tx_{r-1}\right\rv + \left\lv \left( \hSigma_{\ell}+\cSigma_{\ell}\right)\left(  \tV_{\ell}-\hV_{\ell}\right)\tx_{\ell-1} \right\rv  \\ & \qquad \qquad +\left\lv \sum_{r=1}^{\ell-1} \left[\prod_{j=\ell}^{r+1}(\hSigma_{j}+\cSigma_j)V_j\right]\chi_{r}\right\rv+\left\rv\chi_{\ell}\right\rv \\
     &\le  \sum_{r=1}^{\ell-1} \left\lv\prod_{j=\ell}^{r+1} \left( \hSigma_{j}+\cSigma_j\right)\hV_j\right\rv_{op}\left\lv\hSigma_r +\cSigma_{r}\right\rv_{op}\left\lv \hV_r-\tV_r\right\rv_{op} \lv\tx_{r-1}\rv + \left\lv  \hSigma_{\ell}+\cSigma_{\ell}\right\rv_{op}\left\lv  \tV_{\ell}-\hV_{\ell}\right\rv_{op}\lv\tx_{\ell-1} \rv \\ & \qquad \qquad +  \sum_{r=1}^{\ell-1} \left\lv\prod_{j=\ell}^{r+1}(\hSigma_{j}+\cSigma_j)V_j\right\rv_{op}\lv\chi_{r}\rv +\lv\chi_{\ell}\rv. \\
     \end{align*}
     Recall that the diagonal matrices $(\Sigma_{\ell}-\hSigma_{\ell}-\cSigma_{\ell})$ are $3c_1 pL^2\tau^{2/3}$ sparse by the inductive hypothesis. Also the matrices $(\Sigma_{\ell}-\hSigma_{\ell}-\cSigma_{\ell})$ have entries in $[-3,3]$. Therefore by applying Lemma~\ref{l:ntk_product_matrices_in_a_ball} (note that since $\tau \le \frac{c}{L^{12}\log^{\frac{3}{2}}(p)}$ therefore Lemma~\ref{l:ntk_product_matrices_in_a_ball} applies at this level of sparsity) we find that
     for a constant $c_8$ (that does
     not depend on $c_1$), we have
     \begin{align*}
     \lv \hx_{\ell} - \tx_{\ell}\rv
     &\le c_8 L^2\left[\sum_{r=1}^{\ell} \lv \hV_{r}-\tV_{r}\rv_{op} \lv \tx_{r-1}\rv+\sum_{r=1}^{\ell}\lv \chi_{r}\rv\right] \\
     &\overset{(i)}{\le} c_8 L^2\left[\sum_{r=1}^{\ell} \lv \hV_{r}-\tV_{r}\rv_{op} \lv \tx_{r-1}\rv+\ell h\sqrt{p}\right] \\
     &\le c_8 L^2\left[\sum_{r=1}^{\ell} \lv \hV_{r}-\tV_{r}\rv_{op} \left(\lv \tx_{r-1}-x_{r-1}\rv+\lv x_{r-1}\rv\right)+\ell h\sqrt{p}\right] \\
     &\overset{(ii)}{\le} c_8L^2\left[\sum_{r=1}^{\ell} \lv \hV_{r}-\tV_{r}\rv_{op} \left(c_1L^3 \tau+2\right)+\ell h\sqrt{p}\right] \\
     &\overset{(iii)}{\le} c_9L^2\left[\sum_{r=1}^{\ell} \lv \hV_{r}-\tV_{r}\rv_{op} +L \tau\right]\le 2c_9 L^3 \tau = c_2 L^3\tau, 
\end{align*}
 where inequality~$(i)$ follows since by definition of the Huberized ReLU for any $z \in \mathbb{R}$ we have that $\phi(z)\le \phi'(z)z \le \phi(z)+h/2$, therefore 
\begin{align}
    \lv\chi_{r}\rv_{\infty} = \left\lv \phi(V_{\ell}x_{\ell-1})-\Sigma_{\ell}V_{\ell}x_{\ell-1}- \phi(\tV_{\ell}\tx_{\ell-1}) +\tSigma_{\ell}\tV_{\ell}\tx_{\ell-1} \right\rv_{\infty} \le 2\cdot\frac{h}{2}=h \label{e:ntk_chi_huberized_relu_infinity_norm_bound}
\end{align}
which implies that $\lv\chi_r \rv\le h\sqrt{p}$. Next $(ii)$ follows by bound on $\lv \tx_{r-1}-x_{r-1}\rv$ due to the inductive hypothesis and because $\lv x_{r-1}\rv_2 \le 2$ by Part~(a) of Lemma~\ref{l:conc_main_lemma}. Finally $(iii)$ follows by assumption $\tau \le O(1/L^3)$ and $h<\frac{\tau}{\sqrt{p}}$. This establishes a bound on $\lv \hx_{\ell} - \tx_{\ell}\rv$. We can also mirror the logic to bound $\lv x_{\ell} - \tx_{\ell}\rv$ and $\lv x_{\ell} - \tx_{\ell}\rv$. This completes the induction and the proof of the lemma.
\end{proof}

\begin{lemma}
\label{l:ntk_diff_between_prod_of_matrices}Let the event in Lemma~\ref{l:conc_main_lemma} hold and suppose that the conditions on $h$, $p$ and $\tau$ described in that lemma hold with the additional assumptions that, 
for
a sufficient small constant $c> 0$, $\tau \leq \frac{c}{L^{12}\log^{\frac{3}{2}}(p)}$, and $h \le \frac{\tau}{\sqrt{p}}$. Let $V^{(1)}$ be the initial weight matrix and $\tV$, $\hV$ be weight matrices such that $\lv \tV_{\ell}-V^{(1)}_{\ell}\rv_{op},\lv \hV_{\ell}-V^{(1)}_{\ell}\rv_{op}\le \tau$ for all $\ell \in [L]$. Also let $\bSigma_{\ell,s}$ be $O(pL^2\tau^{2/3})$-sparse diagonal matrices with entries in $[-1,1]$ for all $\ell \in [L]$ and $s\in [n]$. Then 
\begin{align*}
    \left\lv \tV_{L+1} \prod_{r = L}^{\ell} \left(\Sigma_{r,s}^{\tV}+\bSigma_{r,s}\right)\tV_r- \hV_{L+1} \prod_{r = L}^{\ell} \Sigma_{r,s}^{\hV}\hV_{r} \right\rv_{op} \le O\left(\sqrt{p\log(p)}L^4\tau^{1/3}\right).
\end{align*}
for all $\ell \in [L]$ and all $s \in [n]$.
\end{lemma}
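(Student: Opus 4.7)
My plan is to expand the difference as a telescoping sum by swapping the building-block factors of the two products one at a time, and then bound each resulting term. Write each product as an alternating sequence of ``diagonal $\times$ weight matrix'' factors $M^{\tV}_r := (\Sigma_{r,s}^{\tV}+\bSigma_{r,s})\tV_r$ and $M^{\hV}_r := \Sigma_{r,s}^{\hV}\hV_r$, plus an outer factor $\tV_{L+1}$ or $\hV_{L+1}$. Decomposing $M^{\tV}_k - M^{\hV}_k = D_k\,\tV_k + \Sigma^{\hV}_k(\tV_k-\hV_k)$, where $D_k := \Sigma^{\tV}_k - \Sigma^{\hV}_k + \bSigma_k$, I obtain three families of ``swap'' terms: (a) a single outer-layer swap $(\tV_{L+1}-\hV_{L+1})\prod M^{\tV}_r$, (b) $L-\ell+1$ sparse diagonal swaps with middle factor $D_k$, and (c) $L-\ell+1$ middle weight-matrix swaps with middle factor $\Sigma^{\hV}_k(\tV_k-\hV_k)$.

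For the swaps in families (a) and (c), the middle factor has operator norm at most $2\tau$, while the surviving partial products $\prod M^{\tV}$ and $\prod M^{\hV}$ each have operator norm $O(L^{2})$ by Lemma~\ref{l:ntk_product_matrices_in_a_ball} (applied with the sparse perturbation $\bSigma_{r,s}$ absorbed into its hypothesis). Combined with $\lv \hV_{L+1}\rv\le \lv V_{L+1}^{(1)}\rv+\tau=O(\sqrt{p})$ from Part~(b) of Lemma~\ref{l:conc_main_lemma}, each such term contributes at most $O(\sqrt{p}\,L^{4}\tau)$, and summing $O(L)$ of them yields $O(\sqrt{p}\,L^{5}\tau)$, which is dominated by $O(\sqrt{p\log p}\,L^{4}\tau^{1/3})$ since $\tau \le c/L^{12}$.

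The main obstacle is family (b). A term there has the form $\hV_{L+1}\big(\prod_{r=L}^{k+1}\Sigma^{\hV}_r \hV_r\big)\,D_k\,\tV_k\big(\prod_{r=k-1}^{\ell} M^{\tV}_r\big)$. Applied to an arbitrary unit vector $v$, the right-hand block produces a vector $w$ with $\lv w\rv=O(L^{2})$ (via Lemma~\ref{l:ntk_product_matrices_in_a_ball}), and then $u:=D_k w$ is $O(pL^{2}\tau^{2/3})$-sparse (using Part~1 of Lemma~\ref{l:ntk_x_perturbation_in_a_ball} and the sparsity hypothesis on $\bSigma_k$) with $\lv u\rv\le O(L^{2})$. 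To exploit sparsity I compare $\hV_{L+1}\big(\prod \Sigma^{\hV}\hV\big)u$ to $V^{(1)}_{L+1}\big(\prod \Sigma^{V^{(1)}}V^{(1)}\big)u$: the latter is controlled by Part~(g) of Lemma~\ref{l:conc_main_lemma}, giving a bound of order $\lv u\rv\sqrt{pL^{2}\tau^{2/3}\log p}=O(L^{3}\sqrt{p\log p}\,\tau^{1/3})$ (the sparsity level $O(pL^{2}\tau^{2/3})$ is within the admissible range $cp/(L^{2}\log p)$ precisely because $\tau\le c/L^{12}$). The leftover error $\big[\hV_{L+1}\prod\Sigma^{\hV}\hV - V^{(1)}_{L+1}\prod\Sigma^{V^{(1)}}V^{(1)}\big]u$ is bounded crudely by $\lv\cdot\rv\cdot\lv u\rv$, using the triangle inequality and Lemma~\ref{l:ntk_product_matrices_in_a_ball} (together with the operator-norm bound $2\tau$ on each layer-wise difference) to get $O(\sqrt{p}\,L^{3}\tau)\cdot O(L^{2})$; this is again dominated by the target bound since $\tau\le \tau^{1/3}$.

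Summing over all $O(L)$ terms in the telescoping expansion and collecting, the combined bound is $O\big(\sqrt{p\log p}\,L^{4}\tau^{1/3}\big)$, matching the claim. The delicate step is keeping track of the sparsity budget when applying Part~(g) of Lemma~\ref{l:conc_main_lemma} and confirming that the perturbation error from replacing $V^{(1)}$ by $\hV,\tV$ does not erase the $\tau^{1/3}$ improvement that sparsity buys us; the hypothesis $\tau\le c/(L^{12}\log^{3/2}p)$ is what makes this tight.
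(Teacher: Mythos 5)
There is a genuine gap in your treatment of family (b), at the step where you discard the ``leftover error'' $\bigl[\hV_{L+1}\prod_{r}\Sigma^{\hV}_{r}\hV_{r}-V^{(1)}_{L+1}\prod_{r}\Sigma^{V^{(1)}}_{r}V^{(1)}_{r}\bigr]u$ by claiming its operator norm is $O(\sqrt{p}\,L^{3}\tau)$ ``using the operator-norm bound $2\tau$ on each layer-wise difference.'' The layer-wise differences here are not all small in operator norm: the weight differences $\hV_{r}-V^{(1)}_{r}$ are indeed $O(\tau)$, but the diagonal differences $\Sigma^{\hV}_{r}-\Sigma^{V^{(1)}}_{r}$ are controlled only in sparsity (Lemma~\ref{l:ntk_x_perturbation_in_a_ball} bounds their $\ell_{0}$ norm by $O(pL^{2}\tau^{2/3})$), while their entries can be as large as $1$ (an activation pattern can flip from $0$ to $1$ when the pre-activation crosses the kink, and since $h\le\tau/\sqrt{p}$ is tiny the Lipschitz bound $|\Delta u_j|/h$ gives nothing). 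Consequently the bracketed difference is a quantity of exactly the same type as the left-hand side of the lemma itself (it is the paper's $\chi_2$ restricted to a partial depth); the tools available give for it only a bound of order $\sqrt{p\log(p)}\,\mathrm{poly}(L)\,\tau^{1/3}$, not $\sqrt{p}\,L^{3}\tau$. If you substitute the correct order of magnitude, your family-(b) terms become $O(\sqrt{p\log p}\,\mathrm{poly}(L)\tau^{1/3})\cdot\lv u\rv$ with $\lv u\rv=O(L^{2})$, summed over $O(L)$ swap positions, which overshoots the claimed $L^{4}$ by several powers of $L$ — and the argument is in any case not well-founded, since you are bounding the target quantity in terms of another instance of itself.

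The structural reason is your choice to telescope directly between the $\tV$-product and the $\hV$-product: this puts $\hV$-partial products on the left of every swap term, and Part~(g) of Lemma~\ref{l:conc_main_lemma} (the only place where sparsity buys the $\tau^{1/3}$ factor) applies only to products built from the initialization $V^{(1)}$. The paper avoids this by inserting the initialization product as an intermediate point from the outset, bounding $\tV_{L+1}\prod(\Sigma^{\tV}_{r}+\bSigma_{r})\tV_{r}-V^{(1)}_{L+1}\prod\Sigma^{V^{(1)}}_{r}V^{(1)}_{r}$ and its $\hV$-analogue separately; then in every telescoping term the left block is a pure $V^{(1)}$-product, so Parts~(c) and (g) of Lemma~\ref{l:conc_main_lemma} apply directly (the sparse diagonal flip is sandwiched as $\Sigma^{\mathsf{0/1}}_{q}\cSigma_{q}\Sigma^{\mathsf{0/1}}_{q}$), each term costs $O(\sqrt{p\log p}\,L^{3}\tau^{1/3})$ or $O(\sqrt{p}\,L^{3}\tau)$, and summing $O(L)$ of them gives the stated $O(\sqrt{p\log p}\,L^{4}\tau^{1/3})$. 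Your families (a) and (c), and your use of Part~(g) against the sparse vector $u=D_{k}w$, are fine; the proof can be repaired by restructuring the decomposition so that the factor multiplying the sparse object is always an initialization product, rather than trying to convert an $\hV$-product into one after the fact.
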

\begin{proof}
We want to bound the operator norm of 
\begin{align*}
    &\tV_{L+1} \prod_{r = L}^{\ell} \left(\Sigma_{r,s}^{\tV}+\bSigma_{r,s}\right)\tV_r- \hV_{L+1} \prod_{r = L}^{\ell} \Sigma_{r,s}^{\hV}\hV_{r}\\ & = \underbrace{\tV_{L+1} \prod_{r = L}^{\ell} \left(\Sigma_{r,s}^{\tV}+\bSigma_{r,s}\right)\tV_r - V^{(1)}_{L+1} \prod_{r = L}^{\ell} \Sigma_{r,s}^{V^{(1)}}V^{(1)}_r}_{=:\chi_1} + \underbrace{ V^{(1)}_{L+1} \prod_{r = L}^{\ell} \Sigma_{r,s}^{V^{(1)}}V^{(1)}_r - \hV_{L+1} \prod_{r = L}^{\ell} \hSigma_{r,s}^{V}\hV_{r}}_{=:\chi_2}. \label{e:ntk_difference_of_products_decomposition} \numberthis
\end{align*}
We shall instead bound the operator norm of $\chi_1$ and $\chi_2$. Let us proceed to bound the operator norm of $\chi_1$ (the bound on $\chi_2$ will hold using exactly the same logic). Now to ease notation let us fix a sample index $s \in [n]$ and drop it from all subscripts. Also to simplify notation let us refer to $\Sigma_{r,s}^{\tV}$ as $\tSigma_{r}$, $\Sigma_{r,s}^{\hV}$ as $\hSigma_{r}$, $\bSigma_{r,s}$ as $\bSigma_{r}$, and $\Sigma_{r,s}^{V^{(1)}}$ as $\Sigma_{r}$. We shall also refer to $V^{(1)}$ as simply $V$.

By assumption the diagonal matrix $\bSigma_{r}$ is $O(pL^2 \tau^{2/3})$-sparse with entries in $[-1,1]$. Also the matrix $\Sigma_{r}-\tSigma_{r}$ is $O(pL^2 \tau^{2/3})$-sparse by Lemma~\ref{l:ntk_x_perturbation_in_a_ball}. Therefore the matrix $\cSigma_{r}:=\tSigma_{r}+\bSigma_{r}-\Sigma_{r}$ is also $O(pL^2 \tau^{2/3})$-sparse and has entries in $[-2,2]$. Thus,
\begin{align*}
    \chi_1 &= \tV_{L+1} \prod_{r = L}^{\ell} \left(\tSigma_{r}+\bSigma_{r}\right)\tV_r - V_{L+1} \prod_{r = L}^{\ell} \Sigma_{r}V_r \\
    & = \tV_{L+1} \prod_{r = L}^{\ell} \left(\Sigma_{r}+\cSigma_r\right)\tV_r - V_{L+1} \prod_{r = L}^{\ell} \Sigma_{r}V_r\\
    & = \underbrace{(\tV_{L+1}-V_{L+1}) \prod_{r = L}^{\ell} \left(\Sigma_{r}+\cSigma_{r}\right)\tV_r}_{=:\spadesuit}\\ &\qquad + \underbrace{V_{L+1} \left(\prod_{r = L}^{\ell} \left(\Sigma_{r}+\cSigma_{r}\right)\tV_r -\prod_{r = L}^{\ell} \Sigma_{r}V_r \right)}_{=:\clubsuit}. \label{e:ntk_decomposition_into_clubsuit_spadesuit} \numberthis
\end{align*}
The operator norm of $\spadesuit$ is easy to bound by invoking Lemma~\ref{l:ntk_product_matrices_in_a_ball}
\begin{align*}
    \lv \spadesuit\rv_{op} &\le \lv \tV_{L+1}-V_{L+1} \rv_{op} \left\lv  \prod_{r = L}^{\ell} \left(\Sigma_{r}+\cSigma_{r}\right)\tV_r\right\rv_{op} \le O(\tau L^2). \numberthis \label{e:chi_1_bound_spadesuit}
\end{align*}
To bound the operator norm of $\clubsuit$ we will decompose the difference of the products of matrices terms into a sum. Each term in this sum corresponds to either a flip from $V_r$ to $\tilde{V}_r$ or from $\Sigma_{r}$ to $\Sigma_{r}+\cSigma_r$. That is,
\begin{align*}
    \clubsuit& = - \left(V_{L+1}\prod_{r = L}^{\ell} \Sigma_{r}V_r-V_{L+1}\prod_{r = L}^{\ell} \left(\Sigma_{r}+\cSigma_{r}\right)\tV_r \right)\\
   & =  -\left[\sum_{q=L}^{\ell} \underbrace{V_{L+1}\overbrace{\left(\prod_{r=L}^{q+1} \left(\Sigma_{r}V_r \right)\right)}^{=:\omega_{1,q}}\left(\cSigma_{q}\right) \left(\tV_{q}\overbrace{\prod_{r=q-1}^{\ell} \left(\Sigma_{r}+\cSigma_{r}\right)\tV_r}^{=:\omega_{2,q}}\right) }_{=:\vardiamond_{q}} \right]\\
   & \qquad -  \left[\sum_{q=L}^{\ell} \underbrace{V_{L+1}\overbrace{\left(\prod_{r=L}^{q+1} \left(\Sigma_{r}V_r \right)\right)}^{=:\omega_{3,q}}\Sigma_{q}\left(V_{q}-\tV_{q}\right) \overbrace{\prod_{r=q-1}^{\ell} \left(\Sigma_{r}+\cSigma_{r}\right)\tV_r}^{=:\omega_{4,q}} }_{=:\varheart_q} \right] \numberthis \label{e:ntk_clubsuit_decomposition}
\end{align*}
 where in the previous equality above, 
 the indices in the products ``count down'',
 so that cases in which $q=L$ include
 ``empty products'', and we adopt the
 convention that, in such cases,
 \begin{align*}
     \omega_{1,q} = \omega_{3,q} = I, 
 \end{align*}
 and when $q = \ell$
 \begin{align*}
     \omega_{2,q} = \omega_{4,q} = I.
 \end{align*}
 
 We begin by bounding the operator norm of $\vardiamond_q$ (for a $q$ that is not $\ell$ or $L$, the exact same bound follows in these boundary cases):
\begin{align*}
    \lv \vardiamond_q \rv_{op}&= \left\lv V_{L+1}\prod_{r=L}^{q-1} \left(\Sigma_{r}V_r \right)\left(\cSigma_{q}\right) \left(\tV_{q}\prod_{r=q+1}^{\ell} \left(\Sigma_{r}+\cSigma_{r,s}\right)\tV_r\right) \right\rv_{op} \\
    &\overset{(i)}{=} \left\lv V_{L+1}\prod_{r=L}^{q-1} \left(\Sigma_{r}V_r \right)\Sigma^{\mathsf{0/1}}_{q}\cSigma_{q} \Sigma^{\mathsf{0/1}}_{q} \tV_{\ell}\prod_{r=q+1}^{\ell} \left(\Sigma_{r}+\cSigma_{r}\right)\tV_r \right\rv_{op} \\
    &\le \left\lv V_{L+1}\prod_{r=L}^{q-1} \left(\Sigma_{r}V_r \right)\Sigma^{\mathsf{0/1}}_{q}\right\rv_{op} \lv\cSigma_{q}\rv_{op} \left\lv\Sigma^{\mathsf{0/1}}_{q} \tV_{\ell}\prod_{r=q+1}^{\ell} \left(\Sigma_{r}+\cSigma_{r}\right)\tV_r \right\rv_{op} \\
    &\overset{(ii)}{\le} 2\left\lv V_{L+1}\prod_{r=L}^{q-1} \left(\Sigma_{r}V_r \right)\Sigma^{\mathsf{0/1}}_{q}\right\rv_{op}  \left\lv\Sigma^{\mathsf{0/1}}_{q} \tV_{\ell}\prod_{r=q+1}^{\ell} \left(\Sigma_{r}+\cSigma_{r}\right)\tV_r \right\rv_{op} \\
    &\overset{(iii)}{\le} O\left(\sqrt{pL^2 \tau^{2/3}\log(p)}\right) \left\lv\Sigma^{\mathsf{0/1}}_{q}\right\rv_{op} \left\lv\tV_{\ell}\prod_{r=q+1}^{\ell} \left(\Sigma_{r}+\cSigma_{r}\right)\tV_r \right\rv_{op} \\
    &\overset{(iv)}{\le} O\left(\sqrt{pL^2 \tau^{2/3}\log(p)}\right) \times O(L^2) = O\left(\sqrt{p\log(p)}L^3\tau^{1/3} \right) \label{e:ntk_bound_on_diamond} \numberthis
\end{align*}
where in $(i)$ we define $\Sigma^{\mathsf{0/1}}_{q}$ to be a diagonal matrix with $(\Sigma^{\mathsf{0/1}}_{q})_{jj}:= \mathbb{I}\left[(\tSigma_{q,s})_{jj}\neq 0\right]$. Note that since $\cSigma_{q}$ is $O(pL^2 \tau^{2/3})$ sparse, therefore $\Sigma^{\mathsf{0/1}}_{q}$ is also $O(pL^2 \tau^{2/3})$ sparse. Inequality~$(ii)$ follows since the entries of $\cSigma_{q,s}$ lie between $[-2,2]$, $(iii)$ follows by applying Part~(g) of Lemma~\ref{l:conc_main_lemma}. Finally, $(iv)$ by applying Lemma~\ref{l:ntk_product_matrices_in_a_ball} since the matrix $\cSigma_{r}$ is $O(pL^2\tau^{2/3})$-sparse and has entries in $[-2,2]$. 

To control the operator norm of $\varheart_q$ (again for a $q \neq \ell$ or $L$, the exact same bound follows in these boundary cases):
\begin{align*}
\lv \varheart_q \rv_{op}& = \left\lv V_{L+1}\prod_{r=L}^{q-1} \left(\Sigma_{r}V_r \right)\Sigma_{q}\left(\tV_{q}-V_{q}\right) \left(\prod_{r=q+1}^{\ell} \left(\Sigma_{r}+\cSigma_{r}\right)\tV_r\right) \right\rv_{op} \\
&\le \left\lv V_{L+1}\prod_{r=L}^{q-1} \left(\Sigma_{r}V_r \right)\right\rv_{op} \lv \Sigma_{q}\rv_{op}\left\lv \tV_{q}-V_{q}\right\rv_{op} \left\lv\prod_{r=q+1}^{\ell} \left(\Sigma_{r}+\cSigma_{r}\right)\tV_r \right\rv_{op} \\
&\le 2\tau \left\lv V_{L+1}\prod_{r=L}^{q-1} \left(\Sigma_{r}V_r \right)\right\rv_{op}  \left\lv \prod_{r=q+1}^{\ell} \left(\Sigma_{r}+\cSigma_{r}\right)\tV_r \right\rv_{op} \\
&\overset{(i)}{\le} O(\tau L^2) \left\lv V_{L+1}\prod_{r=L}^{q-1} \left(\Sigma_{r}V_r \right)\right\rv_{op}   \overset{(ii)}{\le} O(\sqrt{p}\tau L^3) 
\end{align*}
where $(i)$ follows by applying Lemma~\ref{l:ntk_product_matrices_in_a_ball} and $(ii)$ follows by Part~(c) of Lemma~\ref{l:conc_main_lemma}.

With these bounds on $\vardiamond_q$ and $\varheart_q$ along with the decomposition in \eqref{e:ntk_clubsuit_decomposition} we find that
\begin{align*}
    \lv \clubsuit \rv_{op} &\le L \times\left(O\left(\sqrt{p\log(p)}L^3\tau^{1/3}\right)+ O(\sqrt{p}\tau L^3) \right) \le O\left(\sqrt{p\log(p)}L^4\tau^{1/3}\right).
\end{align*}
Thus by using this bound on $\lv \clubsuit\rv_{op}$ along with \eqref{e:ntk_decomposition_into_clubsuit_spadesuit} and \eqref{e:chi_1_bound_spadesuit} we get that
\begin{align*}
    \lv \chi_1\rv_{op} \le O(\tau L^2)+O\left(\sqrt{p\log(p)}L^4\tau^{1/3}\right) = O\left(\sqrt{p\log(p)}L^4\tau^{1/3}\right).
\end{align*}
As mentioned above we can also bound $\lv \chi_2 \rv_{op}$ using the exact same logic to get that 
\begin{align*}
    \lv \chi_2 \rv_{op}\le O\left(\sqrt{p\log(p)}L^4\tau^{1/3}\right).
\end{align*}
Thus, the decomposition in \eqref{e:ntk_difference_of_products_decomposition} along with an application of the triangle inequality proves the claim of the lemma.
\end{proof}

\subsection{The Proof} \label{ss:proof_of_lemma_ntk_approx}
With these various lemmas in place we are now finally ready to prove Lemma~\ref{l:ntk_app_error}

\ntkapprox*

\begin{proof}Note that since $\tau=\Omega\left(\frac{\log^2\left(\frac{nL}{\delta} \right)}{p^{\frac{3}{2}}L^3}\right)$ and $\tau \le \frac{c}{L^{12} \log^{\frac{3}{2}}(p)}$, $h \le \frac{\tau}{\sqrt{p}}\le \frac{1}{50 \sqrt{p}L}$, and because $p \ge \poly\left(L,\log\left(\frac{n}{\delta}\right)\right)$ for a large enough polynomial all the conditions required to invoke Lemma~\ref{l:conc_main_lemma} are satisfied. Let us assume that the event in Lemma~\ref{l:conc_main_lemma} which occurs with probability at least $1-\delta$ holds in the rest of this proof.

\paragraph{Proof of Part~(a):} Recall the definition of the approximation error
\begin{align*}
       \epsilon_{\mathsf{app}}(V^{(1)},\tau) := \sup_{s\in [n]}\sup_{\hV,\tV \in \cB(V^{(1)},\tau)}\left\lvert f_{\hV}(x_s)-f_{\tV}(x_s)-\nabla f_{\tV}(x_s)\cdot\left(\hV-\tV \right)\right\rvert.
\end{align*} 
Fix a $\hV,\tV \in  \cB(V^{(1)},\tau)$ and a sample $s \in [n]$. To ease notation denote $\Sigma_{\ell,s}^{\hV}$ by $\hSigma_{\ell}$, $\Sigma_{\ell,s}^{\tV}$ by $\tSigma_{\ell}$, $x_{\ell,s}^{\hV}$ by $\hx_{\ell}$, $x_{\ell,s}^{\tV}$ by $\tx_{\ell}$ and $x_{\ell,s}^{V^{(1)}}$ by $x_{\ell,s}$. We know that $f_{\tV}(x_s) = \tV_{L+1}\tx_{L}$ and $f_{\hV}(x_s) = \hV_{L+1}\hx_{L}$. Also since $\nabla_{\hV_{L+1}}f_{\hV}(x_s) = \hx_{L}$ we have 
\begin{align*}
   & f_{\hV}(x_s)-f_{\tV}(x_s)-\nabla f_{\tV}(x_s)\cdot\left(\hV-\tV \right)\\ 
   & \qquad = \hV_{L+1}\hx_{L}- \tV_{L+1}\tx_{L}
   -
   (\hV_{L+1}-\tV_{L+1})\cdot \tx_{L}  
   -
   \sum_{\ell=1}^L  \nabla_{\tV_{\ell}}f_{\tV}(x_s)\cdot (\hV_{\ell}-\tV_{\ell}) \\
   &\qquad  = \hV_{L+1}(\hx_{L}-\tx_{L})
   -
   \sum_{\ell=1}^L  \nabla_{\tV_{\ell}}f_{\tV}(x_s)\cdot (\hV_{\ell}-\tV_{\ell}) .  \numberthis \label{e:function_difference_in_terms_of_x_s}
\end{align*}

By equation~\eqref{e:ntk_x_difference_decomposition} from the proof of Lemma~\ref{l:ntk_x_perturbation_in_a_ball} above
we can decompose the difference as follows,
\begin{align*}
\hx_{L} - \tx_{L}  & = \sum_{\ell=1}^{L-1} \left[\prod_{j=L}^{\ell+1} \left( \hSigma_{j}+\cSigma_{j}\right)\hV_j\right](\hSigma_{\ell}+\cSigma_{\ell})(\hV_\ell-\tV_\ell) \tx_{\ell-1} \\ 
& \qquad +\left( \hSigma_{L}+\cSigma_{L}\right)\left(  \hV_{L}-\tV_{L}\right)\tx_{\ell-1}   +\sum_{\ell=1}^{L-1} \left[\prod_{j=L}^{\ell+1}(\hSigma_{j}+\cSigma_{j})\hV_j\right]\chi_{\ell}+\chi_{L},    \label{e:difference_of_x_s_formula} \numberthis 
\end{align*}
where the diagonal matrix $\cSigma_{j,s}$ is $O(pL^4\tau^{2/3})$-sparse and has entries in $[-1,1]$, and the $p$-dimensional vectors $\chi_{\ell}$ have infinity norm at most $h$ (see inequality~\eqref{e:ntk_chi_huberized_relu_infinity_norm_bound}).
Now when $\ell \in [L]$, the formula for the gradient given in \eqref{e:gradient_ell_inner_layers}, using this formula and because given two 
matrices
$A$ and $B$, $A\cdot B=\Tr(A^{\top}B)$ we get
\begin{align*}
    \nabla_{\tV_{\ell}} f_{\tV}(x_s) \cdot \left(\hV_{\ell}-\tV_{\ell}\right) &= \Tr\left[\nabla_{\tV_{\ell}} f_{\tV}(x_s)^{\top}\left(\hV_{\ell}-\tV_{\ell}\right)\right]\\
    &= \Tr\left[  \left(\left( \tSigma_{\ell} \prod_{j = \ell+1}^L 
              \tV_{j}^{\top} \tSigma_{j} 
           \right)
           \tV_{L+1}^{\top}
        \tx_{\ell-1}^{\top}\right)^{\top} \left(\hV_{\ell}-\tV_{\ell}\right)\right] \\
        & = \Tr\left[ \tx_{\ell-1}\tV_{L+1} \left(  \prod_{j =L}^{ \ell+1}
              \tSigma_{j} \tV_{j}
           \right)\tSigma_{\ell}
         \left(\hV_{\ell}-\tV_{\ell}\right)\right]\\
         & =  \tV_{L+1} \left(  \prod_{j =L}^{ \ell+1}
              \tSigma_{j,s} \tV_{j}
           \right)\tSigma_{\ell,s}
         \left(\hV_{\ell}-\tV_{\ell}\right)\tx_{\ell-1}. \numberthis \label{e:dot_product_gradient_times_difference}
\end{align*}
Using \eqref{e:function_difference_in_terms_of_x_s}-\eqref{e:dot_product_gradient_times_difference},
and noting that, here $\prod_{j=L}^{\ell+1} A_j$ 
denotes
$A_L A_{L-1}\ldots A_{\ell+1}$, i.e. 
the indices ``count down'',
we find
%
\begin{align*}
    &f_{\hV}(x_s)-f_{\tV}(x_s)-\nabla f_{\tV}(x_s)\cdot\left(\hV-\tV \right) \\
    &\overset{(i)}{=} \sum_{\ell=1}^{L} \left(\hV_{L+1}\left[\prod_{j=L}^{\ell+1} \left( \hSigma_{j}+\cSigma_{j}\right)\tV_j\right](\hSigma_{\ell}+\cSigma_{\ell})(\hV_\ell-\tV_\ell) \tx_{\ell-1} \right.\\&\qquad\qquad\qquad  \left.-\tV_{L+1} \left(  \prod_{j =L}^{ \ell+1}
              \tSigma_{j} \tV_{j}
           \right)\tSigma_{\ell}
         \left(\hV_{\ell}-\tV_{\ell}\right)\tx_{\ell-1}\right)+\sum_{\ell=1}^{L}
         \hV_{L+1}
         \left[\prod_{j=L}^{\ell+1}(\hSigma_{j}+\cSigma_{j})\hV_j\right]\chi_{\ell}  \\
         &= \sum_{\ell=1}^{L} \left(\hV_{L+1}\left[\prod_{j=L}^{\ell+1} \left( \hSigma_{j}+\cSigma_{j}\right)\hV_j\right](\hSigma_{\ell}+\cSigma_{\ell}) -\tV_{L+1} \left(  \prod_{j =L}^{ \ell+1}
              \tSigma_{j} \tV_{j}
           \right)\tSigma_{\ell}
         \right)(\hV_\ell-\tV_\ell) \tx_{\ell-1}\\
         & \qquad\qquad\qquad +\sum_{\ell=1}^{L}
          \hV_{L+1}
          \left[\prod_{j=L}^{\ell+1}(\hSigma_{j}+\cSigma_{j})\hV_j\right]\chi_{\ell}  \\
         &= \sum_{\ell=1}^{L} \underbrace{\left(\hV_{L+1}\left[\prod_{j=L}^{\ell+1} \left( \hSigma_{j}+\cSigma_{j}\right)\hV_j\right] -\tV_{L+1} \left(  \prod_{j =L}^{ \ell+1}
              \tSigma_{j} \tV_{j}
           \right)
         \right)\tSigma_{\ell}(\hV_\ell-\tV_\ell) \tx_{\ell-1}}_{=:\spadesuit_\ell}\\
         & \qquad\qquad\qquad + \sum_{\ell=1}^{L} \underbrace{\hV_{L+1}\left[\prod_{j=L}^{\ell+1} \left( \hSigma_{j}+\cSigma_{j}\right)\hV_j\right] 
         \left(\hSigma_{\ell}+\cSigma_{\ell}-\tSigma_{\ell}\right)(\hV_\ell-\tV_\ell) \tx_{\ell-1}}_{=: \clubsuit_\ell}\\
         & \qquad\qquad\qquad +\sum_{\ell=1}^{L} \underbrace{\hV_{L+1} \left[\prod_{j=L}^{\ell+1}(\hSigma_{j}+\cSigma_{j})\hV_j\right]\chi_{\ell}}_{=:\varheart_\ell} \numberthis \label{e:ntk_approx_error_decomp_into_suits}
\end{align*}
where in $(i)$,
we adopt the convention that when $\ell=L$,
the ``empty products'' 
$\prod_{j=L}^{\ell+1}\left( \hSigma_{j}+\cSigma_{j}\right)\hV_j$
and $\prod_{j =L}^{ \ell+1}
              \tSigma_{j} \tV_{j}$
are interpreted as $I$. Let us bound the norm of $\spadesuit_{\ell}$ in the case where $\ell \neq L$ (the bound in the boundary case when $\ell = L$ follows by exactly the same logic):
\begin{align*}
    \lv \spadesuit_{\ell}\rv & = \left\lv \left(\hV_{L+1}\left[\prod_{j=L}^{\ell+1} \left( \hSigma_{j}+\cSigma_{j}\right)\hV_j\right] -\tV_{L+1} \left(  \prod_{j =L}^{ \ell+1}
              \tSigma_{j} \tV_{j}
           \right)
        \right) \tSigma_{\ell}(\hV_\ell-\tV_\ell) \tx_{\ell-1}\right\rv \\
         &\le \left\lv \hV_{L+1}\left[\prod_{j=L}^{\ell+1} \left( \hSigma_{j}+\cSigma_{j}\right)\hV_j\right] -\tV_{L+1} \left(  \prod_{j =L}^{ \ell+1}
              \tSigma_{j} \tV_{j}
           \right)
         \right\rv_{op}\left\rv\tSigma_{\ell}(\hV_\ell-\tV_\ell) \tx_{\ell-1}\right\rv\\
         &\overset{(i)}{\le} O(\sqrt{p \log(p)}L^{4}\tau^{1/3}) \left\rv\tSigma_{\ell}\right\rv_{op}\lv \hV_\ell-\tV_\ell\rv \lv \tx_{\ell-1}\rv \\
         &\overset{(ii)}{\le} O(\sqrt{p \log(p)}L^{4}\tau^{4/3})  \left(\lv \tx_{\ell-1}-x_{\ell-1}\rv +\lv x_{\ell-1}\rv\right)\\
         &\overset{(iii)}{\le} O(\sqrt{p \log(p)}L^{4}\tau^{4/3})  \left(2+O(L^3\tau )\right)\le O(\sqrt{p \log(p)}L^{4}\tau^{4/3}) \label{e:spadesuit_bound_ntk_approx_error} \numberthis
\end{align*}
where $(i)$ follows by invoking Lemma~\ref{l:ntk_diff_between_prod_of_matrices}, $(ii)$ is because the entries of $\tSigma_{\ell}$ lie between $0$ and $1$ and because $\lv \hV_{\ell}-\tV_{\ell}\rv \le 2\tau$ since both $\hV$ and $\tV$ are in $\cB(V^{(1)},\tau)$. Inequality~$(iii)$ is because $\lv \tx_{\ell-1}-x_{\ell-1} \rv\le O(L^3 \tau)$ by Lemma~\ref{l:ntk_x_perturbation_in_a_ball} and $\lv x_{\ell-1}\rv\le 2$ by Part~(a) of Lemma~\ref{l:conc_main_lemma}.

Moving on to $\clubsuit_{\ell}$ (again consider the case where $\ell \neq L$, the bound in the boundary case when $\ell = L$ follows by exactly the same logic),
\begin{align*}
    \lv \clubsuit_{\ell}\rv & = \left\lv \hV_{L+1}\left[\prod_{j=L}^{\ell+1} \left( \hSigma_{j}+\cSigma_{j}\right)\hV_j\right] 
         \left(\hSigma_{\ell}+\cSigma_{\ell}-\tSigma_{\ell}\right)(\hV_\ell-\tV_\ell) \tx_{\ell-1}\right\rv \\
         & \le \left\lv \hV_{L+1}\left[\prod_{j=L}^{\ell+1} \left( \hSigma_{j}+\cSigma_{j}\right)\hV_j\right] \right\rv_{op}
    \left\lv     \hSigma_{\ell}+\cSigma_{\ell}-\tSigma_{\ell} \right\rv_{op}\left\lv \hV_\ell-\tV_\ell\right\rv \left\lv \tx_{\ell-1}\right\rv \\
    &\overset{(i)}{\le} O(\sqrt{p} L^2) \times \tau \times \left(2+O(L^3\tau )\right) = O(\sqrt{p}\tau  L^2)\label{e:clubsuit_bound_ntk_approx_error} \numberthis
\end{align*}
where $(i)$ follows
by invoking Lemma~\ref{l:ntk_product_matrices_in_a_ball}, since the diagonal matrix $\hSigma_{\ell}+\cSigma_{\ell}-\tSigma_{\ell}$ have entries between $-3$ and $3$ and by bounding $\left\lv \tx_{\ell-1}\right\rv$ as we did above. Finally, we bound the norm of $\varheart_{\ell}$ (again in the case where $\ell \neq L$, the bound when $\ell = L$ follows by exactly the same logic)
\begin{align*}
    \lv \varheart_{\ell}\rv & = \left\lv
    \hV_{L+1} \prod_{j=L}^{\ell+1}(\hSigma_{j}+\cSigma_{j})\hV_j\chi_{\ell}\right\rv \\
    &\le  \left\lv
    \hV_{L+1} \prod_{j=L}^{\ell+1}(\hSigma_{j}+\cSigma_{j})\hV_j\right\rv \lv\chi_{\ell}\rv \\
    &\overset{(i)}{\le} O(\sqrt{p} L^2) \lv\chi_{\ell}\rv \le O(\sqrt{p} L^2) \sqrt{p}\lv\chi_{\ell}\rv_{\infty} \overset{(ii)}{\le} O(\sqrt{p} L^2)\sqrt{p}h \overset{(iii)}{\le} O( \sqrt{p}\tau  L^2)  \numberthis \label{e:heartsuit_bound_ntk_approx_error}
\end{align*}
where $(i)$ is by invoking Lemma~\ref{l:ntk_product_matrices_in_a_ball}, $(ii)$ is due to a bound on the $\lv \chi_{\ell}\rv_{\infty}\le h$ derived in inequality~\ref{e:ntk_chi_huberized_relu_infinity_norm_bound} and $(iii)$ is by the assumption that $h <\frac{\tau}{\sqrt{p}}$. The bounds on the norms of $\spadesuit_{\ell}, \clubsuit_{\ell}$ and $\varheart_{\ell}$ along with the decomposition in \eqref{e:ntk_approx_error_decomp_into_suits} reveals that for any $s \in [n]$, $\hV,\tV \in \cB(V^{(1)},\tau)$:
\begin{align*}
    \mleft\lvert f_{\hV}(x_s)-f_{\tV}(x_s)-\nabla f_{\tV}(x_s)\cdot\left(\hV-\tV \right) \mright\rvert &\le L\left(O(\sqrt{p \log(p)}L^{4}\tau^{4/3})+O(\sqrt{p}\tau  L^2) \right)\\& \le O(\sqrt{p \log(p)}L^{5}\tau^{4/3}).
\end{align*}
This completes the proof of the first part.

\paragraph{Proof of Part~(b):}Recall the definition of $\Gamma(V^{(1)},\tau)$
\begin{align*}
    \Gamma(V^{(1)},\tau) = \sup_{s\in [n]}\sup_{\ell \in [L+1]}\sup_{V \in \cB(V^{(1)},\tau)}\lv \nabla_{V_{\ell}}f_{V}(x_s)\rv.
\end{align*}
Fix a sample $s\in [n]$. First let us bound the Frobenius norm of the gradient when $\ell \in [L]$. By the formula in \eqref{e:gradient_ell_inner_layers} we have
\begin{align*}
    \lv \nabla_{V_{\ell}} f_V(x_s)\rv & = \left\lv \left( \Sigma^V_{\ell,s} \prod_{j = \ell+1}^L \left(  V_{j}^{\top} \Sigma^V_{j,s} \right)\right)V_{L+1}^{\top} x_{\ell-1,s}^{V\top} \right\rv \\
    & \le \left\lv \left( \Sigma^V_{\ell,s} \prod_{j = \ell+1}^L \left(  V_{j}^{\top} \Sigma^V_{j,s} \right)\right)V_{L+1}^{\top}\right\rv_{op} \lv x_{\ell-1,s}^{V\top}\rv  \\
    &\le \left\lv \Sigma^V_{\ell,s} \right\rv_{op} \left\lv \prod_{j = \ell+1}^L V_{j}^{\top} \Sigma^V_{j,s}  \right\rv_{op}\lv V_{L+1}\rv \lv x_{\ell-1,s}\rv\\
    &\le  \left\lv \prod_{j = \ell+1}^L   V_{j}^{\top} \Sigma^V_{j,s}  \right\rv_{op}\lv V_{L+1}\rv \lv x_{\ell-1,s}\rv \qquad \mbox{(since $\lv \Sigma_{\ell,s}^{V}\rv_{op}\le 1$)}\\
    &\le  \left\lv \prod_{j = \ell+1}^L   V_{j}^{\top} \Sigma^V_{j,s}  \right\rv_{op}\left(\lv V_{L+1}^{(1)}\rv+\lv V_{L+1}^{(1)}-V_{L+1}\rv\right) \left(\lv x_{\ell-1,s}^{V^{(1)}}\rv+\lv x_{\ell-1,s}^V-x_{\ell-1,s}^{V^{(1)}}\rv\right)\\
    &\overset{(i)}\le  \left\lv \prod_{j = \ell+1}^L   V_{j}^{\top} \Sigma^V_{j,s}  \right\rv_{op}\left(O(\sqrt{p})+\tau\right) \left(2+O(L^3 \tau)\right)\\
    &\overset{(ii)}{\le} O(\sqrt{p})  \left\lv \prod_{j = \ell+1}^L   V_{j}^{\top} \Sigma^V_{j,s}  \right\rv_{op} \overset{(iii)}{\le} O(\sqrt{p}L^2)
     \label{e:operator_norm_gradient_bound} \numberthis
\end{align*}
where $(i)$ follows since $\lv V_{L+1}^{(1)}\rv \le O(\sqrt{p})$ by Part~(b) of Lemma~\ref{l:conc_main_lemma}, $\lv V_{L+1}^{(1)}-V_{L+1}\rv\le \tau$, $\lv x_{\ell-1,s}^{V^{(1)}}\rv \le 2$ by Part~(a) of Lemma~\ref{l:conc_main_lemma} and $\lv x_{\ell-1,s}^V-x_{\ell-1,s}^{V^{(1)}}\rv\le O(L^3 \tau)$ by Lemma~\ref{l:ntk_x_perturbation_in_a_ball}. Next $(ii)$ follows since $\tau =O(1/L^3)$. Finally, $(iii)$ follows since the matrix $\Sigma_{j,s}^V - \Sigma_{j,s}^{V^{(1)}}$ is $O(pL^2 \tau^{2/3})$ sparse by Lemma~\ref{l:ntk_x_perturbation_in_a_ball}, therefore we can apply Lemma~\ref{l:ntk_product_matrices_in_a_ball} to bound the operator norm of the product of the matrices (since $\tau = O\left(\frac{1}{L^{12}\log^{\frac{3}{2}}(p)}\right)$, that Lemma applies that this level of sparsity).

If $\ell = L+1$, then the gradient at $V$ is $x_{L,s}^{V}$, therefore
\begin{align*}
\sup_{s\in [n]}\sup_{V \in \cB(V^{(1)},\tau)}\lv \nabla_{V_{L+1}}f_{V}(x_s)\rv  &= \sup_{s\in [n]}\sup_{V \in \cB(V^{(1)},\tau)}\lv x_{L,s}^{V} \rv \\
& \le \sup_{s\in [n]}\sup_{V \in \cB(V^{(1)},\tau)}\left(\lv x_{L,s}^{V^{(1)}} \rv +\lv x_{L,s}^{V^{(1)}}-x_{L,s}^{V} \rv\right)\\
& \le \sup_{s\in [n]}\sup_{V \in \cB(V^{(1)},\tau)}\left(2 +O(L^3\tau)\right) \le O(1),
\end{align*}
where above we used the fact that $\lv x_{L,s}^{V^{(1)}} \rv \le 2$ by Part~(a) of Lemma~\ref{l:conc_main_lemma} and $\lv x_{L,s}^{V^{(1)}}-x_{L,s}^{V} \rv \le O(L^3 \tau)$ by Lemma~\ref{l:ntk_x_perturbation_in_a_ball} along with the fact that $\tau \le O(1/L^3)$. Combining the conclusions in the two cases when $\ell \in [L]$ and $\ell \in [L+1]$ establishes our second claim.
\end{proof}

Now that we have proved Lemma~\ref{l:ntk_app_error}, the reader can next jump to Appendix~\ref{ss:neural_tangent_tech_tools}.

\section{Probabilistic Tools} \label{s:probability_tools}

For an excellent reference of sub-Gaussian and sub-exponential concentration inequalities we refer the reader to \citet{vershynin2018high}. We begin by defining sub-Gaussian and sub-exponential random variables.

\begin{definition} \label{def:subgaussian}A random variable $\theta$ is sub-Gaussian if 
\begin{align*}
\lv \theta \rv_{\psi_2}:= \inf\left\{t>0: \mathbb{E}[\exp(\theta^2/t^2)]< 2\right\}
\end{align*}
is bounded. Further, $\lv \theta\rv_{\psi_2}$ is defined to be its sub-Gaussian norm.
\end{definition}

\begin{definition} \label{def:subexp}A random variable $\theta$ is 
said to be
sub-exponential if 
\begin{align*}
\lv \theta \rv_{\psi_1}:= \inf\left\{t>0: \mathbb{E}[\exp(\lvert \theta\rvert/t)< 2]\right\}
\end{align*}
is bounded. Further, $\lv \theta\rv_{\psi_1}$ is defined to be its sub-exponential norm.
\end{definition}
Next we state a few well-known facts about sub-Gaussian random variables.
\begin{lemma}(Lemma~2.7.6, Vershynin 2018)
\label{l:sub_gaussian_squared}If a random variable $\theta$ is sub-Gaussian then $\theta^2$ is sub-exponential with $\lv \theta^2\rv_{\psi_1} = \lv \theta \rv_{\psi_2}^2$.
\end{lemma}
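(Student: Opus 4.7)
The plan is to prove the identity $\lv \theta^2 \rv_{\psi_1} = \lv \theta \rv_{\psi_2}^2$ by essentially a change of variables in the defining infimum, since both norms are built from the same moment expression.

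First I would unpack both definitions. By Definition~\ref{def:subexp},
\[
\lv \theta^2 \rv_{\psi_1} = \inf\{t > 0 : \E[\exp(|\theta^2|/t)] < 2\},
\]
and since $\theta^2 \ge 0$ we may drop the absolute value and write the condition as $\E[\exp(\theta^2/t)] < 2$. By Definition~\ref{def:subgaussian},
\[
\lv \theta \rv_{\psi_2} = \inf\{s > 0 : \E[\exp(\theta^2/s^2)] < 2\}.
\]
The key observation is that, via the substitution $t = s^2$, the defining set for $\lv\theta^2\rv_{\psi_1}$ becomes exactly the set of squares of elements of the defining set for $\lv\theta\rv_{\psi_2}$.

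Second, I would conclude by noting that the map $s \mapsto s^2$ is a monotone increasing bijection on $(0,\infty)$, so for any nonempty $A \subseteq (0,\infty)$ we have $\inf\{s^2 : s \in A\} = (\inf A)^2$. Applying this with $A = \{s > 0 : \E[\exp(\theta^2/s^2)] < 2\}$ gives
\[
\lv \theta^2 \rv_{\psi_1} \;=\; \inf\{s^2 : s \in A\} \;=\; (\inf A)^2 \;=\; \lv \theta \rv_{\psi_2}^2,
\]
which is the desired identity. The sub-Gaussianity assumption ensures $A$ is nonempty so that both infima are finite and the manipulation is valid.

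There is essentially no obstacle here; the lemma is purely definitional. The only thing to be careful about is verifying that the substitution $t = s^2$ indeed preserves the strict inequality defining the two infima (which it does, since the inequality involves $t$ or $s^2$ only through $\theta^2/t = \theta^2/s^2$), and that taking the square commutes with the infimum over a subset of $(0,\infty)$ by monotonicity.
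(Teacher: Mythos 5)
Your proof is correct: with the paper's Definitions~\ref{def:subgaussian} and~\ref{def:subexp}, the identity is purely a change of variables $t=s^2$ in the defining infimum, and the monotonicity argument $\inf\{s^2: s\in A\}=(\inf A)^2$ is valid. The paper itself gives no proof—it simply cites Lemma~2.7.6 of Vershynin (2018)—and your argument is exactly the standard definitional verification given there, so there is nothing to add.
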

\begin{lemma}(Lemma~5.2.2, Vershynin 2018)
\label{l:sub_gaussian_lipschitz}If a random variable $\theta\sim\cN(0,1)$ and $g$ is a $1$-Lipschitz function then $\lv g(\theta)-\E[g(\theta)]\rv_{\psi_{2}}\le c$, for some absolute positive constant $c$.
\end{lemma}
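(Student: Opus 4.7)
\medskip
\noindent\textbf{Proof plan for Lemma~\ref{l:sub_gaussian_lipschitz}.}
The plan is to give a direct, one-dimensional argument that avoids invoking the full Gaussian isoperimetric/log-Sobolev machinery. The key observation is that the $\psi_2$-norm behaves monotonically under pointwise domination: if $|X|\le |Y|$ pointwise, then $\E[\exp(X^2/t^2)]\le \E[\exp(Y^2/t^2)]$, and hence $\lv X\rv_{\psi_2}\le \lv Y\rv_{\psi_2}$. This converts the Lipschitz control on $g$ into sub-Gaussian control on $g(\theta)$.

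First I would write $g(\theta)-\E[g(\theta)] = \bigl(g(\theta)-g(0)\bigr) - \bigl(\E[g(\theta)]-g(0)\bigr)$ and bound each piece separately. Since $g$ is $1$-Lipschitz, $|g(\theta)-g(0)|\le |\theta|$, so the monotonicity observation above gives
\[
\lv g(\theta)-g(0)\rv_{\psi_2} \;\le\; \lv \theta\rv_{\psi_2} \;=\; c_1,
\]
an absolute constant for the standard Gaussian (this last equality is immediate from $\E[\exp(\theta^2/t^2)] = (1-2/t^2)^{-1/2}$ for $t>\sqrt{2}$). For the deterministic centering term, Jensen's inequality gives $|\E[g(\theta)]-g(0)|\le \E[|\theta|]=\sqrt{2/\pi}$, and the $\psi_2$-norm of any constant $a$ equals $|a|/\sqrt{\log 2}$. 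Combining with the triangle inequality for $\lv\cdot\rv_{\psi_2}$ (which is a norm) yields
\[
\lv g(\theta)-\E[g(\theta)]\rv_{\psi_2} \;\le\; c_1 + \frac{\sqrt{2/\pi}}{\sqrt{\log 2}} \;=:\; c,
\]
an absolute constant independent of $g$.

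The only mild subtlety is justifying the triangle inequality for $\lv\cdot\rv_{\psi_2}$ and the monotonicity property; both are standard (see, e.g., Proposition 2.5.2 of Vershynin 2018) and can be invoked directly. I do not anticipate a genuine obstacle here: the lemma is essentially a packaging of (i) the sub-Gaussianity of a standard Gaussian, (ii) monotonicity of the Orlicz norm under pointwise domination, and (iii) the Lipschitz hypothesis, which lets us dominate $g(\theta)-g(0)$ by $|\theta|$. The whole argument is short and avoids any appeal to Gaussian concentration in higher dimensions.
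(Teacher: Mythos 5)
Your proof is correct, but it follows a genuinely different route from the paper: the paper does not prove this lemma at all, it simply imports it from Vershynin (2018), where the result is a special case of Gaussian concentration for Lipschitz functions, proved via Gaussian isoperimetry/interpolation. Your argument is an elementary, self-contained derivation of exactly the one-dimensional statement the paper needs: pointwise domination $|g(\theta)-g(0)|\le|\theta|$ plus monotonicity of the Orlicz norm gives $\lv g(\theta)-g(0)\rv_{\psi_2}\le\lv\theta\rv_{\psi_2}$, Jensen bounds the deterministic centering term by $\E|\theta|=\sqrt{2/\pi}$, and the triangle inequality for $\lv\cdot\rv_{\psi_2}$ finishes; all three ingredients are standard Orlicz-norm facts (your pointer to Proposition 2.5.2 is slightly off—that proposition is the list of equivalent sub-Gaussian properties, while the norm axioms and monotonicity are separate, but equally standard, facts). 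The trade-off is instructive: the cited Gaussian-concentration result gives the sharp, dimension-free constant and covers $1$-Lipschitz functions of Gaussian vectors in $\R^n$, which your argument cannot recover (dominating by $\lv\theta\rv$ in $\R^n$ would cost a $\sqrt{n}$ factor), whereas your route needs no machinery beyond the scalar Gaussian moment generating function and yields a perfectly adequate absolute constant for the scalar statement, which is all the paper ever uses (the lemma is only invoked for scalar Gaussians of the form $V_{\ell,i}\cdot x_{\ell-1,s}$ conditioned on the previous layer).
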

 Let us state Hoeffding's inequality \citep[see, e.g.,][Theorem~2.6.2]{vershynin2018high}, a concentration inequality for a sum of independent sub-Gaussian random variables.
\begin{theorem} \label{thm:hoeffding}
For independent mean-zero sub-Gaussian random variables $\theta_1,\ldots,\theta_m$, for every $\eta>0$, we have
\begin{align*}
\Pr\left[\Big\lvert \sum_{i=1}^m  \theta_i \Big\rvert \ge \eta\right]\le 2\exp\left(- \frac{c\eta^2}{\sum_{i=1}^m \lv \theta_i\rv_{\psi_2}^2}\right),
\end{align*}
where $c$ is
a positive absolute constant.
\end{theorem}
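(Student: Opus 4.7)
The plan is to carry out the standard Chernoff-style argument for sub-Gaussian concentration. The first step is to establish a moment generating function (MGF) bound: for any mean-zero sub-Gaussian random variable $\theta$, there is an absolute constant $c_1 > 0$ such that
\[
\E[\exp(\lambda \theta)] \le \exp\bigl(c_1 \lambda^2 \lv \theta \rv_{\psi_2}^2\bigr)
\qquad \text{for all } \lambda \in \R.
\]
This bound is a direct consequence of Definition~\ref{def:subgaussian}; expanding $\exp(\lambda \theta)$ as a power series, using $\E\theta = 0$, and bounding $\E[\theta^{2k}]$ in terms of $\lv \theta\rv_{\psi_2}^{2k}$ via the defining inequality of the sub-Gaussian norm gives the result. (This is essentially the content of the well-known equivalences between characterizations of sub-Gaussian random variables, e.g.\ Proposition~2.5.2 in Vershynin 2018.)

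Next, I would apply Markov's inequality to the exponentiated sum. For any $\lambda > 0$, independence of the $\theta_i$'s together with the MGF bound above yields
\begin{align*}
 \Pr\!\left[\sum_{i=1}^m \theta_i \ge \eta\right]
 &\le e^{-\lambda \eta}\, \E\!\left[\exp\!\left(\lambda \sum_{i=1}^m \theta_i\right)\right]
  = e^{-\lambda \eta}\, \prod_{i=1}^m \E[\exp(\lambda \theta_i)] \\
 &\le \exp\!\left(-\lambda \eta + c_1 \lambda^2 \sum_{i=1}^m \lv \theta_i \rv_{\psi_2}^2\right).
\end{align*}
Optimizing the right-hand side over $\lambda > 0$ (the minimizer is $\lambda^\star = \eta/(2 c_1 \sum_{i=1}^m \lv\theta_i\rv_{\psi_2}^2)$) gives the one-sided tail bound
\[
 \Pr\!\left[\sum_{i=1}^m \theta_i \ge \eta\right]
  \le \exp\!\left(-\frac{\eta^2}{4 c_1 \sum_{i=1}^m \lv \theta_i \rv_{\psi_2}^2}\right).
\]

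To finish, I would apply the same argument to $-\theta_1,\ldots,-\theta_m$ (which are again mean-zero sub-Gaussian with identical sub-Gaussian norms, by symmetry of the defining condition) to obtain the matching lower-tail bound, and then combine the two with a union bound. Setting $c := 1/(4 c_1)$ yields the stated two-sided inequality with an extra factor of $2$ in front. There is no real obstacle here; the only nontrivial input is the MGF bound for sub-Gaussian variables, which is a routine consequence of Definition~\ref{def:subgaussian}, and the rest of the proof is a standard Chernoff-plus-optimization computation.
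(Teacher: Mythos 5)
Your argument is correct: the MGF bound for mean-zero sub-Gaussian variables, followed by Markov's inequality, optimization over $\lambda$, and a union bound for the two-sided tail is exactly the standard proof of this inequality. The paper itself does not prove Theorem~\ref{thm:hoeffding}; it is quoted as a known probabilistic tool from \citet[Theorem~2.6.2]{vershynin2018high}, and your Chernoff-style derivation is essentially the same argument given in that reference, so there is nothing to reconcile.
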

We shall also use Bernstein's inequality \citep[see, e.g.,][Theorem~2.8.1]{vershynin2018high} a concentration inequality for a sum of independent sub-exponential random variables.

\begin{theorem} \label{thm:bernstein}
For independent mean-zero sub-exponential random variables $\theta_1,\ldots,\theta_m$, for every $\eta>0$, we have
\begin{align*}
\Pr\left[\Big\lvert \sum_{i=1}^m  \theta_i \Big\rvert \ge \eta\right]\le 2\exp\left(-c \min\left\{\frac{\eta^2}{\sum_{i=1}^m \lv \theta_i\rv_{\psi_1}^2},\frac{\eta}{\max_i \lv \theta_i \rv_{\psi_1}}\right\}\right),
\end{align*}
where $c$ is
a positive absolute constant.
\end{theorem}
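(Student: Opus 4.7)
The plan is to use the classical Cram\'er--Chernoff method. The core idea is to control the moment generating function (MGF) of each centered sub-exponential $\theta_i$ on a small interval around $0$, multiply these MGFs using independence, apply Markov's inequality to the exponential, and then optimize over the free parameter $\lambda$. The two regimes in the conclusion (sub-Gaussian-like tail $\exp(-c\eta^2/\sum \lv \theta_i\rv_{\psi_1}^2)$ versus the exponential tail $\exp(-c\eta/\max_i \lv \theta_i\rv_{\psi_1})$) will fall out naturally as the two cases of the optimization: whether the optimal $\lambda$ lies inside or outside the interval on which the MGF is quadratically controlled.

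The key intermediate step I would prove first is the following MGF bound: if $\theta$ is a mean-zero sub-exponential random variable with $\lv \theta \rv_{\psi_1} = K$, then
\[
\mathbb{E}\bigl[\exp(\lambda \theta)\bigr] \;\le\; \exp(C K^2 \lambda^2)
\qquad \text{for all } |\lambda| \le \tfrac{c_0}{K},
\]
for absolute constants $C, c_0 > 0$. To prove it, I would Taylor-expand $e^{\lambda\theta} = 1 + \lambda\theta + \sum_{k\ge 2} (\lambda\theta)^k/k!$, take expectations (the linear term vanishes since $\mathbb{E}[\theta]=0$), and use the standard moment bound $\mathbb{E}|\theta|^k \le C_1 K^k \, k!$ implied by $\lv \theta \rv_{\psi_1} \le K$ (this in turn follows directly from Definition~\ref{def:subexp} and the series expansion of $e^{|\theta|/K}$). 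The resulting geometric series $\sum_{k\ge 2}(C_1 |\lambda| K)^k$ is summable precisely when $|\lambda| K \le c_0 < 1$, and is bounded by $2 C_1 (\lambda K)^2$; the elementary inequality $1 + x \le e^x$ then gives the advertised form.

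Given the single-variable MGF bound, the sum is easy: by independence,
\[
\mathbb{E}\Bigl[\exp\bigl(\lambda \textstyle\sum_i \theta_i\bigr)\Bigr]
 = \prod_i \mathbb{E}[\exp(\lambda \theta_i)]
 \le \exp\Bigl(C \lambda^2 \textstyle\sum_i \lv \theta_i \rv_{\psi_1}^2\Bigr),
\]
valid for $|\lambda| \le c_0/\max_i \lv \theta_i\rv_{\psi_1}$. Markov's inequality then yields, for any such $\lambda > 0$,
\[
\Pr\Bigl[\textstyle\sum_i \theta_i \ge \eta\Bigr]
 \le \exp\Bigl(-\lambda \eta + C\lambda^2 \textstyle\sum_i \lv \theta_i\rv_{\psi_1}^2\Bigr).
\]
Optimizing in $\lambda$ over the admissible interval $(0, c_0/\max_i \lv \theta_i\rv_{\psi_1}]$, the unconstrained minimizer is $\lambda^\star = \eta/(2C \sum_i \lv \theta_i\rv_{\psi_1}^2)$. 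If $\lambda^\star$ lies inside the interval, substituting gives the sub-Gaussian exponent $\eta^2/(4 C \sum_i \lv \theta_i\rv_{\psi_1}^2)$; if $\lambda^\star$ exceeds the upper endpoint, I take $\lambda = c_0/\max_i \lv \theta_i\rv_{\psi_1}$ instead, and the first (linear) term dominates, producing the exponent $c_0 \eta/(2 \max_i \lv \theta_i\rv_{\psi_1})$. Taking the worse of the two bounds in either case gives $\exp\bigl(-c\min\{\eta^2/\sum_i \lv \theta_i\rv_{\psi_1}^2,\ \eta/\max_i \lv \theta_i\rv_{\psi_1}\}\bigr)$.

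Finally, I would apply the same argument to $-\theta_i$ (which has the same sub-exponential norm) and combine the two one-sided tail bounds by a union bound, absorbing the factor of $2$ into the constant and recovering the stated two-sided form with the leading factor $2$ preserved. The main technical obstacle, and essentially the only one, is the centered MGF bound; everything afterwards is bookkeeping. The subtlety there is the requirement that $\theta$ be \emph{mean-zero}, which is what makes the linear term vanish and allows a quadratic bound near the origin; without centering one only gets $\mathbb{E}[e^{\lambda\theta}] \le \exp(C|\lambda|K + C K^2\lambda^2)$, which would not yield the sub-Gaussian regime in the conclusion.
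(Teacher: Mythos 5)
Your argument is correct. Note, however, that the paper does not prove this statement at all: it is quoted verbatim as a known result, with a citation to Vershynin's \emph{High-Dimensional Probability} (Theorem~2.8.1), and your Cram\'er--Chernoff proof --- the centered MGF bound $\E[\exp(\lambda\theta)]\le\exp(CK^2\lambda^2)$ for $|\lambda|\le c_0/K$ via the moment bound $\E|\theta|^k\le C_1K^k k!$, followed by independence, Markov's inequality, constrained optimization over $\lambda$ yielding the two regimes, and a union bound for the two-sided statement --- is essentially the textbook proof being cited. The only cosmetic remark is that the factor $2$ comes directly from the union bound over the two one-sided tails, so nothing needs to be absorbed into the constant; otherwise the bookkeeping (in particular the check that at the endpoint $\lambda=c_0/\max_i\lv\theta_i\rv_{\psi_1}$ the linear term dominates, giving the exponent $c_0\eta/(2\max_i\lv\theta_i\rv_{\psi_1})$) is exactly right.
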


Next is the Gaussian-Lipschitz contraction inequality applied to control the squared norm of a Gaussian random vector \citep[see, e.g.,][Example~2.28]{wainwright2019high}. 
\begin{theorem} \label{thm:gaussian_concentration}
Let $\theta_1,\ldots,\theta_m$ be drawn i.i.d. from $\cN(0,\sigma^2)$ then, for every $\eta>0$, we have
\begin{align*}
\Pr\left[ \sum_{i=1}^m  \theta_i^2  \ge \sigma^2 m (1+\eta)^2\right]\le \exp\left(-c m\eta^2  \right),
\end{align*}
where $c$ is
a positive absolute constant.
\end{theorem}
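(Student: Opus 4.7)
The plan is to reduce the claim to the standard Gaussian-Lipschitz concentration inequality (Tsirelson--Ibragimov--Sudakov) applied to the Euclidean norm, and then square. Treat $\theta = (\theta_1,\ldots,\theta_m)$ as a sample from $\cN(0, \sigma^2 I_m)$ and consider the function $f(\theta) = \lv \theta \rv$. Because $f$ is $1$-Lipschitz with respect to the Euclidean norm, Gaussian-Lipschitz concentration (which for variance $\sigma^2$ gives sub-Gaussian tails with parameter $\sigma$) implies
\[
\Pr\bigl[\lv \theta \rv - \E \lv \theta \rv \ge \sigma t\bigr]
 \le \exp\!\left(-\frac{t^2}{2}\right)
\qquad \text{for all } t \ge 0.
\]

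Next I would control the mean by the second moment. Since $\E[\lv \theta \rv^2] = m\sigma^2$, Jensen's inequality yields $\E \lv \theta \rv \le \sqrt{\E \lv \theta \rv^2} = \sigma\sqrt{m}$. Combining this with the previous display gives
\[
\Pr\bigl[\lv \theta \rv \ge \sigma\sqrt{m} + \sigma t\bigr]
 \le \exp\!\left(-\frac{t^2}{2}\right).
\]
Choosing $t = \eta\sqrt{m}$ produces $\Pr[\lv \theta \rv \ge \sigma\sqrt{m}(1+\eta)] \le \exp(-m\eta^2/2)$.

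Finally, squaring both sides of the event inside the probability (both are non-negative), we obtain
\[
\Pr\!\left[\sum_{i=1}^m \theta_i^2 \ge \sigma^2 m (1+\eta)^2\right]
 \le \exp\!\left(-\frac{m\eta^2}{2}\right),
\]
which is the stated inequality with absolute constant $c = 1/2$. There is no genuine obstacle here: the only nontrivial input is the Gaussian-Lipschitz concentration inequality for $1$-Lipschitz functions of a Gaussian vector, which is a well-known consequence of either the Borell-TIS inequality or semigroup/log-Sobolev arguments and is stated, for instance, in Wainwright (2019, Theorem~2.26 and Example~2.28). The rest of the argument is just the Jensen bound on $\E \lv \theta \rv$ and a change of variable $t \mapsto \eta\sqrt{m}$.
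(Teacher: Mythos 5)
Your proof is correct, and it is essentially the argument the paper itself relies on: the theorem is stated without proof and attributed to Wainwright (2019, Example~2.28), which is exactly this route — apply Gaussian--Lipschitz concentration to the $1$-Lipschitz function $\theta \mapsto \lv \theta \rv$, bound $\E \lv \theta \rv \le \sigma\sqrt{m}$ by Jensen, set $t = \eta\sqrt{m}$, and square. Your constant $c = 1/2$ is fine since the statement only asks for a positive absolute constant.
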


Let us continue by defining an $\epsilon$-net with respect to the Euclidean distance.
\begin{definition} \label{def:epsilon_net}Let $S \subseteq \mathbb{R}^p$. A subset $K$ is called an $\epsilon$-net of $S$ if every point in $S$ is within a distance $\epsilon$ (in Euclidean distance) of some point in $K$. 
\end{definition}
The following lemma bounds the size of a $1/4$-net of unit vectors in $\mathbb{R}^p$.
\begin{lemma}\label{l:covering_numbers_unit_vectors} 
Let $S$ be the set of all unit vectors in $\mathbb{R}^p$. Then there exists a $1/4$-net of $S$ of size $9^{p}$.
\end{lemma}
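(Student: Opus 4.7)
The plan is to prove this by a standard volumetric packing argument. I would construct the $1/4$-net greedily: let $K \subseteq S$ be a maximal subset of the unit sphere such that any two distinct points $u, v \in K$ satisfy $\lv u - v \rv > 1/4$. Such a maximal set exists (e.g.\ by Zorn's lemma, or more concretely because the sphere is compact and any such set must be finite, as we will see from the volume bound). Maximality immediately implies $K$ is a $1/4$-net: if some $w \in S$ had $\lv w - u \rv > 1/4$ for every $u \in K$, we could adjoin $w$ to $K$, contradicting maximality.

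Next I would bound $|K|$ by a volumetric comparison. The open balls $B(u, 1/8)$ for $u \in K$ are pairwise disjoint: if $z \in B(u,1/8) \cap B(v,1/8)$ then $\lv u - v \rv \le \lv u - z \rv + \lv z - v \rv < 1/4$, contradicting the separation of $K$. On the other hand, each point of each such ball lies within distance $1 + 1/8 = 9/8$ of the origin, so $\bigcup_{u \in K} B(u, 1/8) \subseteq B(0, 9/8)$. Comparing Lebesgue volumes in $\R^p$ (volumes of balls scale as the $p$-th power of the radius) gives
\[
|K| \cdot (1/8)^p \le (9/8)^p,
\]
hence $|K| \le 9^p$, which is the claimed bound.

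There is no real obstacle here—this is the textbook volumetric argument for covering numbers of the sphere (essentially the $\epsilon = 1/4$ case of the general bound $(1+2/\epsilon)^p$). The only minor point worth being careful about is making sure the balls $B(u,1/8)$ are taken in $\R^p$ (not on the sphere) so that the standard $p$-dimensional volume scaling applies, and ensuring the disjointness and containment inequalities use strict versus non-strict inequalities consistently; these are routine.
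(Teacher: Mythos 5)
Your proof is correct. The paper itself does not spell out an argument: it simply invokes Corollary~4.2.13 of Vershynin (2018) with $\epsilon = 1/4$, and that corollary's bound $(1+2/\epsilon)^p$ is proved by exactly the volumetric packing argument you give (maximal $1/4$-separated set, disjoint balls of radius $1/8$ inside the ball of radius $9/8$, volume comparison). So your write-up is a self-contained version of the same underlying approach, and the details — maximality implying the covering property, the strict separation giving disjointness, and the bound $|K|(1/8)^p \le (9/8)^p = $ yielding $|K| \le 9^p$ — are all handled correctly.
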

\begin{proof}
Follows immediately by invoking \citep[Corollary~4.2.13]{vershynin2018high} with $\epsilon=1/4$.
\end{proof}
Here is a bound on the size of a $1/4$-net of $k$-sparse unit vectors,
along with a somewhat stronger property of the net.
\begin{lemma}\label{l:covering_numbers_sparse_vectors} 
Let $S$ be the set of all $k$-sparse unit vectors in $\mathbb{R}^p$. 
Then there exists a $1/4$-net 
$N$ of $S$ of size $\binom{p}{k}9^{k}$, and a mapping
$\zeta$ from $S$ to $N$ such that, for all $s \in S$, in
addition to $\lv s - \zeta(s) \rv \leq 1/4$, we have
$\lv s - \zeta(s) \rv_0 \leq k$.
\end{lemma}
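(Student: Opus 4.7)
The plan is to build the net as a union, over all $k$-element subsets $T \subseteq [p]$, of nets of unit vectors supported on $T$, and then to define $\zeta$ so that it always maps a $k$-sparse unit vector $s$ to a net point supported on a superset of $\supp(s)$. This is what will force the difference $s - \zeta(s)$ to be $k$-sparse.

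More precisely, for each $T \subseteq [p]$ with $|T| = k$, let $S_T$ denote the set of unit vectors in $\R^p$ whose support is contained in $T$. Identifying $S_T$ with the unit sphere in $\R^k$, Lemma~\ref{l:covering_numbers_unit_vectors} guarantees the existence of a $1/4$-net $N_T \subseteq S_T$ of size at most $9^k$. Define $N := \bigcup_{T : |T| = k} N_T$, so that $|N| \leq \binom{p}{k} 9^k$. Since any $k$-sparse unit vector $s$ lies in $S_{T(s)}$ for some $T(s) \supseteq \supp(s)$ with $|T(s)| = k$ (pad the support arbitrarily if needed), we can define $\zeta(s)$ to be any point in $N_{T(s)}$ with $\lv s - \zeta(s)\rv \leq 1/4$; such a point exists by the defining property of the net $N_{T(s)}$.

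The sparsity of the difference then follows because both $s$ and $\zeta(s)$ are supported in $T(s)$, so $s - \zeta(s)$ is supported in $T(s)$, giving $\lv s - \zeta(s)\rv_0 \leq |T(s)| = k$. The only step with any content is the existence of a $1/4$-net of cardinality $9^k$ for the unit sphere in $\R^k$, which is standard (Corollary~4.2.13 of \citet{vershynin2018high}) and already invoked in Lemma~\ref{l:covering_numbers_unit_vectors}; I do not expect any real obstacle.
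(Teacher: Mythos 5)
Your proof is correct and follows essentially the same route as the paper: for each $k$-element subset of $[p]$ build a $1/4$-net of size $9^k$ via Lemma~\ref{l:covering_numbers_unit_vectors} and take the union over the $\binom{p}{k}$ subsets. In fact you are slightly more careful than the paper's own proof, since you explicitly define $\zeta$ to map $s$ to a net point supported on the same $k$-set, which is what yields the claimed bound $\lv s - \zeta(s)\rv_0 \le k$ that the paper leaves implicit.
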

\begin{proof}
We construct a $1/4$-net as follows. The number of distinct $k$-sparse subsets of $[p]$ are $\binom{p}{k}$. Over each of these distinct subsets build a $1/4$-net of unit vectors of size $9^{k}$, this is guaranteed by the preceding lemma. Thus by building a $1/4$-net for each of these subset and taking union of these nets we have built a $1/4$-net of $k$-sparse unit vectors of size $\binom{p}{k}9^k$ as claimed.
\end{proof}

\printbibliography

\end{document}